\newcommand{\cmark}{\ding{51}}%
\newcommand{\xmark}{\ding{55}}%
\DeclareMathOperator*{\argmin}{arg\,min}
\DeclareMathOperator*{\argmax}{arg\,max}
\DeclareMathOperator*{\prox}{prox}
\newcommand{\red}[1]{\textcolor{black}{#1}}
\def\bx{{\mathbf{x}}}
\def\by{{\mathbf{y}}}
\def\X{{\mathcal{X}}}
\def\Y{{\mathcal{Y}}}
\def\ssx{{\textsf{x}}}
\def\ssy{{\textsf{y}}}
\def\ssw{{\textsf{w}}}
\def\ssD{{\textsf{D}}}
\def\ssH{{\textsf{H}}}
\def\dencs{{\textsf{DENCS}}}
\theoremstyle{plain}
\newtheorem{theorem}{Theorem}[section]
\newtheorem{proposition}[theorem]{Proposition}
\newtheorem{lemma}[theorem]{Lemma}
\newtheorem{corollary}[theorem]{Corollary}
\theoremstyle{definition}
\newtheorem{assumption}[theorem]{Assumption}
\theoremstyle{remark}
\newtheoremstyle{assumptionstyle}
{0em}
{0em}
{\slshape}
{}
{\bfseries}
{.}
{0.5em}
{}
\theoremstyle{assumptionstyle}
\newtheorem{myassumption}[theorem]{Assumption}
\title{Stochastic Gradient Methods with Compressed Communication for Decentralized Saddle Point Problems}
\author{%
Chhavi Sharma,  Vishnu Narayanan, P. Balamurugan \\
IEOR, IIT Bombay, Mumbai, India \\
\texttt{ \{chhavisharma, vishnu, balamurugan.palaniappan\}@iitb.ac.in}
}
\begin{document}
	
\setlength{\abovedisplayskip}{0pt}
\setlength{\belowdisplayskip}{0pt}
\setlength{\abovedisplayshortskip}{0pt}
\setlength{\belowdisplayshortskip}{0pt}


\maketitle

\begin{abstract}
\red{We develop two compression based stochastic gradient algorithms to solve a class of non-smooth strongly convex-strongly concave saddle-point problems in a decentralized setting (without a central server).} 
Our first algorithm is a Restart-based Decentralized Proximal Stochastic Gradient method with Compression (C-RDPSG) for general stochastic settings. We provide rigorous theoretical guarantees of C-RDPSG with gradient computation complexity and communication complexity of order $\mathcal{O}( (1+\delta)^4 \frac{1}{L^2}{\kappa_f^2}\kappa_g^2 \frac{1}{\epsilon} )$, to achieve an $\epsilon$-accurate saddle-point solution, where $\delta$ denotes the compression factor, $\kappa_f$ and $\kappa_g$ denote respectively the condition numbers of objective function and communication graph, and $L$ denotes the smoothness parameter of the smooth part of the objective function. Next, we present a Decentralized Proximal Stochastic Variance Reduced Gradient algorithm with Compression (C-DPSVRG) for finite sum setting which exhibits gradient computation complexity and communication complexity of order {\red{$\mathcal{O} \left((1+\delta) \max \{\kappa_f^2, \sqrt{\delta}\kappa^2_f\kappa_g,\kappa_g \} \log\left(\frac{1}{\epsilon}\right) \right)$}}.  
Extensive numerical experiments show competitive performance of the proposed algorithms and provide support to the theoretical results obtained.
\end{abstract}


\section{Introduction}
\label{sec:introduction}

We focus on solving the following saddle point (or mini-max) problem in a fully decentralized setting without a central server:
\begin{align}
	{
		\min_{x \in \X} \max_{y \in \Y} \Psi(x,y) := \frac{1}{m}\sum_{i = 1}^m (f_i(x,y) +g(x) -r(y) ) \label{eq:main_opt_problem} \tag{SPP} ,
	}
\end{align}
where $\X \subset \mathbb{R}^{d_x}$, {\red{$\Y\subset\mathbb{R}^{d_y}$}} are convex and compact sets, $f_i$$:$$\X$$\times$$\Y \mapsto \mathbb{R}$ for every node $i \in [m]$ is {smooth}, {\red{strongly-convex in $x$ and strongly-concave in $y$}}, and  $g:$$\X$$\mapsto$$\mathbb{R}$ and $r:$$\Y$$ \mapsto$$\mathbb{R}$ are proper, continuous, convex functions which might be non-smooth. This class of saddle point problems finds its use in robust classification and regression applications and AUC maximization problems \cite{balamuruganbach2016svrgsaddle, yanetal2019stocprimaldual}. We assume that the static topology of the decentralized environment is represented using an undirected, connected, simple graph  $\mathcal{G}=(\mathcal{V},\mathcal{E})$, where $\mathcal{V}=\{1,2,\ldots,m\}\eqqcolon [m]$ denotes the set of $m$ computing nodes (with similar computing capabilities and memory) and an edge $e_{ij} \in \mathcal{E}$ denotes that nodes $i, j \in \mathcal{V}$ are connected. Also, we assume that the communication is synchronous and at every synchronization step, node $i$ communicates only with its neighbors $\mathcal{N}(i) = \{j \in \mathcal{V}: e_{ij} \in \mathcal{E}\}$.

In this work, we design algorithms which can achieve sublinear/linear convergence rates to solve problem \eqref{eq:main_opt_problem}, by using stochastic oracles for efficient gradient computation, and compressed information exchange for efficient communication. 
In particular, we focus on the following two settings of problem \eqref{eq:main_opt_problem}:  
(i) {general stochastic setting} (ii) finite sum setting. In general stochastic setting, we assume the following form for the local function: $f_i(x,y) = E_{\xi^i \sim \mathcal{D}_i}\left[f_i(x,y;\xi^i) \right]$, where $\mathcal{D}_i$ is the local sample distribution in node $i$, allowing for general heterogeneous data distributions across different nodes. In finite sum setting, we assume that each $f_i(x,y)$ is represented as the average over local {\red{batches}}, hence $f_i(x,y) = \frac{1}{n}\sum_{j = 1}^nf_{ij}(x,y)$, where $f_{ij}(x,y)$ represents the loss function at $j$th batch of samples at node $i$.\\[1mm]
\textbf{Our Contributions}: Inspired by algorithms developed for decentralized minimization problems \cite{liu2020linear, li2021decentralized}, we design two algorithms for the general stochastic setting and finite sum setting to solve saddle point problems of the form \eqref{eq:main_opt_problem} in a \textbf{d}ecentralized \textbf{e}nvironment with \textbf{n}o \textbf{c}entral \textbf{s}erver (henceforth referred to as \dencs). The proposed methods promote an effective combination of stochastic gradient estimator oracles (\textbf{SGO}) and compression  for solving saddle point problems. We emphasize that such combinations in \dencs \ are known in the literature for solving decentralized minimization problems. For general stochastic setting, we propose Restart-based Decentralized Proximal Stochastic Gradient method with Compression (C-RDPSG) which achieves gradient computation complexity and communication complexity  of order $\mathcal{O}( (1+\delta)^4 \frac{1}{L^2}\kappa_f^2 \kappa_g^2 \frac{1}{\epsilon} )$,
 to obtain an $\epsilon$-accurate saddle-point solution,
  where $\delta$ denotes the compression factor, $\kappa_f$ and $\kappa_g$ denote respectively the condition numbers of objective function $f$ and communication graph $\mathcal{G}$, and $L$ denotes the smoothness parameter of smooth part $\frac{1}{m}\sum_i f_i$. In finite sum setting common in machine learning, we propose Decentralized Proximal Stochastic Variance Reduced Gradient algorithm with Compression (C-DPSVRG) which is shown to have gradient computation complexity and communication complexity of order $\mathcal{O} ((1+\delta) \max \{\kappa_f^2, \sqrt{\delta}\kappa^2_f\kappa_g,\kappa_g \} \log(\frac{1}{\epsilon}) )$. 
Our rigorous analysis establishes C-RDPSG and C-DPSVRG as the \textit{first compression} based algorithms to achieve respectively, sublinear and linear gradient computation complexity and communication complexity  for solving \eqref{eq:main_opt_problem} in \dencs.
Similar rates are known so far only for decentralized saddle point problems with no compression (\cite{kovalev2022optimal,nunezcortes2017distsaddlesubgrad, mukherjeecharaborty2020decentralizedsaddle, beznosikov2020distributed, beznosikovetal2020distsaddle}). Using extensive experiments, we show empirical evidence supporting the theoretical guarantees obtained in our analysis. 

\textbf{Notations:} The following notations will be used in the paper. The $k$-tuple $(u^1, u^2, \ldots, u^k)$ of $k$ vectors $u^1, u^2, \ldots, u^k$, each of size $d \times 1$ denotes the vector $\begin{bmatrix} (u^1)^\top \ (u^2)^\top \ldots (u^k)^\top \end{bmatrix}^\top$ of size $kd \times 1$. The notation $z$$=$$(x, y)$$\in$$ {\mathbb{R}}^{d_x+d_y}$ denotes the pair of primal variable $x$ and dual variable $y$, and  $z^\star$$=$$(x^\star, y^\star)$ denotes a saddle point of problem \eqref{eq:main_opt_problem}. The communication link between a pair of nodes $(i,j) \in \mathcal{V} \times \mathcal{V}$ is assumed to have associated weight $W_{ij} \in [0,1]$. Weights $W_{ij}$ are collected into a weight matrix $W$ of size $m \times m$.   $I_p$ denotes a $p \times p$ identity matrix, $\mathbf{1}$ denotes a $m \times 1$ column vector of ones and $J = \frac{1}{m}\mathbf{1}\mathbf{1}^\top$ is a $m \times m$ matrix of uniform weights equal to $\frac{1}{m}$. We define $f(z)\coloneqq f(x,y) \coloneqq \sum_{i = 1}^m f_i(x,y)$, \red{$L = \max\{L_{xx},L_{yy},L_{xy},L_{yx}\}, \mu = \min\{\mu_x, \mu_y\}$ where $L_{xx},L_{yy},L_{xy},L_{yx}$ are the smoothness parameters of $f_i(x,y)$ (see Appendices \ref{appendix_C}, \ref{appendix_finite_sum}) and $\mu_x, \mu_y$ are the strong convexity, strong concavity parameters of $f_i(x,y)$ (see Assumptions \ref{s_convexity_assumption}-\ref{s_concavity_assumption}). The condition number $\kappa_f $ of $f$ is defined as $L/\mu$.  The condition number $\kappa_g$ of communication graph $\mathcal{G}$ is defined as the ratio of largest eigenvalue and second smallest eigenvalue of $I-W$.} The notation $\left \langle u, v \right \rangle$ denotes the inner product between two $d \times 1$ vectors $u$ and $v$. For a $d \times 1$ vector $u$ and for some $d \times d$ symmetric positive semi-definite (p.s.d) matrix $A$, we define $\| u \|^2_A = u^\top A u$. \red{$\left\Vert u \right\Vert^2$ denotes $\ell_2$ norm of vector $u$.} $A\otimes B$ denotes the Kronecker product of two matrices $A$ and $B$.

\textbf{Paper Organization:} We illustrate an inexact primal-dual hybrid algorithm for solving \eqref{eq:main_opt_problem} in Section \ref{sec:algo_design}, followed by a discussion of assumptions used for problem setup (Section \ref{sec:assumptions}). Details about C-RDPSG for general stochastic setting and C-DPSVRG for finite sum setting are given respectively in Sections \ref{sec:genstoch} and \ref{sec:finitesum}. A short survey of state-of-the-art algorithms for decentralized saddle point problems is provided in Section \ref{sec:related_work}. and experiment details are presented in Section \ref{sec:experiments}. 
\vspace{-0.1in}

%
%
\section{\red{An Inexact Primal-Dual Hybrid Algorithm to Solve Problem \eqref{eq:main_opt_problem}}}
\label{sec:algo_design}
\vspace{-0.1in} 
Assuming the local copy of $(x,y)$ in $i$-th node as $(x^i, y^i)$, we collect local primal and dual variables into $\bx = \begin{pmatrix}x^1,  x^2, \ldots  x^m \end{pmatrix} \in {\mathbb{R}}^{md_x}$ and $\by = \begin{pmatrix}y^1, y^2, \ldots  y^m \end{pmatrix} \in {\mathbb{R}}^{md_y}$.  Using this notation, {\red{the problem \eqref{eq:main_opt_problem} can be formulated as an optimization problem with consensus constraints on $\bx$ and $\by$:}} 
\begin{align}
\min_{\bx \in \mathbb{R}^{md_x} } \max_{\by \in \mathbb{R}^{md_y}} \ F(\bx,\by) + G(\bx) -R(\by) \ \text{s.t.} \  (U \otimes I_{d_x})\bx = 0, \ \ (U \otimes I_{d_y})\by = 0, \label{eq:main_opt_consenus_constraint}
\end{align}
where $F(\bx, \by) = \sum_{i = 1}^m f_i(x^i,y^i)$, $G(\bx)= \sum_{i = 1}^m(g(x^i) + \delta_\X(x^i))$, $R(\by)=  \sum_{i = 1}^m(r(y^i) + \delta_\Y(y^i))$ and $U = \sqrt{I_m-W}$. {\red{Formulating a decentralized optimization problem}} as an equivalent problem with a consensus constraint on the local variables is well-known \cite{nunezcortes2017distsaddlesubgrad}. The assumptions on $W$ (to be made later) would imply $I_m-W$ to be symmetric p.s.d. and hence the existence of $\sqrt{I_m-W}$. 
$\delta_C(u)$ denotes the indicator function of set $C$ which is $0$ when $u \in C$ and $+\infty$ otherwise.

We have the following Lagrangian function of problem \eqref{eq:main_opt_consenus_constraint}: 
\begin{align}
\mathcal{L}(\bx, \by; S^{\bx}, S^{\by}) = F(\bx,\by) + G(\bx) - R(\by) + \left\langle S^{\bx}, (U \otimes I_{d_x})\bx \right\rangle + \left\langle S^{\by}, (U \otimes I_{d_y})\by \right\rangle, \label{eq:lagrangian_form}
\end{align}
where $S^{\bx} \in \mathbb{R}^{md_x}$ and $S^{\by} \in \mathbb{R}^{md_y}$ denote the Lagrange multipliers associated with consensus constraints on variables $\bx$ and $\by$ respectively. 
{\red{We prove that}} solving constrained problem \eqref{eq:main_opt_consenus_constraint} is equivalent to solving the following problem (see Theorem \ref{thm:lag_equivalence} in Appendix \ref{appendix_A}):
\begin{align}
\min_{\bx \in \mathbb{R}^{md_x}, S^{\by} \in \mathbb{R}^{md_y}} \max_{\by \in \mathbb{R}^{md_y}, S^{\bx} \in \mathbb{R}^{md_x} } \mathcal{L}(\bx, \by; S^{\bx}, S^{\by}). \label{minmax_lagrange_problem}
\end{align}

To solve problem \eqref{minmax_lagrange_problem}, we propose inexact Primal Dual Hybrid Gradient (PDHG) \red{parallel updates} for the primal-dual variable pair $\bx, S^\bx$ and dual-primal pair $\by, S^\by$, illustrated in eq. \eqref{ipdhg_primal_dual_updates_1} and  \eqref{ipdhg_dual_primal_updates_1}.

Note that in eq. \eqref{ipdhg_primal_dual_updates_1}, $\nu^\bx_{t+1}$ is found using a prox-linear step involving linearization of $\red{F(\bx,\by)}$ with respect to $\bx$ and a penalized cost-to-move term  $\frac{1}{2s}\|\bx-\bx_t\|^2$, followed by an ascent step to update the Lagrange dual variable $S^\bx$. Then ${\hat{\bx}}_{t+1}$ is found using prox-linear step similar to the first step but using the recent $S_{t+1}^{\bx}$. Finally $\bx_{t+1}$ is found by a prox step where $\prox_{sG}(x) = \argmin_{u \in {\mathbb{R}}^{md_x}} G(u) + \frac{1}{2s} \|u-x\|^2$. Letting $D^\bx_t = (U \otimes I_{d_x})S^\bx_t$, and pre-multiplying by $U\otimes I_{d_x}$ in the update step of $S^\bx$, equations \eqref{ipdhg_primal_dual_updates_1} reduce to those in equations \eqref{ipdhg_primal_dual_updates_2}. Similarly, the updates to $\by, S^{\by}$ can be done using appropriate gradient ascent-descent steps which lead to corresponding equations \eqref{ipdhg_dual_primal_updates_1} and \eqref{ipdhg_dual_primal_updates_2}.

{\footnotesize
	\hrule
	\vspace*{0.15cm}
	\begin{minipage}{0.49\linewidth}
		\textbf{Updates to primal-dual pair $\bx, S^\bx$:} 
		\hrule
		\begin{align*}
			&\
			\left.\begin{aligned}
				\nu^\bx_{t+1} &= \argmin_{\bx \in \mathbb{R}^{md_x}} \begin{cases} \left\langle \bx, \nabla_\bx F(\bx_t,\by_t) \right \rangle \\ 
					+ \left\langle \bx, (U \otimes I_{d_x})S^\bx_t \right\rangle  \\ 
					\qquad + \frac{1}{2s} \left\Vert \bx - \bx_t \right\Vert^2 
				\end{cases}   \\
				S^\bx_{t+1} &= S^\bx_t + \frac{\gamma}{2s}(U \otimes I_{d_x})\nu^\bx_{t+1}   \\
				\hat{\bx}_{t+1} &= \argmin_{\bx \in \mathbb{R}^{md_x}} \begin{cases}  \left\langle \bx, \nabla_\bx F(\bx_t,\by_t) \right \rangle  \\ 
					+ \left\langle \bx, (U \otimes I_{d_x})S^\bx_{t+1} \right\rangle \\ 
					\qquad  + \frac{1}{2s} \left\Vert \bx - \bx_t \right\Vert^2 
				\end{cases}    \\ 
				\bx_{t+1} &= \prox_{sG}(\hat{\bx}_{t+1}).   
			\end{aligned}\hspace{-.8em} \right\} \label{ipdhg_primal_dual_updates_1}  \tag{P1}   \\
			&\begin{aligned}
				& \qquad \Big\Downarrow 
			\end{aligned}\\
			& 
			\left.\begin{aligned}
				\nu^\bx_{t+1} &= \bx_t - s\nabla_\bx F(\bx_t,\by_t) - sD^\bx_t \nonumber \\
				D^\bx_{t+1}  &= D^\bx_t + \frac{\gamma}{2s}((I_m-W) \otimes I_{d_x})\nu^\bx_{t+1} \nonumber \\
				\hat{\bx}_{t+1}  &= \bx_t - s\nabla_\bx F(\bx_t,\by_t) - sD^\bx_{t+1}  \nonumber \\ 
				\qquad &= \nu^\bx_{t+1} - \frac{\gamma}{2}((I_m-W) \otimes I_{d_x})\nu^\bx_{t+1} \nonumber  \\
				\bx_{t+1} &= \prox_{sG}(\hat{\bx}_{t+1}). \nonumber 
			\end{aligned}\hspace{-.4em} \right\} \label{ipdhg_primal_dual_updates_2}  \tag{P2}   
		\end{align*}
	\end{minipage}
	\vrule 
	\begin{minipage}{0.51\linewidth}
		\textbf{ Updates to dual-primal pair $\by, S^\by$:} 
		\hrule
		\begin{align*}
			&\
			\left.\begin{aligned}
				\nu^\by_{t+1} &= \argmax_{\by \in \mathbb{R}^{md_y}} \begin{cases} \left\langle \by, \nabla_\by F(\bx_t,\by_t)\right \rangle \\
					+ \left\langle \by, (U \otimes I_{d_y})S^\by_t \right\rangle  \\ 
					\qquad - \frac{1}{2s} \left\Vert \by - \by_t \right\Vert^2 
				\end{cases} \nonumber  \\
				S^\by_{t+1} &= S^\by_t - \frac{\gamma}{2s}(U \otimes I_{d_y})\nu^\by_{t+1}  \nonumber \\
				\hat{\by}_{t+1} &= \argmax_{\by \in \mathbb{R}^{md_y}} \begin{cases}  \left\langle \by, \nabla_\by F(\bx_t,\by_t) \right \rangle \\ + \left\langle \by, (U \otimes I_{d_y})S^\by_{t+1} \right\rangle \\ 
					\qquad  - \frac{1}{2s} \left\Vert \by - \by_t \right\Vert^2 
				\end{cases} \nonumber \\
				\by_{t+1} &= \prox_{sR}(\hat{\by}_{t+1}). 
			\end{aligned}\right\} \hspace{-.8em} \label{ipdhg_dual_primal_updates_1} \tag{D1}   \\
			&\begin{aligned}
				& \qquad \Big\Downarrow 
			\end{aligned}\\
			& 
			\left.\begin{aligned}
				\nu^\by_{t+1} &= \by_t + s\nabla_\by F(\bx_t,\by_t) - sD^\by_t \nonumber \\
				D^\by_{t+1} &= D^\by_t + \frac{\gamma}{2s}((I_m-W) \otimes I_{d_y})\nu^\by_{t+1} \nonumber \\
				\hat{\by}_{t+1} &= \by_t + s\nabla_\by F(\bx_t,\by_t) - sD^\by_{t+1}  \nonumber \\ 
				\qquad &= \nu^\by_{t+1} - \frac{\gamma}{2}((I_m-W) \otimes I_{d_y})\nu^\by_{t+1} \nonumber \\
				\by_{t+1}  &= \prox_{sR}(\hat{\by}_{t+1}).\nonumber 
			\end{aligned}\hspace{-.4em} \right\} \label{ipdhg_dual_primal_updates_2} \tag{D2}   
		\end{align*}
	\end{minipage}
	\hrule
}

%
%
Inexact PDHG type updates are known for solving a Lagrangian function of an underlying convex minimization problem in single machine setting (e.g. proximal alternating predictor-corrector (PAPC) algorithm \cite{chenetal2013primaldualfixedpt,lorisverhoeven2011primaldualista},  primal-dual fixed point (PDFP) algorithm \cite{chenetal2016primaldualsumofthreefns}), and in decentralized setting \cite{li2021decentralized}. In contrast to PAPC, PDFP, and the algorithm in \cite{li2021decentralized}, the type of inexact PDHG updates proposed in our work addresses a Lagrangian function corresponding to an underlying saddle point problem \eqref{eq:main_opt_problem}. To our knowledge, our work is the first to extend inexact PDHG type updates to solve Lagrangian of saddle point problem of form  \eqref{eq:main_opt_problem}. 
For other perspectives and generalizations of inexact PDHG, see \cite{chambollepock2011primaldual,  heyuan2012primaldualsaddle, condat2013primaldualsplitting, vu2013splittingmonotoneincl, ming2018primaldualsumofthreefns, li2021decentralized}.



Observe that the term $((I_m-W) \otimes I_{d_x})\nu^\bx_{t+1}$ in eq. \eqref{ipdhg_primal_dual_updates_2} and $((I_m-W) \otimes I_{d_y})\nu^\by_{t+1}$ in eq. \eqref{ipdhg_dual_primal_updates_2} denote the communication of $\nu^\bx_{t+1}$ and $\nu^\by_{t+1}$ across the nodes. Further note that $\nu^\bx_{t+1}$ and $\nu^\by_{t+1}$ need to be communicated only once for updating $D^\bx_{t+1}, \hat{\bx}_{t+1}$ and $D^\by_{t+1}, \hat{\by}_{t+1}$ in the inexact PDHG updates for $\bx$ and $\by$ respectively. This results in cheaper communication in every iteration when compared to the multiple communications that happen in a single iteration of the algorithms in \cite{beznosikovetal2020distsaddle, xianetal2021fasterdecentnoncvxsp, liu2020decentralized}. To improve the communication efficiency further, we propose to compress $\nu^\bx_{t+1}$ and $\nu^\by_{t+1}$. We recall that compression has not yet been used in existing algorithms to solve \eqref{eq:main_opt_problem} in decentalized setting without a central server. \red{Compression based algorithms to solve smooth variational inequalities (related to problem \eqref{eq:main_opt_problem}) are available only for decentralized settings \textbf{with} a central server \cite{beznosikov2021distributed} . }
%
%

We follow \cite{mishchenko2019distributed, liu2020linear} to compress a related difference vector instead of directly compressing $\nu^\bx_{t+1}$ and $\nu^\by_{t+1}$. Each node $i$ is assumed to maintain a local vector $H^{i,\bx}$ and a stochastic compression operator $Q$ is applied on the difference vector $\nu^{i,\bx}_{t+1}-H^{i,\bx}_t$. 
%
%
%
The concise form of these updates is illustrated in Algorithm \ref{comm} (COMM procedure) in Appendix \ref{appendix_compression}. Algorithm \ref{alg:generic_procedure_sgda} illustrates the proposed inexact PDHG updates with compression. 
Thus the inexact PDHG update steps to obtain $\nu^\bx_{t+1}$ and $\nu^\by_{t+1}$ involve computing the gradients $\nabla_\bx F(\bx_t,\by_t)= \left( \nabla_x f_1(x^1,y^1),\ldots , \nabla_x f_m(x^m,y^m) \right) $ and $\nabla_\by F(\bx_t,\by_t)=\left(  \nabla_y f_1(x^1,y^1),\ldots , \nabla_y f_m(x^m,y^m) \right)$ using a gradient computation oracle $\mathcal{G}$. We now discuss methods based on two different stochastic gradient oracles to compute the gradients. Before discussing the methods, we state technical assumptions common to both the methods. 


\begin{algorithm}[ht!]
	\caption{Inexact primal dual hybrid algorithm updates using gradient computation oracle $\mathcal{G}$ (IPDHG)}
	\label{alg:generic_procedure_sgda}
	\begin{algorithmic}[1]
		\STATE{\textbf{INPUT:}} {$\ssx, \ssy, \ssD^\ssx, \ssD^\ssy, \ssH^\ssx,\ssH^\ssy, \ssH^{\ssw,\ssx}, \ssH^{\ssw,\ssy}, s, \gamma_{\ssx}, \gamma_{\ssy},\alpha_{\ssx}, \alpha_{\ssy},  \mathcal{G}$}
		\STATE Compute gradients $\mathcal{G}^{\ssx} $ and $\mathcal{G}^{\ssy}$ at $(\ssx,\ssy)$ via oracle $\mathcal{G}$
		\STATE $\nu^{\ssx} = \ssx - s \mathcal{G}^{\ssx} - s \ssD^{\ssx}$  
		\STATE $\hat{\nu}^{\ssx}, \hat{\nu}^{\ssw,\ssx}, \ssH^{\ssx}_{new}, \ssH^{\ssw,\ssx}_{new}= \text{COMM}\left(\nu^{\ssx}, \ssH^{\ssx}, \ssH^{\ssw,\ssx}, \alpha_{\ssx} \right)$
		\STATE $\ssD^{\ssx}_{new} = \ssD^{\ssx} + \frac{\gamma_{\ssx}}{2s}(\hat{\nu}^{\ssx} - \hat{\nu}^{\ssw,\ssx})$  
		\STATE $\hat{\ssx} = \nu^{\ssx} - \frac{\gamma_{\ssx}}{2} (\hat{\nu}^{\ssx} - \hat{\nu}^{\ssw,\ssx})$ 
		\STATE $\ssx_{new} = \prox_{sG} (\hat{\ssx})$ 
		\STATE $\nu^{\ssy} = \ssy + s \mathcal{G}^{\ssy} - s \ssD^{\ssy}$
		\STATE $\hat{\nu}^{\ssy}, \hat{\nu}^{\ssw,\ssy}, \ssH^{\ssy}_{new}, \ssH^{\ssw,\ssy}_{new}=\text{COMM}\left(\nu^{\ssy}, \ssH^{\ssy}, \ssH^{\ssw,\ssy}, \alpha_{\ssy} \right)$   
		\STATE $\ssD^{\ssy}_{new} = \ssD^{y} + \frac{\gamma_{\ssy}}{2s}(\hat{\nu}^{\ssy} - \hat{\nu}^{\ssw,\ssy})$  \label{Dy_update}
		\STATE $\hat{\ssy} = \nu^{\ssy} - \frac{\gamma_{\ssy}}{2} (\hat{\nu}^{\ssy} - \hat{\nu}^{\ssw,\ssy})$ 
		\STATE $\ssy_{new} = \prox_{sR} (\hat{\ssy})$ 
		\STATE {\textbf{RETURN:}}  $\ssx_{new} , \ssy_{new}, \ssD^\ssx_{new}, \ssD^\ssy_{new},\ssH^{\ssx}_{new}, \ssH^{\ssy}_{new}, \ssH^{\ssw,\ssx}_{new}, \ssH^{\ssw,\ssy}_{new}$. 
	\end{algorithmic}
\end{algorithm}
%
\section{Assumptions}\label{sec:assumptions}

We list below the assumptions to be used throughout this work.
\begin{myassumption} \label{s_convexity_assumption} Each $f_i(\cdot,y)$ is $\mu_x$-strongly convex for every $y \in \mathcal{Y}$; hence for any $x_1, x_2 \in \mathcal{X}$ and fixed $y \in \mathcal{Y}$, it holds: $f_i(x_1,y) \geq f_i(x_2,y) + \left\langle \nabla_x f_i(x_2,y), x_1 - x_2 \right\rangle + \frac{\mu_x}{2} \left\Vert x_1 - x_2 \right\Vert^2$.
\end{myassumption}
\begin{myassumption} \label{s_concavity_assumption} Each $f_i(x,\cdot)$ is $\mu_y$-strongly concave for every $x \in \mathcal{X}$; hence for any $y_1, y_2 \in \mathcal{Y}$ and fixed $x \in \mathcal{X}$, it holds:
$f_i(x,y_1) \leq f_i(x,y_2) + \left\langle \nabla_y f_i(x,y_2), y_1 - y_2 \right\rangle -  \frac{\mu_y}{2} \left\Vert y_1 - y_2 \right\Vert^2$.
%
\end{myassumption}
\begin{myassumption}\label{nonsmooth_assumption} $g(x)$ and $r(y)$ are proper, convex, continuous and possibly non-smooth functions.
\end{myassumption}

\begin{myassumption} \label{compression_operator} {\red{The compression operator $Q$ satisfies the following for every $u \in \mathbb{R}^d$: (i) $Q(u)$ is an unbiased estimate of $u$: $E\left[ Q(u) \right]=u$ (ii) $E[ \Vert Q(u) - u \Vert^2 ] \leq \delta \Vert u \Vert^2$}}, 
where the constant {\red{$ \delta \geq 0 $}} denotes the amount of compression induced by operator $Q$ and is called a compression factor.  When $\delta = 0$, $Q$ achieves no compression.
\end{myassumption}
\begin{myassumption} \label{weight_matrix_assumption} The weight matrix $W$ satisfies the following conditions: $W$ is symmetric and row stochastic, $W_{ij} >0$ if and only if $(i,j) \in \mathcal{E}$ and $W_{ii} > 0$ for all $i \in [m]$. The eigenvalues of $W$ denoted by $\lambda_1, \ldots, \lambda_m$ satisfy: $-1 < \lambda_m \leq \lambda_{m-1} \leq \ldots \leq \lambda_2 < \lambda_1 = 1$.
\end{myassumption}
Note that Assumptions \ref{s_convexity_assumption}-\ref{nonsmooth_assumption} and Assumption \ref{weight_matrix_assumption} are standard in the study of saddle point problems (e.g. \cite{beznosikov2020distributed, beznosikovetal2020distsaddle, mukherjeecharaborty2020decentralizedsaddle, liu2020decentralized}). \red{We also note that Assumption \ref{compression_operator} is standard in existing works (e.g. \cite{alistarh2017qsgd, li2021decentralized, liu2020linear}). For example, $b$-bits quantization operator (see Appendix \ref{appendix_experiments} ) satisfies Assumption \ref{compression_operator}. }


%
\section{General Stochastic Setting}
\label{sec:genstoch}

In the general stochastic setting, we allow for availability of heterogeneous data distributions in each node and assume that the gradients $\nabla_x f_i(x,y)$ and $\nabla_y f_i(x,y)$ are computed using an oracle $\mathcal{G}$ described below.

\begin{tcolorbox}[top=0pt,bottom=0pt]
	\textbf{General Stochastic Gradient Oracle (GSGO):} \\
	\textbf{(1).} Sample a mini-batch of samples $\xi^i \sim D_i$ in each node $i$, where $D_i$ is the data distribution local to node $i$. \\
	 \textbf{(2).} Compute stochastic gradients: $\mathcal{G}^{i,x} = \nabla_x f_i(x^i,y^{i};\xi^i)$ and $\mathcal{G}^{i,y} = \nabla_y f_i(x^i,y^{i};\xi^i)$.
\end{tcolorbox}
%

Inspired by restart based schemes in single machine setting \cite{yan2020optimal, yanetal2019stocprimaldual}, we design in this work, a restart based stochastic gradient method illustrated in Algorithm \ref{alg:CRDPGD_known_mu}, which invokes in every iteration $k$, a sequence of $t_k$ primal and dual variable updates using inexact PDHG with compression (Algorithm \ref{alg:generic_procedure_sgda}). However, the restart scheme proposed in Algorithm \ref{alg:CRDPGD_known_mu} is simpler than that in \cite{yanetal2019stocprimaldual}, where the restart scheme in single machine setting requires the computation of Fenchel conjugate at every restart incurring additional $\mathcal{O}(d_y)$ operations. Our scheme is also different in design compared to other restart based schemes studied for saddle point problems in single machine setting \cite{zhao2021accelerated, hinder2020generic, liu2021firstorder}. In Algorithm \ref{alg:CRDPGD_known_mu}, step length $s_k$ is chosen to decrease geometrically only at every restart step $k$, resulting in better convergence. The details of derivations of parameter $\rho$, step lengths $\gamma^x_k$, $\gamma^y_k$ and parameters $\alpha_{x,k}, \alpha_{y,k}$ used in COMM procedure are provided in Appendix \ref{appendix_parameters_feasibility}.   

Under appropriate assumptions on unbiasedness and smoothness of the stochastic gradients (see Appendix \ref{appendix_C}), we have the following convergence result of Algorithm \ref{alg:CRDPGD_known_mu}.
\begin{theorem} \label{thm:complexity_general_stoch} Suppose $\{x_{k,0}\}_k$ and $\{y_{k,0}\}_k$ are the sequences generated by Algorithm \ref{alg:CRDPGD_known_mu}. Then with at most
 {\red{ \begin{align}
	K(\epsilon) = \max \Bigg \lbrace \mathcal{O} \left( \log_2 \left( \frac{\Vert z_0 - \mathbf{1}z^\star \Vert^2}{\epsilon} + \frac{\sqrt{\delta}}{mL^2\kappa_f^2 \epsilon} \right)  \right), \mathcal{O}\left(\log_2 \left( \frac{(1+\delta)^2 + m\sigma^2\kappa_g(1+\delta)^2}{L^2 \epsilon} \right) \right) \Bigg \rbrace, \nonumber 
	\end{align}}}
	iterations, $E[ \Vert x_{K(\epsilon),0} - \mathbf{1}x^\star \Vert^2 + \Vert y_{K(\epsilon),0} - \mathbf{1}y^\star \Vert^2 ]  \leq \epsilon$, where $\sigma^2$ is the local variance bound in stochastic gradients (see Assumption \ref{assumption_unbiased_grad} in Appendix \ref{appendix_C}). Moreover the total gradient computation complexity and communication complexity to achieve $\epsilon$-accurate saddle point solution in expectation are
	\begin{align}
		T_{\text{grad}}(\epsilon) = {\red{\mathcal{O} \left( \max \left\lbrace  \frac{(1+\delta)^2\left\Vert z_0 - \mathbf{1}z^\star \right\Vert \kappa_f^2 \kappa_g }{\sqrt{\epsilon}} + \frac{(1+\delta)^2\delta^{1/4}\kappa_f\kappa_g}{\sqrt{m}L\sqrt{\epsilon}} , \frac{(1+\delta)^4(\kappa_f^2 \kappa_g +m\sigma^2\kappa_f^2\kappa_g^2 )}{L^2\epsilon}  \right\rbrace  \right)}}  \nonumber 
	\end{align}
	and $T_{comm}(\epsilon)=T_{\text{grad}}(\epsilon) + K(\epsilon)$ 
	respectively.
\end{theorem}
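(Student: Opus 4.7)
The plan is to establish a restart-based convergence analysis that proceeds in two layers: an inner-iteration contraction for the compressed inexact PDHG updates (Algorithm \ref{alg:generic_procedure_sgda}) within a single epoch $k$, and an outer telescoping across the restart epochs. First, I would introduce a composite Lyapunov (potential) function of the form
\[
\Phi_{k,t} = \lVert x_{k,t} - \mathbf{1}x^\star\rVert^2 + \lVert y_{k,t} - \mathbf{1}y^\star\rVert^2 + c_1 s_k\bigl(\lVert D^x_{k,t} - D^{x,\star}\rVert^2 + \lVert D^y_{k,t} - D^{y,\star}\rVert^2\bigr) + c_2\bigl(\lVert \nu^x_{k,t} - H^x_{k,t}\rVert^2 + \lVert \nu^y_{k,t} - H^y_{k,t}\rVert^2\bigr),
\]
where the third group tracks the dual Lagrange error (conditioned to live in the range of $U\otimes I$) and the fourth group tracks the compression memory gap that is contracted inside COMM. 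The coefficients $c_1,c_2$ are chosen by parameter matching to the step-sizes $\gamma^x_k,\gamma^y_k,\alpha_{x,k},\alpha_{y,k}$ defined in Steps 4--5 of Algorithm \ref{alg:CRDPGD_known_mu}, exactly so that cross terms produced by the proximal step, the dual update, and the compression operator cancel cleanly.

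Second, I would prove a single-step inequality of the form
\[
E\bigl[\Phi_{k,t+1}\bigr]\;\le\;\Bigl(1-\tfrac{\rho}{2^{k/2}}\Bigr)\,\Phi_{k,t}\;+\;C\, s_k^2\,\sigma^2,
\]
where the contraction factor comes from strong convexity/concavity combined with Assumption \ref{weight_matrix_assumption} (which gives the $\kappa_g$ dependence through $\lambda_{\max}(I-W)$), and the residual noise $C s_k^2 \sigma^2$ is the variance inflation from GSGO (bounded by Assumption \ref{compression_operator} and the bounded-variance assumption in Appendix \ref{appendix_C}). The key computation is to exploit the identity $\hat{\bx}_{t+1}=\nu^{\bx}_{t+1}-\tfrac{\gamma}{2}((I_m-W)\otimes I_{d_x})\nu^{\bx}_{t+1}$ together with firm nonexpansiveness of $\prox_{sG}$; the corresponding bound for $y$ is symmetric. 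The coefficients $M_{x,k},M_{y,k},M_k$ in Steps 6--7 are precisely the per-epoch contraction moduli produced by this calculation, which is why the number of inner iterations $t_k$ in Step 8 scales as $1/(-\log(1-\rho 2^{-k/2}))$ times $\log(1/M_k)$.

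Third, iterating the single-step inequality $t_k$ times gives
\[
E[\Phi_{k,t_k}]\;\le\;\Bigl(1-\tfrac{\rho}{2^{k/2}}\Bigr)^{t_k}\Phi_{k,0}\;+\;C\,s_k^2\sigma^2\cdot\tfrac{2^{k/2}}{\rho},
\]
and the choice of $t_k$ in Step 8 is tuned so the multiplicative factor is at most $\tfrac{1}{2}M_k$, ensuring $E[\Phi_{k+1,0}]\le \tfrac{1}{2}\Phi_{k,0}+C' s_k\sigma^2/\rho$. Unrolling this one-dimensional linear recursion with the geometric decay $s_k = s_0/2^{k/2}$, the two competing terms both shrink geometrically; solving $\tfrac{1}{2^K}\Phi_{0,0}+C'' \sigma^2 s_0/2^{K/2}\le\epsilon$ for $K$ yields the $K(\epsilon)$ stated in the theorem, where the two arguments inside the outer $\max$ correspond respectively to the initial-distance and the stochastic-noise terms.

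Finally, the total gradient/communication work is $T_{\mathrm{grad}}(\epsilon)=\sum_{k=0}^{K(\epsilon)-1} t_k$. Since each $t_k$ scales like $2^{k/2}\log(1/M_k)/\rho$ and the logarithmic factors absorb into $\mathcal{O}(\cdot)$, the sum is dominated by the last epoch, producing the $\mathcal{O}(1/\epsilon)$ dependence with the $(1+\delta)^4\kappa_f^2\kappa_g^2$ and $\sigma^2$ pieces displayed in the theorem; the $(1+\delta)^4$ factor tracks through because $\gamma_k^x,\gamma_k^y$ each carry $(1+\delta)^{-2}$. The communication count adds $K(\epsilon)$ for the extra initialization broadcasts at the start of each epoch (line 9 of Algorithm \ref{alg:CRDPGD_known_mu}). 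The principal obstacle I expect is the one-step contraction for $\Phi_{k,t}$: bounding the cross terms between the compression error and the stochastic-gradient noise without losing the $M_k$ rate requires careful use of Young's inequality with weights depending on $\delta$, $\lambda_{\max}(I-W)$, and $\mu_x,\mu_y$, and it is here that the specific choices of $\alpha_{x,k},\alpha_{y,k},\gamma^x_k,\gamma^y_k$ in Algorithm \ref{alg:CRDPGD_known_mu} become essential.
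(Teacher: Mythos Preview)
Your high-level two-layer strategy (inner-loop Lyapunov contraction plus outer restart telescoping, then summing $t_k$) is exactly the paper's approach. However, two technical choices in your plan diverge from the paper and the second one is a genuine gap.

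First, the Lyapunov. The paper's potential is
\[
\Phi_{k,t}=M_{x,k}\lVert x_{k,t}-\mathbf{1}x^\star\rVert^2+\tfrac{2s_k^2}{\gamma_{x,k}}\lVert D^x_{k,t}-D^\star_x\rVert^2_{(I-W)^\dagger}+\sqrt{\delta}\,\lVert H^x_{k,t}-H^\star_{x,k}\rVert^2+\text{($y$ analogues)},
\]
not the plain-norm $D$ term and compression-gap $\lVert\nu-H\rVert^2$ you propose. The $(I-W)^\dagger$-weighted norm is what makes the cross term with $D^y_{k,t}-D^\star_y$ collapse via the identity $(I-W)(I-W)^\dagger(D-D^\star)=D-D^\star$ on $\mathrm{Range}(I-W)$; with the Euclidean norm that cancellation does not occur and you would pick up an extra $\kappa_g$ factor. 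Likewise, anchoring at $H^\star$ (rather than tracking $\lVert\nu-H\rVert^2$, which is not a state variable) is what lets the compression term telescope cleanly with coefficient $\sqrt{\delta}(1-\alpha_{y,k})$.

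Second, and more importantly, your outer recursion $E[\Phi_{k+1,0}]\le\tfrac12\Phi_{k,0}+C's_k\sigma^2/\rho$ does not close as written, because line~9 of Algorithm~\ref{alg:CRDPGD_known_mu} \emph{resets} $D_{k+1,0}=0$ and $H_{k+1,0}=z_{k+1,0}$ at every restart. Hence $\Phi_{k+1,0}$ contains the fixed terms $\lVert D^\star_x\rVert^2_{(I-W)^\dagger}$, $\lVert D^\star_y\rVert^2_{(I-W)^\dagger}$ and $\lVert z_{k+1,0}-H^\star\rVert^2$, which are \emph{not} bounded by $\Phi_{k,t_k}$. The paper therefore runs the outer recursion in $\lVert z_{k,0}-\mathbf{1}z^\star\rVert^2$ alone: it bounds $\Phi_{k,0}$ from above by $(M_{\cdot,k}+2\sqrt{\delta})\lVert z_{k,0}-\mathbf{1}z^\star\rVert^2$ plus absolute constants $A_1/2^k+A_2/2^{k/2}$ coming from $\lVert D^\star\rVert^2_{(I-W)^\dagger}$ and $\lVert\nabla f(z^\star)\rVert^2$ (scaled by $s_k^2$), and tunes $t_k$ so that $(1-\rho/2^{k/2})^{t_k}$ kills the $(M_{\cdot,k}+2\sqrt{\delta})/M_k$ ratio down to $\tfrac13$. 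This yields $E\lVert z_{k+1,0}-\mathbf{1}z^\star\rVert^2\le\tfrac13 E\lVert z_{k,0}-\mathbf{1}z^\star\rVert^2+A_1/2^k+(A_2+A_3)/2^{k/2}$, which is the recursion that actually produces the two branches of $K(\epsilon)$ in the theorem statement. Without this reset bookkeeping, the $\tfrac{1}{mL^2\kappa_f^2\epsilon}$ and $\tfrac{(1+\delta)^2}{L^2\sqrt{\delta}\epsilon}$ pieces inside $K(\epsilon)$ have no source in your argument.
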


\begin{algorithm}[ht!]
	\caption{\textbf{R}estart-based \textbf{D}ecentralized \textbf{P}roximal \textbf{S}tochastic \textbf{G}radient method with \textbf{C}ompression (C-RDPSG)}
	\label{alg:CRDPGD_known_mu}
	\begin{algorithmic}[1]
		\STATE {\textbf{INPUT:}} $x^i_{0,0} = x_0, y^i_{0,0} = y_0  , s_0 = \frac{1}{4L\kappa_f} $, $\mathcal{G}$ obtained using GSGO, number of iterations $K$, \red{$\rho$ depending on $\kappa_f, \kappa_g$ and $\delta$ }. 
		\FOR{$k=0$ {\bfseries to} $K-1$ } 
		\STATE $s_{k} = \frac{s_0}{2^{k/2}}$, $b_{x,k} = \mu_xs_k - 4s_k^2L^2_{yx}$, $b_{y,k} = \mu_ys_k - 4s_k^2L^2_{xy}$
		\STATE $\gamma^x_{k}$$=$$\frac{b_{x,k}}{2(1+\delta)^2\lambda_{\max}(I_m-W)}$, $\gamma^y_{k}$$=$$\frac{b_{y,k}}{2(1+\delta)^2\lambda_{\max}(I_m-W)}$ 
		\STATE $\alpha_{x,k}$$=$$\frac{b_{x,k}}{1+\delta}$, $\alpha_{y,k}$$=$$ \frac{b_{y,k}}{1+\delta}$
		\STATE $M_{x,k}$$=$$1 - \frac{\sqrt{\delta}\alpha_{x,k}}{1-\frac{\gamma^x_{k}}{2}\lambda_{\max}(I-W)}$,   $M_{y,k}$$=$$1 - \frac{\sqrt{\delta}\alpha_{y,k}}{1-\frac{\gamma^y_{k}}{2}\lambda_{\max}(I-W)}$, $M_k$$=$$\min \{M_{x,k} ,M_{y,k} \}$
		\STATE Set 
		$t_k = \frac{1}{-\log\left( 1 - \frac{\rho}{2^{k/2}} \right)} \max \{  \log\left( \frac{3M_{x,k} + 6\sqrt{\delta}}{M_k} \right), 
		\log\left( \frac{3M_{y,k} + 6\sqrt{\delta}}{M_k} \right), \log\left( \frac{3}{M_k} \right) \}$
		\STATE {{$D^{x}_{k,0}= D^{y}_{k,0}=0$, $H^x_{k,0}=x_{k,0}$, $H^y_{k,0} = y_{k,0}$, $H^{w,x}_{k,0} = (W \otimes I_{d_x})H^x_{k,0}$, $H^{w,y}_{k,0} = (W \otimes I_{d_y})H^y_{k,0}$}} 
		\FOR{$t=0$ {\bfseries to} $t_k -1$}
		\STATE $x_{k,t+1}, y_{k,t+1},D^{x}_{k,t+1},D^y_{k,t+1},{{H^{x}_{k,t+1}, H^{y}_{k,t+1}, H^{w,x}_{k,t+1}, H^{w,y}_{k,t+1}}}$$ \newline 
		\hspace*{3.9em}  =$$\text{IPDHG}(x_{k,t}, y_{k,t}, D^{x}_{k,t}, D^{y}_{k,t}
		, {{H^{x}_{k,t}, H^{y}_{k,t}, H^{w,x}_{k,t}, H^{w,y}_{k,t}}} , 
		s_{k}, \gamma^x_{k}, \gamma^y_{k}, \alpha_{x,k}, \alpha_{y,k}, \mathcal{G})$
		\ENDFOR
		\STATE $x_{k+1,0} = x_{k,t_k}$, $y_{k+1,0} = y_{k,t_k}$ 
		\ENDFOR
		\STATE {\textbf{RETURN:}} $x_{K,0}, y_{K,0}$.
	\end{algorithmic}
\end{algorithm}

Due to space considerations, we discuss proof details in Appendix \ref{appendix_proof_genstoc_mainresult}. We note that the number of outer iterates $K(\epsilon)$ in Theorem \ref{thm:complexity_general_stoch} depend logarithmically on $\kappa_g, \delta$ and total variance bound $m\sigma^2$. Larger values of these parameters contribute to the accumulation of consensus, compression and gradient approximation errors. Hence more restarts might be required to reduce these errors accumulated during IPDHG updates with GSGO (see Figures \ref{fig:comparison_bits_dsgd}, \ref{fig:compression_error_num_bits}, \ref{fig:dsgd_behavior_nodes} and \ref{fig:dsgd_behavior_nodes_ring} in Appendix \ref{appendix_experiments} for empirical evidence of this fact). The computation complexity in Theorem \ref{thm:complexity_general_stoch} depends on compression factor and graph condition number as $\mathcal{O}(\delta^4)$ and $\mathcal{O}(\kappa^2_g)$. The dependence on graph condition number reduces to $\mathcal{O}(\kappa_g)$ when $\sigma = 0$. Therefore, in the deterministic gradients regime, C-RDPSG is less sensitive to change in network topology compared to stochastic gradients regime. The optimal complexity to solve strongly convex-strongly concave saddle point problems without compression is shown to be $\mathcal{O}(\kappa_f\sqrt{\kappa_g})$ \cite{kovalev2022optimal}. Therefore, the complexity in Theorem \ref{thm:complexity_general_stoch} is not optimal, however we observe convergence speedup for C-RDPSG over baseline methods in our empirical study.
\section{Finite Sum Setting} \label{sec:finitesum}

In the finite sum setting, we assume that each local function $f_i(x,y)$ is of the form $\frac{1}{n}\sum_{j = 1}^n f_{ij}(x,y)$. For simplicity, we assume that each node $i$ has same number of batches $n$. However, our analysis easily extends to different number of batches $n_i$. \red{Let $N$ denote the total number of samples. Let $B$ and $N_\ell$ respectively denote the batch size and number of local samples at each node $i$. The number of samples in the function component $f_{ij}$ is determined by the batch size $B = N_\ell/n$.} Let $\mathcal{P}_i = \{ p_{il}: l \in \{ 1, 2, \ldots, n \} \}$ denote a probability distribution where $p_{il}$ is the probability with which batch $l$ is sampled at node $i$. Let \red{$p_{\min}\coloneqq\min_{i,l} p_{il}$. 
Without loss of generality we assume that $p_{\min} > 0$, hence each batch is chosen with a positive probability.} Note that GSGO in Algorithm \ref{alg:CRDPGD_known_mu} shows sublinear convergence for solving \eqref{eq:main_opt_problem}. Stochastic variance reduction techniques \cite{johnsontong2013svrg, kovalev2020don} are known to accelerate convergence of GSGO based methods for decentralized convex minimization problems \cite{li2021decentralized, xinetal2020svrgmin}.  Inspired by this success, we propose a Stochastic Variance Reduced  Gradient (SVRG) oracle comprising the following steps.

\begin{tcolorbox}[top=0pt,bottom=0pt]
	\textbf{Stochastic Variance Reduced Gradient Oracle (SVRGO):} \\	
\textbf{(1).} \textbf{Index sampling:} Sample $l \in \{ 1, 2, \ldots, n \} \sim \mathcal{P}_i$ for every node $i$. \\
\textbf{(2).} \textbf{Stochastic gradient computation with variance reduction:} For a reference point $\tilde{z}^i=(\tilde{x}^i,\tilde{y}^i)$, compute stochastic gradients at $z^i=(x^i,y^i)$ with respect to $x$ and $y$ as follows:
\begin{align}
	\mathcal{G}^{i,x} & = \frac{1}{np_{il}} \left( \nabla_x f_{il}(z^i) - \nabla_x f_{il}(\tilde{z}^i) \right) + \nabla_x f_i(\tilde{z}^i) , \label{svrg_grad_x} \\
	\mathcal{G}^{i,y} & = \frac{1}{np_{il}} \left( \nabla_y f_{il}(z^i) - \nabla_y f_{il}(\tilde{z}^i) \right) + \nabla_y f_i(\tilde{z}^i).  \label{svrg_grad_y}
\end{align} \\
\textbf{(3).} \textbf{Reference point update:} Sample $\omega \in \{ 0,1 \} \sim \text{Bernoulli}(p)$ and update the reference points as follows:
{\red{ \begin{align}
	\tilde{x}^i & \longleftarrow \omega x^i + (1-\omega)\tilde{x}^i \\
	\tilde{y}^i & \longleftarrow \omega y^i + (1-\omega)\tilde{y}^i .
\end{align}
}}
\end{tcolorbox}

The SVRGO setup above requires computation of full-batch gradient at the reference point periodically (equations \eqref{svrg_grad_x}-\eqref{svrg_grad_y}), but is memory-efficient compared to other variance reduction schemes (e.g. SAGA \cite{defazioetal2014saga}). Note also that the SVRGO based algorithm for single machine setting in \cite{balamuruganbach2016svrgsaddle} is for a differently structured problem than \eqref{eq:main_opt_problem}. The C-DPSVRG methodology using SVRGO is illustrated in Algorithm \ref{alg:svrg_known_mu} with step sizes defined in Appendix \ref{appendix_finite_sum}. 

\begin{algorithm}[ht!]
	\caption{\textbf{D}ecentralized \textbf{P}roximal \textbf{S}tochastic \textbf{V}ariance \textbf{R}eduction method with Compression (C-DPSVRG)}
	\label{alg:svrg_known_mu}
	\begin{algorithmic}[1]
		\STATE {\textbf{INPUT:}} $ x_0 , y_0 , D^x_0 = D^y_0 = 0, H^x_0 = x_0 , H^y_0 = y_0, H^{w,x}_0 = (W \otimes I_{d_x})x_0, H^{w,y}_0 = (W \otimes I_{d_y})y_0 ,  s = \frac{\mu np_{\min}}{24L^2}, \alpha_x, \alpha_y, \gamma_x, \gamma_y$, $\mathcal{G}$ defined using SVRGO. 
		\FOR{$t=0$ {\bfseries to} $T -1$ in parallel for all nodes $i$}
		\STATE $x_{t+1}, y_{t+1},D^{x}_{t+1},D^y_{t+1},{{H^{x}_{t+1}, H^{y}_{t+1}, H^{w,x}_{t+1}, H^{w,y}_{t+1}}}$$ \newline 
		\hspace*{5em}  =$$\text{IPDHG}(x_{t}, y_{t}, D^{x}_{t}, D^{y}_{t}
		, {{H^{x}_{t}, H^{y}_{t}, H^{w,x}_{t}, H^{w,y}_{t}}} , 
		s, \gamma^x, \gamma^y, \alpha_{x}, \alpha_{y}, \mathcal{G})$
		\ENDFOR
		\STATE {\textbf{RETURN:}} $x_{T} , y_{T} $ . 
	\end{algorithmic}
\end{algorithm}

Under suitable assumptions on smoothness of mini-batch gradients (see Appendix \ref{appendix_finite_sum}), the convergence behavior of Algorithm \ref{alg:svrg_known_mu} is given in the following result.
\begin{theorem} \label{thm:svrg_complexity} Let $\{x_{t} \}_t, \{y_{t}\}_t$  be the sequences generated by Algorithm \ref{alg:svrg_known_mu}. Suppose Assumptions \ref{s_convexity_assumption}-\red{\ref{weight_matrix_assumption}} and Assumptions \ref{smoothness_x_svrg}-\ref{lipschitz_yx_svrg} hold. Then  computational and communication complexity of algorithm \ref{alg:svrg_known_mu} for achieving $\epsilon$-accurate saddle point solution in expectation are
\begin{align} 
T(\epsilon) & = \mathcal{O} \left(\max \Bigg \lbrace \frac{\sqrt{\delta}(1+\delta)\kappa_g \kappa_f^2}{np_{\min}} ,(1+\delta)\kappa_g, \frac{(1+\delta)\kappa_f^2}{np_{\min}} , \frac{2}{p} \Bigg \rbrace {\red{\log \left( \frac{\tilde{\Phi}_0}{\epsilon} \right)}} \right) \nonumber 
\end{align}
where ${\tilde{\Phi}_0}$ denotes the distance of the initial values $x_0, y_0, D^x_0, D^y_0, H^x_0, H^y_0$ from their respective limit points (described in equation \eqref{tilde_Phi_t} in Appendix \ref{appendix_finite_sum}).
\end{theorem}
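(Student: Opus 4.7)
\textbf{Proof proposal for Theorem \ref{thm:svrg_complexity}.} The plan is to establish a geometric (linear) contraction on an appropriately chosen Lyapunov potential $\tilde{\Phi}_t$, then invert the contraction to extract the iteration count. The potential will need to track simultaneously: (i) the distances $\|\bx_t - \mathbf{1}x^\star\|^2$ and $\|\by_t - \mathbf{1}y^\star\|^2$ of the primal and dual iterates from the saddle point; (ii) the compression state variables $\|H^\bx_t - H^\star_\bx\|^2$, $\|H^\by_t - H^\star_\by\|^2$ and their gossiped counterparts $H^{w,\bx}_t, H^{w,\by}_t$ (used analogously to the definitions already introduced in the sketch of Theorem \ref{thm:complexity_general_stoch}); (iii) the Lagrangian dual variables $\|D^\bx_t - D^\star_\bx\|^2$, $\|D^\by_t - D^\star_\by\|^2$; and crucially (iv) a variance-reduction term of the form $\|\tilde{\bx}_t - \mathbf{1}x^\star\|^2 + \|\tilde{\by}_t - \mathbf{1}y^\star\|^2$ tied to the SVRGO reference point. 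The final coefficients on each summand will be tuned so that one IPDHG call with SVRGO yields $E[\tilde{\Phi}_{t+1}] \leq (1-\theta) E[\tilde{\Phi}_{t}]$ for a suitable $\theta$.

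First I would reuse the single-step inexact PDHG inequality from the machinery used in Theorem \ref{thm:complexity_general_stoch} (cf.\ Lemma \ref{lem:Phi_kt_recursion}), specialized here with a constant step size $s$, $\gamma^x$, $\gamma^y$, $\alpha_x$, $\alpha_y$ (as stipulated in Appendix \ref{appendix_finite_sum}) rather than the geometrically decaying $s_k$ used in the restart scheme. This inequality bounds the one-step descent in the primal-dual and compression components but contains a residual proportional to the variance of the gradient estimator $\mathcal{G}^{i,x}, \mathcal{G}^{i,y}$. Next I would establish an SVRGO variance bound: by the standard ``control variate'' identity for SVRG along with Assumptions \ref{s_convexity_assumption}--\ref{s_concavity_assumption} and the smoothness assumptions in Appendix \ref{appendix_finite_sum}, the conditional variance $E[\|\mathcal{G}^{i,x} - \nabla_x f_i(z^i)\|^2]$ is upper bounded by $\frac{1}{np_{\min}}$ times a Bregman-like quantity in $z^i$ and $\tilde{z}^i$, which in turn is controlled by $\|z^i - z^\star\|^2 + \|\tilde{z}^i - z^\star\|^2$. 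Finally, the Bernoulli reference-point update gives $E[\|\tilde{z}_{t+1} - \mathbf{1}z^\star\|^2] = p\|z_t - \mathbf{1}z^\star\|^2 + (1-p)\|\tilde{z}_t - \mathbf{1}z^\star\|^2$, which lets me absorb the reference-point drift into the recursion.

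The main obstacle will be the three-way balancing act among (a) the compression parameters $\alpha_x, \alpha_y$ and contraction factors $M_x, M_y$ dictated by the COMM procedure, (b) the PDHG step sizes $s, \gamma^x, \gamma^y$ which must be small enough to contract the primal-dual block against $\mu_x, \mu_y$ while keeping the cross terms $L_{xy}, L_{yx}$ controllable, and (c) the SVRG probability $p$ along with the coefficient placed on the reference-point term in $\tilde{\Phi}_t$: this coefficient must be large enough that the variance residual from step (ii) is dominated, yet small enough that the $(1-p)$ decay does not destroy the overall contraction. This is the analogue, in the finite-sum compressed setting, of the classical SVRG epoch-length balancing, but with compression-induced terms $(1+\delta)$ and $\sqrt{\delta}$ appearing multiplicatively in the admissible range of $s$, $\gamma^x$, $\gamma^y$. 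I expect that after this tuning, the effective contraction factor $\theta$ will behave like $\theta = \Omega\!\left(\min\!\left\{\tfrac{np_{\min}}{\kappa_f^2},\ \tfrac{np_{\min}}{\sqrt{\delta}(1+\delta)\kappa_g\kappa_f^2},\ \tfrac{1}{(1+\delta)\kappa_g},\ \tfrac{np_{\min}}{(1+\delta)\kappa_f^2},\ \tfrac{p}{2}\right\}\right)$, explaining the five terms inside the maximum in the theorem statement.

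With the contraction $E[\tilde{\Phi}_{t+1}] \leq (1-\theta) E[\tilde{\Phi}_t]$ in hand, the last step is routine: iterating yields $E[\tilde{\Phi}_T] \leq (1-\theta)^T \tilde{\Phi}_0$, and choosing $T = \mathcal{O}\!\left(\tfrac{1}{\theta}\log_2\!\tfrac{\tilde{\Phi}_0}{\epsilon}\right)$ gives $E[\tilde{\Phi}_T] \leq \epsilon$; since $\|\bx_T - \mathbf{1}x^\star\|^2 + \|\by_T - \mathbf{1}y^\star\|^2$ is one of the nonnegative components of $\tilde{\Phi}_T$, this yields the claimed $\epsilon$-accurate saddle-point solution. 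Substituting the bound on $1/\theta$ produces exactly the stated complexity. Because each outer iteration performs $\mathcal{O}(1)$ communications of the compressed vectors $\nu^\bx, \nu^\by$ and $\mathcal{O}(1)$ stochastic gradient evaluations per node (with an $\mathcal{O}(n)$ full-batch pass incurred with probability $p$, contributing only a constant factor when $1/p$ appears in the $\max$), the gradient and communication complexities coincide up to constants, matching the theorem.
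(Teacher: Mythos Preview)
Your proposal is correct and follows essentially the same route as the paper. The paper defines exactly the Lyapunov potential you describe (with the minor caveat that the $D$-terms enter with the weighted norm $\|D^\bx_t-D^\star_\bx\|^2_{(I-W)^\dagger}$ and the primal/dual distances carry coefficients $M_x,M_y$, so at the end one divides by $M=\min\{M_x,M_y\}\ge 3/7$ rather than reading off the distance directly), proves the one-step contraction $E[\tilde{\Phi}_{t+1}]\le\rho\,E[\tilde{\Phi}_t]$ by combining the IPDHG recursion lemmas, the SVRGO variance bound (Lemma~\ref{lem:exp_xkt_ykt_svrg}), and the Bernoulli reference-point identity, and then bounds $1/(-\log\rho)$ term-by-term via the parameter feasibility lemma to produce the five-term maximum you anticipated.
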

Due to space considerations, we have given proof details of Theorem \ref{thm:svrg_complexity} in Appendix \ref{appendix_svrg_main_result}.

Theorem \ref{thm:svrg_complexity} demonstrates the linear convergence of algorithm \ref{alg:svrg_known_mu} to saddle point solution with explicit dependence on $\kappa_f, \kappa_g$ and $\delta$. Recently, \textit{non-compression} based methods in \cite{kovalev2022optimal} are shown to have optimal complexity for solving finite sum saddle-point problems with rates depending on $\mathcal{O}(\kappa_f)$ and $\mathcal{O}(\sqrt{\kappa_g})$. Note that optimal dependence on $\kappa_g$ in \cite{kovalev2022optimal} is obtained at the cost of implementing a gossip scheme with multiple rounds of communication at every iterate update. In contrast to this, Algorithm \ref{alg:svrg_known_mu} does not invoke gossip scheme and hence yields cheaper communications per iterate update. We believe that weak dependence of our complexity result  on $\kappa_f$ can be improved from $\mathcal{O}(\kappa_f^2)$ to $\mathcal{O}(\kappa_f)$ using momentum techniques, which we leave for future work. Further, the complexity in Theorem \ref{thm:svrg_complexity} depends on compression factor as $\mathcal{O}(\delta^{3/2})$. Without compression ($\delta = 0$), the complexity reduces to $\mathcal{O}(\max \{\kappa_g, \frac{\kappa_f^2}{np_{\min}}, \frac{2}{p} \} \log ( \frac{\tilde{\Phi}_0}{\epsilon} ) )$. The result in Theorem \ref{thm:svrg_complexity} also depends on reference probability $p$, minimum batch sampling probability $p_{\min}$ and number of batches $n$. The choice of these parameters affects the number of gradient computations in Algorithm \ref{alg:svrg_known_mu}. The following corollary of Theorem \ref{thm:svrg_complexity} gives particular settings of number of batches $n$ and reference probability $p$ yielding factors of the form $\sqrt{N_\ell}+N_\ell$ for total gradient computations per node, which resemble the factors in the corresponding complexity results for optimal algorithms to solve decentralized variational inequalities without compression \cite{kovalev2022optimal}.    

\red{
\begin{corollary} \label{cor:grad_comp_svrg} Under the setting of Theorem \ref{thm:svrg_complexity}, choose $n > \sqrt{N_\ell}, p = B/\sqrt{N_\ell}$ and $p_{\min} = \sqrt{N_\ell}/2n^2$. Then the total number of gradient computations per node to achieve $\epsilon$-accurate saddle point solution is of order 
$\mathcal{O} ( ( \sqrt{N_\ell} + N_\ell )  \max \lbrace \sqrt{\delta}(1+\delta)\kappa_g \kappa_f^2 ,\frac{(1+\delta)\kappa_g}{2}, (1+\delta)\kappa_f^2  \rbrace \log ( \frac{\tilde{\Phi}_0}{\epsilon}) )$. 
\end{corollary} 
}
Proof of Corollary \ref{cor:grad_comp_svrg} is provided in Appendix \ref{appendix_svrg_main_result}.

\section{Related Work}
\label{sec:related_work}

\setlength\arrayrulewidth{1pt}
\begin{table}[ht!]
	\setlength{\tabcolsep}{1pt}
		\centering
		\begin{tabular}{|cc|c|c|c|c|c|c|}
			\hline
			\multicolumn{2}{|l|}{}  & \textbf{Algorithm} &  \textbf{SG} &  \textbf{Non-} &  \textbf{Type of } & \textbf{Computation} & \textbf{Communication}  \\
			\multicolumn{2}{|l|}{} &  &  &   \textbf{smooth} &  \textbf{functions} & \textbf{Complexity} & \textbf{Complexity}  \\ \hhline{|-|-|-|-|-|-|-|-|}
			\multicolumn{1}{|c|}{\multirow{12}{*}{ \rotatebox{90}{\textbf{No compression}} }} & \multicolumn{1}{c|}{\multirow{4}{*}{\rotatebox{90}{\textbf{Gossip}}}} &  MMDS \cite{beznosikovetal2020distsaddle}  &  \xmark & \xmark & \cellcolor[HTML]{FFCB2F} SC-SC &  $\textemdash$ &  $\tilde{\mathcal{O}}( \log^2(\frac{1}{\epsilon}) )$  \\ \hhline{|~|~|-|-|-|-|-|-|}
			\multicolumn{1}{|l|}{} &  &  DES \cite{beznosikov2020distributed} & \cmark  &  \xmark & \cellcolor[HTML]{FFCB2F} SC-SC & $\tilde{\mathcal{O}}( \frac{\kappa_f^2}{L^2\epsilon} )$ & $\tilde{\mathcal{O}}( \kappa_f \sqrt{\kappa}_g \log(1/\epsilon ) )$  \\ \hhline{|~|~|-|-|-|-|-|-|}
			\multicolumn{1}{|l|}{} &  & Algorithm 1+3 \cite{kovalev2022optimal}  & \cmark & \cmark & \cellcolor[HTML]{FFCB2F} SC-SC  & $\mathcal{O}\left(\kappa_f \log(1/\epsilon ) \right)$  & $\mathcal{O}\left(\kappa_f \sqrt{\kappa_g} \log(1/\epsilon ) \right)$ \\ \cline{3-8}
			\multicolumn{1}{|l|}{} &  &  DPOSG \cite{liu2020decentralized} & \cmark & \xmark & NC-NC & $\mathcal{O}(\frac{1}{\epsilon^{12}})$ & $\tilde{\mathcal{O}}( \frac{1}{\epsilon^{12}} )$ \\ \hhline{|~|-|-|-|-|-|-|-|} 
			\multicolumn{1}{|l|}{} & \multirow{8}{*}{ \rotatebox{90}{\textbf{No Gossip} } } &  GT-EG \cite{mukherjeecharaborty2020decentralizedsaddle} & \xmark & \xmark & \cellcolor[HTML]{FFCB2F} SC-SC & $\mathcal{O}\left(\kappa_f^{4/3} \kappa_g^{4/3} \log\left(\frac{1}{\epsilon} \right)\right)$ & $\mathcal{O}\left(\kappa_f^{4/3} \kappa_g^{4/3} \log\left(\frac{1}{\epsilon} \right)\right)$  \\ \hhline{|~|~|-|-|-|-|-|-|} 
			\multicolumn{1}{|l|}{} &  & DSPAwLA\cite{nunezcortes2017distsaddlesubgrad}  & \xmark & \cmark & C-C & $\mathcal{O}(1/\epsilon^2)$ & $\mathcal{O}(1/\epsilon^2)$  \\ \cline{3-8} 
			\multicolumn{1}{|l|}{} &  &   DMHSGD \cite{xianetal2021fasterdecentnoncvxsp} & \cmark & \xmark & NC-SC & $\mathcal{O}(\frac{\kappa^3}{(1-\lambda_2(W))^2\epsilon^{3}} )$ & $\mathcal{O}(\frac{\kappa^3}{(1-\lambda_2(W))^2\epsilon^{3}} )$  \\ \hhline{|~|~|-|-|-|-|-|-|}
			\multicolumn{1}{|l|}{} &  & {\color{blue}{C-RDPSG}} & \cmark & \cmark & \cellcolor[HTML]{FFCB2F} &  &   \\
			\multicolumn{1}{|l|}{} &  &  \textbf{(Theorem \ref{thm:complexity_general_stoch})} & & & \cellcolor[HTML]{FFCB2F} SC-SC  & $\mathcal{O}(\frac{\kappa_f^2 \kappa^2_g}{L^2\epsilon} )$ & $\mathcal{O}(\frac{\kappa_f^2 \kappa^2_g}{L^2\epsilon} )$ \\
			\multicolumn{1}{|l|}{} &  &  \textbf{(GSS)} & & & \cellcolor[HTML]{FFCB2F}  & & \\ \hhline{|~|~|-|-|-|-|-|-|}
			\multicolumn{1}{|l|}{} &  &  {\color{blue}{C-DPSVRG}}  & \cmark &  \cmark & \cellcolor[HTML]{FFCB2F} & $\mathcal{O}(\max \{\kappa_f^2, \kappa_g \} $ & $\mathcal{O}(\max \{\kappa_f^2, \kappa_g \} $  \\ 
			\multicolumn{1}{|l|}{} &  &  \textbf{(Theorem \ref{thm:svrg_complexity})} & & & \cellcolor[HTML]{FFCB2F} SC-SC & $ \hspace*{0.3cm} \log( \frac{1}{\epsilon} ) )$ & $ \hspace*{0.3cm} \log( \frac{1}{\epsilon} ) )$ \\ 
			\multicolumn{1}{|l|}{} &  &  \textbf{(FSS)} & & & \cellcolor[HTML]{FFCB2F} &  &  \\ \hline \hline
			\multicolumn{1}{|c|}{\multirow{4}{*}{ \rotatebox{90}{\textbf{Compression}}}} & \multirow{4}{*}{\rotatebox{90}{\textbf{No Gossip}}} &  {\color{blue}{C-RDPSG}} & \cmark & \cmark & \cellcolor[HTML]{FFCB2F}  &  &  \\ 
			\multicolumn{1}{|l|}{} &  &  \textbf{(Theorem \ref{thm:complexity_general_stoch})} & & & \cellcolor[HTML]{FFCB2F} SC-SC & $\mathcal{O}(\frac{(1+\delta)^4\kappa_f^2 \kappa^2_g}{L^2\epsilon} )$ & $\mathcal{O}(\frac{(1+\delta)^4\kappa_f^2 \kappa^2_g}{L^2\epsilon} )$  \\
			\multicolumn{1}{|l|}{} &  &  \textbf{(GSS)} & & & \cellcolor[HTML]{FFCB2F}  & & \\ \hhline{|~|~|-|-|-|-|-|-|} 
			\multicolumn{1}{|l|}{} &  & {\color{blue}{C-DPSVRG}} & \cmark &  \cmark & \cellcolor[HTML]{FFCB2F} & $\mathcal{O}((1+\delta) $ & $\mathcal{O}((1+\delta)$  \\ 
			\multicolumn{1}{|l|}{} &  &  \textbf{(Theorem \ref{thm:svrg_complexity})} &  & & \cellcolor[HTML]{FFCB2F} SC-SC & $ \max \{\kappa_f^2, \sqrt{\delta}\kappa^2_f\kappa_g,\kappa_g \} $ & $ \max \{\kappa_f^2, \sqrt{\delta}\kappa^2_f\kappa_g,\kappa_g \}  $  \\ 
			\multicolumn{1}{|l|}{} &  & (\textbf{FSS}) & & & \cellcolor[HTML]{FFCB2F} & $ \hspace*{0.3cm}  \log( \frac{1}{\epsilon} )) $ & $ \hspace*{0.3cm} \log( \frac{1}{\epsilon} ) )$  \\ \hline
		\end{tabular}
	\caption{Comparison of proposed optimization algorithms for decentralized saddle point problems with state-of-the-art algorithms. SG denotes Stochastic Gradient. Abbreviations SC-SC, C-C, NC-NC respectively denote Strongly convex-Strongly concave, Convex-Concave, Nonconvex-Nonconcave. GSS and FSS stands respectively for general stochastic setting and finite sum setting  } 	\label{ratetable}
\end{table}
\setlength\arrayrulewidth{0.4pt} 

\textbf{Decentralized saddle-point problems:} A distributed saddle-point algorithm with Laplacian averaging (DSPAwLA) in \cite{nunezcortes2017distsaddlesubgrad}, based on gradient descent ascent updates to solve non-smooth convex-concave saddle point problems achieves $\mathcal{O}(1/\epsilon^2)$ convergence rate. \red{ DSPAwLA uses consensus constrained formulation of saddle point problem. DSPAwLA  is obtained by incorporating $\ell_2$ norm based penalty of consensus constraints into the objective function and employing gradient descent ascent scheme to the resultant penalized objective. However, our work proposes an equivalent Lagrangian formulation of consensus constrained saddle point problem \eqref{eq:main_opt_consenus_constraint} and updates primal-dual variables using a variant of primal dual hybrid method.} An extragradient method with gradient tracking (GT-EG) proposed in \cite{mukherjeecharaborty2020decentralizedsaddle} is shown to have linear convergence rates for solving decentralized strongly convex strongly concave problems, under a positive lower bound assumption on the gradient difference norm. However such assumptions might not hold for problems without bilinear structure. \red{Both \cite{mukherjeecharaborty2020decentralizedsaddle} and \cite{nunezcortes2017distsaddlesubgrad} are based on non-compression based communications and full batch gradient computations which limit their applicability to large scale machine learning problems.} Recently, multiple works \cite{liu2020decentralized,xianetal2021fasterdecentnoncvxsp, beznosikov2020distributed} have proposed using minibatch gradients for solving decentralized saddle point problems. Decentralized extra step (DES) \cite{beznosikov2020distributed} shows linear  communication complexity with dependence on the graph condition number as $\sqrt{\kappa_g}$,   obtained at the cost of incorporating multiple rounds of communication of primal and dual updates. A near optimal distributed Min-Max data similarity (MMDS) algorithm is proposed in \cite{beznosikovetal2020distsaddle} for saddle point problems with a suitable data similarity assumption. MMDS is based on full batch gradient computations and requires solving an inner saddle point problem at every iteration. MMDS allows for communication efficiency by choosing only one node uniformly at random to update the iterates. However, every node computes the full batch gradient before heading to next gradient based updates. Moreover, this scheme employs accelerated gossip \cite{liu2011accelerated} multiple times to propagate the gradients and model updates to the entire network. Communication complexity of MMDS is shown to depend on eigengap of weight matrix $W$, while gradient computation complexity is not investigated.
Decentralized parallel optimistic stochastic gradient method (DPOSG) was proposed in \cite{liu2020decentralized} for nonconvex-nonconcave saddle point problems. This method involves local model averaging step (multiple communication rounds) to reduce the effect of consensus error. A gradient tracking based algorithm called DMHSGD for solving nonconvex-strongly concave saddle point problems proposed in \cite{xianetal2021fasterdecentnoncvxsp}, uses a large mini-batch at the first iteration and requires the nodes to communicate both model and gradient updates, to achieve better aggregates of quantities. Variance reduction based optimal methods to solve strongly convex-strongly concave nonsmooth finite sum variational inequalities are developed in \red{\cite{kovalev2022optimal}. The improvement of complexity on graph condition number $\kappa_g$ in \cite{kovalev2022optimal} is achieved using an accelerated gossip scheme. However, C-DPSVRG does not involve any gossip scheme and hence yields cheaper communication per iterate. The complexity of C-DPSVRG and C-RDPSG does not have optimal dependence on $\kappa_f$, $\kappa_g$ and we leave it for future work. } Table \ref{ratetable} positions our work in the context of existing methods. 

\section{Numerical Experiments}
\label{sec:experiments}

We evaluate the performance of proposed algorithms on robust logistic regression problem 
\begin{align}
	\min_{ x \in \mathcal{X}} \max_{y \in \mathcal{Y}} \Psi(x,y) =  \frac{1}{N} \sum_{i = 1}^N \log\left( 1+ exp\left( -b_ix^\top(a_i + y)\right) \right)  + \frac{\lambda}{2} \left\Vert x \right\Vert^2_2 -\frac{\beta}{2} \left\Vert y \right\Vert^2_2 \label{main_paper_robust_logistic_regression}
\end{align}
over a binary classification data set $\mathcal{D} = \{(a_i, b_i) \}_{i = 1}^N$.  We consider constraint sets $\mathcal{X}$ and $\mathcal{Y}$ as $\ell_2$ ball of radius $100$ and $1$ respectively. We compute smoothness parameters $L_{xx}, L_{yy}, L_{xy}$ and $L_{yx}$ using Hessian information of the objective function \red{(see Appendix \ref{lipschitz_constant_estimation})} and set strong convexity and strong concavity parameters to $\lambda$ and $\beta$ respectively. Unless stated otherwise, we set $\lambda = \beta = 10$, number of nodes to $m = 20$ and number of batches to $n = 20$ in all our experiments. \red{The initial points $x_0, y_0$ are generated randomly and $D_x, D_y$ are set to $0$. We set up the step sizes of proposed methods and baseline methods using the theoretical values provided in the respective papers. } We implement all the experiments in Python Programming Language.

\noindent \textbf{Datasets:} We rely on four binary classification datasets namely, a4a, phishing and ijcnn1 from \url{https://www.csie.ntu.edu.tw/~cjlin/libsvmtools/datasets/} and sido data from \url{http://www.causality.inf.ethz.ch/data/SIDO.html}. Dataset details are available in Appendix \ref{appendix_experiments}. We distribute the samples across 20 nodes and create 20 mini batches of local samples for all the datasets.

%
\noindent \textbf{Network Setting:} We conduct the experiments for 2D torus topolgy and ring topology. For 2D torus, we generate the weight matrix $W$ with $W_{ij} = 1/5$ for all $(i,j) \in E \cup \{(i,i)\}$. For ring topology, weight matrix $W$ is constructed by setting $W_{ij} = 1/3$ for all $(i,j) \in  E \cup \{(i,i)\}$.

\noindent \textbf{Compression Operator:} In all our experiments, we consider an unbiased $b$-bits quantization operator $Q_{\infty}(x) = ( \Vert x \Vert_\infty 2^{-(b-1)} sign(x) )\cdot  \lfloor \frac{2^{b-1}\vert x \vert}{\Vert x \Vert_{\infty}} + u \rfloor,$ 
%
where $\cdot$ represents Hadamard product, $\left\vert x \right\vert$ denotes elementwise absolute value and $u$ is a random vector uniformly distributed in $\left[ 0,1 \right]^d$. \red{Theorem 3 in \cite{liu2020linear} shows that $Q_{\infty}(x)$  satisfies assumption \ref{compression_operator} with $\delta = \sup_x \frac{\left\Vert sign(x)2^{-(b-1)} \right\Vert^2\left\Vert x \right\Vert^2_\infty}{4\left\Vert x \right\Vert_2^2}$. We know that $\left\Vert x \right\Vert_\infty \leq \left\Vert x \right\Vert_2$ for all $x \in \mathbb{R}^d$. Using this inequality, we can upper bound $\delta$ as follows:
\begin{align}
\delta & \leq \sup_x \frac{\left\Vert sign(x)2^{-(b-1)} \right\Vert^2\left\Vert x \right\Vert^2_2}{4\left\Vert x \right\Vert_2^2} = \sup_x \frac{\left\Vert sign(x)2^{-(b-1)} \right\Vert^2}{4} \leq \frac{d}{4(2^{b-1})^2} . \label{delta}
\end{align} 
} \red{The above bound is independent of $d$ for $b  = 1+\log_2 \sqrt{d}$. We use six different bits value from the set $\{ 1+ \log_2 \sqrt{d}, 2,  4, 8, 16, 32 \} $ to evaluate the behavior of Algorithm \ref{alg:CRDPGD_known_mu} and Algorithm \ref{alg:svrg_known_mu} with number of bits.  } 

\textbf{Baseline methods:} We compare the performance of proposed algorithms C-RDPSG and C-DPSVRG with three noncompression baseline algorithms: (1) Distributed Min-Max Data similarity algorithm \cite{beznosikovetal2020distsaddle} (2) Decentralized Parallel Optimistic Stochastic Gradient (DPOSG) algorithm \cite{liu2020decentralized}  and, (3) Decentralized Minimax Hybrid Stochastic Gradient Descent (DM-HSGD) algorithm \cite{xianetal2021fasterdecentnoncvxsp}. More details are provided in Appendix \ref{appendix_experiments}.

\textbf{Benchmark Quantities:}
We run the centralized and uncompressed version of C-DPSVRG for $50,000$ iterations to get saddle point solution $z^\star = (x^\star, y^\star)$ of problem \eqref{main_paper_robust_logistic_regression}. The performance of all the methods is measured using $\sum_{i = 1}^m \left\Vert z^i_t - z^\star \right\Vert^2$.  



\noindent \textbf{Observations:} C-RDPSG converges faster in the beginning and slows down after reaching around $10^{-8}$ accuracy as depicted in Figure \ref{fig:comparison_with_existing_methods_2dtaurus_a4a_ijcnn1_main}. C-DPSVRG converges faster than other baseline methods. DPOSG and DM-HSGD converges only to a neighborhood of the saddle point solution and starts oscillating after sometime. The inclusion of restart scheme in C-RDPSG helps to mitigate the flat and oscillatory behavior at the later iterations unlike DPOSG and DM-HSGD.
 The performance of C-RDPSG is competitive with DPOSG and DM-HSGD in the long run as demonstrated in Figure \ref{fig:comparison_with_existing_methods_2dtaurus_a4a_ijcnn1_main}. We observe that C-DPSVRG and C-RDPSG are faster than Min-max similarity, DPOSG, DMHSGD for obtaining low accurate solutions in terms of bits transmission and communication cost. We observe similar performance  of proposed methods for ring topology in Figure \ref{fig:comparison_with_existing_methods_ring_main}. \red{As demonstrated in Figure \ref{fig:comparison_bits_dsvrg_mainpaper}, C-DPSVRG transmits less number of bits when $b = 1+\log_2 \sqrt{d}$ to achieve highly accurate solution. We can observe that the convergence behavior of C-DPSVRG is affected when number of bits transmitted is less than $1+\log_2 \sqrt{d}$. For example, the convergence of C-DPSVRG becomes slow  for sido data with $b = 2, 4 < 1+\log_2 \sqrt{d} \approx 7 $ as shown in Figure \ref{fig:comparison_bits_dsvrg_mainpaper}. It shows that $Q_\infty(x)$ provides better performance for $b  =\mathcal{O}(\log_2 d)$ especially for high-dimensional data points. The behavior of C-RDPSG is less affected by varying the number of bits as shown in Figure \ref{fig:comparison_bits_dsgd_mainpaper}. In the long term, C-RDPSG behavior is almost identical for all chosen values of $b$ except for $b = 2$. }

More analysis of these and additional experiments are presented in Appendix \ref{appendix_experiments}.

\begin{figure*}[!htbp]
\begin{minipage}{.33\textwidth}
  \centering
  \includegraphics[width=1\linewidth]{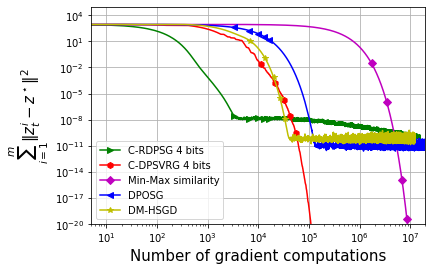}  
\end{minipage}
\begin{minipage}{.33\textwidth}
  \centering
  \includegraphics[width=1\linewidth]{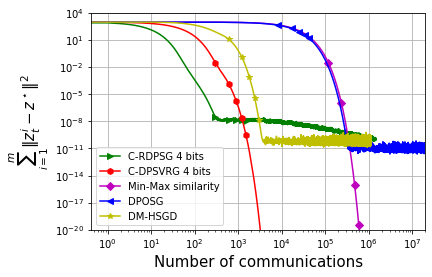}  
\end{minipage}
\begin{minipage}{.33\textwidth}
  \centering
  \includegraphics[width=1\linewidth]{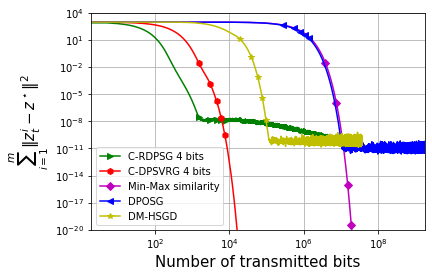}  
\end{minipage}
\begin{minipage}{.33\textwidth}
  \centering
  \includegraphics[width=1\linewidth]{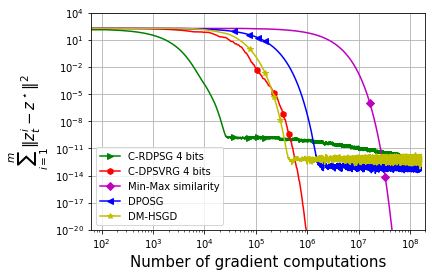}  
\end{minipage}
\begin{minipage}{.33\textwidth}
  \centering
  \includegraphics[width=1\linewidth]{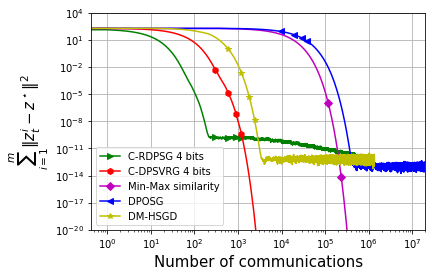}  
\end{minipage}
\begin{minipage}{.33\textwidth}
  \centering
  \includegraphics[width=1\linewidth]{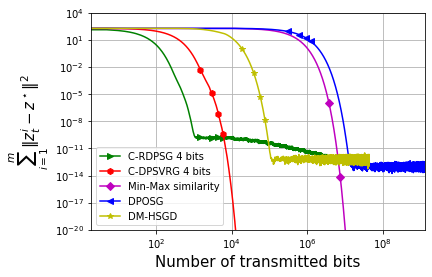}  
\end{minipage}
\caption{Convergence behavior of iterates to saddle point vs. Gradient computations (Column 1), Communications (Column 2), Number of bits transmitted (Column 3) for different algorithms in 2D torus topology. a4a, ijcnn, datasets are in Rows 1,2 respectively.} 
\label{fig:comparison_with_existing_methods_2dtaurus_a4a_ijcnn1_main}
\end{figure*}

\begin{figure*}[!htbp]
\begin{minipage}{.33\textwidth}
  \centering
  \includegraphics[width=1\linewidth]{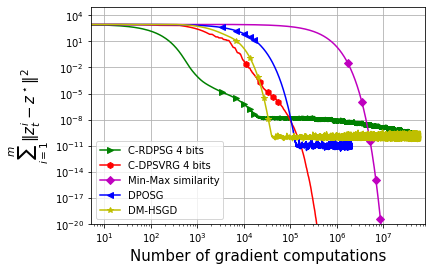}  
\end{minipage}
\begin{minipage}{.33\textwidth}
  \centering
  \includegraphics[width=1\linewidth]{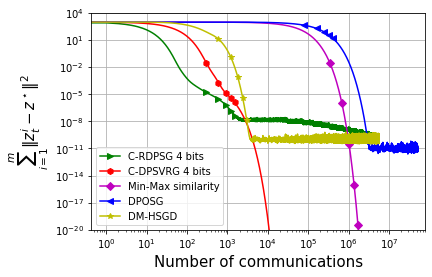}  
\end{minipage}
\begin{minipage}{.33\textwidth}
  \centering
  \includegraphics[width=1\linewidth]{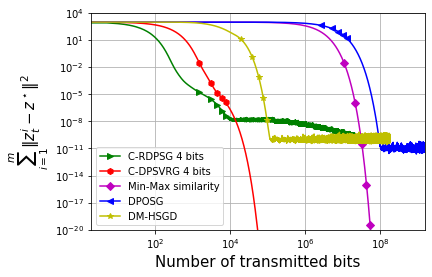}  
\end{minipage}
\begin{minipage}{.33\textwidth}
  \centering
  \includegraphics[width=1\linewidth]{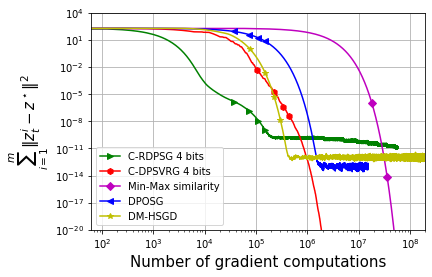}  
\end{minipage}
\begin{minipage}{.33\textwidth}
  \centering
  \includegraphics[width=1\linewidth]{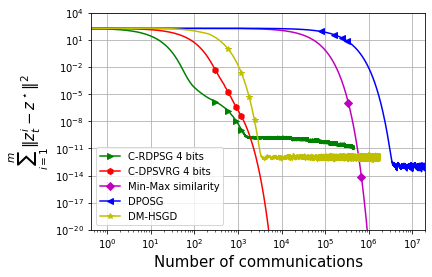}  
\end{minipage}
\begin{minipage}{.33\textwidth}
  \centering
  \includegraphics[width=1\linewidth]{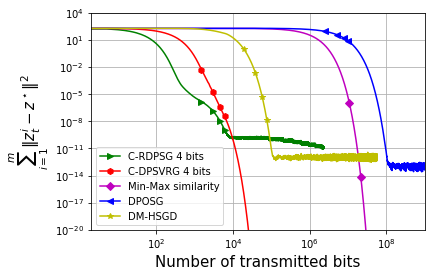}  
\end{minipage}
\caption{Convergence behavior of iterates to saddle point vs. Gradient computations (Column 1), Communications (Column 2), Number of bits transmitted (Column 3) for different algorithms in ring topology. a4a, ijcnn datasets are in Rows 1,2 respectively.} 
\label{fig:comparison_with_existing_methods_ring_main}
\end{figure*}

\begin{figure}[!hbp]
	\begin{minipage}{.24\textwidth}
		\centering
		\includegraphics[width=1\linewidth]{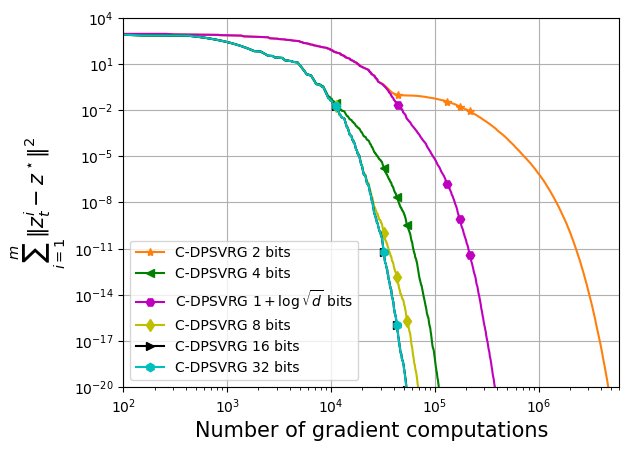}  
	\end{minipage}
	\begin{minipage}{.24\textwidth}
		\centering
		\includegraphics[width=1\linewidth]{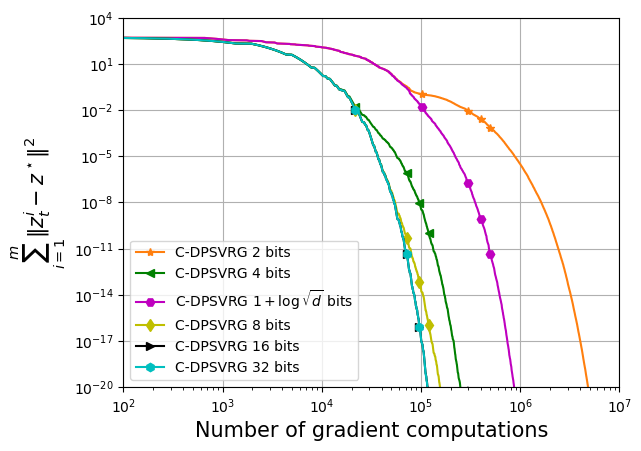}  
	\end{minipage}
	\begin{minipage}{.24\textwidth}
		\centering
		\includegraphics[width=1\linewidth]{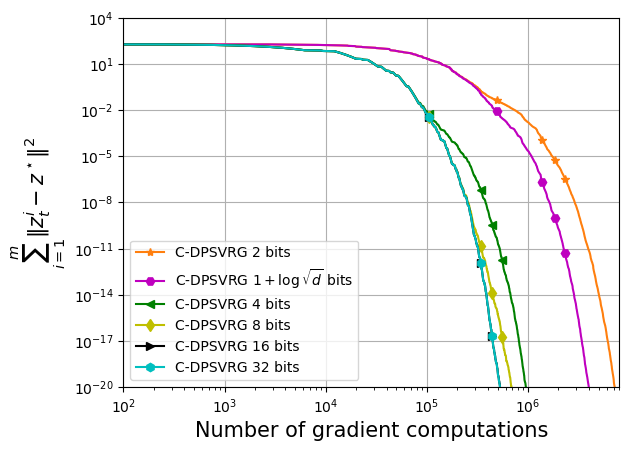}  
	\end{minipage}
	\begin{minipage}{.24\textwidth}
		\centering
		\includegraphics[width=1\linewidth]{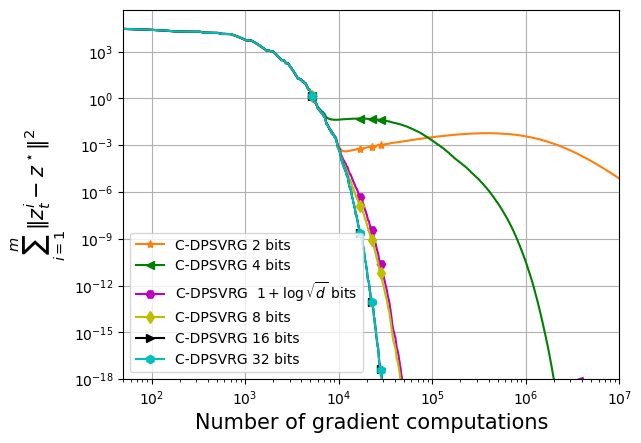}  
	\end{minipage}
	\begin{minipage}{.24\textwidth}
		\centering
	\includegraphics[width=1\linewidth]{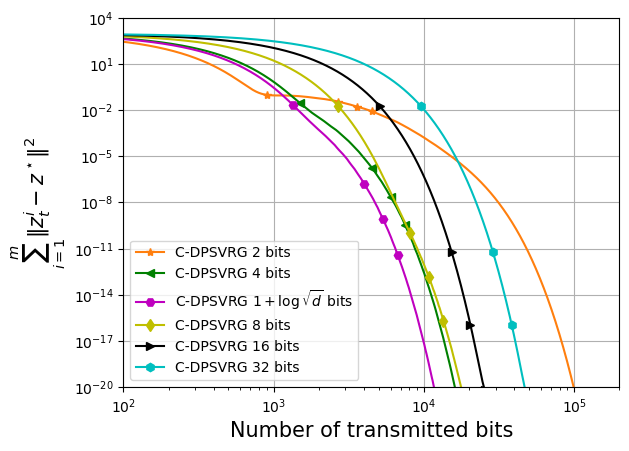}  
	\end{minipage}
	\begin{minipage}{.24\textwidth}
		\centering
		\includegraphics[width=1\linewidth]{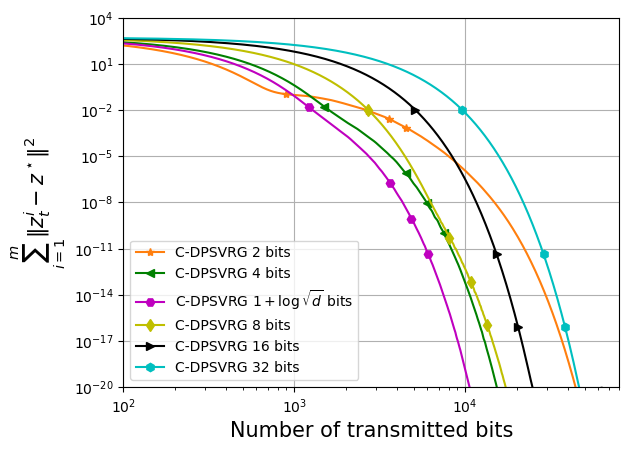}  
	\end{minipage}
	\begin{minipage}{.24\textwidth}
		\centering
		\includegraphics[width=1\linewidth]{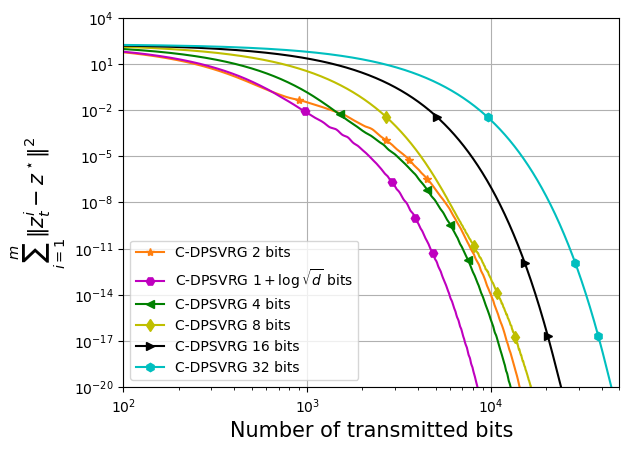}  
	\end{minipage}
	\begin{minipage}{.24\textwidth}
		\centering
		\includegraphics[width=1\linewidth]{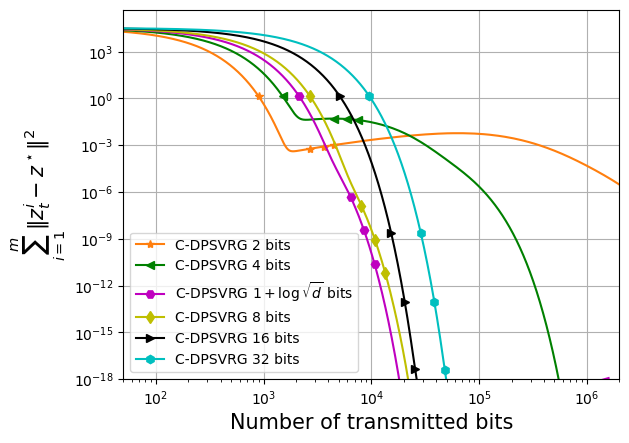}  
	\end{minipage}
	
	\caption{Convergence behavior of iterates to saddle point vs. Gradient computations (Row 1), Number of bits transmitted (Row 2) for C-DPSVRG behavior with \textbf{different number of bits} in 2D torus topology. a4a, phishing, ijcnn, sido datasets are in Columns 1,2,3,4 respectively. \red{Number of bits $1+\log \sqrt{d}$ for a4a, phishing, ijcnn1 and sido datasets  are $4.465, 4.043, 3.22$ and $7.13$ respectively.}} 
	\label{fig:comparison_bits_dsvrg_mainpaper}
\end{figure}

\begin{figure}[!htbp]
	\begin{minipage}{.24\textwidth}
		\centering
		\includegraphics[width=1\linewidth]{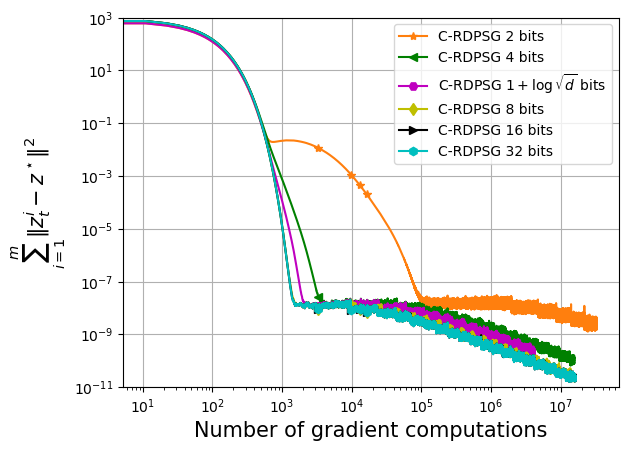}  
	\end{minipage}
	\begin{minipage}{.24\textwidth}
		\centering
		\includegraphics[width=1\linewidth]{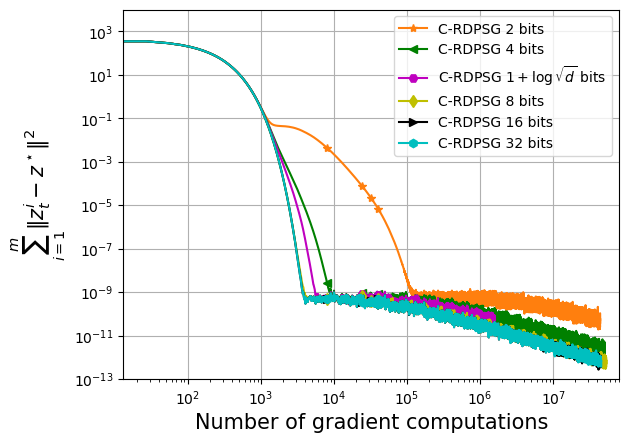}  
	\end{minipage}
	\begin{minipage}{.24\textwidth}
		\centering
		\includegraphics[width=1\linewidth]{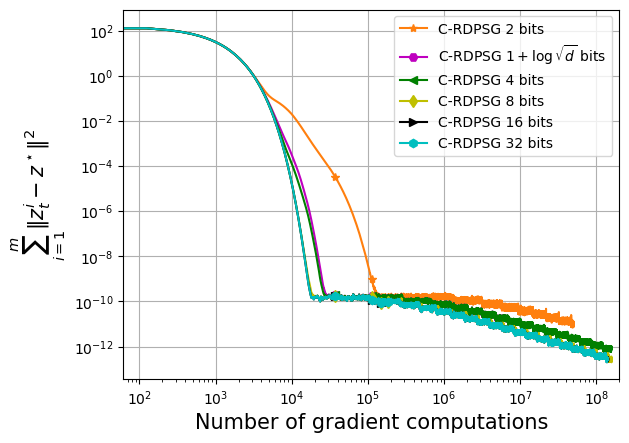}  
	\end{minipage}
	\begin{minipage}{.24\textwidth}
		\centering
		\includegraphics[width=1\linewidth]{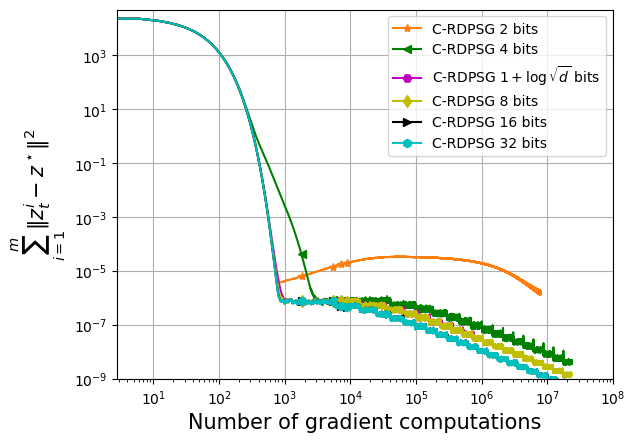}  
	\end{minipage}
	\begin{minipage}{.24\textwidth}
		\centering
		\includegraphics[width=1\linewidth]{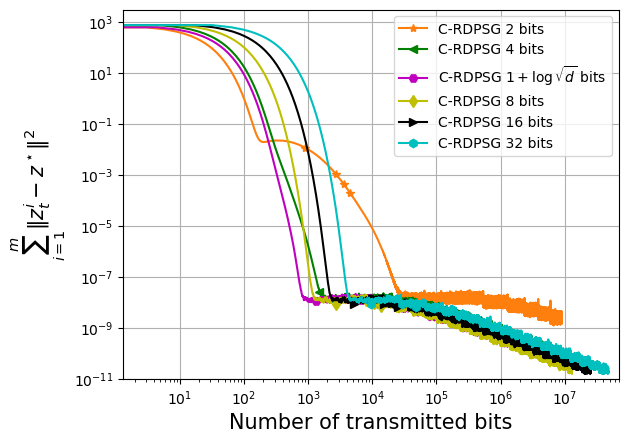}  
	\end{minipage}
	\begin{minipage}{.24\textwidth}
		\centering
		\includegraphics[width=1\linewidth]{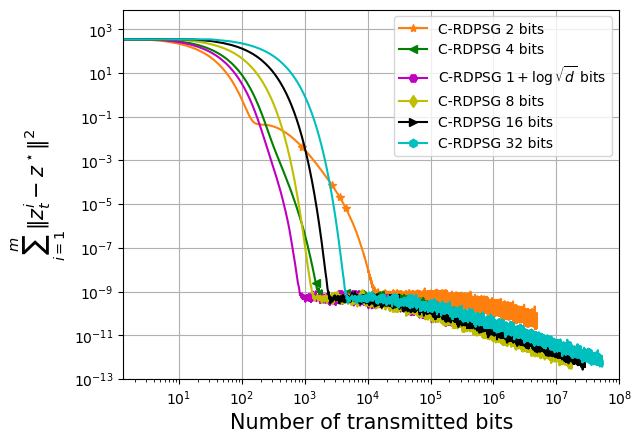}  
	\end{minipage}
	\begin{minipage}{.24\textwidth}
		\centering
		\includegraphics[width=1\linewidth]{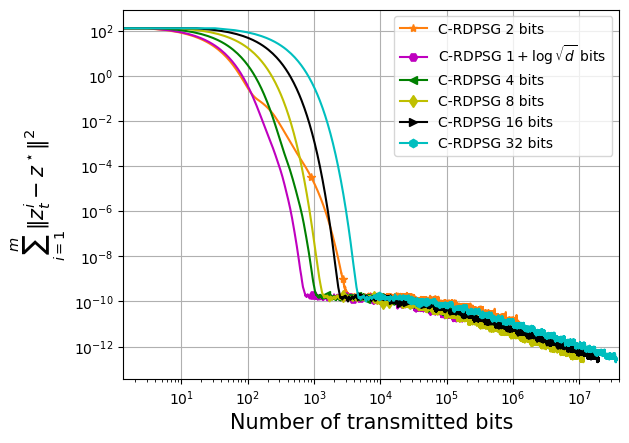}  
	\end{minipage}
	\begin{minipage}{.24\textwidth}
		\centering
		\includegraphics[width=1\linewidth]{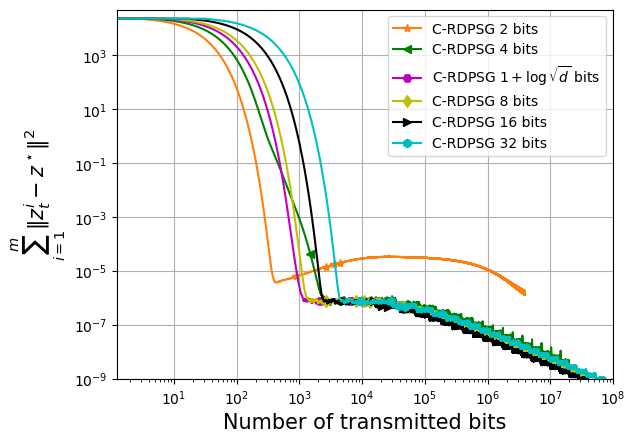}  
	\end{minipage}
	\caption{Convergence of iterates to saddle point vs. Gradient computations (Row 1), Number of bits transmitted (Row 2) for C-RDPSG behavior with \textbf{different number of bits} in 2D torus topology. a4a, phishing, ijcnn, sido datasets are in Columns 1,2,3,4 respectively. \red{Number of bits $1+\log \sqrt{d}$ for a4a, phishing, ijcnn1 and sido datasets  are $4.465, 4.043, 3.22$ and $7.13$ respectively.}}
	\label{fig:comparison_bits_dsgd_mainpaper}
\end{figure}

\section{Conclusion}
\label{sec:conclusion}

We have proposed two stochastic gradient algorithms for decentralized optimization for saddle point problems, with compression. Both the algorithms offer practical advantages and are shown to have rigorous theoretical guarantees. It would be interesting to adapt both C-RDPSG and C-DPSVRG to cases where some of the constants are unknown in the problem setup. 
\newpage 
	
\bibliographystyle{plain}
\bibliography{references}

\begin{thebibliography}{10}

\bibitem{alistarh2017qsgd}
Dan Alistarh, Demjan Grubic, Jerry Li, Ryota Tomioka, and Milan Vojnovic.
\newblock Qsgd: Communication-efficient sgd via gradient quantization and
  encoding.
\newblock In {\em Advances in Neural Information Processing Systems}, 2017.

\bibitem{ben2011lectures}
Aharon Ben-Tal and A~Nemirovski.
\newblock Lectures on modern convex optimization (2012).
\newblock {\em SIAM, Philadelphia, PA.}, 2011.

\bibitem{beznosikov2021distributed}
Aleksandr Beznosikov, Peter Richt{\'a}rik, Michael Diskin, Max Ryabinin, and
  Alexander Gasnikov.
\newblock Distributed methods with compressed communication for solving
  variational inequalities, with theoretical guarantees.
\newblock {\em arXiv preprint arXiv:2110.03313}, 2021.

\bibitem{beznosikov2020distributed}
Aleksandr Beznosikov, Valentin Samokhin, and Alexander Gasnikov.
\newblock Distributed saddle-point problems: Lower bounds, optimal algorithms
  and federated gans.
\newblock {\em arXiv preprint arXiv:2010.13112}, 2020.

\bibitem{beznosikovetal2020distsaddle}
Aleksandr Beznosikov, Gesualdo Scutari, Alexander Rogozin, and Alexander
  Gasnikov.
\newblock Distributed saddle-point problems under similarity.
\newblock In {\em Advances in Neural Information Processing Systems (NeurIPS)},
  2020.

\bibitem{chambollepock2011primaldual}
Antonin Chambolle and Thomas Pock.
\newblock A first-order primal-dual algorithm for convex problems with
  applications to imaging.
\newblock {\em Journal of Mathematical Imaging and Vision}, 40(1):1--49, 2011.

\bibitem{chenetal2013primaldualfixedpt}
Peijun Chen, Jianguo Huang, and Xiaoqun Zhang.
\newblock A primal–dual fixed point algorithm for convex separable
  minimization with applications to image restoration.
\newblock {\em Inverse Problems}, 29(2), 2013.

\bibitem{chenetal2016primaldualsumofthreefns}
Peijun Chen, Jianguo Huang, and Xiaoqun Zhang.
\newblock A primal-dual fixed point algorithm for minimization of the sum of
  three convex separable functions.
\newblock {\em Fixed Point Theory and Applications}, 54, 2016.

\bibitem{condat2013primaldualsplitting}
Laurent Condat.
\newblock A primal-dual splitting method for convex optimization involving
  lipschitzian, proximable and linear composite terms.
\newblock {\em Journal of Optimization Theory and Applications},
  158(2):460--479, 2013.

\bibitem{defazioetal2014saga}
Aaron Defazio, Francis Bach, and Simon Lacoste-Julien.
\newblock Saga: A fast incremental gradient method with support for
  non-strongly convex composite objectives.
\newblock In {\em Advances in Neural Information Processing Systems}, 2014.

\bibitem{friedberg2003linear}
Stephen~H Friedberg, Arnold~J Insel, and Lawrence~E Spence.
\newblock {\em Linear algebra}.
\newblock Pearson Higher Ed, 2003.

\bibitem{heyuan2012primaldualsaddle}
Bingsheng He and Xiaoming Yuan.
\newblock Convergence analysis of primal-dual algorithms for a saddle-point
  problem: From contraction perspective.
\newblock {\em SIAM J. Img. Sci.}, 5:119--149, 2012.

\bibitem{hinder2020generic}
Oliver Hinder and Miles Lubin.
\newblock A generic adaptive restart scheme with applications to saddle point
  algorithms, 2020.

\bibitem{johnsontong2013svrg}
Rie Johnson and Tong Zhang.
\newblock Accelerating stochastic gradient descent using predictive variance
  reduction.
\newblock In {\em Advances in Neural Information Processing Systems}, 2013.

\bibitem{koloskova2019decentralized}
Anastasia Koloskova, Sebastian Stich, and Martin Jaggi.
\newblock Decentralized stochastic optimization and gossip algorithms with
  compressed communication.
\newblock In {\em International Conference on Machine Learning}, pages
  3478--3487. PMLR, 2019.

\bibitem{korpelevich1976extragradient}
Galina~M Korpelevich.
\newblock The extragradient method for finding saddle points and other
  problems.
\newblock {\em Matecon}, 12:747--756, 1976.

\bibitem{kovalev2022optimal}
Dmitry Kovalev, Aleksandr Beznosikov, Abdurakhmon Sadiev, Michael Persiianov,
  Peter Richt{\'a}rik, and Alexander Gasnikov.
\newblock Optimal algorithms for decentralized stochastic variational
  inequalities.
\newblock {\em arXiv preprint arXiv:2202.02771}, 2022.

\bibitem{kovalev2020don}
Dmitry Kovalev, Samuel Horv{\'a}th, and Peter Richt{\'a}rik.
\newblock Don’t jump through hoops and remove those loops: Svrg and katyusha
  are better without the outer loop.
\newblock In {\em Algorithmic Learning Theory}, pages 451--467. PMLR, 2020.

\bibitem{lan2020communication}
Guanghui Lan, Soomin Lee, and Yi~Zhou.
\newblock Communication-efficient algorithms for decentralized and stochastic
  optimization.
\newblock {\em Mathematical Programming}, 180(1):237--284, 2020.

\bibitem{li2021decentralized}
Yao Li, Xiaorui Liu, Jiliang Tang, Ming Yan, and Kun Yuan.
\newblock Decentralized composite optimization with compression.
\newblock {\em arXiv preprint arXiv:2108.04448}, 2021.

\bibitem{liu2011accelerated}
Ji~Liu and A~Stephen Morse.
\newblock Accelerated linear iterations for distributed averaging.
\newblock {\em Annual Reviews in Control}, 35(2):160--165, 2011.

\bibitem{liu2021firstorder}
Mingrui Liu, Hassan Rafique, Qihang Lin, and Tianbao Yang.
\newblock First-order convergence theory for weakly-convex-weakly-concave
  min-max problems, 2021.

\bibitem{liu2020decentralized}
Mingrui Liu, Wei Zhang, Youssef Mroueh, Xiaodong Cui, Jerret Ross, Tianbao
  Yang, and Payel Das.
\newblock A decentralized parallel algorithm for training generative
  adversarial nets, 2020.

\bibitem{liu2020linear}
Xiaorui Liu, Yao Li, Rongrong Wang, Jiliang Tang, and Ming Yan.
\newblock Linear convergent decentralized optimization with compression.
\newblock {\em arXiv preprint arXiv:2007.00232}, 2020.

\bibitem{lorisverhoeven2011primaldualista}
Ignace Loris and Caroline Verhoeven.
\newblock On a generalization of the iterative soft-thresholding algorithm for
  the case of non-separable penalty.
\newblock {\em Inverse Problems}, 27(12), 2011.

\bibitem{nunezcortes2017distsaddlesubgrad}
David Mateos-N\'{u}{\~n}ez and Jorge Cort\`{e}s.
\newblock Distributed saddle-point subgradient algorithms with laplacian
  averaging.
\newblock {\em IEEE Transactions On Automatic Control}, 62(6), 2017.

\bibitem{mishchenko2019distributed}
Konstantin Mishchenko, Eduard Gorbunov, Martin Takáč, and Peter Richtárik.
\newblock Distributed learning with compressed gradient differences, 2019.

\bibitem{Mokhtari2016DSADD}
Aryan Mokhtari and Alejandro Ribeiro.
\newblock Dsa: Decentralized double stochastic averaging gradient algorithm.
\newblock {\em J. Mach. Learn. Res.}, 17:61:1--61:35, 2016.

\bibitem{mukherjeecharaborty2020decentralizedsaddle}
Soham Mukherjee and Mrityunjoy Chakraborty.
\newblock A decentralized algorithm for large scale min-max problems.
\newblock In {\em 59th IEEE Conference on Decision and Control (CDC)}, 2020.

\bibitem{balamuruganbach2016svrgsaddle}
Balamurugan Palaniappan and Francis Bach.
\newblock Stochastic variance reduction methods for saddle-point problems.
\newblock In {\em Advances in Neural Information Processing Systems (NIPS)},
  2016.

\bibitem{pu2020push}
Shi Pu, Wei Shi, Jinming Xu, and Angelia Nedic.
\newblock Push-pull gradient methods for distributed optimization in networks.
\newblock {\em IEEE Transactions on Automatic Control}, 2020.

\bibitem{ram2009distributed}
S~Sundhar Ram, Angelia Nedic, and Venugopal~V Veeravalli.
\newblock Distributed subgradient projection algorithm for convex optimization.
\newblock In {\em 2009 IEEE International Conference on Acoustics, Speech and
  Signal Processing}, pages 3653--3656. IEEE, 2009.

\bibitem{rogozin2021decentralized}
Alexander Rogozin, Aleksandr Beznosikov, Darina Dvinskikh, Dmitry Kovalev,
  Pavel Dvurechensky, and Alexander Gasnikov.
\newblock Decentralized distributed optimization for saddle point problems,
  2021.

\bibitem{extra2015shietal}
Wei Shi, Qing Ling, Gang Wu, and Wotao Yin.
\newblock Extra: An exact first-order algorithm for decentralized consensus
  optimization.
\newblock {\em SIAM J. on Optimization}, 25(2):944--966, 2015.

\bibitem{vu2013splittingmonotoneincl}
Bang~Cong V{\~u}.
\newblock A splitting algorithm for dual monotone inclusions involving
  cocoercive operators.
\newblock {\em Advances in Computational Mathematics}, 38(3):667--681, 2013.

\bibitem{xianetal2021fasterdecentnoncvxsp}
Wenhan Xian, Feihu Huang, Yanfu Zhang, and Heng Huang.
\newblock A faster decentralized algorithm for nonconvex minimax problems.
\newblock In {\em Advances in Neural Information Processing Systems (NeurIPS)},
  2021.

\bibitem{xinetal2020svrgmin}
Ran Xin, Usman~A. Khan, and Soummya Kar.
\newblock Variance-reduced decentralized stochastic optimization with
  accelerated convergence.
\newblock {\em Trans. Sig. Proc.}, 68:6255--6271, 2020.

\bibitem{ming2018primaldualsumofthreefns}
Ming Yan.
\newblock A new primal---dual algorithm for minimizing the sum of three
  functions with a linear operator.
\newblock {\em J. Sci. Comput.}, 76(3):1698--–1717, 2018.

\bibitem{yan2020optimal}
Yan Yan, Yi~Xu, Qihang Lin, Wei Liu, and Tianbao Yang.
\newblock Optimal epoch stochastic gradient descent ascent methods for min-max
  optimization.
\newblock In {\em Advances in Neural Information Processing Systems}, 2020.

\bibitem{yanetal2019stocprimaldual}
Yan Yan, Yi~Xu, Qihang Lin, Lijun Zhang, and Tianbao Yang.
\newblock Stochastic primal-dual algorithms with faster convergence than
  o(1/{\(\surd\)}t) for problems without bilinear structure.
\newblock {\em CoRR}, abs/1904.10112, 2019.

\bibitem{xinetal2021decentralizednoncvx}
Xin Zhang, Jia Liu, Zhengyuan Zhu, and Elizabeth~Serena Bentley.
\newblock Gt-storm: Taming sample, communication, and memory complexities in
  decentralized non-convex learning.
\newblock In {\em Proceedings of the Twenty-Second International Symposium on
  Theory, Algorithmic Foundations, and Protocol Design for Mobile Networks and
  Mobile Computing}, pages 271–--280, 2021.

\bibitem{zhao2021accelerated}
Renbo Zhao.
\newblock Accelerated stochastic algorithms for convex-concave saddle-point
  problems, 2021.

\end{thebibliography}

\appendix

\onecolumn

\section{Compression Algorithm of \cite{liu2020linear} }
\label{appendix_compression}

We follow \cite{liu2020linear, mishchenko2019distributed} to compress a related difference vector instead of directly compressing $\nu^\bx_{t+1}$ and $\nu^\by_{t+1}$. We now describe the compression related updates for $\nu^\bx_{t+1}$. Each node $i$ is assumed to maintain a local vector $H^{i,\bx}$ and a stochastic compression operator $Q$ is applied on the difference vector $\nu^{i,\bx}_{t+1}-H^{i,\bx}_t$. Hence the compressed estimate $\hat{\nu}^{i,\bx}_{t+1}$ of $\nu^{i,\bx}_{t+1}$ is obtained by adding the local vector and the compressed difference vector using $\hat{\nu}^{i,\bx}_{t+1} = H^{i,\bx}_t + Q(\nu^{i,\bx}_{t+1}-H^{i,\bx}_t)$. The local vector $H^{i,\bx}_t$ is then updated using a convex combination of the previous local vector information and the new estimate $\hat{\nu}^{i,\bx}_{t+1}$ using $H^{i,\bx}_{t+1} = (1-\alpha) H^{i,\bx}_t + \alpha \hat{\nu}^{i,\bx}_{t+1}$ for a suitable $\alpha \in [0,1]$. Collecting the quantities in individual nodes into $H^\bx_{t+1} = (H^{1,\bx}_{t+1}, \ldots, H^{m,\bx}_{t+1})$ and $\nu^\bx_{t+1} = (\nu^{1,\bx}_{t+1}, \ldots, \nu^{m,\bx}_{t+1})$, the update step can be written as $H^{\bx}_{t+1} = (1-\alpha) H^{\bx}_t + \alpha \hat{\nu}^{\bx}_{t+1}$. Pre-multiplying both sides of $H^{\bx}_{t+1}$ update by $W \otimes I$, and denoting $(W \otimes I)H^{\bx}_{t}$ by $H^{w,\bx}_{t}$, and $(W \otimes I)\hat{\nu}^{\bx}_{t}$ by $\hat{\nu}^{w,\bx}_{t}$ we have: $H^{w,\bx}_{t+1} = (1-\alpha)H^{w,\bx}_{t} + \alpha \hat{\nu}^{w,\bx}_{t+1}$.  $\hat{\nu}^{w,\bx}_{t+1}$ can be further simplified as $\hat{\nu}^{w,\bx}_{t+1} = (W \otimes I)\hat{\nu}^{\bx}_{t+1} = (W \otimes I)(H^\bx_t + Q(\nu^{\bx}_{t+1} - H^{\bx}_t)) = H^{w,\bx}_t + (W \otimes I)Q(\nu^{\bx}_{t+1} - H^{\bx}_t)$. A similar update scheme is used for compressing $\nu^\by_{t+1}$. The entire  procedure is illustrated in Algorithm \ref{comm}, where we have used $\nu_{t+1}= (\nu^\bx_{t+1}, \nu^\by_{t+1})$, $H_{t}= (H^\bx_{t}, H^\by_{t})$, $H^w_{t}= (H^{w,\bx_{t}}, H^{w,\by_{t}})$. Further recall that $\nu^\bx_{t+1} = (\nu^{1,\bx}_{t+1}, \ldots, \nu^{m,\bx}_{t+1})$ denotes the collection of the local variables at $m$ nodes. Similar is the case for the other variables $\nu^\by_{t+1}$, $H^\bx_t$, $H^\by_t$, $H^{w,\bx}_t$, $H^{w,\by}_t$.   

\begin{algorithm}[!h]
	\caption{Compressed Communication Procedure (COMM) \cite{liu2020linear}} 
	\label{comm}
	\begin{algorithmic}[1]
		\STATE{\textbf{INPUT:}} {$\nu_{t+1}, H_t, H^w_t, \alpha$}
		\STATE $Q^i_t = Q(\nu^i_{t+1} - H^i_t)$ \ \ \COMMENT{(compression)}
		\STATE $\hat{\nu}^i_{t+1} = H^i_t + Q^i_t$ , 
		\STATE $H^i_{t+1} = (1-\alpha)H^i_t + \alpha \hat{\nu}^i_{t+1}$ , 
		\STATE $\hat{\nu}^{i,w}_{t+1} = H^{i,w}_t + \sum_{j = 1}^m W_{ij}Q^j_t$, \ \ \COMMENT{(communicating compressed vectors)}
		\STATE $H^{i,w}_{t+1} = (1-\alpha)H^{i,w}_t + \alpha \hat{\nu}^{i,w}_{t+1}$ , 
		\STATE {\textbf{RETURN:}} $\hat{\nu}^{i}_{t+1}, \hat{\nu}^{i,w}_{t+1}, H^i_{t+1}, H^{i,w}_{t+1}$ for each node $i$ .
	\end{algorithmic}
\end{algorithm}

\section{Basic Results and Inequalities}
\label{appendix_A}

\paragraph{Equivalence between problems \eqref{eq:main_opt_consenus_constraint} and \eqref{minmax_lagrange_problem} .}

\begin{theorem} \label{thm:lag_equivalence} Under assumptions of compactness of feasible sets, (sub)gradient boundedness and continuity of $f_i(x,y)$, $g(x)$ and $r(y)$ over $\mathcal{X}$ and $\mathcal{Y}$, problem \eqref{eq:main_opt_consenus_constraint} is equivalent to problem  \eqref{minmax_lagrange_problem} in the sense that for any solution $(\bx^\star,\by^\star, \tilde{S}^\bx,\tilde{S}^y)$ of \eqref{minmax_lagrange_problem}, the point $(\bx^\star,\by^\star)$ is a solution to \eqref{eq:main_opt_consenus_constraint}.
\end{theorem}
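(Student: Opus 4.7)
The plan is to exploit the standard Lagrangian saddle-point machinery. A solution of \eqref{minmax_lagrange_problem} means a saddle point $(\bx^\star,\by^\star,\tilde S^{\bx},\tilde S^{\by})$ of the min--max problem, and hence satisfies, for all admissible $(\bx,\by,S^{\bx},S^{\by})$,
\begin{align*}
\mathcal{L}(\bx^\star,\by;S^{\bx},\tilde S^{\by})
\leq \mathcal{L}(\bx^\star,\by^\star;\tilde S^{\bx},\tilde S^{\by})
\leq \mathcal{L}(\bx,\by^\star;\tilde S^{\bx},S^{\by}).
\end{align*}
The first step is to read off feasibility of $(\bx^\star,\by^\star)$. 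Since $\mathcal{L}$ is affine in $S^{\bx}$ and $S^{\by}$, and both multipliers range over the entire spaces $\mathbb{R}^{md_x}$ and $\mathbb{R}^{md_y}$, the inner $\max_{S^{\bx}}\langle S^{\bx},(U\otimes I_{d_x})\bx^\star\rangle$ (respectively $\min_{S^{\by}}\langle S^{\by},(U\otimes I_{d_y})\by^\star\rangle$) is finite only if $(U\otimes I_{d_x})\bx^\star=0$ (respectively $(U\otimes I_{d_y})\by^\star=0$). This is made rigorous by taking $S^{\bx}=\tilde S^{\bx}+t\,(U\otimes I_{d_x})\bx^\star$ (and similarly for the other coordinate) in the left inequality and letting $t\to\pm\infty$; the resulting linear-in-$t$ term must have zero coefficient. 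Hence both consensus constraints in \eqref{eq:main_opt_consenus_constraint} are satisfied at $(\bx^\star,\by^\star)$.

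Having established feasibility, the second step plugs it back into the saddle-point inequalities. On the feasible set the two inner products in $\mathcal{L}$ vanish, so $\mathcal{L}(\bx^\star,\by^\star;\tilde S^{\bx},\tilde S^{\by})=F(\bx^\star,\by^\star)+G(\bx^\star)-R(\by^\star)$. Now for any $\bx$ feasible for \eqref{eq:main_opt_consenus_constraint} (so $(U\otimes I_{d_x})\bx=0$ and, recalling that $G$ absorbs $\delta_{\mathcal{X}}$, we have $x^i\in\mathcal{X}$), the right-hand inequality with $S^{\by}=\tilde S^{\by}$ gives $F(\bx^\star,\by^\star)+G(\bx^\star)-R(\by^\star)\leq F(\bx,\by^\star)+G(\bx)-R(\by^\star)$; symmetrically, any feasible $\by$ plugged into the left-hand inequality gives $F(\bx^\star,\by)-R(\by)\leq F(\bx^\star,\by^\star)-R(\by^\star)$. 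Combining these two yields the saddle-point characterization for \eqref{eq:main_opt_consenus_constraint}, completing the argument.

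The only mildly delicate point is justifying that the $S^{\bx},S^{\by}$ arguments can be pushed to infinity; the cleanest way is to note that the outer min--max value is finite by hypothesis (since a solution exists), forcing the linear-in-$S$ terms to vanish at the optimum. The compactness, subgradient-boundedness and continuity hypotheses are used implicitly to guarantee that the primal pieces $F,G,R$ are well-defined, finite-valued on the feasible sets, and that the indicator reformulation in $G,R$ faithfully encodes the constraint $x^i\in\mathcal{X}$, $y^i\in\mathcal{Y}$; they do not enter the core of the manipulation above. I expect the whole argument to be short (a few displays), the main obstacle being purely notational care in handling the Kronecker-structured inner products $\langle S^{\bx},(U\otimes I_{d_x})\bx\rangle$ while deducing $(U\otimes I_{d_x})\bx^\star=0$ from the unboundedness of $S^{\bx}$.
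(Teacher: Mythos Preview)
Your argument is correct and proves exactly the stated claim, but it proceeds along a genuinely different line from the paper.

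The paper does \emph{not} work with the saddle-point inequalities at all. Instead it establishes equality of optimal \emph{values} between \eqref{eq:main_opt_consenus_constraint} and \eqref{minmax_lagrange_problem} by a chain of minimax interchanges: it applies the Sion--Kakutani theorem to swap $\min_{\bx}$ and $\max_{\by}$, introduces $S^{\bx}$ via Lagrange strong duality for the inner constrained minimization in $\bx$, invokes a boundedness result (from \cite{lan2020communication}) on the optimal multiplier to confine $S^{\bx}$ to a compact ball so that Sion--Kakutani applies again, and then repeats the whole manoeuvre for $S^{\by}$. The compactness, continuity and (sub)gradient-boundedness hypotheses are used essentially there---compactness and continuity to license the minimax swaps, gradient bounds to bound $\|\tilde S^{\bx}\|$ and $\|\tilde S^{\by}\|$.

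Your route is more elementary and more tightly matched to the statement as phrased: you start from an assumed saddle point of $\mathcal{L}$, read off feasibility from the affine dependence on the unrestricted multipliers, and then specialize the saddle inequalities to feasible $\bx,\by$. This avoids any minimax theorem or dual-boundedness lemma, at the cost of not delivering the side information the paper's proof yields (namely that optimal multipliers can be taken bounded and that the two problems share the same optimal value even before a saddle point is exhibited). In your approach the stated hypotheses really do play only the background role you describe---ensuring $G(\bx^\star),R(\by^\star)$ are finite so that the saddle value is finite---whereas in the paper they drive the argument.
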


\begin{proof} The proof is based on the analysis of a similar result in \cite{rogozin2021decentralized}. Let $\tilde{\Psi}(\bx,\by) = F(\bx,\by) + \sum_{i = 1}^m(g(x_i) - r(y_i))$. For simplicity of notation, we assume $U \otimes I_{d_x} = U_x$ and $U \otimes I_{d_y} = U_y$. The objective function $\tilde{\Psi}(\bx,\by)$ is convex-concave and feasible sets $\mathcal{X}^m, \mathcal{Y}^m$ are compact. Then, using Sion-Kakutani Theorem \cite{ben2011lectures},
\begin{align}
\min_{\substack{\bx \in \mathcal{X}^m \\ U_x \bx = 0}} \max_{\substack{\by \in \mathcal{Y}^m \\ U_y\by = 0}} \tilde{\Psi}(\bx,\by) & = \max_{ \substack{\by \in \mathcal{Y}^m \\ U_y \by = 0}} \min_{\substack{\bx \in \mathcal{X}^m \\ U_x \bx = 0}} \tilde{\Psi}(\bx,\by) . \label{maxmin_minmax}
\end{align} 
Consider the following problem:
\begin{align}
\min_{\bx \in \mathcal{X}^m} \tilde{\Psi}(\bx,\by) \ \text{such that} \ U_x \bx = 0 . \label{primal_problem}
\end{align}
We assume that $(\bx^\star, \by^\star)$ is the saddle point solution of problem  \eqref{eq:main_opt_consenus_constraint}. Therefore, constraint qualification ($U_x \bx^\star = 0$) holds. Then using Lagrange strong duality, 
\begin{align}
\min_{\substack{\bx \in \mathcal{X}^m \\ U_x \bx = 0}} \tilde{\Psi}(\bx,\by) & = \max_{S^\bx} \min_{\bx \in \mathcal{X}^m} \tilde{\Psi}(\bx,\by) + \left\langle S^\bx, U_x \bx \right\rangle . \label{primal_equiv}
\end{align}
Since $\mathcal{X}$ and $\mathcal{Y}$ are compact sets, the gradients and subgradients respectively of $f_i(x,y)$, $g(x)$ and $r(y)$ with respect to $x$ will be bounded by a suitable constant $B_1$. Therefore using Theorem 2 in \cite{lan2020communication}, there exists an optimal dual multiplier $\tilde{S}^\bx$ for r.h.s in \eqref{primal_equiv} such that $\| \tilde{S}^\bx \|_2 \leq \frac{\sqrt{m}B_1}{\lambda^{+}_{\min}(U_x)} =: R_1$, where $\lambda^{+}_{\min}(U_x)$ denotes the smallest nonzero eigenvalue of $U_x$. Therefore, 
\begin{align}
\max_{S^\bx} \min_{\bx \in \mathcal{X}^m} \tilde{\Psi}(\bx,\by) + \left\langle S^\bx, U_x \bx \right\rangle & = \max_{\left\Vert S^\bx \right\Vert_2 \leq R_1 } \min_{\bx \in \mathcal{X}^m} \tilde{\Psi}(\bx,\by) + \left\langle S^\bx, U_\bx \right\rangle .
\end{align}
This implies that \eqref{primal_equiv} can be rewritten as 
\begin{align}
\min_{\substack{\bx \in \mathcal{X}^m \\ U_x \bx = 0}} \tilde{\Psi}(\bx,\by) & = \max_{\left\Vert S^\bx \right\Vert_2 \leq R_1 } \min_{\bx \in \mathcal{X}^m} \tilde{\Psi}(\bx,\by) + \left\langle S^\bx, U_\bx \right\rangle .
\end{align}
Plugging above equality into \eqref{maxmin_minmax} and by repeated application of Sion-Kakutani theorem \cite{ben2011lectures}, we have:
\begin{align}
\min_{\substack{\bx \in \mathcal{X}^m \\ U_x \bx = 0}} \max_{\substack{\by \in \mathcal{Y}^m \\ U_y \by = 0}} \tilde{\Psi}(\bx,\by) & = \max_{\substack{\by \in \mathcal{Y}^m \\ U_y \by = 0}} \left[ \max_{\left\Vert S^\bx \right\Vert_2 \leq R_1 } \min_{\bx \in \mathcal{X}^m} \tilde{\Psi}(\bx,\by) + \left\langle S^\bx, U_x \bx \right\rangle \right] \nonumber \\
& = \max_{\substack{\by \in \mathcal{Y}^m \\ U_y \by = 0}} \left[ \min_{\bx \in \mathcal{X}^m} \max_{\left\Vert S^\bx \right\Vert_2 \leq R_1 } \tilde{\Psi}(\bx,\by) + \left\langle S^\bx, U_x \bx \right\rangle \right] \nonumber \\
& = \min_{\bx \in \mathcal{X}^m} \max_{\substack{\by \in \mathcal{Y}^m \\ U_y \by = 0}} \max_{\left\Vert S^\bx \right\Vert_2 \leq R_1 } \tilde{\Psi}(\bx,\by) + \left\langle S^\bx, U_x \bx \right\rangle \nonumber \\
& = \min_{\bx \in \mathcal{X}^m} \max_{\left\Vert S^\bx \right\Vert_2 \leq R_1 } \max_{\substack{\by \in \mathcal{Y}^m \\ U_y \by = 0}}  \tilde{\Psi}(\bx,\by) + \left\langle S^\bx, U_x \bx \right\rangle, \label{primal_Sx_bound}
\end{align}
where the last equality follows from the previous equality due to separability of the objective function in \eqref{primal_Sx_bound} in $\by$ and $S^\bx$. Next, we consider a maximization problem associated with the consensus constraint $U_y \by = 0$ to write \eqref{primal_Sx_bound} in the form of \eqref{minmax_lagrange_problem}. Towards this end, consider
\begin{align}
\max_{\by \in \mathcal{Y}^m} \tilde{\Psi}(\bx,\by) + \left\langle S^\bx, U_x \bx \right\rangle \ \text{such that} \ U_y \by = 0. \label{max_prob_Uy}
\end{align}
The dual formulation of \eqref{max_prob_Uy} is given by
\begin{align}
\min_{S^\by} \max_{\by \in \mathcal{Y}^m} \tilde{\Psi}(\bx,\by) + \left\langle S^\bx, U_x \bx \right\rangle + \left\langle S^\by, U_y \by \right\rangle.\nonumber
\end{align}
Again using Lagrange strong duality, we have
\begin{align}
\max_{\substack{\by \in \mathcal{Y}^m \\ U_y \by = 0}} \tilde{\Psi}(\bx,\by) + \left\langle S^\bx, U_x \bx \right\rangle & = \min_{S^\by} \left[  \max_{\by \in \mathcal{Y}^m} \tilde{\Psi}(\bx,\by) + \left\langle S^\bx, U_x \bx \right\rangle + \left\langle S^\by, U_y \by \right\rangle \right].\nonumber
\end{align}
By following arguments similar to the primal problem with constraint $U_x \bx = 0$, we can write
\begin{align}
\max_{\substack{\by \in \mathcal{Y}^m \\ U_y \by = 0}} \tilde{\Psi}(\bx,\by) + \left\langle S^\bx, U_x \bx \right\rangle & = \min_{\left\Vert S^\by \right\Vert_2 \leq R_2} \left[  \max_{\by \in \mathcal{Y}^m} \tilde{\Psi}(\bx,\by) + \left\langle S^\bx, U_x \bx \right\rangle + \left\langle S^\by, U_y \by \right\rangle \right] , \label{dual_max_prob_Uy}
\end{align}
where $R_2 := \frac{\sqrt{m}B_2}{\lambda^{+}_{\min}(U_y)}$. By substituting \eqref{dual_max_prob_Uy} in \eqref{primal_Sx_bound}, we obtain
\begin{align}
\min_{\substack{\bx \in \mathcal{X}^m \\ U_x \bx = 0}} \max_{\substack{\by \in \mathcal{Y}^m \\ U_y \by = 0}} \tilde{\Psi}(\bx,\by) & = \min_{\bx \in \mathcal{X}^m} \max_{\left\Vert S^\bx \right\Vert_2 \leq R_1} \left[  \min_{\left\Vert S^\by \right\Vert_2 \leq R_2}  \max_{\by \in \mathcal{Y}^m} \tilde{\Psi}(\bx,\by) + \left\langle S^\bx, U_x \bx \right\rangle + \left\langle S^\by, U_y \by \right\rangle \right] \nonumber\\
& =  \min_{\bx \in \mathcal{X}^m} \max_{\left\Vert S^\bx \right\Vert_2 \leq R_1} \max_{\by \in \mathcal{Y}^m} \min_{\left\Vert S^\by \right\Vert_2 \leq R_2} \tilde{\Psi}(\bx,\by) + \left\langle S^\bx, U_x \bx \right\rangle + \left\langle S^\by, U_y \by \right\rangle \nonumber\\
& = \min_{\bx \in \mathcal{X}^m} \max_{\substack{\left\Vert S^\bx \right\Vert_2 \leq R_1 \\ \by \in \mathcal{Y}^m}} \min_{\left\Vert S^\by \right\Vert_2 \leq R_2} \tilde{\Psi}(\bx,\by) + \left\langle S^\bx, U_x \bx \right\rangle + \left\langle S^\by, U_y \by \right\rangle \nonumber\\
& = \min_{\bx \in \mathcal{X}^m} \min_{\left\Vert S^\by \right\Vert_2 \leq R_2} \max_{\substack{\left\Vert S^\bx \right\Vert_2 \leq R_1 \\ \by \in \mathcal{Y}^m}}  \tilde{\Psi}(\bx,\by) + \left\langle S^\bx, U_x \bx \right\rangle + \left\langle S^\by, U_y \by \right\rangle \nonumber\\
& = \min_{\substack{\bx \in \mathcal{X}^m \\ \left\Vert S^\by \right\Vert_2 \leq R_2}} \max_{\substack{\left\Vert S^\bx \right\Vert_2 \leq R_1 \\ \by \in \mathcal{Y}^m}}  \tilde{\Psi}(\bx,\by) + \left\langle S^\bx, U_x \bx \right\rangle + \left\langle S^\by, U_y \by \right\rangle. \label{main_prob_minmax_R}
\end{align}
Since optimal Lagrange dual variables $\tilde{S}^\bx$ and $\tilde{S}^\by$ always lie in $\ell_2$ balls of radius $R_1$ and $R_2$ respectively, equation \eqref{main_prob_minmax_R} can be equivalently written as
\begin{align}
\min_{\substack{\bx \in \mathcal{X}^m \\ U_x \bx = 0}} \max_{\substack{\by \in \mathcal{Y}^m \\  U_y \by = 0}} \tilde{\Psi}(\bx,\by) & = \min_{\bx \in \mathcal{X}^m , S^\by} \max_{\by \in \mathcal{Y}^m, S^\bx}  \tilde{\Psi}(\bx,\by) + \left\langle S^\bx, U_x \bx \right\rangle + \left\langle S^\by, U_y \by \right\rangle \nonumber\\
& = \min_{\substack{\bx \in \mathbb{R}^{md_x} \\ S^\by \in \mathbb{R}^{md_y} }} \max_{ \substack{\by \in \mathbb{R}^{md_y} \\ S^\bx \in \mathbb{R}^{md_x} }}  F(\bx,\by) + G(\bx) - R(\by) + \left\langle S^\bx, U_x \bx \right\rangle + \left\langle S^\by, U_y \by \right\rangle.\nonumber
\end{align} 
This completes the proof of Theorem \ref{thm:lag_equivalence}.

\end{proof}

We now provide the optimality conditions for the optimization problem \eqref{eq:main_opt_problem}. 
 
\paragraph{Optimality Conditions of problem \eqref{eq:main_opt_problem}.}

Let $f(x,y) := \sum_{i = 1}^m f_i(x,y)$. Since $(x^\star, y^\star)$ is the saddle point solution to \eqref{eq:main_opt_problem}, we have $x^\star = \arg \min_{x \in \mathcal{X}} \Psi(x,y^\star)$ and $y^\star = \arg \max_{y \in \mathcal{Y}} \Psi(x^\star,y)$.
\begin{align}
0 & \in \frac{1}{m} \nabla_x f(x^\star, y^\star) + \partial g(x^\star) + \partial I_{\mathcal{X}}(x^\star) \\
& =  \partial g(x^\star) + \partial I_{\mathcal{X}}(x^\star) + \frac{1}{s}\left(x^\star - \left( x^\star - \frac{s}{m} \nabla_x f(x^\star,y^\star) \right) \right) \\
& = \partial \left( g(x) + I_{\mathcal{X}}(x) + \frac{1}{2s}\left\Vert x - \left( x^\star - \frac{s}{m} \nabla_x f(x^\star,y^\star) \right) \right\Vert^2 \right)_{x = x^\star} .
\end{align}
This implies that 
\begin{align}
x^\star & = \arg \min_{x \in \mathbb{R}^{d_x}} g(x) + I_{\mathcal{X}}(x) + \frac{1}{2s}\left\Vert x - \left( x^\star - \frac{s}{m} \nabla_x f(x^\star,y^\star) \right) \right\Vert^2 \\
& = \arg \min_{x \in \mathcal{X}} g(x) + \frac{1}{2s}\left\Vert x - \left( x^\star - \frac{s}{m} \nabla_x f(x^\star,y^\star) \right) \right\Vert^2 \\
& = \prox_{sg}\left( x^\star - \frac{s}{m} \nabla_x f(x^\star,y^\star) \right) .
\end{align}

We also have $y^\star = \arg \min_{y \in \mathcal{Y}} -\Psi(x^\star,y) = \arg \min_{y \in \mathbb{R}^{d_y}} (-\Psi(x^\star,y)  + I_{\mathcal{Y}}(y))$. Therefore,
\begin{align}
0 & \in \partial_y (-\Psi(x^\star,y^\star)) + \partial I_{\mathcal{Y}}(y^\star)  \\
& = -\frac{1}{m} \nabla_y f(x^\star, y^\star) + \partial r(y^\star) + \partial I_{\mathcal{Y}}(y^\star) \\
& = \partial r(y^\star) + \partial I_{\mathcal{Y}}(y^\star) + \frac{1}{s}\left(y^\star - \left( y^\star + \frac{s}{m} \nabla_y f(x^\star,y^\star) \right) \right) \\
& = \partial \left( r(y) + I_{\mathcal{Y}} + \frac{1}{2s}\left\Vert y - \left( y^\star + \frac{s}{m} \nabla_y f(x^\star,y^\star) \right) \right\Vert^2 \right)_{y = y^\star}.
\end{align}
Therefore, $y^\star = \prox_{sr}\left( y^\star + \frac{s}{m} \nabla_y f(x^\star,y^\star) \right) $ .\\[1mm]

\textbf{Notations useful for further analysis:}
 
In the subsequent analysis, we assume $d_x = d_y = 1$ for simplicity of representation. Our analysis still holds for $d_x > 1$ and $d_y > 1$ by incorporating Kronecker product. We define Bregman distance with respect to each function $f_i(\cdot,y)$ and $-f_i(x,\cdot)$ as
\begin{align}
	V_{f_i,y}(x_1,x_2) & = f_i(x_1,y) - f_i(x_2,y) - \left\langle \nabla_x f_i(x_2,y), x_1 - x_2 \right\rangle \label{bregman_dist_Vy} \\
	V_{-f_i,x}(y_1,y_2) & = -f_i(x,y_1) + f_i(x,y_2) - \left\langle -\nabla_y f_i(x,y_2), y_1 - y_2 \right\rangle , \label{bregman_dist_Vx}
\end{align}
respectively. Let $L = \max \{L_{xx}, L_{yy}, L_{xy}, L_{yx} \}$ and $\mu = \min \{\mu_x, \mu_y \}$. Suppose $\lambda_{\max}(I-W)$, $\lambda_{m-1}(I-W)$ and $(I-W)^\dagger$ denote the largest eigenvalue, second smallest eigenvalue and pseudo inverse of $I-W$ respectively. Let $\kappa_f = L/\mu$ and $\kappa_g = \lambda_{\max}(I-W)/\lambda_{m-1}(I-W)$ denote the condition number of function $f$ and graph $G$ respectively. We further define $D^\star_x  := -(I-J) \nabla_x F(\mathbf{1} z^\star), 
D^\star_y  := (I-J)\nabla_y F(\mathbf{1}z^\star)$, $H^\star_x := \mathbf{1} (x^\star - \frac{s}{m}\nabla_x f(z^\star))$, $H^\star_y := \mathbf{1} (y^\star + \frac{s}{m}\nabla_y f(z^\star))$, \red{$\text{Range}(I-W)  := \{(I-W)z: z \in \mathbb{R}^m \} $, $ \text{Range}(\mathbf{1}) := \{\eta \mathbf{1} : \eta \in \mathbb{R} \}$ and 
$\text{Null}(I-W) := \{z : (I-W)z = 0 \} .$}
We now state a few preliminary results that will be used in later sections. These results may be of independent interest as well. 
\red{
\begin{proposition} \label{null_I_W_range_1} Let $W$ be a weight matrix satisfying assumption \ref{weight_matrix_assumption}. Then $\text{Null}(I-W) = \text{Range}(\mathbf{1})$.
\end{proposition}
\begin{proof} We prove this result in two parts. We first show that $\text{Null}(I-W) \subseteq \text{Range}(\mathbf{1})$ and then show that $\text{Range}(\mathbf{1}) \subseteq \text{Null}(I-W)$ . In this regard, let $y \in \text{Null}(I-W)$. Then we have $(I-W)y = 0$ which implies that $Wy = y$. Hence $y$ is an eigen vector of $W$ with eigen value $1$. We know that algebraic multiplicity of eigenvalue $1$ is one using assumption \ref{weight_matrix_assumption}. Therefore, there is only one linearly independent eigenvector associated with eigenvalue $1$. We also know that $\mathbf{1}$ is an eigenvector associated with eigenvalue $1$ because $W\mathbf{1} = \mathbf{1}$. Therefore, $y$ must belong to $\text{Range}(\mathbf{1})$. This completes the first part of the proof. To prove the other part, let $y \in \text{Range}(\mathbf{1})$. Then $(I-W)y = (I-W)\eta \mathbf{1} = 0$. This shows that $y \in \text{Null}(I-W)$. By combining both the parts, we get the desired result.
\end{proof}
}








\begin{proposition}\label{rem:rangeI-W} Let $W$ satisfy Assumption~\ref{weight_matrix_assumption} and let $D^\star_x$ and $D^\star_y$ be as defined in above paragraph. Then, 
$D^\star_x \in \text{Range}(I-W)$ and $D^\star_y \in \text{Range}(I-W)$.
\end{proposition}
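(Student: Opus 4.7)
The strategy is to identify $\mathrm{Range}(I-W)$ explicitly as the orthogonal complement of $\mathrm{span}(\mathbf{1})$, and then observe that $I-J$ is precisely the orthogonal projector onto this subspace, so that $(I-J)v$ automatically lands in $\mathrm{Range}(I-W)$ for every $v$. The argument is essentially spectral; I do not anticipate a genuine obstacle, only the need to use each piece of Assumption~\ref{weight_matrix_assumption} in its right place.

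First, I would invoke Assumption~\ref{weight_matrix_assumption}: since $W$ is symmetric and row-stochastic, the all-ones vector satisfies $W\mathbf{1}=\mathbf{1}$, hence $(I-W)\mathbf{1}=0$. The strict inequality $\lambda_2 < \lambda_1 = 1$ ensures that $0$ is a simple eigenvalue of $I-W$, so $\mathrm{Null}(I-W) = \mathrm{span}(\mathbf{1})$. Because $I-W$ is symmetric, the fundamental theorem of linear algebra gives
\begin{equation*}
\mathrm{Range}(I-W) \;=\; \mathrm{Null}(I-W)^{\perp} \;=\; \mathbf{1}^{\perp}.
\end{equation*}

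Next, I would observe that $J = \tfrac{1}{m}\mathbf{1}\mathbf{1}^\top$ is the orthogonal projection onto $\mathrm{span}(\mathbf{1})$ (it is symmetric and idempotent with range $\mathrm{span}(\mathbf{1})$), so $I-J$ is the orthogonal projection onto $\mathbf{1}^{\perp}$. Consequently, for any $v \in \mathbb{R}^m$ we have $(I-J)v \in \mathbf{1}^{\perp} = \mathrm{Range}(I-W)$. Applying this with $v = -\nabla_x F(\mathbf{1}z^\star)$ yields $D^\star_x \in \mathrm{Range}(I-W)$, and with $v = \nabla_y F(\mathbf{1}z^\star)$ yields $D^\star_y \in \mathrm{Range}(I-W)$, as required.

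Finally, a brief remark on the general case $d_x, d_y > 1$ (assumed equal to $1$ in the paper's simplified notation): replacing $I-W$ by $(I_m-W)\otimes I_{d_x}$ and $J$ by $J\otimes I_{d_x}$, the null-space of the Kronecker product is $\mathrm{span}(\mathbf{1})\otimes \mathbb{R}^{d_x}$, so the exact same orthogonal-complement identification carries over; analogously for the $y$-component. Thus the result holds with no additional effort in the general dimensional setting.
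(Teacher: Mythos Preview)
Your proposal is correct and follows essentially the same approach as the paper: both identify $\mathrm{Range}(I-W)=\mathbf{1}^\perp$ via the symmetry and stochasticity of $W$, and then verify that $D^\star_x,D^\star_y\in\mathbf{1}^\perp$. Your version is slightly more streamlined---you invoke $\mathrm{Range}(I-W)=\mathrm{Null}(I-W)^\perp$ directly and recognize $I-J$ as the orthogonal projector onto $\mathbf{1}^\perp$, whereas the paper proves the inclusion $\mathrm{Range}(I-W)\subseteq(\mathrm{Range}(\mathbf{1}^\top))^\perp$ and matches dimensions, then checks $\mathbf{1}^\top D^\star_x=0$ explicitly---but the underlying argument is the same.
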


\begin{proof} To prove this result, we first show that $\text{Range}(I-W) = \left( \text{Range}(\mathbf{1}) \right)^\bot$ using Assumption \ref{weight_matrix_assumption}. Then we prove that both $D^\star_x $ and $D^\star_y$ lie in $ \left( \text{Range}(\mathbf{1}) \right)^\bot$.
\red{
\begin{align}
\text{Range}(I-W) & = \{(I-W)z: z \in \mathbb{R}^m \} , \ \text{Range}(\mathbf{1}) = \{\eta \mathbf{1} : \eta \in \mathbb{R} \} , \\
\text{Null}(I-W) & = \{z : (I-W)z = 0 \}  , \\
\left( \text{Range}(\mathbf{1}) \right)^\bot & = \{ x : \left\langle x,y \right\rangle = 0 \ \text{for all} \ y \in \text{Range}(\mathbf{1}) \} .
\end{align}
We first show that $\text{Range}(I-W) \subseteq \left( \text{Range}(\mathbf{1}^\top) \right)^\bot$. Towards that end, let $y \in \text{Range}(I-W)$. This implies that there exists a $z \in \mathbb{R}^m$ such that $(I-W)z = y$. Therefore,
\begin{align}
\left\langle y, \eta \mathbf{1}\right\rangle = \eta\mathbf{1}^\top y = \eta (\mathbf{1}^\top (I-W)z) = 0 \ \text{for all} \ \eta \in \mathbb{R}. 
\end{align}
The last step follows from $W\mathbf{1} = \mathbf{1}$ . This implies that $y \in \left( \text{Range}(\mathbf{1}) \right)^\bot$.  Therefore, $\text{Range}(I-W) \subseteq \left( \text{Range}(\mathbf{1}) \right)^\bot$. Next we show that dim(Range$(I-W)$) = dim($\left( \text{Range}(\mathbf{1}) \right)^\bot$). Using Proposition \ref{null_I_W_range_1}, we have $\text{Null}(I-W) = \text{Range}(\mathbf{1})$. This implies that dim$(\text{Null}(I-W)) = 1 $. Using Rank-Nullity Theorem \cite{friedberg2003linear}, we get dim$(\text{Range}(I-W)) = m-1$. Further $\text{Range}(\mathbf{1})$ is a one-dimensional subspace of $\mathbb{R}^m$ and hence dim$(( \text{Range}(\mathbf{1}))^\bot ) = m-1$. Therefore, dim(Range$(I-W)$) = dim($\left( \text{Range}(\mathbf{1}) \right)^\bot$). Using Theorem 1.11 in \cite{friedberg2003linear}, we get $\text{Range}(I-W) = \left( \text{Range}(\mathbf{1}) \right)^\bot$. This completes the first part of the proof. Recall
\begin{align}
D^\star_x & = -(I-J) \nabla_x F(\mathbf{1} z^\star)\\
D^\star_y & = (I-J)\nabla_y F(\mathbf{1}z^\star) .
\end{align}
Therefore, $\eta \mathbf{1}^\top D^\star_x = -\eta\mathbf{1}^\top (I-J) \nabla_x F(\mathbf{1} z^\star) = 0$ because $\mathbf{1}^\top J = \mathbf{1}^\top$. Similarly, $\eta\mathbf{1}^\top D^\star_y = \mathbf{1}^\top (I-J) \nabla_y F(\mathbf{1} z^\star) = 0$ . Hence, $D^\star_x \in \left( \text{Range}(\mathbf{1}) \right)^\bot = \text{Range}(I-W) $ and $D^\star_y \in \left( \text{Range}(\mathbf{1}) \right)^\bot = \text{Range}(I-W) $. }
\end{proof}

\begin{proposition}\label{prop:smoothness}
 (\textbf{Smoothness in} $x$) Assume that $f(x,y)$ is convex and $L_{xx}$-smooth in $x$ for any fixed $y$. Then
\begin{align}
\frac{1}{2L_{xx}} \left\Vert \nabla_x f(x_1,y) - \nabla_x f(x_2,y) \right\Vert^2 & \leq V_{f,y}(x_1,x_2) \leq \frac{L_{xx}}{2} \left\Vert x_1 - x_2 \right\Vert^2 \ \text{for all} \ x_1, x_2.
\end{align}
\end{proposition}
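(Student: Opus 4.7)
The plan is to handle the two inequalities separately, with the upper bound being an immediate consequence of the descent lemma and the lower bound following from the standard ``auxiliary-function'' trick used to derive co-coercivity from smoothness plus convexity.

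For the upper bound, I would apply the fundamental theorem of calculus to the scalar function $t \mapsto f(x_2 + t(x_1-x_2), y)$ on $[0,1]$ and then use that $\nabla_x f(\cdot, y)$ is $L_{xx}$-Lipschitz in $x$ (for fixed $y$) via Cauchy--Schwarz. This yields the standard descent lemma
\[
f(x_1,y) \leq f(x_2,y) + \langle \nabla_x f(x_2,y), x_1 - x_2\rangle + \frac{L_{xx}}{2}\|x_1 - x_2\|^2,
\]
which rearranges to $V_{f,y}(x_1,x_2) \leq \tfrac{L_{xx}}{2}\|x_1-x_2\|^2$.

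For the lower bound, I would introduce the auxiliary function $\phi(x) := f(x,y) - \langle \nabla_x f(x_2,y), x\rangle$ with $y$ fixed. Since $\phi$ differs from $f(\cdot,y)$ by a linear term, it is still convex and $L_{xx}$-smooth, and $\nabla \phi(x) = \nabla_x f(x,y) - \nabla_x f(x_2,y)$, so $\nabla \phi(x_2)=0$ and hence $x_2 \in \argmin \phi$ by convexity. Applying the descent lemma (already proved above) to $\phi$ at the point $x_1$ with the gradient step $x_1 - \tfrac{1}{L_{xx}}\nabla \phi(x_1)$ gives
\[
\phi(x_2) \leq \phi\Bigl(x_1 - \tfrac{1}{L_{xx}}\nabla \phi(x_1)\Bigr) \leq \phi(x_1) - \frac{1}{2L_{xx}}\|\nabla \phi(x_1)\|^2.
\]
Rearranging and noting that $\phi(x_1) - \phi(x_2) = V_{f,y}(x_1,x_2)$ (the linear terms telescope exactly into the Bregman form) yields
\[
V_{f,y}(x_1,x_2) \geq \frac{1}{2L_{xx}}\|\nabla_x f(x_1,y) - \nabla_x f(x_2,y)\|^2,
\]
which is the desired lower bound.

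Neither step is really an obstacle; the only subtlety worth being careful about is making sure that smoothness and convexity are applied at ``for fixed $y$,'' so that all derivative and Lipschitz statements live in the $x$-variable only, and that the auxiliary function $\phi$ inherits both convexity and the same smoothness constant $L_{xx}$ after subtracting a linear functional of $x$. Once that is in place, the two inequalities are immediate.
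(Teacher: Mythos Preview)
Your proposal is correct and follows essentially the same route as the paper: the upper bound is the descent lemma, and the lower bound uses the auxiliary function $\phi(x)=f(x,y)-\langle\nabla_x f(x_2,y),x\rangle$, which differs only by an additive constant from the paper's choice $h(x_1):=V_{f,y}(x_1,x_2)$, so the gradients and the subsequent quadratic-minimization step coincide. The paper minimizes the smoothness upper bound of $h$ over all $\bar{x}$ while you evaluate at the explicit gradient step $x_1-\tfrac{1}{L_{xx}}\nabla\phi(x_1)$, but this is exactly the minimizer of that quadratic, so the two arguments are the same.
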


\begin{proof} Using the smoothness of $f(\cdot, y)$, we have
\begin{align}
& f(x_1,y) \leq f(x_2,y) + \left\langle \nabla_x f(x_2,y), x_1 - x_2 \right\rangle + \frac{L_{xx}}{2} \left\Vert x_1 - x_2 \right\Vert^2 \\
& f(x_1,y) -  f(x_2,y) - \left\langle \nabla_x f(x_2,y), x_1 - x_2 \right\rangle \leq \frac{L_{xx}}{2} \left\Vert x_1 - x_2 \right\Vert^2 \\
& V_{f,y}(x_1,x_2) \leq \frac{L_{xx}}{2} \left\Vert x_1 - x_2 \right\Vert^2 .
\end{align}
This completes the proof of second inequality. Let $h(x_1) := V_{f,y}(x_1,x_2)$ for a given $y$ and $x_2$. Notice that $h(x_1) = 0$ at $x_1 = x_2$. Using convexity of $f(x,y)$ in $x$, $h(x_1) \geq 0$. Therefore, $h(x_1)$ achieves its minimum value at $x_2$ and the minimum value is $0$.
\begin{align}
\left\Vert \nabla_x h(x_1) - \nabla_x h(x_1^{'}) \right\Vert & = \left\Vert \nabla_x f(x_1,y) - \nabla_x f(x_1^{'},y) \right\Vert \\
& \leq L_{xx} \left\Vert x_1 - x_1^{'} \right\Vert .
\end{align}
This implies that $h(x_1)$ is also $L_{xx}$-smooth.
\begin{align}
h(\bar{x}) & \leq h(x_1) + \left\langle \nabla_x h(x_1), \bar{x} - x_1 \right\rangle + \frac{L_{xx}}{2} \left\Vert \bar{x} - x_1 \right\Vert^2 
\end{align}
Take minimization over $\bar{x}$ on both sides.
\begin{align}
	\min h(\bar{x}) & \leq h(x_1) + \min_{\bar{x}} \left( \left\langle \nabla_x h(x_1), \bar{x} - x_1 \right\rangle + \frac{L_{xx}}{2} \left\Vert \bar{x} - x_1 \right\Vert^2 \right) . \label{eq:min_h}
\end{align}
Let $\delta(\bar{x}) = \left\langle \nabla_x h(x_1), \bar{x} - x_1 \right\rangle + \frac{L_{xx}}{2} \left\Vert \bar{x} - x_1 \right\Vert^2.$
\begin{align}
	\nabla \delta(\bar{x}) =  \nabla_x h(x_1) + L_{xx}(\bar{x} - x_1),\ \ \nabla^2 \delta(\bar{x}) = L_{xx}I \succ 0 .
\end{align}
Therefore,
\begin{align}
	\min_{\bar{x}} \delta(\bar{x}) & = \left\langle \nabla_x h(x_1), \frac{-\nabla_x h(x_1)}{L_{xx}} + x_1 - x_1 \right\rangle + \frac{L_{xx}}{2} \left\Vert \frac{-\nabla_x h(x_1)}{L_{xx}} + x_1 - x_1 \right\Vert^2 \\
	& = \frac{-1}{L_{xx}} \left\Vert \nabla_x h(x_1) \right\Vert^2 + \frac{\left\Vert \nabla_x h(x_1) \right\Vert^2}{2L_{xx}} \\
	& = - \frac{\left\Vert \nabla_x h(x_1) \right\Vert^2}{2L_{xx}} .
\end{align}
%
Plug in above minimum value into \eqref{eq:min_h}.
\begin{align}
\min h(\bar{x}) & \leq h(x_1)  - \frac{\left\Vert \nabla_x h(x_1) \right\Vert^2}{2L_{xx}} \\
0 & \leq h(x_1)  - \frac{\left\Vert \nabla_x h(x_1) \right\Vert^2}{2L_{xx}} \\
& = V_{f,y}(x_1,x_2) - \frac{\left\Vert \nabla_x f(x_1,y) - \nabla_x f(x_2,y) \right\Vert^2}{2L_{xx}} .
\end{align}
This gives
\begin{align}
\frac{\left\Vert \nabla_x f(x_1,y) - \nabla_x f(x_2,y) \right\Vert^2}{2L_{xx}} & \leq V_{f,y}(x_1,x_2) .
\end{align}

\end{proof}

\begin{proposition} \label{prop:smoothnessy}
(\textbf{Smoothness in} $y$) Assume that $-f(x,y)$ is convex and $L_{yy}$-smooth in $y$ for any fixed $x$. Then
\begin{align}
\frac{1}{2L_{yy}} \left\Vert -\nabla_y f(x,y_1) + \nabla_y f(x,y_2) \right\Vert^2 & \leq V_{-f,x}(y_1,y_2) \leq \frac{L_{yy}}{2} \left\Vert y_1 - y_2 \right\Vert^2 .
\end{align}
\end{proposition}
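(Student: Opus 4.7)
The plan is to reduce Proposition \ref{prop:smoothnessy} essentially to Proposition \ref{prop:smoothness} by a simple sign/variable substitution. Define $\tilde{f}(y) := -f(x, y)$ for the fixed $x$ appearing in the statement. By hypothesis, $\tilde{f}$ is convex and $L_{yy}$-smooth, and moreover $\nabla \tilde{f}(y) = -\nabla_y f(x,y)$. Observe that the Bregman distance of $\tilde{f}$ coincides with the quantity $V_{-f,x}$ defined in \eqref{bregman_dist_Vx}:
\begin{align*}
V_{\tilde{f}}(y_1, y_2) = \tilde{f}(y_1) - \tilde{f}(y_2) - \langle \nabla \tilde{f}(y_2), y_1 - y_2\rangle = V_{-f,x}(y_1, y_2).
\end{align*}
Hence the claim is simply the statement of Proposition \ref{prop:smoothness} applied to $\tilde{f}$.

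If one prefers to give a self-contained argument (mirroring the proof of Proposition \ref{prop:smoothness} already in the excerpt), the two inequalities are handled separately. For the upper bound, I would invoke the descent lemma (which follows directly from $L_{yy}$-smoothness of $-f(x,\cdot)$) to write
\begin{align*}
-f(x,y_1) \leq -f(x,y_2) + \langle -\nabla_y f(x,y_2), y_1 - y_2\rangle + \frac{L_{yy}}{2}\|y_1 - y_2\|^2,
\end{align*}
and then rearrange using the definition \eqref{bregman_dist_Vx} of $V_{-f,x}$ to obtain $V_{-f,x}(y_1,y_2) \leq \frac{L_{yy}}{2}\|y_1 - y_2\|^2$.

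For the lower bound, I would fix $y_2$ and set $h(y_1) := V_{-f,x}(y_1, y_2)$. Convexity of $-f(x,\cdot)$ gives $h \geq 0$, and clearly $h(y_2) = 0$, so $y_2$ is a global minimizer of $h$. Differentiating in $y_1$ yields $\nabla h(y_1) = -\nabla_y f(x,y_1) + \nabla_y f(x,y_2)$, so $h$ inherits $L_{yy}$-smoothness from $-f(x,\cdot)$. Applying the descent lemma to $h$ and minimizing the resulting quadratic majorant in $\bar{y}$ (exactly as done in the proof of Proposition \ref{prop:smoothness}, where the minimum is attained at $\bar{y} = y_1 - \nabla h(y_1)/L_{yy}$) gives
\begin{align*}
0 = \min h \leq h(y_1) - \frac{\|\nabla h(y_1)\|^2}{2L_{yy}},
\end{align*}
which rearranges to $\tfrac{1}{2L_{yy}}\|-\nabla_y f(x,y_1) + \nabla_y f(x,y_2)\|^2 \leq V_{-f,x}(y_1,y_2)$.

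Since this proposition is the direct dual of Proposition \ref{prop:smoothness} and the reduction to it is immediate, I do not anticipate any genuine obstacle; the only point requiring a bit of care is keeping the sign conventions in $V_{-f,x}$ straight, namely that the ``linear part'' uses $-\nabla_y f(x,y_2)$, so the substitution $\tilde{f} = -f(x,\cdot)$ lines up cleanly with the formula for $V_{\tilde f}$.
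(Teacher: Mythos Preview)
Your proposal is correct and matches the paper's approach: the paper simply states that the proof is similar to that of Proposition~\ref{prop:smoothness} and omits it, which is exactly the reduction (via $\tilde f(y) = -f(x,y)$) and mirrored argument you describe. Your care with the sign convention in $V_{-f,x}$ is the only subtlety, and you handle it correctly.
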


The proof of Proposition~\ref{prop:smoothnessy} is similar to that of Proposition~\ref{prop:smoothness}, and is omitted.

\section{A recursion relationship useful for further analysis}
\label{appendix_B}

In the following discussion, we use the shorthand notation $\mathbf{1}u$ to represent  $(\mathbf{1} \otimes I_{d})u$, for any $u \in {\mathbb{R}}^d$, and the notation $A^\dagger$ denotes the pseudo-inverse of a square matrix $A$. 

\begin{lemma} \label{lem:recursion_y} 
Let $x_{k,t+1}$, $y_{k,t+1}$, $D^x_{k,t+1}$, $D^y_{k,t+1}$, $H^x_{k,t+1}$, $H^y_{k,t+1}$, $H^{w,x}_{k,t+1}$, $H^{w,y}_{k,t+1}$ be obtained from Algorithm \ref{alg:generic_procedure_sgda} using \text{IPDHG}($x_{k,t}$, $y_{k,t}$, $D^{x}_{k,t}$, $D^{y}_{k,t}$, $H^{x}_{k,t}$, $H^{y}_{k,t}$, $H^{w,x}_{k,t}$, $H^{w,y}_{k,t}$, $s$, $\gamma_{x}$, $\gamma_{y}$, $\alpha_{x}$, $\alpha_{y}$, $\mathcal{G}$). Let Assumption \ref{compression_operator} and Assumption \ref{weight_matrix_assumption} hold. Suppose $\alpha_y \in \left(0,(1+\delta)^{-1} \right)$ and   
	\begin{align}
		\gamma_y & \in \left( 0, \min \left\lbrace \frac{2-2\sqrt{\delta}\alpha_y}{\lambda_{\max}(I-W)}, \frac{\alpha_y - (1+\delta)\alpha_y^2}{\sqrt{\delta}\lambda_{\max}(I-W)} \right\rbrace  \right) \label{gamma_y_assumption}
	\end{align}
	Then the following holds for all $t \geq 0$:
	\begin{align}
		& M_y E\left\Vert y_{k,t+1} - \mathbf{1}y^\star  \right\Vert^2 + \frac{2s^2}{\gamma_y} E \left\Vert  D^y_{k,t+1} - D^\star_y  \right\Vert^2_{(I-W)^\dagger} + \sqrt{\delta}E\left\Vert H^y_{k,t+1} - H^\star_y \right\Vert^2 \nonumber \\
		& \leq \left\Vert y_{k,t} - \mathbf{1}y^\star + s\mathcal{G}^y_{k,t} - s\nabla_y F(\mathbf{1}x^\star,\mathbf{1}y^\star) \right\Vert^2 + \frac{2s^2}{\gamma_y}\left( 1- \frac{\gamma_y}{2}\lambda_{m-1}(I-W) \right)\left\Vert D^y_{k,t} - D^\star_y \right\Vert^2_{(I-W)^\dagger} \nonumber \\ & \ \ + \sqrt{\delta}(1-\alpha_{y})\left\Vert H^y_{k,t} - H^\star_y  \right\Vert^2  , \label{eq:one_step_progress_y}
	\end{align}
	where $E$ denotes the conditional expectation on stochastic compression at $t$-th update step and $M_{y} = 1-\frac{\sqrt{\delta }\alpha_{y}}{1-\frac{\gamma_{y}}{2}\lambda_{\max}(I-W)}$ and  $H^\star_x = \mathbf{1} (x^\star - \frac{s}{m}\nabla_x f(z^\star))$ and $H^\star_y = \mathbf{1} (y^\star + \frac{s}{m}\nabla_y f(z^\star))$.
	.
\end{lemma}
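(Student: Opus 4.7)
My plan is to derive three coupled one-step inequalities, one each for $\|y_{k,t+1}-\mathbf{1}y^\star\|^2$, $\|D^y_{k,t+1}-D^\star_y\|^2_{(I-W)^\dagger}$, and $\|H^y_{k,t+1}-H^\star_y\|^2$, and then combine them with the weights $1$, $\frac{2s^2}{\gamma_y}$, and $\sqrt\delta$ chosen so that the compression-noise terms cancel. The starting point is the fixed-point characterization: using $\mathbf{1}y^\star = \prox_{sR}\!\big(\mathbf{1}y^\star + s\nabla_y F(\mathbf{1}z^\star) - sD^\star_y\big)$, which holds because $D^\star_y = (I-J)\nabla_y F(\mathbf{1}z^\star)$ implies $\nabla_y F(\mathbf{1}z^\star) - D^\star_y = J\nabla_y F(\mathbf{1}z^\star)$ has identical rows $\tfrac{1}{m}\nabla_y f(z^\star)$ and the componentwise fixed-point identity from Appendix~\ref{appendix_A}. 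Non-expansiveness of $\prox_{sR}$ then gives
\begin{align*}
\|y_{k,t+1}-\mathbf{1}y^\star\|^2 \le \bigl\|\nu^y_{k,t} - \tfrac{\gamma_y}{2}(\hat\nu^y_{k,t}-\hat\nu^{w,y}_{k,t}) - \mathbf{1}y^\star - s\nabla_y F(\mathbf{1}z^\star)+sD^\star_y\bigr\|^2.
\end{align*}

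The next step is to exploit the invariant $H^{w,y}_{k,t}=WH^y_{k,t}$ (maintained by the COMM procedure from the initialization in Algorithm~\ref{alg:CRDPGD_known_mu}), which yields the clean identity $\hat\nu^y_{k,t}-\hat\nu^{w,y}_{k,t}=(I-W)\hat\nu^y_{k,t}$. Writing $\hat\nu^y_{k,t}=\nu^y_{k,t}+\varepsilon_t$ with $\varepsilon_t := Q(\nu^y_{k,t}-H^y_{k,t})-(\nu^y_{k,t}-H^y_{k,t})$, Assumption~\ref{compression_operator} gives $E[\varepsilon_t]=0$ and $E\|\varepsilon_t\|^2\le \delta\|\nu^y_{k,t}-H^y_{k,t}\|^2$. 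Substituting $\nu^y_{k,t}=y_{k,t}+s\mathcal G^y_{k,t}-sD^y_{k,t}$ and splitting the pre-prox vector into a deterministic part and a linear function of $\varepsilon_t$, I take conditional expectation so that the cross term vanishes. This produces the bias term $\|y_{k,t}-\mathbf{1}y^\star+s\mathcal G^y_{k,t}-s\nabla_y F(\mathbf{1}z^\star)-s(D^y_{k,t}-D^\star_y)-\tfrac{\gamma_y}{2}(I-W)\nu^y_{k,t}\|^2$ plus a variance term bounded by $\tfrac{\gamma_y^2}{4}\lambda_{\max}^2(I-W)\,\delta\,\|\nu^y_{k,t}-H^y_{k,t}\|^2$.

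For $D^y$, both $D^y_{k,t+1}-D^\star_y$ and $D^y_{k,t}-D^\star_y$ lie in $\mathrm{Range}(I-W)$ by Proposition~\ref{rem:rangeI-W} and the initialization $D^y_{k,0}=0$, so the $(I-W)^\dagger$ norm is well-defined and satisfies $\|(I-W)v\|_{(I-W)^\dagger}^2 = \|v\|_{I-W}^2$ for any $v$. Expanding $\|D^y_{k,t+1}-D^\star_y\|^2_{(I-W)^\dagger}$ and using the same $\hat\nu^y-\hat\nu^{w,y}=(I-W)\hat\nu^y$ identity produces a cross term that, after multiplication by $\tfrac{2s^2}{\gamma_y}$ and summation with the $y$-inequality, exactly cancels the corresponding cross term in the bias part of the $y$-bound; the residual strong-convexity-like factor $(1-\tfrac{\gamma_y}{2}\lambda_{m-1}(I-W))$ appears from the quadratic in $D^y_{k,t}-D^\star_y$. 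For $H^y$, the convex combination update together with convexity of $\|\cdot\|^2$ gives $\|H^y_{k,t+1}-H^\star_y\|^2 \le (1-\alpha_y)\|H^y_{k,t}-H^\star_y\|^2 + \alpha_y\|\hat\nu^y_{k,t}-H^\star_y\|^2$; taking expectation and using the variance bound yields a term $\alpha_y(1+\delta)\|\nu^y_{k,t}-H^y_{k,t}\|^2$ plus a piece controlled by $\|\nu^y_{k,t}-H^\star_y\|^2$.

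The main obstacle is the bookkeeping in the final summation: combining the three inequalities with weights $1,\tfrac{2s^2}{\gamma_y},\sqrt\delta$, all compression-variance terms $\|\nu^y_{k,t}-H^y_{k,t}\|^2$ must collapse into the coefficient $M_y=1-\tfrac{\sqrt\delta\alpha_y}{1-\tfrac{\gamma_y}{2}\lambda_{\max}(I-W)}$ multiplying the new $\|y_{k,t+1}-\mathbf{1}y^\star\|^2$ on the left, while the $(1-\alpha_y)$ and $(1-\tfrac{\gamma_y}{2}\lambda_{m-1}(I-W))$ contraction factors must emerge on the right. This is precisely where the two-sided window \eqref{gamma_y_assumption} on $\gamma_y$ and $\alpha_y\in(0,(1+\delta)^{-1})$ enter: the upper bound $\tfrac{2-2\sqrt\delta\alpha_y}{\lambda_{\max}(I-W)}$ ensures $M_y>0$, and $\tfrac{\alpha_y-(1+\delta)\alpha_y^2}{\sqrt\delta\lambda_{\max}(I-W)}$ ensures that the $H^y$-variance term $\sqrt\delta\,\alpha_y(1+\delta)\|\nu^y_{k,t}-H^y_{k,t}\|^2$ together with $\tfrac{\gamma_y^2}{4}\lambda_{\max}^2(I-W)\delta\,\|\nu^y_{k,t}-H^y_{k,t}\|^2$ can be absorbed by the $\sqrt\delta\alpha_y\|H^y_{k,t}-H^\star_y\|^2$ dissipation. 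A careful application of Young's inequality with these exact constants delivers \eqref{eq:one_step_progress_y}.
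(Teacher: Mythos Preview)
Your overall architecture---three coupled one-step bounds for $y$, $D^y$, $H^y$, summed with weights $1,\tfrac{2s^2}{\gamma_y},\sqrt\delta$---is exactly the paper's. The prox/non-expansiveness start, the identity $\hat\nu^y-\hat\nu^{w,y}=(I-W)\hat\nu^y$, the $\mathrm{Range}(I-W)$ argument for $D^y$, and the bias/variance split via $E[\varepsilon_t]=0$ are all correct and match the paper's Sections~\ref{sec:y_t+1_My}--\ref{sec:Dy_Hy_sum}.

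There is, however, a genuine gap in the $H^y$ step. You invoke \emph{convexity} of $\|\cdot\|^2$ (Jensen) to get
\[
\|H^y_{k,t+1}-H^\star_y\|^2 \le (1-\alpha_y)\|H^y_{k,t}-H^\star_y\|^2 + \alpha_y\|\hat\nu^y_{k,t}-H^\star_y\|^2.
\]
The paper instead uses the \emph{exact identity}
\[
\|(1-\alpha)a+\alpha b\|^2 = (1-\alpha)\|a\|^2 + \alpha\|b\|^2 - \alpha(1-\alpha)\|a-b\|^2,
\]
applied with $a=H^y_{k,t}-H^\star_y$ and $b=\nu^y_{k,t}-H^\star_y$ (after first isolating $\varepsilon_t$). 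This yields the additional negative term $-\alpha_y(1-\alpha_y)\|\nu^y_{k,t}-H^y_{k,t}\|^2$, which is precisely what absorbs the compression variance. After summation, the coefficient of $\|\nu^y_{k,t}-H^y_{k,t}\|^2$ in the paper is
\[
\tfrac{\delta\gamma_y^2 M_y\lambda_{\max}^2}{4}+\tfrac{\gamma_y\delta}{2}\lambda_{\max}+\sqrt\delta\bigl[(1+\delta)\alpha_y^2-\alpha_y\bigr],
\]
and the second bound in \eqref{gamma_y_assumption} is exactly the condition making this $\le 0$. With Jensen alone your coefficient is $\tfrac{\delta\gamma_y^2 M_y\lambda_{\max}^2}{4}+\tfrac{\gamma_y\delta}{2}\lambda_{\max}+\sqrt\delta\,\alpha_y\delta$, which is strictly positive for $\delta>0$ regardless of $\gamma_y$, so the term cannot be dropped.

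Your proposed remedy---``absorbed by the $\sqrt\delta\alpha_y\|H^y_{k,t}-H^\star_y\|^2$ dissipation''---does not work: $\|\nu^y_{k,t}-H^y_{k,t}\|^2$ and $\|H^y_{k,t}-H^\star_y\|^2$ are different quantities, and the target inequality already carries the full factor $\sqrt\delta(1-\alpha_y)$ on $\|H^y_{k,t}-H^\star_y\|^2$ with no slack to spare. Relatedly, your description of where $M_y$ enters is off: $M_y$ does not come from the $\|\nu^y_{k,t}-H^y_{k,t}\|^2$ bookkeeping but from the matrix-norm comparison
\[
\|\nu^y_{k,t}-H^\star_y\|^2_{\,I-\frac{\gamma_y}{2}(I-W)} \le M_y^{-1}\,\|\nu^y_{k,t}-H^\star_y\|^2_{\,I-\frac{\gamma_y}{2}(I-W)-\sqrt\delta\alpha_y I},
\]
which is how the $\sqrt\delta\alpha_y\|\nu^y_{k,t}-H^\star_y\|^2$ contribution from the $H^y$ bound is merged with the $y$ bound. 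Replace Jensen by the exact quadratic identity in the $H^y$ step and the rest of your plan goes through as written.
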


\textbf{Proof of Lemma \ref{lem:recursion_y}}:
	
We follow \cite{li2021decentralized} to prove Lemma~\ref{lem:recursion_y}. We have $H^\star_x = \mathbf{1} (x^\star - \frac{s}{m}\nabla_x f(x^\star,y^\star))$ and $H^\star_y = \mathbf{1} (y^\star + \frac{s}{m}\nabla_y f(x^\star,y^\star))$. First, we bound the terms appearing on the l.h.s. of~\eqref{eq:one_step_progress_y} as
\begin{equation}
    M_y E\left\Vert y_{k,t+1} - \mathbf{1}y^\star  \right\Vert^2   \leq \left\Vert \nu^y_{k,t+1} - H^\star_y  \right\Vert^2_{I - \frac{\gamma_y}{2}(I-W) - \alpha_{y}\sqrt{\delta}I} + \frac{\gamma^2_yM_y}{4} E \left\Vert \hat{\nu}^y_{k,t+1} - \nu^y_{k,t+1} \right\Vert^2_{(I-W)^2} ,  \label{eq:y_t+1_My}
\end{equation}
and
\begin{align}
& \left(\frac{2s^2}{\gamma_y} E \left\Vert  D^y_{k,t+1} - D^\star_y  \right\Vert^2_{(I-W)^\dagger} +  \sqrt{\delta}E\left\Vert H^y_{k,t+1} - H^\star_y \right\Vert^2\right) + \left\Vert \nu^y_{k,t+1} - H^\star_y  \right\Vert^2_{I - \frac{\gamma_y}{2}(I-W) - \alpha_{y}\sqrt{\delta}I} \nonumber \\
 &= \frac{s^2}{\gamma_y} \left\Vert D^y_{k,t} - D^\star_y \right\Vert^2_{{2(I-W)^\dagger -\gamma_y I}} + \frac{1}{2}  E \left\Vert \hat{\nu}^y_{k,t+1}  - \nu^y_{k,t+1} \right\Vert^2_{\gamma_y(I-W) + 2\sqrt{\delta}\alpha_{y}^2}  + \left\Vert y_{k,t} - \mathbf{1}y^\star + s\mathcal{G}^y_{k,t} - s\nabla_y F(\mathbf{1}z^\star)  \right\Vert^2 \nonumber \\ & \ + \sqrt{\delta}(1-\alpha_{y})\left\Vert H^y_{k,t} - H^\star_y  \right\Vert^2 - \sqrt{\delta}\alpha_{y}(1-\alpha_{y}) \left\Vert \nu^y_{k,t+1} - H^y_{k,t}  \right\Vert^2 . \label{eq:Dy_Hy_sum}
\end{align}
Proofs of~\eqref{eq:y_t+1_My} and~\eqref{eq:Dy_Hy_sum} are provided in Sections~\ref{sec:y_t+1_My} and~\ref{sec:Dy_Hy_sum}, respectively.

On adding \eqref{eq:y_t+1_My} and \eqref{eq:Dy_Hy_sum}, we obtain
\begin{align}
& M_y E\left\Vert y_{k,t+1} - \mathbf{1}y^\star  \right\Vert^2 + \frac{2s^2}{\gamma_y} E \left\Vert  D^y_{k,t+1} - D^\star_y  \right\Vert^2_{(I-W)^\dagger} + \sqrt{\delta}E\left\Vert H^y_{k,t+1} - H^\star_y \right\Vert^2 \nonumber\\
& \leq \left\Vert y_{k,t} - \mathbf{1}y^\star + s\mathcal{G}^y_{k,t} - s\nabla_y F(\mathbf{1}z^\star)  \right\Vert^2 + \frac{s^2}{\gamma_y} \left\Vert D^y_{k,t} - D^\star_y \right\Vert^2_{{2(I-W)^\dagger -\gamma_y I}} \nonumber\\ & \ \ + \sqrt{\delta}(1-\alpha_{y})\left\Vert H^y_{k,t} - H^\star_y  \right\Vert^2 - \sqrt{\delta}\alpha_{y}(1-\alpha_{y}) \left\Vert \nu^y_{k,t+1} - H^y_{k,t}  \right\Vert^2 \nonumber\\ & \ \ + \frac{1}{2}  E \left\Vert \hat{\nu}^y_{k,t+1}  - \nu^y_{k,t+1} \right\Vert^2_{\gamma_y(I-W) + 2\sqrt{\delta}\alpha_{y}^2} + \frac{\gamma^2_yM_y}{4} E \left\Vert \hat{\nu}^y_{k,t+1} - \nu^y_{k,t+1} \right\Vert^2_{(I-W)^2} . \label{eq:yt+1_My_Dy_Hy}
\end{align}

We now bound the terms on the r.h.s. of~\eqref{eq:yt+1_My_Dy_Hy}. First, observe that  
\begin{align}
& \frac{1}{2}  E \left\Vert \hat{\nu}^y_{k,t+1}  - \nu^y_{k,t+1} \right\Vert^2_{\gamma_y(I-W) + 2\sqrt{\delta}\alpha_{y}^2} + \frac{\gamma^2_yM_y}{4} E \left\Vert \hat{\nu}^y_{k,t+1} - \nu^y_{k,t+1} \right\Vert^2_{(I-W)^2} \nonumber \\
& = \frac{1}{2}  E \left\Vert \sqrt{\gamma_y(I-W) + 2\sqrt{\delta}\alpha_{y}^2} (\hat{\nu}^y_{k,t+1}  - \nu^y_{k,t+1}) \right\Vert^2 + \frac{\gamma^2_yM_y}{4} E \left\Vert (I-W)(\hat{\nu}^y_{k,t+1} - \nu^y_{k,t+1}) \right\Vert^2 \nonumber \\
& \leq \frac{1}{2} \left\Vert \sqrt{\gamma_y(I-W) + 2\sqrt{\delta}\alpha_{y}^2}  \right\Vert^2 E \left\Vert \hat{\nu}^y_{k,t+1}  - \nu^y_{k,t+1} \right\Vert^2 + \frac{\gamma^2_yM_y}{4} \left\Vert I-W \right\Vert^2 E \left\Vert \hat{\nu}^y_{k,t+1} - \nu^y_{k,t+1} \right\Vert^2 \nonumber \\
& \leq \left( \frac{1}{2}\left( \gamma_y \lambda_{\max}(I-W) + 2\sqrt{\delta}\alpha_{y}^2 \right) + \frac{\gamma^2_yM_y \lambda^2_{\max}(I-W)}{4} \right) E \left\Vert \hat{\nu}^y_{k,t+1} - \nu^y_{k,t+1} \right\Vert^2 . \label{eq:exp_nu_y_t+1}
\end{align}
We also have
\begin{align}
\hat{\nu}^y_{k,t+1} - \nu^y_{k,t+1} & = Q(\nu^y_{k,t+1} - H^y_{k,t} ) - \left(\nu^y_{k,t+1} - H^y_{k,t} \right)  \\
E \left\Vert \hat{\nu}^y_{k,t+1} - \nu^y_{k,t+1} \right\Vert^2 & = E \left\Vert Q(\nu^y_{k,t+1} - H^y_{k,t} ) - \left(\nu^y_{k,t+1} - H^y_{k,t} \right) \right\Vert^2 \nonumber \\
& \leq \delta \left\Vert \nu^y_{k,t+1} - H^y_{k,t} \right\Vert^2 .
\end{align}
Substituting this inequality in \eqref{eq:exp_nu_y_t+1}, we bound the last two terms in the r.h.s of~\eqref{eq:yt+1_My_Dy_Hy} as
\begin{align}
& \frac{1}{2}  E \left\Vert \hat{\nu}^y_{k,t+1}  - \nu^y_{k,t+1} \right\Vert^2_{\gamma_y(I-W) + 2\sqrt{\delta}\alpha_{y}^2} + \frac{\gamma^2_yM_y}{4} E \left\Vert \hat{\nu}^y_{k,t+1} - \nu^y_{k,t+1} \right\Vert^2_{(I-W)^2} \nonumber \\
& \leq \left( \frac{1}{2}\left( \gamma_y \lambda_{\max}(I-W) + 2\sqrt{\delta}\alpha_{y}^2 \right) + \frac{\gamma^2_yM_y \lambda^2_{\max}(I-W)}{4} \right) \delta \left\Vert \nu^y_{k,t+1} - H^y_{k,t} \right\Vert^2 . \label{eq:nuhaty_t+1}
\end{align}
We will now bound the term $\frac{s^2}{\gamma_y} \left\Vert D^y_{k,t} - D^\star_y \right\Vert^2_{{2(I-W)^\dagger -\gamma_y I}}$ .
\begin{align}
\frac{s^2}{\gamma_y} \left\Vert D^y_{k,t} - D^\star_y \right\Vert^2_{{2(I-W)^\dagger -\gamma_y I}} & = \frac{s^2}{\gamma_y} \left\Vert D^y_{k,t} - D^\star_y \right\Vert^2_{2(I-W)^\dagger} - \frac{s^2}{\gamma_y} \left\Vert D^y_{k,t} - D^\star_y \right\Vert^2_{\gamma_y I} \nonumber \\
& = \frac{s^2}{\gamma_y} \left\Vert D^y_{k,t} - D^\star_y \right\Vert^2_{2(I-W)^\dagger} - \frac{s^2 \gamma_y}{\gamma_y} \left\Vert D^y_{k,t} - D^\star_y \right\Vert^2 \nonumber\\
& = \frac{2s^2}{\gamma_y} \left\Vert D^y_{k,t} - D^\star_y \right\Vert^2_{(I-W)^\dagger} - s^2 \left\Vert D^y_{k,t} - D^\star_y \right\Vert^2 \nonumber\\
& = \frac{2s^2}{\gamma_y} \left\Vert D^y_{k,t} - D^\star_y \right\Vert^2_{(I-W)^\dagger} - s^2 \left\Vert D^y_{k,t} - D^\star_y \right\Vert^2 + s^2 \lambda_{m-1}(I-W) \left\Vert D^y_{k,t} - D^\star_y \right\Vert^2_{(I-W)^\dagger} \nonumber\\ & \ \ - s^2 \lambda_{m-1}(I-W) \left\Vert D^y_{k,t} - D^\star_y \right\Vert^2_{(I-W)^\dagger} \nonumber\\
& = s^2(D^y_{k,t} - D^\star_y)^\top \left( - I +  \lambda_{m-1}(I-W)(I-W)^\dagger \right)(D^y_{k,t} - D^\star_y)\nonumber \\ & \ \ + \frac{2s^2}{\gamma_y} \left\Vert D^y_{k,t} - D^\star_y \right\Vert^2_{(I-W)^\dagger}  - s^2 \lambda_{m-1}(I-W) \left\Vert D^y_{k,t} - D^\star_y \right\Vert^2_{(I-W)^\dagger} \nonumber\\
& \leq \frac{2s^2}{\gamma_y} \left\Vert D^y_{k,t} - D^\star_y \right\Vert^2_{(I-W)^\dagger} - s^2 \lambda_{m-1}(I-W) \left\Vert D^y_{k,t} - D^\star_y \right\Vert^2_{(I-W)^\dagger} \nonumber\\
& = \frac{2s^2}{\gamma_y}\left( 1- \frac{\gamma_y}{2}\lambda_{m-1}(I-W) \right)\left\Vert D^y_{k,t} - D^\star_y \right\Vert^2_{(I-W)^\dagger} . \label{eq:Dyt+1_simplify}
\end{align}

By substituting \eqref{eq:nuhaty_t+1} and \eqref{eq:Dyt+1_simplify} in \eqref{eq:yt+1_My_Dy_Hy}, we get
\begin{align}
& M_y E\left\Vert y_{k,t+1} - \mathbf{1}y^\star  \right\Vert^2 + \frac{2s^2}{\gamma_y} E \left\Vert  D^y_{k,t+1} - D^\star_y  \right\Vert^2_{(I-W)^\dagger} + \sqrt{\delta}E\left\Vert H^y_{k,t+1} - H^\star_y \right\Vert^2 \nonumber\\
& \leq \left\Vert y_{k,t} - \mathbf{1}y^\star + s\mathcal{G}^y_{k,t} - s\nabla_y F(\mathbf{1}z^\star)  \right\Vert^2 + \frac{2s^2}{\gamma_y}\left( 1- \frac{\gamma_y}{2}\lambda_{m-1}(I-W) \right)\left\Vert D^y_{k,t} - D^\star_y \right\Vert^2_{(I-W)^\dagger} \nonumber\\ & \ \ + \left(\frac{\delta\gamma^2_yM_y \lambda^2_{\max}(I-W)}{4} + \frac{\gamma_y\delta}{2}\lambda_{\max}(I-W) + \sqrt{\delta} \delta \alpha_{y}^2 - \sqrt{\delta}\alpha_{y}(1-\alpha_{y}) \right) \left\Vert \nu^y_{k,t+1} - H^y_{k,t} \right\Vert^2 \nonumber\\ & \ \ \ + \sqrt{\delta}(1-\alpha_{y})\left\Vert H^y_{k,t} - H^\star_y  \right\Vert^2 . \label{eq:simplifiedrhs}
\end{align}

The coefficient of $\left\Vert \nu^y_{k,t+1} - H^y_{k,t} \right\Vert^2$ in~\eqref{eq:simplifiedrhs} is
\begin{align}
& \frac{\delta\gamma^2_yM_y \lambda^2_{\max}(I-W)}{4} + \frac{\gamma_y\delta}{2}\lambda_{\max}(I-W) + \sqrt{\delta} \delta \alpha_{y}^2 - \sqrt{\delta}\alpha_{y}(1-\alpha_{y}) \nonumber\\
& < \frac{\delta\gamma^2_y\lambda^2_{\max}(I-W)}{4} + \frac{\gamma_y\delta}{2}\lambda_{\max}(I-W) + \sqrt{\delta} (1+\delta) \alpha_{y}^2 - \sqrt{\delta}\alpha_{y}\nonumber \\
& = \frac{\delta\gamma_y}{2} \frac{\gamma_y \lambda_{\max}(I-W)}{2} \lambda_{\max}(I-W) + \frac{\gamma_y\delta}{2}\lambda_{\max}(I-W) - \sqrt{\delta}(\alpha_{y} - (1+\delta) \alpha_{y}^2 )\nonumber \\
& < \frac{\delta\gamma_y}{2} \lambda_{\max}(I-W) + \frac{\gamma_y\delta}{2}\lambda_{\max}(I-W) - \sqrt{\delta}(\alpha_{y} - (1+\delta) \alpha_{y}^2 ) \nonumber\\
& = \delta\gamma_y \lambda_{\max}(I-W) - \sqrt{\delta}(\alpha_{y} - (1+\delta) \alpha_{y}^2 ) \nonumber\\
& < 0 , \text{ as } \gamma_y < \frac{\alpha_y - (1+\delta)\alpha_y^2}{\sqrt{\delta}\lambda_{\max}(I-W)}. \label{coeff_bound}
\end{align}
In deriving \eqref{coeff_bound}, the first inequality follows from $M_y < 1$ (to be proved later in Section \ref{appendix_parameters_feasibility}) and the second inequality follows from $\frac{\gamma_y\lambda_{\max}(I-W)}{2}<1$, since from assumption \eqref{gamma_y_assumption}, it holds that $0<\gamma_y<\frac{2-2\sqrt{\delta}\alpha_y}{\lambda_{\max}(I-W)}\leq \frac{2}{\lambda_{\max}(I-W)}$. 
As a result,~\eqref{eq:simplifiedrhs} can be simplified as 
\begin{align}
& M_y E\left\Vert y_{k,t+1} - \mathbf{1}y^\star  \right\Vert^2 + \frac{2s^2}{\gamma_y} E \left\Vert  D^y_{k,t+1} - D^\star_y  \right\Vert^2_{(I-W)^\dagger} + \sqrt{\delta}E\left\Vert H^y_{k,t+1} - H^\star_y \right\Vert^2 \nonumber\\
& \leq \left\Vert y_{k,t} - \mathbf{1}y^\star + s\mathcal{G}^y_{k,t} - s\nabla_y F(\mathbf{1}z^\star)  \right\Vert^2 + \frac{2s^2}{\gamma_y}\left( 1- \frac{\gamma_y}{2}\lambda_{m-1}(I-W) \right)\left\Vert D^y_{k,t} - D^\star_y \right\Vert^2_{(I-W)^\dagger} \nonumber\\ & \ \ + \sqrt{\delta}(1-\alpha_{y})\left\Vert H^y_{k,t} - H^\star_y  \right\Vert^2 ,
\end{align}

proving Lemma~\ref{lem:recursion_y}. The remainder of this section is devoted to prove~\eqref{eq:y_t+1_My} and \eqref{eq:Dy_Hy_sum}.

\subsection{Proof of (\ref{eq:y_t+1_My}) }\label{sec:y_t+1_My}
First, observe that
\begin{align}
\left\Vert y_{k,t+1} - \mathbf{1}y^\star  \right\Vert^2  
& = \sum_{i = 1}^m \left\Vert y^i_{k,t+1} - y^\star  \right\Vert^2\nonumber \\
& = \sum_{i = 1}^m \left\Vert \prox_{sr}(\hat{y}^i_{k,t+1}) - \prox_{sr}\left( y^\star + \frac{s}{m} \nabla_y f(x^\star,y^\star) \right)  \right\Vert^2 \nonumber\\
& = \sum_{i = 1}^m \left\Vert \prox_{sr}(\hat{y}^i_{k,t+1}) - \prox_{sr}\left(  H^\star_{i,y} \right)  \right\Vert^2 \nonumber\\
& \leq \sum_{i = 1}^m \left\Vert \hat{y}^i_{k,t+1} -H^\star_{i,y}  \right\Vert^2 \text{(from non-expansivity of prox)} \nonumber \\
& = \left\Vert \hat{y}_{k,t+1} -H^\star_{y} \right\Vert^2 \nonumber\\
& = \left\Vert \nu^y_{k,t+1} - \frac{\gamma_{y}}{2}(I-W)\hat{\nu}^y_{k,t+1} -H^\star_{y} \right\Vert^2\nonumber \\
& = \left\Vert \nu^y_{k,t+1} - \frac{\gamma_{y}}{2}(I-W)(\hat{\nu}^y_{k,t+1} - \nu^y_{k,t+1} + \nu^y_{k,t+1} ) -H^\star_{y} \right\Vert^2 \nonumber\\
& = \left\Vert \left(I - \frac{\gamma_y}{2}(I-W) \right) (\nu^y_{k,t+1} - H^\star_y) - \frac{\gamma_{y}}{2}(I-W)(\hat{\nu}^y_{k,t+1} - \nu^y_{k,t+1} \right\Vert^2\nonumber \\
& = \left\Vert \left(I - \frac{\gamma_y}{2}(I-W) \right) (\nu^y_{k,t+1} - H^\star_y) \right\Vert^2 + \frac{\gamma^2_y}{4} \left\Vert (I-W)(\hat{\nu}^y_{k,t+1} - \nu^y_{k,t+1}) \right\Vert^2 \nonumber\\ & \ \ \ + 2 \left\langle \left(I - \frac{\gamma_y}{2}(I-W) \right)(\nu^y_{k,t+1} - H^\star_y), -\frac{\gamma_y}{2}(I-W)(\hat{\nu}^y_{k,t+1} - \nu^y_{k,t+1}) \right\rangle .
\end{align}
By taking conditional expectation over stochastic compression at $t$-th iterate, we obtain
\begin{align}
E\left\Vert y_{k,t+1} - \mathbf{1}y^\star  \right\Vert^2  & \leq \left\Vert \left(I - \frac{\gamma_y}{2}(I-W) \right) (\nu^y_{k,t+1} - H^\star_y) \right\Vert^2 + \frac{\gamma^2_y}{4} E \left\Vert (I-W)(\hat{\nu}^y_{k,t+1} - \nu^y_{k,t+1}) \right\Vert^2 \nonumber\\ & \ \ \ - \gamma_y \left\langle \left(I - \frac{\gamma_y}{2}(I-W) \right)(\nu^y_{k,t+1} - H^\star_y), (I-W)E(\hat{\nu}^y_{k,t+1} - \nu^y_{k,t+1}) \right\rangle . \label{eq:ykt+1_ystar}
\end{align}
We have
\begin{align}
\hat{\nu}^y_{k,t+1} - \nu^y_{k,t+1} & = H^y_{k,t} + Q(\nu^y_{k,t+1} - H^y_{k,t} ) - \nu^y_{k,t+1} \nonumber\\
& = Q(\nu^y_{k,t+1} - H^y_{k,t} ) - \left(\nu^y_{k,t+1} - H^y_{k,t} \right) \nonumber \\
E\left( \hat{\nu}^y_{k,t+1} - \nu^y_{k,t+1} \right) & = E\left( Q(\nu^y_{k,t+1} - H^y_{k,t} ) \right) - \left(\nu^y_{k,t+1} - H^y_{k,t} \right) \nonumber\\
& = 0 .
\end{align}
The last equality follows from Assumption \ref{compression_operator}. By substituting the above  equation in \eqref{eq:ykt+1_ystar}, we obtain
\begin{align}
E\left\Vert y_{k,t+1} - \mathbf{1}y^\star  \right\Vert^2  & \leq \left\Vert \left(I - \frac{\gamma_y}{2}(I-W) \right) (\nu^y_{k,t+1} - H^\star_y) \right\Vert^2 + \frac{\gamma^2_y}{4} E \left\Vert (I-W)(\hat{\nu}^y_{k,t+1} - \nu^y_{k,t+1}) \right\Vert^2 . \label{eq:yk_t+1_ystar_vec_norm}
\end{align}
We will now convert the square norm terms of~\eqref{eq:yk_t+1_ystar_vec_norm} into matrix-norm based terms. Towards that end, observe that
\begin{align}
\left( (I - \frac{\gamma_y}{2}(I-W) \right)^2 & = I + \frac{\gamma^2_y}{4}(I-W)^2 -\gamma_y(I-W) \nonumber\\
& = I - \frac{\gamma_y}{2}(I-W) + \frac{\gamma_y}{2}(I-W) + \frac{\gamma^2_y}{4}(I-W)^2 - \gamma_y(I-W)\nonumber \\
& = I - \frac{\gamma_y}{2}(I-W) - \frac{\gamma_y}{2}(I-W) + \frac{\gamma^2_y}{4}(I-W)^2 \nonumber \\
& =  I - \frac{\gamma_y}{2}(I-W) + \frac{\gamma_y}{2}(I-W) \left( -I + \frac{\gamma_y}{2}(I-W) \right)\nonumber \\
& = I - \frac{\gamma_y}{2}(I-W) + \frac{\gamma_y}{2}(I-W)^{1/2} \left( -I + \frac{\gamma_y}{2}(I-W) \right)(I-W)^{1/2} . \label{square_norm_matrix_terms}
\end{align}
Note that $\left( -I + \frac{\gamma_y}{2}(I-W) \right)$ is a negative semidefinite matrix because $0 < \frac{\gamma_y}{2} \lambda_{\max}(I-W)  < 1$ from the choice of $\gamma_y$. Using this fact in equation \eqref{square_norm_matrix_terms}, we get $\forall x \in {\mathbb{R}}^m$, 
\begin{align}
x^\top\left( I - \frac{\gamma_y}{2}(I-W) \right)^2 x & \leq x^\top\left( I - \frac{\gamma_y}{2}(I-W) \right) x . 
\end{align} 
Consider 
\begin{align}
\left\Vert \left(I - \frac{\gamma_y}{2}(I-W) \right) (\nu^y_{k,t+1} - H^\star_y) \right\Vert^2  & = (\nu^y_{k,t+1} - H^\star_y)^\top \left(I - \frac{\gamma_y}{2}(I-W) \right)^2 (\nu^y_{k,t+1} - H^\star_y) \nonumber\\
& \leq (\nu^y_{k,t+1} - H^\star_y)^\top \left(I - \frac{\gamma_y}{2}(I-W) \right) (\nu^y_{k,t+1} - H^\star_y) \nonumber\\
& = \left\Vert \nu^y_{k,t+1} - H^\star_y \right\Vert^2_{I - \frac{\gamma_y}{2}(I-W)} . \label{matrix_sq_norm_term1}
\end{align}
Moreover,
\begin{align}
\left\Vert (I-W)(\hat{\nu}^y_{k,t+1} - \nu^y_{k,t+1}) \right\Vert^2 & = (\hat{\nu}^y_{k,t+1} - \nu^y_{k,t+1})^\top(I-W)^2(\hat{\nu}^y_{k,t+1} - \nu^y_{k,t+1}) \nonumber \\
& = \left\Vert \hat{\nu}^y_{k,t+1} - \nu^y_{k,t+1} \right\Vert^2_{(I-W)^2} . \label{matrix_sq_norm_term2}
\end{align}
On substituting above two equalities \eqref{matrix_sq_norm_term1} and \eqref{matrix_sq_norm_term2} in \eqref{eq:yk_t+1_ystar_vec_norm}, we obtain
\begin{align}
E\left\Vert y_{k,t+1} - \mathbf{1}y^\star  \right\Vert^2  & \leq \left\Vert \nu^y_{k,t+1} - H^\star_y \right\Vert^2_{I - \frac{\gamma_y}{2}(I-W)} + \frac{\gamma^2_y}{4} E \left\Vert \hat{\nu}^y_{k,t+1} - \nu^y_{k,t+1} \right\Vert^2_{(I-W)^2}  . \label{eq:ykt+1_ystar_compact}
\end{align} 

To complete the proof of~\eqref{eq:y_t+1_My}, we first show that the first term on the r.h.s. of~\eqref{eq:ykt+1_ystar_compact} is at most 
$M^{-1}_y \left\Vert \nu^y_{k,t+1} - H^\star_y  \right\Vert^2_{I - \frac{\gamma_y}{2}(I-W) - \alpha_{y}\sqrt{\delta}I}$, 
where $M_y = 1-\frac{\sqrt{\delta}\alpha_y}{1-\frac{\gamma_y}{2}\lambda_{\max}(I-W)}$ . To this end, consider
\begin{align}
& \sqrt{\delta}\alpha_y I - \frac{\sqrt{\delta}\alpha_y}{1-\frac{\gamma_y}{2}\lambda_{\max}(I-W)}\left( I - \frac{\gamma_y}{2}(I-W) \right) \nonumber\\
& = \frac{\sqrt{\delta}\alpha_y \left( 1-\frac{\gamma_y}{2}\lambda_{\max}(I-W) \right) I - \sqrt{\delta}\alpha_y\left( I - \frac{\gamma_y}{2}(I-W) \right)}{1-\frac{\gamma_y}{2}\lambda_{\max}(I-W)} \nonumber\\
& = \frac{1}{1-\frac{\gamma_y}{2}\lambda_{\max}(I-W)} \left( -\frac{\sqrt{\delta}\alpha_y \gamma_y\lambda_{\max}(I-W)}{2} I + \frac{\sqrt{\delta}\alpha_y \gamma_y}{2} (I-W) \right) .
\end{align}
The largest eigenvalue of $\sqrt{\delta}\alpha_y I - \frac{\sqrt{\delta}\alpha_y}{1-\frac{\gamma_y}{2}\lambda_{\max}(I-W)}\left( I - \frac{\gamma_y}{2}(I-W) \right)$ is 
\begin{align}
& \frac{1}{1-\frac{\gamma_y}{2}\lambda_{\max}(I-W)} \left( -\frac{\sqrt{\delta}\alpha_y \gamma_y\lambda_{\max}(I-W)}{2} I + \frac{\sqrt{\delta}\alpha_y \gamma_y}{2} \lambda_{\max}(I-W) \right)\nonumber \\
& = 0 .
\end{align}
Therefore,
$\sqrt{\delta}\alpha_y I - \frac{\sqrt{\delta}\alpha_y}{1-\frac{\gamma_y}{2}\lambda_{\max}(I-W)}\left( I - \frac{\gamma_y}{2}(I-W) \right)$ is negative semidefinite, and 

\begin{align}
x^\top \left( I - \frac{\gamma_y}{2}(I-W) \right)x & = M_y^{-1} x^\top M_y \left( I - \frac{\gamma_y}{2}(I-W) \right)x \nonumber \\
& = M_y^{-1} x^\top \left( 1-\frac{\sqrt{\delta}\alpha_y}{1-\frac{\gamma_y}{2}\lambda_{\max}(I-W)} \right) \left( I - \frac{\gamma_y}{2}(I-W) \right)x \nonumber\\
& = M_y^{-1} x^\top \left( I - \frac{\gamma_y}{2}(I-W) - \frac{\sqrt{\delta}\alpha_y}{1-\frac{\gamma_y}{2}\lambda_{\max}(I-W)} \left( I - \frac{\gamma_y}{2}(I-W) \right)  \right)x\nonumber \\
& = M_y^{-1} x^\top \left( I - \frac{\gamma_y}{2}(I-W) - \sqrt{\delta}\alpha_y I  \right)x\nonumber \\ & \ \ + M_y^{-1} x^\top \left( \sqrt{\delta}\alpha_y I - \frac{\sqrt{\delta}\alpha_y}{1-\frac{\gamma_y}{2}\lambda_{\max}(I-W)} \left( I - \frac{\gamma_y}{2}(I-W) \right)  \right)x \nonumber\\
& \leq M_y^{-1} x^\top \left( I - \frac{\gamma_y}{2}(I-W) - \sqrt{\delta}\alpha_y I  \right)x .
\end{align}

Substituting $x = \nu^y_{k,t+1} - H^\star_y $ into the above inequality and using the definition of $\left\Vert x  \right\Vert^2_A$, we obtain 
\begin{align}
\left\Vert \nu^y_{k,t+1} - H^\star_y  \right\Vert^2_{I - \frac{\gamma_y}{2}(I-W)} & \leq M^{-1}_y \left\Vert \nu^y_{k,t+1} - H^\star_y  \right\Vert^2_{I - \frac{\gamma_y}{2}(I-W) - \alpha_{y}\sqrt{\delta}I} . \label{eq:nuyt+1_Hstar_My}
\end{align}

From \eqref{eq:ykt+1_ystar_compact} and \eqref{eq:nuyt+1_Hstar_My}, we see that
\begin{equation*}
E\left\Vert y_{k,t+1} - \mathbf{1}y^\star  \right\Vert^2   \leq M^{-1}_y \left\Vert \nu^y_{k,t+1} - H^\star_y  \right\Vert^2_{I - \frac{\gamma_y}{2}(I-W) - \alpha_{y}\sqrt{\delta}I} + \frac{\gamma^2_y}{4} E \left\Vert \hat{\nu}^y_{k,t+1} - \nu^y_{k,t+1} \right\Vert^2_{(I-W)^2} . 
\end{equation*}
Multiplying throughout by $M_y$, the proof of~\eqref{eq:y_t+1_My} is complete.

\subsection{Proof of (\ref{eq:Dy_Hy_sum})}\label{sec:Dy_Hy_sum}
For every $k$, let $\mathcal{G}^x_{k,t}$ and $\mathcal{G}^y_{k,t}$ denote the stochastic gradient oracles at iterate $t$. For Algorithm~\ref{alg:CRDPGD_known_mu}, $\mathcal{G}^x_{k,t} = \nabla_x F(z_{k,t};\xi_{k,t})$ and $\mathcal{G}^y_{k,t} = \nabla_y F(z_{k,t}, \xi_{k,t})$ are obtained using general stochastic gradient oracle. In Algorithm~\ref{alg:svrg_known_mu}, $\mathcal{G}^x_{k,t}$ and $\mathcal{G}^y_{k,t}$ are obtained from the SVRG oracle.

\subsubsection*{Step 1: Computing $E\left\Vert D^y_{k,t+1} - D^\star_y \right\Vert^2_{(I - W)^\dagger}$}
Observe that 
\begin{align}
 D^y_{k,t+1} - D^\star_y  & =  D^y_{k,t} + \frac{\gamma_y}{2s}(I-W)\hat{\nu}^y_{k,t+1} - D^\star_y\nonumber  \\
& =  D^y_{k,t} + \frac{\gamma_y}{2s}(I-W)\hat{\nu}^y_{k,t+1} - D^\star_y - \frac{\gamma_y}{2s}(I-W)H^\star_y \nonumber \\
& =  D^y_{k,t} - D^\star_y + \frac{\gamma_y}{2s}(I-W)(\hat{\nu}^y_{k,t+1}  - H^\star_y) \nonumber \\
& = D^y_{k,t} - D^\star_y + \frac{\gamma_y}{2s}(I-W)(\hat{\nu}^y_{k,t+1}  - \nu^y_{k,t+1}) + \frac{\gamma_y}{2s}(I-W)(\nu^y_{k,t+1}  - H^\star_y)
\end{align}
On pre-multiplying both sides by $\sqrt{(I-W)^\dagger}$ and taking square norm on the resulting equality, we obtain
\begin{align}
& \left\Vert \sqrt{(I-W)^\dagger}\left( D^y_{k,t+1} - D^\star_y  \right) \right\Vert^2 \nonumber\\
& = \left\Vert \sqrt{(I-W)^\dagger}(D^y_{k,t} - D^\star_y) + \frac{\gamma_y}{2s}\sqrt{(I-W)^\dagger}(I-W)(\nu^y_{k,t+1}  - H^\star_y)  \right\Vert^2 \nonumber \\ & \ \ + \frac{\gamma^2_y}{4s^2}\left\Vert \sqrt{(I-W)^\dagger}(I-W)(\hat{\nu}^y_{k,t+1}  - \nu^y_{k,t+1}) \right\Vert^2 \nonumber \\ &  + 2 \left\langle \sqrt{(I-W)^\dagger}(D^y_{k,t} - D^\star_y) + \frac{\gamma_y}{2s}\sqrt{(I-W)^\dagger}(I-W)(\nu^y_{k,t+1}  - H^\star_y), \frac{\gamma_y}{2s} \sqrt{(I-W)^\dagger}(I-W)(\hat{\nu}^y_{k,t+1}  - \nu^y_{k,t+1}) \right\rangle .
\end{align}
By taking conditional expectation over compression at $t$-th iterate and using the result $E\left( \hat{\nu}^y_{k,t+1} - \nu^y_{k,t+1} \right) = 0$, we obtain
\begin{align}
E \left\Vert \sqrt{(I-W)^\dagger}\left( D^y_{k,t+1} - D^\star_y  \right) \right\Vert^2 
& = \left\Vert \sqrt{(I-W)^\dagger}(D^y_{k,t} - D^\star_y) + \frac{\gamma_y}{2s}\sqrt{(I-W)^\dagger}(I-W)(\nu^y_{k,t+1}  - H^\star_y)  \right\Vert^2 \nonumber\\ & \ \ \ + \frac{\gamma^2_y}{4s^2}E \left\Vert \sqrt{(I-W)^\dagger}(I-W)(\hat{\nu}^y_{k,t+1}  - \nu^y_{k,t+1}) \right\Vert^2 \nonumber \\
& = \left\Vert \sqrt{(I-W)^\dagger}(D^y_{k,t} - D^\star_y) \right\Vert^2 + \frac{\gamma^2_y}{4s^2} \left\Vert \sqrt{(I-W)^\dagger}(I-W)(\nu^y_{k,t+1}  - H^\star_y)  \right\Vert^2 \nonumber\\ & \ \ \ + 2 \left\langle \sqrt{(I-W)^\dagger}(D^y_{k,t} - D^\star_y), \frac{\gamma_y}{2s} \sqrt{(I-W)^\dagger}(I-W)(\nu^y_{k,t+1}  - H^\star_y) \right\rangle \nonumber\\ & \ \ \ + \frac{\gamma^2_y}{4s^2}E \left\Vert \sqrt{(I-W)^\dagger}(I-W)(\hat{\nu}^y_{k,t+1}  - \nu^y_{k,t+1}) \right\Vert^2 . \label{eq:Dk_t+1_Dstar_}
\end{align}
We have
\begin{align}
\left( \sqrt{(I-W)^\dagger}(I-W) \right)^\top \left(\sqrt{(I-W)^\dagger}(I-W) \right) & = (I-W)\sqrt{(I-W)^\dagger}\sqrt{(I-W)^\dagger}(I-W) \nonumber\\
& = (I-W)(I-W)^{\dagger}(I-W) \nonumber\\
& = I-W ,
\end{align}
where the last equality follows from the definition of pseudoinverse. Consider
\begin{align}
& \left\Vert \sqrt{(I-W)^\dagger}(I-W)(\nu^y_{k,t+1}  - H^\star_y)  \right\Vert^2 \nonumber \\
& = (\nu^y_{k,t+1}  - H^\star_y)^\top\left( \sqrt{(I-W)^\dagger}(I-W) \right)^\top \left(\sqrt{(I-W)^\dagger}(I-W) \right)(\nu^y_{k,t+1}  - H^\star_y)\nonumber \\
& = (\nu^y_{k,t+1}  - H^\star_y)^\top(I-W)(\nu^y_{k,t+1}  - H^\star_y) \nonumber\\
& = \left\Vert (\nu^y_{k,t+1}  - H^\star_y) \right\Vert^2_{I-W} .
\end{align}
Similarly, we see that $\left\Vert \sqrt{(I-W)^\dagger}(I-W)(\hat{\nu}^y_{k,t+1}  - \nu^y_{k,t+1}) \right\Vert^2 = \left\Vert \hat{\nu}^y_{k,t+1}  - \nu^y_{k,t+1} \right\Vert^2_{I-W}$
Substituting in \eqref{eq:Dk_t+1_Dstar_}, we obtain 
\begin{align}
E \left\Vert  D^y_{k,t+1} - D^\star_y \right\Vert^2_{(I-W)^\dagger}
& = \left\Vert D^y_{k,t} - D^\star_y \right\Vert^2_{(I-W)^\dagger} + \frac{\gamma^2_y}{4s^2} \left\Vert \nu^y_{k,t+1}  - H^\star_y \right\Vert^2_{I-W} + \frac{\gamma^2_y}{4s^2}  E \left\Vert \hat{\nu}^y_{k,t+1}  - \nu^y_{k,t+1} \right\Vert^2_{I-W} \nonumber\\ & \ \  + \frac{\gamma_y}{s} \left\langle \sqrt{(I-W)^\dagger}(D^y_{k,t} - D^\star_y),  \sqrt{(I-W)^\dagger}(I-W)(\nu^y_{k,t+1}  - H^\star_y) \right\rangle . \label{eq:Dyk_t+1_inner_product}
\end{align}
Now, we will simplify the last term of \eqref{eq:Dyk_t+1_inner_product}. From the property of adjoints,
\begin{align}
&  \left\langle \sqrt{(I-W)^\dagger}(D^y_{k,t} - D^\star_y),  \sqrt{(I-W)^\dagger}(I-W)(\nu^y_{k,t+1}  - H^\star_y) \right\rangle \nonumber \\
& = \left\langle (I-W)\sqrt{(I-W)^\dagger}\sqrt{(I-W)^\dagger}(D^y_{k,t} - D^\star_y),  \nu^y_{k,t+1}  - H^\star_y \right\rangle\nonumber \\
& = \left\langle (I-W)(I-W)^\dagger (D^y_{k,t} - D^\star_y),  \nu^y_{k,t+1}  - H^\star_y \right\rangle .
\end{align}
Note that $D^\star_y \in \text{Range}(I-W)$ using Proposition~\ref{rem:rangeI-W}. Further note that $D^y_{k,t} \in \text{Range}(I-W)$ because of update process (Step 10) in Algorithm \ref{alg:generic_procedure_sgda}. Therefore, there exists $\tilde{D}^y_{k,t}$ and $\tilde{D}_y$ such that $D^y_{k,t} = (I-W)\tilde{D}^y_{k,t}$ and $D^\star_y = (I-W)\tilde{D}_y$. 
\begin{align}
(I-W)(I-W)^\dagger (D^y_{k,t} - D^\star_y) & = (I-W)(I-W)^\dagger \left( (I-W)\tilde{D}^y_{k,t} - (I-W)\tilde{D}_y) \right) \nonumber\\
& = (I-W)(I-W)^\dagger (I-W) \left( \tilde{D}^y_{k,t} - \tilde{D}_y \right) \nonumber\\
& = (I-W) \left( \tilde{D}^y_{k,t} - \tilde{D}_y \right) \nonumber\\
& = (I-W)\tilde{D}^y_{k,t} - (I-W)\tilde{D}_y\nonumber \\
& = D^y_{k,t} - D^\star_y .
\end{align}
This gives
\begin{align}
\left\langle \sqrt{(I-W)^\dagger}(D^y_{k,t} - D^\star_y),  \sqrt{(I-W)^\dagger}(I-W)(\nu^y_{k,t+1}  - H^\star_y) \right\rangle & = \left\langle D^y_{k,t} - D^\star_y,  \nu^y_{k,t+1}  - H^\star_y \right\rangle .
\end{align}
On substituting above equality in \eqref{eq:Dyk_t+1_inner_product}, we obtain
\begin{align}
E \left\Vert  D^y_{k,t+1} - D^\star_y  \right\Vert^2_{(I-W)^\dagger} & = \left\Vert D^y_{k,t} - D^\star_y \right\Vert^2_{(I-W)^\dagger} + \frac{\gamma^2_y}{4s^2}  \left\Vert \nu^y_{k,t+1}  - H^\star_y \right\Vert^2_{I-W} + \frac{\gamma^2_y}{4s^2}  E \left\Vert \hat{\nu}^y_{k,t+1}  - \nu^y_{k,t+1} \right\Vert^2_{I-W} \nonumber\\ & \ \ + \frac{\gamma_y}{s} \left\langle D^y_{k,t} - D^\star_y,  \nu^y_{k,t+1}  - H^\star_y \right\rangle . \label{eq:Dy_k_t+1_Dstar_2}
\end{align}
Note that $\frac{1}{m}\mathbf{1}\nabla_y f(z^\star) = \frac{1}{m} \mathbf{1}\sum_{i = 1}^m \nabla_y f_i(z^\star) = J\nabla_y F(\mathbf{1}z^\star)$. Then we can write $H^\star_y = \mathbf{1}y^\star + sJ\nabla_y F(\mathbf{1}z^\star)$. We have
\begin{align}
	\nu^y_{k,t+1} - H^\star_y & =  \nu^y_{k,t+1} - \left( \mathbf{1}y^\star + sJ\nabla_y F(\mathbf{1}z^\star) \right) \nonumber \\
	& = \nu^y_{k,t+1} - \left(-sD^\star_y + \mathbf{1}y^\star + s\nabla_y F(\mathbf{1}z^\star) \right) \nonumber\\
	& = y_{k,t} + s \mathcal{G}^y_{k,t} - sD^y_{k,t} + sD^\star - \mathbf{1}y^\star - s\nabla_y F(\mathbf{1}z^\star) \nonumber\\
	& = y_{k,t} - \mathbf{1}y^\star + s\left( \mathcal{G}^y_{k,t} - \nabla_y F(\mathbf{1}z^\star) \right) -s \left( D^y_{k,t} - D^\star_y \right) . \label{eq:nu_y_t+1_Hstar}
\end{align}
In deriving equation \eqref{eq:nu_y_t+1_Hstar}, the second equality follows from the definition of $D^\star_y = (I-J)\nabla_y F(\mathbf{1}z^\star)$, and the third equality follows from the update step of $\nu^{y}_{k,t+1}$ (Step 9 in Algorithm \ref{alg:generic_procedure_sgda}). 

Substituting~\eqref{eq:nu_y_t+1_Hstar} in \eqref{eq:Dy_k_t+1_Dstar_2},  
\begin{align}
E \left\Vert  D^y_{k,t+1} - D^\star_y  \right\Vert^2_{(I-W)^\dagger} & = \left\Vert D^y_{k,t} - D^\star_y \right\Vert^2_{(I-W)^\dagger} + \frac{\gamma^2_y}{4s^2}  \left\Vert \nu^y_{k,t+1}  - H^\star_y \right\Vert^2_{I-W} + \frac{\gamma^2_y}{4s^2}  E \left\Vert \hat{\nu}^y_{k,t+1}  - \nu^y_{k,t+1} \right\Vert^2_{I-W} \nonumber\\ & \ \ + \frac{\gamma_y}{s} \left\langle D^y_{k,t} - D^\star_y,   y_{k,t} - \mathbf{1}y^\star + s\left( \mathcal{G}^y_{k,t} - \nabla_y F(\mathbf{1}z^\star) \right) \right\rangle - \gamma_y \left\Vert  D^y_{k,t} - D^\star_y  \right\Vert^2 . \label{eq:Dy_k_t+1_Dstar_compact}
\end{align}

\textbf{Step 2: Computing $\frac{2s^2}{\gamma_y} E \left\Vert  D^y_{k,t+1} - D^\star_y  \right\Vert^2_{(I-W)^\dagger} + \left\Vert \nu^y_{k,t+1} - H^\star_y \right\Vert^2_{I - \frac{\gamma_y}{2}(I-W)}$}

Taking square norm on both sides of \eqref{eq:nu_y_t+1_Hstar}, we obtain .
\begin{align}
\left\Vert \nu^y_{k,t+1} - H^\star_y \right\Vert^2 & = \left\Vert y_{k,t} - \mathbf{1}y^\star + s\left( \mathcal{G}^y_{k,t} - \nabla_y F(\mathbf{1}z^\star) \right)  \right\Vert^2 + s^2 \left\Vert D^y_{k,t} - D^\star_y \right\Vert^2 \nonumber\\ & \ \ - 2s \left\langle D^y_{k,t} - D^\star_y, y_{k,t} - \mathbf{1}y^\star + s \mathcal{G}^y_{k,t} - s\nabla_y F(\mathbf{1}z^\star) \right\rangle . \label{eq:nu_y_t+1_Hstar_norm}
\end{align}

Multiply \eqref{eq:Dy_k_t+1_Dstar_compact} by $\frac{2s^2}{\gamma_y}$ and adding the resulting inequality with \eqref{eq:nu_y_t+1_Hstar_norm}, 
\begin{align}
& \frac{2s^2}{\gamma_y} E \left\Vert  D^y_{k,t+1} - D^\star_y  \right\Vert^2_{(I-W)^\dagger} + \left\Vert \nu^y_{k,t+1} - H^\star_y \right\Vert^2 \nonumber\\
& = \frac{2s^2}{\gamma_y} \left\Vert D^y_{k,t} - D^\star_y \right\Vert^2_{(I-W)^\dagger} + \frac{\gamma_y}{2}  \left\Vert \nu^y_{k,t+1}  - H^\star_y \right\Vert^2_{I-W} + \frac{\gamma_y}{2}  E \left\Vert \hat{\nu}^y_{k,t+1}  - \nu^y_{k,t+1} \right\Vert^2_{I-W}  - 2s^2 \left\Vert  D^y_{k,t} - D^\star_y  \right\Vert^2 \nonumber\\ & \ \ + s^2 \left\Vert D^y_{k,t} - D^\star_y \right\Vert^2 + \left\Vert y_{k,t} - \mathbf{1}y^\star + s\mathcal{G}^y_{k,t} - s\nabla_y F(\mathbf{1}z^\star)  \right\Vert^2 \nonumber \\
& = \frac{2s^2}{\gamma_y} \left\Vert D^y_{k,t} - D^\star_y \right\Vert^2_{(I-W)^\dagger} + \frac{\gamma_y}{2} \left\Vert \nu^y_{k,t+1}  - H^\star_y \right\Vert^2_{I-W} + \frac{\gamma_y}{2}   E\left\Vert \hat{\nu}^y_{k,t+1}  - \nu^y_{k,t+1} \right\Vert^2_{I-W}  - s^2 \left\Vert  D^y_{k,t} - D^\star_y  \right\Vert^2 \nonumber\\ & \ \ + \left\Vert y_{k,t} - \mathbf{1}y^\star + s\mathcal{G}^y_{k,t} - s\nabla_y F(\mathbf{1}z^\star)  \right\Vert^2 . \label{eq:Dy_t+1_nuy_t+1}
\end{align}

We have
\begin{align}
	& \left\Vert \nu^y_{k,t+1} - H^\star_y \right\Vert^2 - \frac{\gamma_y}{2}\left\Vert \nu^y_{k,t+1}  - H^\star_y \right\Vert^2_{I-W} \nonumber \\
	& = (\nu^y_{k,t+1} - H^\star_y)^\top I (\nu^y_{k,t+1} - H^\star_y) -(\nu^y_{k,t+1} - H^\star_y)^\top  \frac{\gamma_y}{2}(I-W) (\nu^y_{k,t+1} - H^\star_y) \nonumber \\
	& = (\nu^y_{k,t+1} - H^\star_y)^\top \left(I - \frac{\gamma_y}{2}(I-W) \right)(\nu^y_{k,t+1} - H^\star_y) \nonumber\\
	& = \left\Vert \nu^y_{k,t+1} - H^\star_y \right\Vert^2_{I - \frac{\gamma_y}{2}(I-W)} \nonumber.
\end{align}
Therefore,
\begin{align}
	& \frac{\gamma_y}{2}\left\Vert \nu^y_{k,t+1}  - H^\star_y \right\Vert^2_{I-W} = \left\Vert \nu^y_{k,t+1} - H^\star_y \right\Vert^2 - \left\Vert \nu^y_{k,t+1} - H^\star_y \right\Vert^2_{I - \frac{\gamma_y}{2}(I-W)} .
\end{align}
By substituting above equality in \eqref{eq:Dy_t+1_nuy_t+1}, we obtain
\begin{align}
	\frac{2s^2}{\gamma_y} E \left\Vert  D^y_{k,t+1} - D^\star_y  \right\Vert^2_{(I-W)^\dagger} + \left\Vert \nu^y_{k,t+1} - H^\star_y \right\Vert^2 & = \frac{2s^2}{\gamma_y} \left\Vert D^y_{k,t} - D^\star_y \right\Vert^2_{(I-W)^\dagger} + \left\Vert \nu^y_{k,t+1} - H^\star_y \right\Vert^2 \nonumber \\ & \ \ - \left\Vert \nu^y_{k,t+1} - H^\star_y \right\Vert^2_{I - \frac{\gamma_y}{2}(I-W)} + \frac{\gamma_y}{2}  E \left\Vert \hat{\nu}^y_{k,t+1}  - \nu^y_{k,t+1} \right\Vert^2_{I-W} \nonumber \\ & \ \  - s^2 \left\Vert  D^y_{k,t} - D^\star_y  \right\Vert^2  + \left\Vert y_{k,t} - \mathbf{1}y^\star + s\mathcal{G}^y_{k,t} - s\nabla_y F(\mathbf{1}z^\star)  \right\Vert^2 \nonumber 
\end{align}
Hence we get 
\begin{align}
& \frac{2s^2}{\gamma_y} E \left\Vert  D^y_{k,t+1} - D^\star_y  \right\Vert^2_{(I-W)^\dagger}  + \left\Vert \nu^y_{k,t+1} - H^\star_y \right\Vert^2_{I - \frac{\gamma_y}{2}(I-W)}  \nonumber \\ 
& = \frac{2s^2}{\gamma_y} \left\Vert D^y_{k,t} - D^\star_y \right\Vert^2_{(I-W)^\dagger} + \frac{\gamma_y}{2}  E \left\Vert \hat{\nu}^y_{k,t+1}  - \nu^y_{k,t+1} \right\Vert^2_{I-W}  + \left\Vert y_{k,t} - \mathbf{1}y^\star + s\mathcal{G}^y_{k,t} - s\nabla_y F(\mathbf{1}z^\star)  \right\Vert^2   - s^2 \left\Vert  D^y_{k,t} - D^\star_y  \right\Vert^2 . \label{eq:Dy_t+1nu_y_mid}
\end{align}
%
Now we write $\frac{2s^2}{\gamma_y} \left\Vert D^y_{k,t} - D^\star_y \right\Vert^2_{(I-W)^\dagger} -  s^2 \left\Vert  D^y_{k,t} - D^\star_y  \right\Vert^2$ in terms of  $\frac{s^2}{\gamma_y} \left\Vert  D^y_{k,t} - D^\star_y  \right\Vert^2_{2(I-W)^\dagger -\gamma_y I}$ .
\begin{align}
& \frac{2s^2}{\gamma_y} \left\Vert D^y_{k,t} - D^\star_y \right\Vert^2_{(I-W)^\dagger} -  s^2 \left\Vert  D^y_{k,t} - D^\star_y  \right\Vert^2 \nonumber\\
& = \frac{2s^2}{\gamma_y}(D^y_{k,t} - D^\star_y)^\top (I-W)^\dagger (D^y_{k,t} - D^\star_y) - s^2(D^y_{k,t} - D^\star_y)^\top (D^y_{k,t} - D^\star_y) \nonumber\\
& = \frac{s^2}{\gamma_y} \left((D^y_{k,t} - D^\star_y)^\top (2(I-W)^\dagger -\gamma_y I)(D^y_{k,t} - D^\star_y) \right) \nonumber\\
& = \frac{s^2}{\gamma_y} \left\Vert  D^y_{k,t} - D^\star_y  \right\Vert^2_{2(I-W)^\dagger -\gamma_y I} .
\end{align}
By substituting this equality in \eqref{eq:Dy_t+1nu_y_mid}, we obtain
\begin{align}
& \frac{2s^2}{\gamma_y} E \left\Vert  D^y_{k,t+1} - D^\star_y  \right\Vert^2_{(I-W)^\dagger} + \left\Vert \nu^y_{k,t+1} - H^\star_y \right\Vert^2_{I - \frac{\gamma_y}{2}(I-W)} \nonumber\\ 
& = \frac{s^2}{\gamma_y} \left\Vert D^y_{k,t} - D^\star_y \right\Vert^2_{{2(I-W)^\dagger -\gamma_y I}} + \frac{\gamma_y}{2}  E \left\Vert \hat{\nu}^y_{k,t+1}  - \nu^y_{k,t+1} \right\Vert^2_{I-W}   + \left\Vert y_{k,t} - \mathbf{1}y^\star + s\mathcal{G}^y_{k,t} - s\nabla_y F(\mathbf{1}z^\star)  \right\Vert^2 . \label{eq:Dyt+1_Dstar_final}
\end{align}

\textbf{Step 3: Computing   $\sqrt{\delta}E\left\Vert H^y_{k,t+1} - H^\star_y \right\Vert^2$ and finishing the proof}

Observe from Step 4 in Algorithm \ref{comm} that $H^y_{k,t+1}  = (1-\alpha_{y})H^y_{k,t} + \alpha_{y}\hat{\nu}^y_{k,t+1}$, and as a result, 
\begin{align}
H^y_{k,t+1} - H^\star_y & = (1-\alpha_{y})(H^y_{k,t} - H^\star_y ) + \alpha_{y}(\hat{\nu}^y_{k,t+1} - \nu^y_{k,t+1} ) + \alpha_{y}(\nu^y_{k,t+1} - H^\star_y ), \text{ and} \nonumber\\
\left\Vert H^y_{k,t+1} - H^\star_y \right\Vert^2 & = \left\Vert (1-\alpha_{y})(H^y_{k,t} - H^\star_y ) + \alpha_{y}(\nu^y_{k,t+1} - H^\star_y ) \right\Vert^2 + \alpha_{y}^2 \left\Vert \hat{\nu}^y_{k,t+1} - \nu^y_{k,t+1} \right\Vert^2 \nonumber\\ & \ \ + 2 \left\langle (1-\alpha_{y})(H^y_{k,t} - H^\star_y ) + \alpha_{y}(\nu^y_{k,t+1} - H^\star_y ) , \alpha_{y}(\hat{\nu}^y_{k,t+1} - \nu^y_{k,t+1}) \right\rangle .
\end{align}

Taking conditional expectation over compression at $t$-th iterate on both sides and substituting $E\left(\hat{\nu}^y_{k,t+1} - \nu^y_{k,t+1} \right) = 0$, we see that
\begin{align}
E\left\Vert H^y_{k,t+1} - H^\star_y \right\Vert^2 & = \left\Vert (1-\alpha_{y})(H^y_{k,t} - H^\star_y ) + \alpha_{y}(\nu^y_{k,t+1} - H^\star_y ) \right\Vert^2 + \alpha_{y}^2 E \left\Vert \hat{\nu}^y_{k,t+1} - \nu^y_{k,t+1} \right\Vert^2 \nonumber\\
& = (1-\alpha_{y})\left\Vert H^y_{k,t} - H^\star_y  \right\Vert^2 + \alpha_{y} \left\Vert \nu^y_{k,t+1} - H^\star_y  \right\Vert^2 - \alpha_{y}(1-\alpha_{y,k}) \left\Vert \nu^y_{k,t+1} - H^y_{k,t}  \right\Vert^2 \nonumber\\ & \ \ + \alpha_{y}^2 E \left\Vert \hat{\nu}^y_{k,t+1} - \nu^y_{k,t+1} \right\Vert^2 . \label{eq:Hykt_Hstar}
\end{align}
The last equality follows from the identity $\left\Vert (1-\alpha)x +\alpha y \right\Vert^2 = (1-\alpha)\left\Vert x \right\Vert^2 + \alpha \left\Vert y \right\Vert^2 -\alpha(1-\alpha)\left\Vert x- y \right\Vert^2$ .
On multiplying both sides of \eqref{eq:Hykt_Hstar} by $\sqrt{\delta}$, we obtain
\begin{align}
\sqrt{\delta}E\left\Vert H^y_{k,t+1} - H^\star_y \right\Vert^2 & = \sqrt{\delta}(1-\alpha_{y})\left\Vert H^y_{k,t} - H^\star_y  \right\Vert^2 + \sqrt{\delta} \alpha_{y} \left\Vert \nu^y_{k,t+1} - H^\star_y  \right\Vert^2 + \sqrt{\delta}\alpha_{y}^2 E \left\Vert \hat{\nu}^y_{k,t+1} - \nu^y_{k,t+1} \right\Vert^2 \nonumber\\ & \ \ - \sqrt{\delta}\alpha_{y}(1-\alpha_{y}) \left\Vert \nu^y_{k,t+1} - H^y_{k,t}  \right\Vert^2 . \label{eq:Hyt+1Hstar_norm}
\end{align}

We know that 
\begin{align}
\left\Vert \nu^y_{k,t+1} - H^\star_y  \right\Vert^2_{I - \frac{\gamma_y}{2}(I-W) - \alpha_{y}\sqrt{\delta}I} & = 	\left\Vert \nu^y_{k,t+1} - H^\star_y  \right\Vert^2_{I - \frac{\gamma_y}{2}(I-W)} - \left\Vert \nu^y_{k,t+1} - H^\star_y \right\Vert^2_{\alpha_{y}\sqrt{\delta}I} .
\end{align}
Therefore,
\begin{align}
& \frac{2s^2}{\gamma_y} E \left\Vert  D^y_{k,t+1} - D^\star_y  \right\Vert^2_{(I-W)^\dagger} + \left\Vert \nu^y_{k,t+1} - H^\star_y  \right\Vert^2_{I - \frac{\gamma_y}{2}(I-W) - \alpha_{y}\sqrt{\delta}I} \nonumber\\
& = \frac{2s^2}{\gamma_y} E \left\Vert  D^y_{k,t+1} - D^\star_y  \right\Vert^2_{(I-W)^\dagger} + \left\Vert \nu^y_{k,t+1} - H^\star_y  \right\Vert^2_{I - \frac{\gamma_y}{2}(I-W)} - \left\Vert \nu^y_{k,t+1} - H^\star_y \right\Vert^2_{\alpha_{y}\sqrt{\delta}I}\nonumber \\
& = \frac{2s^2}{\gamma_y} E \left\Vert  D^y_{k,t+1} - D^\star_y  \right\Vert^2_{(I-W)^\dagger} + \left\Vert \nu^y_{k,t+1} - H^\star_y  \right\Vert^2_{I - \frac{\gamma_y}{2}(I-W)} - \alpha_{y}\sqrt{\delta} \left\Vert \nu^y_{k,t+1} - H^\star_y \right\Vert^2\nonumber \\
& = \frac{s^2}{\gamma_y} \left\Vert D^y_{k,t} - D^\star_y \right\Vert^2_{{2(I-W)^\dagger -\gamma_y I}} + \frac{\gamma_y}{2}  E \left\Vert \hat{\nu}^y_{k,t+1}  - \nu^y_{k,t+1} \right\Vert^2_{I-W}   + \left\Vert y_{k,t} - \mathbf{1}y^\star + s\mathcal{G}^y_{k,t} - s\nabla_y F(\mathbf{1}z^\star)  \right\Vert^2 \nonumber\\ & \ \ \ - \alpha_{y}\sqrt{\delta} \left\Vert \nu^y_{k,t+1} - H^\star_y \right\Vert^2 ,
\end{align}
where the last equality follows from \eqref{eq:Dyt+1_Dstar_final} . Now we add above equality with \eqref{eq:Hyt+1Hstar_norm} and obtain the following expression:
\begin{align}
& \frac{2s^2}{\gamma_y} E \left\Vert  D^y_{k,t+1} - D^\star_y  \right\Vert^2_{(I-W)^\dagger} + \left\Vert \nu^y_{k,t+1} - H^\star_y  \right\Vert^2_{I - \frac{\gamma_y}{2}(I-W) - \alpha_{y}\sqrt{\delta}I} + \sqrt{\delta}E\left\Vert H^y_{k,t+1} - H^\star_y \right\Vert^2 \nonumber\\
& = \frac{s^2}{\gamma_y} \left\Vert D^y_{k,t} - D^\star_y \right\Vert^2_{{2(I-W)^\dagger -\gamma_y I}} + \frac{\gamma_y}{2}  E \left\Vert \hat{\nu}^y_{k,t+1}  - \nu^y_{k,t+1} \right\Vert^2_{I-W}   + \left\Vert y_{k,t} - \mathbf{1}y^\star + s\mathcal{G}^y_{k,t} - s\nabla_y F(\mathbf{1}z^\star)  \right\Vert^2 \nonumber \\ & \ + \sqrt{\delta}(1-\alpha_{y})\left\Vert H^y_{k,t} - H^\star_y  \right\Vert^2 + \sqrt{\delta}\alpha_{y}^2 E \left\Vert \hat{\nu}^y_{k,t+1} - \nu^y_{k,t+1} \right\Vert^2 - \sqrt{\delta}\alpha_{y}(1-\alpha_{y}) \left\Vert \nu^y_{k,t+1} - H^y_{k,t}  \right\Vert^2 .
\end{align}
Rearranging the r.h.s. of the above equation, we obtain~\eqref{eq:Dy_Hy_sum}. 

We have a similar recursion result in terms of $x$. 

\begin{lemma} \label{lem:recursion_x} 
	Let $x_{k,t+1}$, $y_{k,t+1}$, $D^x_{k,t+1}$, $D^y_{k,t+1}$, $H^x_{k,t+1}$, $H^y_{k,t+1}$, $H^{w,x}_{k,t+1}$, $H^{w,y}_{k,t+1}$ be obtained from Algorithm \ref{alg:generic_procedure_sgda} using \text{IPDHG}($x_{k,t}$, $y_{k,t}$, $D^{x}_{k,t}$, $D^{y}_{k,t}$, $H^{x}_{k,t}$, $H^{y}_{k,t}$, $H^{w,x}_{k,t}$, $H^{w,y}_{k,t}$, $s$, $\gamma_{x}$, $\gamma_{y}$, $\alpha_{x}$, $\alpha_{y}$, $\mathcal{G}$). 
	Let Assumption \ref{compression_operator} and Assumption \ref{weight_matrix_assumption} hold. Suppose $\alpha_x \in \left(0,(1+\delta)^{-1} \right)$ and 
	\begin{align}
		\gamma_x & \in \left( 0, \min \left\lbrace \frac{2-2\sqrt{\delta}\alpha_x}{\lambda_{\max}(I-W)}, \frac{\alpha_x - (1+\delta)\alpha_x^2}{\sqrt{\delta}\lambda_{\max}(I-W)} \right\rbrace  \right) .
	\end{align}
	Then the following holds for all $t \geq 0$:
	\begin{align}
		& M_x E\left\Vert x_{k,t+1} - \mathbf{1}x^\star  \right\Vert^2 + \frac{2s^2}{\gamma_x} E \left\Vert  D^x_{k,t+1} - D^\star_x  \right\Vert^2_{(I-W)^\dagger} + \sqrt{\delta}E\left\Vert H^x_{k,t+1} - H^\star_x \right\Vert^2 \nonumber\\
		& \leq \left\Vert x_{k,t} - \mathbf{1}x^\star - s\mathcal{G}^x_{k,t} + s\nabla_x F(\mathbf{1}x^\star,\mathbf{1}y^\star)  \right\Vert^2 + \frac{2s^2}{\gamma_x}\left( 1- \frac{\gamma_x}{2}\lambda_{m-1}(I-W) \right)\left\Vert D^x_{k,t} - D^\star_y \right\Vert^2_{(I-W)^\dagger} \nonumber\\ & \ \ + \sqrt{\delta}(1-\alpha_{x})\left\Vert H^x_{k,t} - H^\star_x  \right\Vert^2  , \label{eq:one_step_progress_x}
	\end{align}
	where $E$ denotes the conditional expectation on stochastic compression at $t$-th update step and $M_{x} = 1-\frac{\sqrt{\delta }\alpha_{x}}{1-\frac{\gamma_{x}}{2}\lambda_{\max}(I-W)}$.
\end{lemma}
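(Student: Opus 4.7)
The plan is to mirror the proof of Lemma~\ref{lem:recursion_y} with the primal updates replacing the dual updates. The only structural differences are: (i) $\nu^x_{k,t+1} = x_{k,t} - s\mathcal{G}^x_{k,t} - sD^x_{k,t}$ carries minus signs in front of both the gradient and dual term, and (ii) $\prox_{sG}$ replaces $\prox_{sR}$, with $H^\star_x = \mathbf{1}(x^\star - \tfrac{s}{m}\nabla_x f(z^\star))$ acting as the fixed point of the primal prox step by the optimality condition derived in Appendix~\ref{appendix_A}. Since every hypothesis on $(\gamma_x,\alpha_x)$ in Lemma~\ref{lem:recursion_x} mirrors the one on $(\gamma_y,\alpha_y)$ in Lemma~\ref{lem:recursion_y}, and since Proposition~\ref{rem:rangeI-W} already gives $D^\star_x\in\mathrm{Range}(I-W)$, all the algebraic facts needed are available by symmetry.

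The first step is to bound $E\|x_{k,t+1}-\mathbf{1}x^\star\|^2$ by $\|\hat{x}_{k,t+1}-H^\star_x\|^2$ using non-expansiveness of $\prox_{sG}$, then substitute $\hat{x}_{k,t+1}=\nu^x_{k,t+1}-\tfrac{\gamma_x}{2}(I-W)\hat{\nu}^x_{k,t+1}$, expand, and take conditional expectation on the compression at iterate $t$ so that the unbiasedness $E[\hat{\nu}^x_{k,t+1}-\nu^x_{k,t+1}]=0$ kills the cross term, as in~\eqref{eq:ykt+1_ystar}. The identity used to reduce $(I-\tfrac{\gamma_x}{2}(I-W))^2 \preceq I-\tfrac{\gamma_x}{2}(I-W)$ goes through exactly as in~\eqref{square_norm_matrix_terms} since it depends only on $\gamma_x\lambda_{\max}(I-W)/2<1$, which holds by assumption on $\gamma_x$. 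Multiplying by $M_x$ and using the PSD inequality $\sqrt{\delta}\alpha_x I \preceq \tfrac{\sqrt{\delta}\alpha_x}{1-\gamma_x\lambda_{\max}(I-W)/2}(I-\tfrac{\gamma_x}{2}(I-W))$ yields the analogue of~\eqref{eq:y_t+1_My}.

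Next, I will track $E\|D^x_{k,t+1}-D^\star_x\|^2_{(I-W)^\dagger}$ exactly as in Step~1 of Section~\ref{sec:Dy_Hy_sum}: expand $D^x_{k,t+1}-D^\star_x = D^x_{k,t}-D^\star_x + \tfrac{\gamma_x}{2s}(I-W)(\hat{\nu}^x_{k,t+1}-H^\star_x)$ (using the fact that $(I-W)H^\star_x = 0$ since $\mathbf{1}$ spans the null space of $I-W$), take expectations, and apply $(I-W)(I-W)^\dagger D^x_{k,t} = D^x_{k,t}$ together with Proposition~\ref{rem:rangeI-W} to reduce the cross term to $\langle D^x_{k,t}-D^\star_x,\nu^x_{k,t+1}-H^\star_x\rangle$. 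The crucial identity for the primal side is
\begin{equation*}
\nu^x_{k,t+1}-H^\star_x = x_{k,t}-\mathbf{1}x^\star - s(\mathcal{G}^x_{k,t}-\nabla_x F(\mathbf{1}z^\star)) - s(D^x_{k,t}-D^\star_x),
\end{equation*}
obtained by substituting $D^\star_x=-(I-J)\nabla_x F(\mathbf{1}z^\star)$ and the definition of $\nu^x$; squaring this gives the analogue of~\eqref{eq:nu_y_t+1_Hstar_norm} and, combined with the $\tfrac{2s^2}{\gamma_x}$ recursion for $D^x$, produces the bound corresponding to~\eqref{eq:Dyt+1_Dstar_final}.

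Finally I will handle $\sqrt{\delta} E\|H^x_{k,t+1}-H^\star_x\|^2$ via the convex-combination identity $\|(1-\alpha)u+\alpha v\|^2 = (1-\alpha)\|u\|^2+\alpha\|v\|^2-\alpha(1-\alpha)\|u-v\|^2$, add the three bounds, and check that the coefficient of $\|\nu^x_{k,t+1}-H^x_{k,t}\|^2$ is non-positive. The one place requiring care, and the main obstacle, is reproducing the chain in~\eqref{coeff_bound}: I need $\tfrac{\delta\gamma_x^2 M_x\lambda_{\max}^2(I-W)}{4}+\tfrac{\gamma_x\delta\lambda_{\max}(I-W)}{2}+\sqrt{\delta}(1+\delta)\alpha_x^2 \leq \sqrt{\delta}\alpha_x$, which is exactly where both branches of the assumption on $\gamma_x$ are used (the first to guarantee $M_x<1$ and $\tfrac{\gamma_x\lambda_{\max}(I-W)}{2}<1$, the second to close the inequality). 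Once this sign check goes through, the recursive bound~\eqref{eq:one_step_progress_x} follows by collecting terms, completing the proof.
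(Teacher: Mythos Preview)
Your proposal is correct and follows exactly the approach the paper takes: the paper explicitly omits the proof of Lemma~\ref{lem:recursion_x}, stating that it is similar to that of Lemma~\ref{lem:recursion_y}, and your outline reproduces that mirroring step by step, including the correct primal identity $\nu^x_{k,t+1}-H^\star_x = x_{k,t}-\mathbf{1}x^\star - s(\mathcal{G}^x_{k,t}-\nabla_x F(\mathbf{1}z^\star)) - s(D^x_{k,t}-D^\star_x)$ and the sign check on the $\|\nu^x_{k,t+1}-H^x_{k,t}\|^2$ coefficient using both branches of the hypothesis on $\gamma_x$.
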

We omit the proof of Lemma \ref{lem:recursion_x} as it is similar to the proof of Lemma \ref{lem:recursion_y}

\section{Proofs for General Stochastic Setting}
\label{appendix_C}

In this section, we state and present proofs of the results for general stochastic setting discussed in Section~\ref{sec:genstoch}.

We now make the following assumptions on the stochastic gradients.  

\begin{assumption} \textbf{(Unbiasedness and local bounded variance of stochastic gradients)}\label{assumption_unbiased_grad} Stochastic gradients $\nabla_x f_i(x,y;\xi^i)$ and $\nabla_y f_i(x,y;\xi^i)$ for each node $i$ are unbiased estimates of the respective true gradients: $E\left[ \nabla_x f_i(x,y;\xi^i) \right]  = \nabla_x f_i(x,y), E\left[ \nabla_y f_i(x,y;\xi^i) \right] = \nabla_y f_i(x,y)$ and have bounded variance:
	\begin{align}
		& E\left[ \left\Vert \nabla_x f_i(x^\star,y^\star;\xi^i) - \nabla_x f_i(x^\star,y^\star)  \right\Vert^2 \right]  \leq \sigma_{x}^2, \nonumber \\
		&E\left[ \left\Vert \nabla_y f_i(x^\star,y^\star;\xi^i) - \nabla_y f_i(x^\star,y^\star)  \right\Vert^2 \right]  \leq \sigma_{y}^2 ,
	\end{align}
	where $(x^\star, y^\star)$ is the saddle point of \eqref{eq:main_opt_problem}. Define $\sigma^2 \coloneqq \sigma_x^2+ \sigma_y^2$.  
\end{assumption}

\begin{assumption} \label{assumption_smoothness} \textbf{(Smoothness)}
	\begin{enumerate}
		\item Each $ f_i(\cdot,y;\xi^i)$ is $L_{xx}$ smooth in expectation; for all $x_1, x_2 \in \mathbb{R}^{d_x}$, 
		
		$E\left\Vert \nabla_x f_i(x_1,y;\xi^i) - \nabla_x f_i(x_2,y;\xi^i) \right\Vert^2\leq  
		2L_{xx} \left(f_i(x_1,y) - f_i(x_2,y) - \left\langle \nabla_x f_i(x_2,y), x_1 - x_2 \right\rangle  \right).$ 
		\item Each $ -f_i(x,\cdot;\xi^i)$ is $L_{yy}$ smooth in expectation; for all $y_1, y_2 \in \mathbb{R}^{d_y}$,
		
		$E\left\Vert - \nabla_y f_i(x,y_1;\xi^i) + \nabla_y f_i(x,y_2;\xi^i) \right\Vert^2 \leq 2L_{yy} \left(-f_i(x,y_1) + f_i(x,y_2) - \left\langle -\nabla_y f_i(x,y_2), y_1 - y_2 \right\rangle  \right)$
		\item For every $x \in \mathbb{R}^{d_x}$, the following holds:  $E\left\Vert \nabla_x f_i(x,y_1;\xi^i) - \nabla_x f_i(x,y_2;\xi^i) \right\Vert^2\leq  L^2_{xy}\left\Vert y_1 - y_2 \right\Vert^2 \ \text{for all} \ y_2, y_2 \in \mathbb{R}^{d_y}$. 
		\item For every $y \in \mathbb{R}^{d_y}$, the following holds: $E\left\Vert \nabla_y f_i(x_1,y;\xi^i) - \nabla_y f_i(x_2,y;\xi^i) \right\Vert^2\leq L^2_{yx}\left\Vert x_1 - x_2 \right\Vert^2 \ \text{for all} \ x_1, x_2 \in \mathbb{R}^{d_x}$, 
	\end{enumerate}
\end{assumption} 
We define a quantity $\Phi_{k,t}$ consisting of primal and dual updates which is instrumental in deriving the convergence rate of Algorithm \ref{alg:CRDPGD_known_mu}.
\begin{align}
& \Phi_{k,t} =  M_{x,k} \left\Vert x_{k,t} - \mathbf{1}x^\star  \right\Vert^2 + \frac{2s_k^2}{\gamma_{x,k}} \left\Vert  D^x_{k,t} - D^\star_x  \right\Vert^2_{(I-W)^\dagger} + \sqrt{\delta}\left\Vert H^x_{k,t} - H^\star_{x,k} \right\Vert^2 \nonumber\\ & \ + M_{y,k} \left\Vert y_{k,t} - \mathbf{1}y^\star  \right\Vert^2 + \frac{2s_k^2}{\gamma_{y,k}} \left\Vert  D^y_{k,t} - D^\star_y  \right\Vert^2_{(I-W)^\dagger} + \sqrt{\delta}\left\Vert H^y_{k,t} - H^\star_{y,k} \right\Vert^2 \label{phi_kt_defn}
\end{align} 
for all $t \geq 0$ and $k \geq 0$.

\begin{lemma} \label{lem:exp_ykt_ystar_xkt_xstar} Suppose $\{x_{k,t}\}_t$ and $\{y_{k,t} \}_t$ are the sequences generated by Algorithm \ref{alg:CRDPGD_known_mu} with $\mathcal{G} = [\nabla_x F(\cdot ; \xi) ; -\nabla_y F(\cdot ; \xi)] $. Then, under Assumption \ref{s_convexity_assumption}, Assumption \ref{s_concavity_assumption}, Assumption \ref{assumption_unbiased_grad} and Assumption \ref{assumption_smoothness}, the following hold for all $t \geq 0$:
	\begin{align}
		& E \left\Vert x_{k,t} - \mathbf{1}x^\star - s_k\mathcal{G}^x_{k,t} + s_k\nabla_x F(\mathbf{1}x^\star,\mathbf{1}y^\star)  \right\Vert^2 \nonumber \\
		& \leq (1-\mu_x s_k) \left\Vert x_{k,t} - \mathbf{1}x^\star  \right\Vert^2  + 4s^2_kL^2_{xy} \left\Vert y_{k,t} - \mathbf{1}y^\star \right\Vert^2 - (2s_k - 8s^2_kL_{xx} )\sum_{i = 1}^m V_{f_i,y^i_{k,t}}(x^\star,x^i_{k,t}) \nonumber \\ & \ + 2s_k\left( F(x_{k,t},\mathbf{1}y^\star) - F(\mathbf{1}x^\star,\mathbf{1}y^\star) + F(\mathbf{1}x^\star,y_{k,t}) - F(z_{k,t}) \right) + 2ms^2_k\sigma_x^2 , \label{eq:xkt_xstar_grad_final}
	\end{align}
	\begin{align}
		& E \left\Vert y_{k,t} - \mathbf{1}y^\star + s_k\mathcal{G}^y_{k,t} - s_k\nabla_y F(\mathbf{1}x^\star,\mathbf{1}y^\star)  \right\Vert^2 \nonumber \\
		& \leq (1-\mu_y s_k) \left\Vert y_{k,t} - \mathbf{1}y^\star  \right\Vert^2 + 4s^2_kL^2_{yx} \left\Vert x_{k,t} - \mathbf{1}x^\star \right\Vert^2 - (2s_k - 8s^2_kL_{yy})\sum_{i = 1}^m V_{-f_i,x^i_{k,t}}(y^\star,y^i_{k,t}) \nonumber \\ & \ + 2s_k\left( -F(x_{k,t},\mathbf{1}y^\star) + F(\mathbf{1}x^\star,\mathbf{1}y^\star) - F(\mathbf{1}x^\star,y_{k,t}) + F(z_{k,t}) \right) + 2ms^2_k\sigma_y^2 , \label{eq:ykt_ystar_grad_final}
	\end{align}
	where $E$ denotes the conditional expectation on stochastic gradient at $t$-th update step.
\end{lemma}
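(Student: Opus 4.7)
The plan is to prove the $x$-inequality~\eqref{eq:xkt_xstar_grad_final}; the $y$-inequality~\eqref{eq:ykt_ystar_grad_final} follows by the symmetric argument using Assumption~\ref{s_concavity_assumption} and the $y$-versions of the smoothness bounds. I would first expand
\[ \|x_{k,t}-\mathbf{1}x^\star - s_k\mathcal{G}^x_{k,t} + s_k\nabla_x F(\mathbf{1}z^\star)\|^2 \]
into its three standard pieces and take conditional expectation over $\xi$. Unbiasedness (Assumption~\ref{assumption_unbiased_grad}) converts the cross term to $-2s_k\langle x_{k,t}-\mathbf{1}x^\star,\nabla_x F(z_{k,t}) - \nabla_x F(\mathbf{1}z^\star)\rangle$, confining the stochastic quantity to the square-norm term $s_k^2 E\|\mathcal{G}^x_{k,t}-\nabla_x F(\mathbf{1}z^\star)\|^2$.

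The key step is to treat the two pieces of the cross term asymmetrically. For $\langle x^i_{k,t}-x^\star,\nabla_x f_i(x^i_{k,t},y^i_{k,t})\rangle$ I would apply the Bregman identity from~\eqref{bregman_dist_Vy} to rewrite it as $f_i(x^i_{k,t},y^i_{k,t}) - f_i(x^\star,y^i_{k,t}) + V_{f_i,y^i_{k,t}}(x^\star,x^i_{k,t})$; summing over $i$ and multiplying by $-2s_k$ contributes the $-2s_k[F(z_{k,t})-F(\mathbf{1}x^\star,y_{k,t})]$ part of the saddle gap together with the Bregman sum $-2s_k\sum_i V_{f_i,y^i_{k,t}}(x^\star,x^i_{k,t})$. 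For $\langle x^i_{k,t}-x^\star,\nabla_x f_i(x^\star,y^\star)\rangle$ I would instead invoke $\mu_x$-strong convexity of $f_i(\cdot,y^\star)$ at the pair $(x^i_{k,t}, x^\star)$ to upper bound it by $f_i(x^i_{k,t},y^\star) - f_i(x^\star,y^\star) - \tfrac{\mu_x}{2}\|x^i_{k,t}-x^\star\|^2$; summing and multiplying by $+2s_k$ contributes $+2s_k[F(x_{k,t},\mathbf{1}y^\star)-F(\mathbf{1}z^\star)]$ together with the $-s_k\mu_x\|x_{k,t}-\mathbf{1}x^\star\|^2$ contraction required for the $(1-\mu_x s_k)$ coefficient.

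For the variance term I would insert $\nabla_x F(\mathbf{1}z^\star;\xi)$ and apply $\|a+b\|^2\leq 2\|a\|^2+2\|b\|^2$ to separate sampling noise at $z^\star$ (bounded by $m\sigma_x^2$ via Assumption~\ref{assumption_unbiased_grad}) from the stochastic-gradient deviation between $z_{k,t}$ and $z^\star$. The deviation is further split node-wise through the intermediate point $\nabla_x f_i(x^\star,y^i_{k,t};\xi^i)$ using another $\|a+b\|^2\leq 2\|a\|^2+2\|b\|^2$ step. Assumption~\ref{assumption_smoothness}(1) in expectation bounds the $x$-deviation by $2L_{xx} V_{f_i,y^i_{k,t}}(x^\star,x^i_{k,t})$, and Assumption~\ref{assumption_smoothness}(3) bounds the $y$-deviation by $L_{xy}^2\|y^i_{k,t}-y^\star\|^2$. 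Collecting produces $8s_k^2 L_{xx}\sum_i V_{f_i,y^i_{k,t}}(x^\star,x^i_{k,t}) + 4s_k^2 L_{xy}^2\|y_{k,t}-\mathbf{1}y^\star\|^2 + 2ms_k^2\sigma_x^2$; the $-2s_k\sum_i V$ from the cross term combines with $+8s_k^2 L_{xx}\sum_i V$ here into the coefficient $-(2s_k-8s_k^2 L_{xx})$ on the Bregman sum, completing the RHS of~\eqref{eq:xkt_xstar_grad_final}.

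The main obstacle is the asymmetric cross-term treatment: a uniform use of strong monotonicity yields only $-s_k\mu_x\|x_{k,t}-\mathbf{1}x^\star\|^2$ and loses the Bregman sum, while a uniform use of the Bregman identity yields only $-2s_k\sum_i V$ and loses the contraction; separating the split point into $z_{k,t}$ versus $z^\star$ and applying a different inequality to each piece is what produces both simultaneously. The $y$-inequality~\eqref{eq:ykt_ystar_grad_final} is then obtained by interchanging the roles of $x$ and $y$, using $V_{-f_i,x}$ from~\eqref{bregman_dist_Vx} in place of $V_{f_i,y}$, replacing $(\mu_x,L_{xx},L_{xy},\sigma_x)$ with $(\mu_y,L_{yy},L_{yx},\sigma_y)$, and accounting for the concavity of $f_i$ in $y$ which flips the signs of the saddle-gap contributions.
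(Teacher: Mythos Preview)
Your proposal is correct and follows essentially the same approach as the paper's proof: expand the square, use unbiasedness on the cross term, split the cross term asymmetrically via the Bregman identity at $z_{k,t}$ and strong convexity at $z^\star$, and handle the variance term by inserting $\nabla_x F(\mathbf{1}z^\star;\xi)$ and then the intermediate point $\nabla_x f_i(x^\star,y^i_{k,t};\xi^i)$ with two applications of $\|a+b\|^2\le 2\|a\|^2+2\|b\|^2$ together with Assumption~\ref{assumption_smoothness}(1),(3). The paper likewise omits the $y$-inequality as symmetric.
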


\subsection{Proof of Lemma \ref{lem:exp_ykt_ystar_xkt_xstar} }

We will derive inequality \eqref{eq:xkt_xstar_grad_final} here. The proof of inequality \eqref{eq:ykt_ystar_grad_final} is similar and is omitted.

In the general stochastic setting, we have $\mathcal{G}^{i,x}_{k,t} = \nabla_x f_i(z^i_{k,t};\xi^i_{k,t})$, $\mathcal{G}^{i,y}_{k,t} = \nabla_y f_i(z^i_{k,t};\xi^i_{k,t})$ and step size is $s_k$ . We now have
\begin{align}
E \left\Vert x_{k,t} - \mathbf{1}x^\star - s_k\mathcal{G}^x_{k,t} + s_k\nabla_x F(\mathbf{1}x^\star,\mathbf{1}y^\star)  \right\Vert^2 
& = \left\Vert x_{k,t} - \mathbf{1}x^\star  \right\Vert^2 + s^2_k E \left\Vert \nabla_x F(\mathbf{1}x^\star,\mathbf{1}y^\star) - \nabla_x F(z_{k,t};\xi_{k,t}) \right\Vert^2 \nonumber \\ & \ \ + 2s_k E \left\langle x_{k,t} - \mathbf{1}x^\star, \nabla_x F(\mathbf{1}x^\star,\mathbf{1}y^\star) - \nabla_x F(z_{k,t};\xi_{k,t}) \right\rangle \nonumber \\
& = \left\Vert x_{k,t} - \mathbf{1}x^\star  \right\Vert^2 + s^2_k E \left\Vert \nabla_x F(\mathbf{1}x^\star,\mathbf{1}y^\star) - \nabla_x F(z_{k,t};\xi_{k,t}) \right\Vert^2 \nonumber \\ & \ \ + 2s_k \left\langle x_{k,t} - \mathbf{1}x^\star, \nabla_x F(\mathbf{1}x^\star,\mathbf{1}y^\star) - \nabla_x F(z_{k,t}) \right\rangle \nonumber \\
& \leq \left\Vert x_{k,t} - \mathbf{1}x^\star  \right\Vert^2 + 2s^2_k E \left\Vert \nabla_x F(\mathbf{1}x^\star,\mathbf{1}y^\star) - \nabla_x F(\mathbf{1}x^\star,\mathbf{1}y^\star;\xi_{k,t}) \right\Vert^2 \nonumber \\ & \ \ + 2s^2_k E \left\Vert \nabla_x F(\mathbf{1}x^\star,\mathbf{1}y^\star;\xi_{k,t}) - \nabla_x F(z_{k,t};\xi_{k,t}) \right\Vert^2 \nonumber \\ & \ \ + 2s_k \left\langle x_{k,t} - \mathbf{1}x^\star, \nabla_x F(\mathbf{1}x^\star,\mathbf{1}y^\star) - \nabla_x F(z_{k,t}) \right\rangle \nonumber \\
& \leq \left\Vert x_{k,t} - \mathbf{1}x^\star  \right\Vert^2 + 2s^2_k E \left\Vert \nabla_x F(\mathbf{1}x^\star,\mathbf{1}y^\star;\xi_{k,t}) - \nabla_x F(z_{k,t};\xi_{k,t}) \right\Vert^2 \nonumber \\ & \ \  + 2s_k \left\langle x_{k,t} - \mathbf{1}x^\star, \nabla_x F(\mathbf{1}x^\star,\mathbf{1}y^\star) - \nabla_x F(z_{k,t}) \right\rangle + 2ms^2_k\sigma_x^2  . \label{eq:xkt_xstar_grad_ip}
\end{align}
Also, 
\begin{align}
& E \left\Vert \nabla_x F(\mathbf{1}x^\star,\mathbf{1}y^\star;\xi_{k,t}) - \nabla_x F(z_{k,t};\xi_{k,t}) \right\Vert^2 \nonumber\\
& = \sum_{i = 1}^m E \left\Vert \nabla_x f_i(z^\star ;\xi^i_{k,t}) - \nabla_x f_i(z^i_{k,t};\xi^i_{k,t}) \right\Vert^2 \nonumber\\
& = \sum_{i = 1}^m E \left\Vert \nabla_x f_i(z^\star ;\xi^i_{k,t}) - \nabla_x f_i(x^\star,y^i_{k,t};\xi^i_{k,t}) + \nabla_x f_i(x^\star,y^i_{k,t};\xi^i_{k,t}) - \nabla_x f_i(z^i_{k,t};\xi^i_{k,t}) \right\Vert^2 \nonumber\\
& \leq \sum_{i = 1}^m \left( 2E \left\Vert \nabla_x f_i(z^\star ;\xi^i_{k,t}) - \nabla_x f_i(x^\star,y^i_{k,t};\xi^i_{k,t}) \right\Vert^2 + 2E\left\Vert  \nabla_x f_i(x^\star,y^i_{k,t};\xi^i_{k,t}) - \nabla_x f_i(z^i_{k,t};\xi^i_{k,t})  \right\Vert^2 \right) \nonumber\\
& \leq  \sum_{i = 1}^m \left( 2L^2_{xy} \left\Vert y^i_{k,t} - y^\star \right\Vert^2 + 4L_{xx}V_{f_i,y^i_{k,t}}(x^\star,x^i_{k,t}) \right) \quad \text{  (from Assumption~\ref{assumption_smoothness} and equation \eqref{bregman_dist_Vy})} \nonumber\\
& = 2L^2_{xy} \left\Vert y_{k,t} - \mathbf{1}y^\star \right\Vert^2 + 4L_{xx} \sum_{i = 1}^m V_{f_i,y^i_{k,t}}(x^\star,x^i_{k,t}). \label{eq:exp_norm_grad_diff_zkt_zstar}
\end{align}
We now simplify the inner product $\left\langle x_{k,t} - \mathbf{1}x^\star, \nabla_x F(\mathbf{1}x^\star,\mathbf{1}y^\star) - \nabla_x F(z_{k,t}) \right\rangle$  term present in the r.h.s of \eqref{eq:xkt_xstar_grad_ip}. Recall the definition of Bregman distance $V_{f_i,y}(x_1,x_2)$:
\begin{align}
V_{f_i,y^i_{k,t}}(x^\star,x^i_{k,t}) & = f_i(x^\star,y^i_{k,t}) - f_i(x^i_{k,t},y^i_{k,t}) - \left\langle \nabla_x f_i(x^i_{k,t},y^i_{k,t}), x^\star - x^i_{k,t} \right\rangle \\
\left\langle \nabla_x f_i(z^i_{k,t}), -x^\star + x^i_{k,t} \right\rangle & = -f_i(x^\star,y^i_{k,t}) + f_i(z^i_{k,t}) + V_{f_i,y^i_{k,t}}(x^\star,x^i_{k,t}) . \label{eq:grad_zkt_inn_Vfi}
\end{align}
Using $\mu_x$ strong convexity of $f_i(\cdot,y)$, we have
\begin{align}
f_i(x^i_{k,t},y^\star) & \geq f_i(z^\star) + \left\langle \nabla_x f_i(z^\star), x^i_{k,t} -x^\star \right\rangle + \frac{\mu_x}{2} \left\Vert x^i_{k,t} - x^\star \right\Vert^2 \\
\left\langle \nabla_x f_i(z^\star), x^i_{k,t} -x^\star \right\rangle & \leq f_i(x^i_{k,t},y^\star) - f_i(z^\star) - \frac{\mu_x}{2} \left\Vert x^i_{k,t} - x^\star \right\Vert^2 . \label{eq:grad_zstar_inn_mu_x}
\end{align}

We now compute
\begin{align}
& \left\langle x_{k,t} - \mathbf{1}x^\star, \nabla_x F(\mathbf{1}x^\star,\mathbf{1}y^\star) - \nabla_x F(z_{k,t}) \right\rangle  = \sum_{i = 1}^m \left\langle x^i_{k,t} - x^\star, \nabla_x f_i(z^\star) - \nabla_x f_i(z^i_{k,t}) \right\rangle \nonumber \\
& \leq \sum_{i = 1}^m \left( f_i(x^i_{k,t},y^\star) - f_i(z^\star) - \frac{\mu_x}{2} \left\Vert x^i_{k,t} - x^\star \right\Vert^2 + f_i(x^\star,y^i_{k,t}) - f_i(z^i_{k,t}) - V_{f_i,y^i_{k,t}}(x^\star,x^i_{k,t}) \right)\nonumber \\
& = F(x_{k,t},\mathbf{1}y^\star) - F(\mathbf{1}x^\star,\mathbf{1}y^\star) + F(\mathbf{1}x^\star,y_{k,t}) - F(z_{k,t}) - \frac{\mu_x}{2}  \left\Vert x_{k,t} - \mathbf{1}x^\star \right\Vert^2 \\ & \ \ \ \ - \sum_{i = 1}^m V_{f_i,y^i_{k,t}}(x^\star,x^i_{k,t}) , \label{eq:inner_prod_grad_zkt_zstar}
\end{align}
where the second last step follows from \eqref{eq:grad_zkt_inn_Vfi} and \eqref{eq:grad_zstar_inn_mu_x}. On substituting \eqref{eq:exp_norm_grad_diff_zkt_zstar} and \eqref{eq:inner_prod_grad_zkt_zstar} in \eqref{eq:xkt_xstar_grad_ip}, we obtain
\begin{align}
& E \left\Vert x_{k,t} - \mathbf{1}x^\star - s_k\mathcal{G}^x_{k,t} + s_k\nabla_x F(\mathbf{1}x^\star,\mathbf{1}y^\star)  \right\Vert^2 \nonumber \\
& \leq \left\Vert x_{k,t} - \mathbf{1}x^\star  \right\Vert^2 + 2ms^2_k\sigma_x^2 + 2s^2_k \left( 2L^2_{xy} \left\Vert y_{k,t} - \mathbf{1}y^\star \right\Vert^2 + 4L_{xx} \sum_{i = 1}^m V_{f_i,y^i_{k,t}}(x^\star,x^i_{k,t}) \right) \nonumber \\ & \ + 2s_k\left( F(x_{k,t},\mathbf{1}y^\star) - F(\mathbf{1}x^\star,\mathbf{1}y^\star) + F(\mathbf{1}x^\star,y_{k,t}) - F(z_{k,t}) - \frac{\mu_x}{2}  \left\Vert x_{k,t} - \mathbf{1}x^\star \right\Vert^2 - \sum_{i = 1}^m V_{f_i,y^i_{k,t}}(x^\star,x^i_{k,t}) \right) \nonumber \\
& = (1-\mu_x s_k) \left\Vert x_{k,t} - \mathbf{1}x^\star  \right\Vert^2 + 4s^2_kL^2_{xy} \left\Vert y_{k,t} - \mathbf{1}y^\star \right\Vert^2 - (2s_k - 8s^2_kL_{xx})\sum_{i = 1}^m V_{f_i,y^i_{k,t}}(x^\star,x^i_{k,t}) \nonumber \\ & \ + 2s_k\left( F(x_{k,t},\mathbf{1}y^\star) - F(\mathbf{1}x^\star,\mathbf{1}y^\star) + F(\mathbf{1}x^\star,y_{k,t}) - F(z_{k,t}) \right) + 2ms^2_k\sigma_x^2, \tag{\ref{eq:xkt_xstar_grad_final}}
\end{align}
completing the proof. 

We have the following corollary.

\begin{corollary} \label{cor:exp_xkt_ykt_xstar_ystar_compact} Let $s_k = \frac{1}{4\kappa_fL 2^{k/2}}$ for every $k \geq 0$. Then, under the setting of Lemma \ref{lem:exp_ykt_ystar_xkt_xstar} ,
	\begin{align}
		& E \left\Vert x_{k,t} - \mathbf{1}x^\star - s_k\mathcal{G}^x_{k,t} + s_k\nabla_x F(\mathbf{1}x^\star,\mathbf{1}y^\star)  \right\Vert^2 +  E \left\Vert y_{k,t} - \mathbf{1}y^\star + s_k\mathcal{G}^y_{k,t} - s_k\nabla_y F(\mathbf{1}x^\star,\mathbf{1}y^\star)  \right\Vert^2 \nonumber  \\
		& \leq (1-b_{x,k})\left\Vert x_{k,t} - \mathbf{1}x^\star  \right\Vert^2 + (1- b_{y,k}) \left\Vert y_{k,t} - \mathbf{1}y^\star  \right\Vert^2 + 2ms^2_k(\sigma_x^2 + \sigma^2_y) \nonumber 
	\end{align}
	for all $t \geq 1$, where $b_{x,k} = \mu_x s_k - 4s^2_kL^2_{yx} = $, $b_{y,k} = \mu_y s_k - 4s^2_kL^2_{xy}$.
\end{corollary}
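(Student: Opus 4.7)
The plan is to simply add the two inequalities \eqref{eq:xkt_xstar_grad_final} and \eqref{eq:ykt_ystar_grad_final} from Lemma \ref{lem:exp_ykt_ystar_xkt_xstar} and simplify. Observe that the function-value terms
\[
2 s_k \bigl( F(x_{k,t}, \mathbf{1}y^\star) - F(\mathbf{1}x^\star, \mathbf{1}y^\star) + F(\mathbf{1}x^\star, y_{k,t}) - F(z_{k,t}) \bigr)
\]
appear with opposite signs in the two bounds and therefore cancel exactly upon summation. This cancellation is essentially the whole point of the corollary: it eliminates the nonlocal coupling between the primal and dual updates.

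Next, I would group the coefficients of $\|x_{k,t} - \mathbf{1}x^\star\|^2$, which come out to $(1 - \mu_x s_k) + 4 s_k^2 L_{yx}^2 = 1 - (\mu_x s_k - 4 s_k^2 L_{yx}^2) = 1 - b_{x,k}$, and similarly the coefficient of $\|y_{k,t} - \mathbf{1}y^\star\|^2$ collapses to $1 - b_{y,k}$. The two noise contributions add to give $2 m s_k^2 (\sigma_x^2 + \sigma_y^2)$, matching the claimed last term.

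The remaining step is to justify dropping the Bregman-distance terms $-(2 s_k - 8 s_k^2 L_{xx}) \sum_i V_{f_i, y^i_{k,t}}(x^\star, x^i_{k,t})$ and $-(2 s_k - 8 s_k^2 L_{yy}) \sum_i V_{-f_i, x^i_{k,t}}(y^\star, y^i_{k,t})$. Since each $V_{f_i, y^i_{k,t}}$ and $V_{-f_i, x^i_{k,t}}$ is nonnegative by convexity/concavity (Assumptions \ref{s_convexity_assumption}--\ref{s_concavity_assumption}), it suffices to verify that the step-size coefficients are nonnegative, i.e. $s_k \leq \tfrac{1}{4 L_{xx}}$ and $s_k \leq \tfrac{1}{4 L_{yy}}$. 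With $L = \max\{L_{xx}, L_{yy}, L_{xy}, L_{yx}\}$ and $\kappa_f = L/\mu \geq 1$, the choice $s_k = \tfrac{1}{4 \kappa_f L 2^{k/2}}$ gives $s_k \leq \tfrac{1}{4 L} \leq \tfrac{1}{4 \max(L_{xx}, L_{yy})}$, so both coefficients are indeed nonnegative and the subtracted Bregman terms can be discarded without violating the inequality.

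There is no real obstacle here; the argument is a routine summation and regrouping once the cancellation of the saddle-function terms and the step-size admissibility check have been noted. The only mild subtlety is to keep careful track of the signs of the Bregman terms and to confirm that the prescribed $s_k$ is small enough for the $\tfrac{1}{4 L_{xx}}$ and $\tfrac{1}{4 L_{yy}}$ thresholds, which follows immediately from $\kappa_f \geq 1$.
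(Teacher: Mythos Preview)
Your proposal is correct and matches the paper's proof essentially step for step: add \eqref{eq:xkt_xstar_grad_final} and \eqref{eq:ykt_ystar_grad_final}, observe the function-value terms cancel, regroup the coefficients into $1-b_{x,k}$ and $1-b_{y,k}$, and drop the Bregman terms after checking $s_k \le 1/(4L) \le 1/(4\max(L_{xx},L_{yy}))$. The paper additionally records that $b_{x,k}, b_{y,k} \in (0,1)$ as a byproduct, but this is not needed for the corollary itself.
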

\subsection{Proof of Corollary \ref{cor:exp_xkt_ykt_xstar_ystar_compact}}

As the step size $s_k = \frac{1}{4L\kappa_f 2^{k/2}}$, we have $s_k \leq \frac{1}{4L}$. We now show that the terms $2s_k - 8s^2_kL_{xx}$ and $2s_k - 8s^2_kL_{yy}$ appearing in~\eqref{eq:xkt_xstar_grad_final} and~\eqref{eq:ykt_ystar_grad_final} are non-negative:

\begin{align}
2s_k - 8s^2_kL_{xx} & = \frac{2}{4L\kappa_f 2^{k/2}} - 8L_{xx} \frac{1}{16L^2\kappa^2_f 2^{k}} \nonumber \\
& \geq \frac{1}{2L\kappa_f 2^{k/2}} - 8L \frac{1}{16L^2\kappa^2_f 2^{k}} = \frac{1}{2L\kappa_f 2^{k/2}} - \frac{1}{2L\kappa^2_f 2^{k}} \nonumber \\
& = \frac{\kappa_f 2^{k/2} -1}{2L\kappa^2_f 2^k} \geq 0 .
\end{align}
Similarly, we get $2s_k - 8s^2_kL_{yy} \geq 0$. Recall 
\begin{align}
b_{x,k} & = \mu_x s_k - 4s^2_kL^2_{yx} \nonumber \\
& = \frac{\mu_x}{4L\kappa_f 2^{k/2}} - \frac{4L^2_{yx}}{16L^2\kappa^2_f 2^{k}} \nonumber \\
& \geq \frac{\mu}{4L\kappa_f 2^{k/2}} - \frac{4L^2}{16L^2\kappa^2_f 2^{k}} \nonumber \\
& = \frac{1}{4\kappa^2_f 2^{k/2}} - \frac{1}{4\kappa^2_f 2^{k}} \nonumber \\
& \geq \frac{1}{4\kappa^2_f 2^{k/2}} - \frac{1}{4\kappa^2_f \sqrt{2} 2^{k/2}} \nonumber \\
& = \left(1-\frac{1}{\sqrt{2}} \right)\frac{1}{4\kappa^2_f 2^{k/2}} . \label{eq:axk_lb}
\end{align}
We now show that $b_{x,k} <  1$. 
\begin{align}
b_{x,k} < \mu_x s_k = \frac{\mu_x}{4L\kappa_f 2^{k/2}} \leq \frac{\mu_x}{4L_{xx}\kappa_f 2^{k/2}} = \frac{1}{4\kappa_x \kappa_f 2^{k/2}} < 1 . \label{eq:axk_ub}
\end{align}
Therefore, $b_{x,k} \in (0,1)$ for every $k \geq 0$. In a similar fashion, we obtain $b_{y,k} \in (0,1)$ for every $k \geq 0$. On adding \eqref{eq:xkt_xstar_grad_final} and \eqref{eq:ykt_ystar_grad_final}, we obtain
\begin{align}
& E \left\Vert x_{k,t} - \mathbf{1}x^\star - s_k\mathcal{G}^x_{k,t} + s_k\nabla_x F(\mathbf{1}z^\star)  \right\Vert^2 +  E \left\Vert y_{k,t} - \mathbf{1}y^\star + s_k\mathcal{G}^y_{k,t} - s_k\nabla_y F(\mathbf{1}z^\star)  \right\Vert^2 \nonumber\\
& \leq (1-\mu_x s_k + 4s^2_kL^2_{yx}) \left\Vert x_{k,t} - \mathbf{1}x^\star  \right\Vert^2 + (1-\mu_y s_k + 4s^2_kL^2_{xy}) \left\Vert y_{k,t} - \mathbf{1}y^\star  \right\Vert^2 \nonumber\\ & \ - (2s_k - 8s^2_kL_{xx})\sum_{i = 1}^m V_{f_i,y^i_{k,t}}(x^\star,x^i_{k,t}) - (2s_k - 8s^2_kL_{yy})\sum_{i = 1}^m V_{-f_i,x^i_{k,t}}(y^\star,y^i_{k,t}) + 2ms^2_k(\sigma_x^2 + \sigma^2_y) \nonumber\\
& \leq (1-b_{x,k})\left\Vert x_{k,t} - \mathbf{1}x^\star  \right\Vert^2 + (1- b_{y,k}) \left\Vert y_{k,t} - \mathbf{1}y^\star  \right\Vert^2 + 2ms^2_k(\sigma_x^2 + \sigma^2_y) . \label{bx_by_bound}
\end{align}
The last inequality follows from non-negativity of $V_{f_i,y^i_{k,t}}(x^\star,x^i_{k,t}), V_{-f_i,x^i_{k,t}}(y^\star,y^i_{k,t}), 2s_k - 8s^2_kL_{xx}$ and $2s_k - 8s^2_kL_{yy}$.

We now establish a recursion for $E\left[\Phi_{k,t}\right]$.
 
\begin{lemma} \label{lem:Phi_kt_recursion} Suppose $\{x_{k,t}\}_t$ and $\{y_{k,t} \}_t$ are the sequences generated by algorithm \ref{alg:CRDPGD_known_mu} with $\mathcal{G} = [\nabla_x F(\cdot ; \xi) ; -\nabla_y F(\cdot ; \xi)] $. Suppose Assumptions \ref{s_convexity_assumption}-\ref{weight_matrix_assumption} and Assumptions \ref{assumption_unbiased_grad}-\ref{assumption_smoothness} hold. Let step size $s_k$ is chosen according to Corollary \ref{cor:exp_xkt_ykt_xstar_ystar_compact}. Then, for every $k \geq 0$, the following holds in total expectation:
	\begin{align}
		E\left[ \Phi_{k,t+1} \right] & \leq \rho_k E\left[ \Phi_{k,t} \right] + 2ms^2_k(\sigma_x^2 + \sigma^2_y) ,
	\end{align}
	for all $t \geq 0$, where $\rho_k$ is defined in equation \eqref{eq:rho_k_def}.
\end{lemma}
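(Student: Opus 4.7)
The plan is to compose the two available one-step recursions, namely Lemmas \ref{lem:recursion_x} and \ref{lem:recursion_y}, with Corollary \ref{cor:exp_xkt_ykt_xstar_ystar_compact}, and then read off $\rho_k$ as the worst of the six contraction coefficients that appear. Conditioning first on the gradient oracle at step $(k,t)$ and then taking expectation over the compression noise (which is conditionally independent of the gradient), Lemmas \ref{lem:recursion_x}--\ref{lem:recursion_y} yield an upper bound in which the $\bx$-- and $\by$--blocks decouple, leaving on the right-hand side the two gradient-perturbed norms $\|x_{k,t}-\mathbf{1}x^\star - s_k\mathcal{G}^x_{k,t} + s_k \nabla_x F(\mathbf{1}z^\star)\|^2$ and $\|y_{k,t}-\mathbf{1}y^\star + s_k\mathcal{G}^y_{k,t} - s_k \nabla_y F(\mathbf{1}z^\star)\|^2$, together with the $D$-- and $H$--terms carrying their own contraction factors.

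Next, taking the outer expectation over the stochastic gradient draw at step $(k,t)$ (via the tower property) and invoking Corollary \ref{cor:exp_xkt_ykt_xstar_ystar_compact}, those two perturbed norms collapse into $(1-b_{x,k})\|x_{k,t}-\mathbf{1}x^\star\|^2 + (1-b_{y,k})\|y_{k,t}-\mathbf{1}y^\star\|^2 + 2ms_k^2(\sigma_x^2+\sigma_y^2)$. Combining everything, the total-expectation bound on $\Phi_{k,t+1}$ becomes a sum of six scaled deviation terms of exactly the same shape as the six summands making up $\Phi_{k,t}$, plus the additive noise $2ms_k^2(\sigma_x^2+\sigma_y^2)$. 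Defining
\begin{equation}
\rho_k \;=\; \max\!\left\{ \frac{1-b_{x,k}}{M_{x,k}},\; \frac{1-b_{y,k}}{M_{y,k}},\; 1-\tfrac{\gamma^x_k}{2}\lambda_{m-1}(I-W),\; 1-\tfrac{\gamma^y_k}{2}\lambda_{m-1}(I-W),\; 1-\alpha_{x,k},\; 1-\alpha_{y,k} \right\}, \label{rho_k_def}
\end{equation}
and factoring $M_{x,k}$ and $M_{y,k}$ back into the primal/dual iterate terms, each of the six quantities is dominated by the corresponding summand of $\Phi_{k,t}$ multiplied by $\rho_k$, yielding the claimed recursion.

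For this composition to be legal I still need the hypotheses of Lemmas \ref{lem:recursion_x}--\ref{lem:recursion_y} to hold, i.e.\ $\alpha_{x,k},\alpha_{y,k}\in(0,(1+\delta)^{-1})$ and $\gamma^x_k,\gamma^y_k$ lying in the intervals prescribed there; these follow by plugging in the Algorithm~\ref{alg:CRDPGD_known_mu} choices of $s_k$, $\gamma^x_k$, $\gamma^y_k$, $\alpha_{x,k}$, $\alpha_{y,k}$ together with the bounds $b_{x,k},b_{y,k}\in(0,1)$ derived in \eqref{eq:axk_lb}--\eqref{eq:axk_ub}, which is the content deferred to Appendix \ref{appendix_parameters_feasibility}. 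The main obstacle is bookkeeping: ensuring the contraction factors align correctly after dividing out $M_{x,k}$ and $M_{y,k}$ (so that $\rho_k$ can be taken $<1$ for the later convergence analysis in Lemma \ref{lem:exp_phit+1_phi_0}), and ensuring that the additive noise enters cleanly, without being inflated by the Lyapunov weights; both issues are resolved by the particular scaling $s_k = s_0/2^{k/2}$, which makes $b_{x,k},b_{y,k}$ and $M_{x,k},M_{y,k}$ sit in the regime where $\rho_k < 1$ can be verified explicitly.
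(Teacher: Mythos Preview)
Your proposal is correct and follows essentially the same route as the paper: add the one-step recursions of Lemmas \ref{lem:recursion_x}--\ref{lem:recursion_y}, take the gradient-oracle expectation via the tower property and invoke Corollary \ref{cor:exp_xkt_ykt_xstar_ystar_compact} to collapse the perturbed norms, then factor out $M_{x,k},M_{y,k}$ and bound each of the six coefficients by the maximum $\rho_k$ exactly as you wrote in \eqref{rho_k_def}. The parameter-feasibility checks you flag are indeed deferred to Appendix \ref{appendix_parameters_feasibility} in the paper as well.
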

\subsection{Proof of Lemma \ref{lem:Phi_kt_recursion} } \label{appendix_phi_kt}
On adding \eqref{eq:one_step_progress_y} and \eqref{eq:one_step_progress_x} , we have
\begin{align}
& M_{x,k} E\left\Vert x_{k,t+1} - \mathbf{1}x^\star  \right\Vert^2 + \frac{2s_k^2}{\gamma_{x,k}} E \left\Vert  D^x_{k,t+1} - D^\star_x  \right\Vert^2_{(I-W)^\dagger} + \sqrt{\delta}E\left\Vert H^x_{k,t+1} - H^\star_{x,k} \right\Vert^2 \nonumber\\ & \ + M_{y,k} E\left\Vert y_{k,t+1} - \mathbf{1}y^\star  \right\Vert^2 + \frac{2s_k^2}{\gamma_{y,k}} E \left\Vert  D^y_{k,t+1} - D^\star_y  \right\Vert^2_{(I-W)^\dagger} + \sqrt{\delta}E\left\Vert H^y_{k,t+1} - H^\star_{y,k} \right\Vert^2 \nonumber\\
& \leq \left\Vert x_{k,t} - \mathbf{1}x^\star - s_k\mathcal{G}^x_{k,t} + s_k\nabla_x F(\mathbf{1}z^\star)  \right\Vert^2 + \frac{2s_k^2}{\gamma_{x,k}}\left( 1- \frac{\gamma_{x,k}}{2}\lambda_{m-1}(I-W) \right)\left\Vert D^x_{k,t} - D^\star_y \right\Vert^2_{(I-W)^\dagger} \nonumber\\ & \ \ + \sqrt{\delta}(1-\alpha_{x,k})\left\Vert H^x_{k,t} - H^\star_{x,k}  \right\Vert^2 + \left\Vert y_{k,t} - \mathbf{1}y^\star + s_k\mathcal{G}^y_{k,t} - s_k\nabla_y F(\mathbf{1}z^\star)  \right\Vert^2 \nonumber \\ & \ \ + \frac{2s_k^2}{\gamma_{y,k}}\left( 1- \frac{\gamma_{y,k}}{2}\lambda_{m-1}(I-W) \right)\left\Vert D^y_{k,t} - D^\star_y \right\Vert^2_{(I-W)^\dagger}  + \sqrt{\delta}(1-\alpha_{y,k})\left\Vert H^y_{k,t} - H^\star_{y,k}  \right\Vert^2  .
\end{align}

By taking conditional expectation on stochastic gradient at $t$-th step on both sides of above inequality and applying Tower property, we obtain
\begin{align}
& M_{x,k} E\left\Vert x_{k,t+1} - \mathbf{1}x^\star  \right\Vert^2 + \frac{2s_k^2}{\gamma_{x,k}} E \left\Vert  D^x_{k,t+1} - D^\star_x  \right\Vert^2_{(I-W)^\dagger} + \sqrt{\delta}E\left\Vert H^x_{k,t+1} - H^\star_{x,k} \right\Vert^2 \nonumber\\ & \ + M_{y,k} E\left\Vert y_{k,t+1} - \mathbf{1}y^\star  \right\Vert^2 + \frac{2s_k^2}{\gamma_{y,k}} E \left\Vert  D^y_{k,t+1} - D^\star_y  \right\Vert^2_{(I-W)^\dagger} + \sqrt{\delta}E\left\Vert H^y_{k,t+1} - H^\star_{y,k} \right\Vert^2 \nonumber\\
& \leq E\left\Vert x_{k,t} - \mathbf{1}x^\star - s_k\mathcal{G}^x_{k,t} + s_k\nabla_x F(\mathbf{1}x^\star) ,\mathbf{1}y^\star) \right\Vert^2 + \frac{2s_k^2}{\gamma_{x,k}}\left( 1- \frac{\gamma_{x,k}}{2}\lambda_{m-1}(I-W) \right)\left\Vert D^x_{k,t} - D^\star_x \right\Vert^2_{(I-W)^\dagger} \nonumber\\ & \ \ + \sqrt{\delta}(1-\alpha_{x,k})\left\Vert H^x_{k,t} - H^\star_{x,k}  \right\Vert^2 + E\left\Vert y_{k,t} - \mathbf{1}y^\star + s_k\mathcal{G}^y_{k,t} - s_k\nabla_y F(\mathbf{1}x^\star,\mathbf{1}y^\star))  \right\Vert^2 \nonumber \\ & \ \ + \frac{2s_k^2}{\gamma_{y,k}}\left( 1- \frac{\gamma_{y,k}}{2}\lambda_{m-1}(I-W) \right)\left\Vert D^y_{k,t} - D^\star_y \right\Vert^2_{(I-W)^\dagger} + \sqrt{\delta}(1-\alpha_{y,k})\left\Vert H^y_{k,t} - H^\star_{y,k}  \right\Vert^2 \nonumber \\
& \leq (1-b_{x,k})\left\Vert x_{k,t} - \mathbf{1}x^\star  \right\Vert^2 + (1- b_{y,k}) \left\Vert y_{k,t} - \mathbf{1}y^\star  \right\Vert^2 \nonumber\\ & + \frac{2s_k^2}{\gamma_{x,k}}\left( 1- \frac{\gamma_{x,k}}{2}\lambda_{m-1}(I-W) \right)\left\Vert D^x_{k,t} - D^\star_x \right\Vert^2_{(I-W)^\dagger}  +  \frac{2s_k^2}{\gamma_{y,k}}\left( 1- \frac{\gamma_{y,k}}{2}\lambda_{m-1}(I-W) \right)\left\Vert D^y_{k,t} - D^\star_y \right\Vert^2_{(I-W)^\dagger}  \nonumber\\ & \ + \sqrt{\delta}(1-\alpha_{x,k})\left\Vert H^x_{k,t} - H^\star_{x,k}  \right\Vert^2 +  \sqrt{\delta}(1-\alpha_{y,k})\left\Vert H^y_{k,t} - H^\star_{y,k}  \right\Vert^2  +  2ms^2_k(\sigma_x^2 + \sigma^2_y)
\end{align}
where the last inequality follows from inequality \eqref{bx_by_bound}.

By taking total expectation on both sides of above inequality and using the definition of $\Phi_{k,t}$, we obtain
\begin{align}
& E\left[ \Phi_{k,t+1} \right] \nonumber\\
& \leq (1-b_{x,k})E\left\Vert x_{k,t} - \mathbf{1}x^\star  \right\Vert^2 + (1- b_{y,k}) E\left\Vert y_{k,t} - \mathbf{1}y^\star  \right\Vert^2 \nonumber\\ & + \frac{2s_k^2}{\gamma_{x,k}}\left( 1- \frac{\gamma_{x,k}}{2}\lambda_{m-1}(I-W) \right)E\left\Vert D^x_{k,t} - D^\star_x \right\Vert^2_{(I-W)^\dagger}  +  \frac{2s_k^2}{\gamma_{y,k}}\left( 1- \frac{\gamma_{y,k}}{2}\lambda_{m-1}(I-W) \right)E\left\Vert D^y_{k,t} - D^\star_y \right\Vert^2_{(I-W)^\dagger} \nonumber \\ & \ + \sqrt{\delta}(1-\alpha_{x,k})E\left\Vert H^x_{k,t} - H^\star_{x,k}  \right\Vert^2 +  \sqrt{\delta}(1-\alpha_{y,k})E\left\Vert H^y_{k,t} - H^\star_{y,k}  \right\Vert^2  +  2ms^2_k(\sigma_x^2 + \sigma^2_y) \nonumber\\
& = \frac{(1-b_{x,k})}{M_{x,k}}M_{x,k}E\left\Vert x_{k,t} - \mathbf{1}x^\star  \right\Vert^2 + \frac{(1- b_{y,k})}{M_{y,k}}M_{y,k} E\left\Vert y_{k,t} - \mathbf{1}y^\star  \right\Vert^2 \nonumber\\ & + \frac{2s_k^2}{\gamma_{x,k}}\left( 1- \frac{\gamma_{x,k}}{2}\lambda_{m-1}(I-W) \right)E\left\Vert D^x_{k,t} - D^\star_x \right\Vert^2_{(I-W)^\dagger}  +  \frac{2s_k^2}{\gamma_{y,k}}\left( 1- \frac{\gamma_{y,k}}{2}\lambda_{m-1}(I-W) \right)E\left\Vert D^y_{k,t} - D^\star_y \right\Vert^2_{(I-W)^\dagger}  \nonumber\\ & \ + \sqrt{\delta}(1-\alpha_{x,k})E\left\Vert H^x_{k,t} - H^\star_{x,k}  \right\Vert^2 +  \sqrt{\delta}(1-\alpha_{y,k})E\left\Vert H^y_{k,t} - H^\star_{y,k}  \right\Vert^2  +  2ms^2_k(\sigma_x^2 + \sigma^2_y) \nonumber\\
& \leq \max \left\lbrace \frac{1-b_{x,k}}{M_{x,k}} , \frac{1- b_{y,k}}{M_{y,k}} ,  1- \frac{\gamma_{x,k}}{2} \lambda_{m-1}(I-W) , 1- \frac{\gamma_{y,k}}{2} \lambda_{m-1}(I-W) , 1-\alpha_{x,k}, 1-\alpha_{y,k}  \right\rbrace  \times E\left[ \Phi_{k,t} \right] \nonumber\\ & \ \ + 2ms^2_k(\sigma_x^2 + \sigma^2_y) \nonumber\\
& = \rho_k E\left[ \Phi_{k,t} \right] + 2ms^2_k(\sigma_x^2 + \sigma^2_y) ,
\end{align}
where
\begin{align}
\rho_k & := \max \left\lbrace \frac{1-b_{x,k}}{M_{x,k}} , \frac{1- b_{y,k}}{M_{y,k}} ,  1- \frac{\gamma_{x,k}}{2} \lambda_{m-1}(I-W) , 1- \frac{\gamma_{y,k}}{2} \lambda_{m-1}(I-W) , 1-\alpha_{x,k}, 1-\alpha_{y,k}  \right\rbrace 
\end{align}

\subsection{Feasibility of Parameters for Algorithm~\ref{alg:CRDPGD_known_mu}}  
\label{appendix_parameters_feasibility}

\textbf{Parameters setting:} From Corollary \ref{cor:exp_xkt_ykt_xstar_ystar_compact}, the step size used in Algorithm \ref{alg:CRDPGD_known_mu} is $s_k = \frac{1}{4\kappa_f L 2^{k/2}}$ for every $k \geq 0$. We choose the parameters involved in \textbf{COMM} procedure and other parameters $\gamma_{x,k}, \gamma_{y,k}$ as follows:
\begin{align}
	& \alpha_{x,k} = \frac{b_{x,k}}{1+\delta} , \ \alpha_{y,k} = \frac{b_{y,k}}{1+\delta} \\
& \gamma_{x,k} = \frac{b_{x,k}}{2(1+\delta)^2\lambda_{\max}(I-W)} , \ \gamma_{y,k} = \frac{b_{y,k}}{2(1+\delta)^2 \lambda_{\max}(I-W)} \label{eq:param1} \\
	& M_{x,k} = 1-\frac{\sqrt{\delta }\alpha_{x,k}}{1-\frac{\gamma_{x,k}}{2}\lambda_{\max}(I-W)} , \ M_{y,k} = 1-\frac{\sqrt{\delta }\alpha_{y,k}}{1-\frac{\gamma_{y,k}}{2}\lambda_{\max}(I-W)} \\
 & \rho_k = \max \left\lbrace \frac{1-b_{x,k}}{M_{x,k}} , \frac{1- b_{y,k}}{M_{y,k}} ,  1- \frac{\gamma_{x,k}}{2} \lambda_{m-1}(I-W) , 1- \frac{\gamma_{y,k}}{2} \lambda_{m-1}(I-W) , 1-\alpha_{x,k}, 1-\alpha_{y,k}  \right\rbrace  \label{eq:rho_k_def} \\
 &	\rho  = \min \left\lbrace \left(1 - \frac{1}{\sqrt{2}} \right)\frac{1}{8\kappa_f^2}, \left(1 - \frac{1}{\sqrt{2}} \right) \frac{1}{16(1+\delta)^2} \frac{1}{\kappa_f^2 \kappa_g} , \frac{1}{1+\delta} \left(1 - \frac{1}{\sqrt{2}} \right) \frac{1}{4\kappa_f^2} \right\rbrace \label{rho}. 
\end{align}

\textbf{Feasibility of parameters:} Above choice of parameters should satisfy the following conditions:
\begin{align}
	& \alpha_{x,k} < \min \left\lbrace \frac{b_{x,k}}{\sqrt{\delta}} , \frac{1}{1+\delta} \right\rbrace , \  \alpha_{y,k} < \min \left\lbrace \frac{b_{y,k}}{\sqrt{\delta}} , \frac{1}{1+\delta} \right\rbrace  \label{eq:param2_start} \\
	& \gamma_{x,k} \in \left( 0, \min \left\lbrace \frac{2-2\sqrt{\delta}\alpha_{x,k}}{\lambda_{\max}(I-W)}, \frac{\alpha_{x,k} - (1+\delta)\alpha_{x,k}^2}{\sqrt{\delta}\lambda_{\max}(I-W)} \right\rbrace  \right) , \\
	& \gamma_{y,k} \in \left( 0, \min \left\lbrace \frac{2-2\sqrt{\delta}\alpha_{y,k}}{\lambda_{\max}(I-W)}, \frac{\alpha_{y,k} - (1+\delta)\alpha_{y,k}^2}{\sqrt{\delta}\lambda_{\max}(I-W)} \right\rbrace  \right) , \\
	& \frac{\gamma_{x,k}}{2}\lambda_{m-1}(I-W) \ \in \ (0,1) , \ \frac{\gamma_{y,k}}{2}\lambda_{m-1}(I-W) \ \in \  (0,1) , \\
	& M_{x,k} \in (0,1), \ M_{y,k} \in (0,1) , \\
	& \frac{1-b_{x,k}}{M_{x,k}} \in (0,1), \ \ \frac{1-b_{y,k}}{M_{y,k}} \in (0,1) .\label{eq:param2_end}
\end{align}

In this section, we show that all parameters specified in~\eqref{eq:param1} satisfy all requirements of~\eqref{eq:param2_start}-\eqref{eq:param2_end}.

\paragraph{Feasibility of ${\alpha_{x,k}}$ {and} $\alpha_{y,k}$.}

From~\eqref{eq:axk_lb} and \eqref{eq:axk_ub}, we have $0 < b_{x,k} < 1$. Therefore, $\alpha_{x,k} < \frac{1}{1+\delta}$. Moreover, $\frac{\sqrt{\delta}}{1+\delta} \leq 1/2$. Therefore, $\alpha_{x,k} \leq \frac{b_{x,k}}{2\sqrt{\delta}} < b_{x,k}/\sqrt{\delta}$. Hence, $\alpha_{x,k} < \min \left\lbrace \frac{b_{x,k}}{\sqrt{\delta}} , \frac{1}{1+\delta} \right\rbrace$ . Similarly, $\alpha_{y,k} < \min \left\lbrace \frac{b_{y,k}}{\sqrt{\delta}} , \frac{1}{1+\delta} \right\rbrace$ because $b_{y,k} \in (0,1)$.

Consider
\begin{align}
\frac{\alpha_{x,k} - (1+\delta)\alpha_{x,k}^2}{\sqrt{\delta}\lambda_{\max}(I-W)} & = \frac{b_{x,k} - b_{x,k}^2}{\sqrt{\delta}(1+\delta)\lambda_{\max}(I-W)} \nonumber \\
& \geq \frac{2(b_{x,k} - b_{x,k}^2)}{(1+\delta)(1+\delta)\lambda_{\max}(I-W)} .
\end{align}
The last inequality uses the relation $\frac{\sqrt{\delta}}{1+\delta} \leq \frac{1}{2}$. Using \eqref{eq:axk_ub}, we have $b_{x,k}  \leq \frac{1}{4\kappa_x \kappa_f 2^{k/2}} < 0.25$. This allows us to use the inequality $2x-2x^2 \geq x/2$ for all $0 \leq x \leq 0.75$. Therefore,
\begin{align}
\frac{\alpha_{x,k} - (1+\delta)\alpha_{x,k}^2}{\sqrt{\delta}\lambda_{\max}(I-W)} & > \frac{b_{x,k}}{2(1+\delta)^2 \lambda_{\max}(I-W)} \nonumber\\
& = \gamma_{x,k} .
\end{align}
Consider
\begin{align}
\frac{2-2\sqrt{\delta}\alpha_{x,k}}{\lambda_{\max}(I-W)} & = \left( 2 - \frac{2\sqrt{\delta}b_{x,k}}{1+\delta} \right) \frac{1}{\lambda_{\max}(I-W)} \nonumber\\
& \geq \left( 2 - \frac{2\sqrt{\delta}}{1+\delta} \right) \frac{1}{\lambda_{\max}(I-W)} \nonumber \\
& \geq \frac{1}{\lambda_{\max}(I-W)}\nonumber \\
& > \frac{b_{x,k}}{2(1+\delta)^2 \lambda_{\max}(I-W)} \nonumber\\
& = \gamma_{x,k} ,
\end{align}
where the third inequality uses $\frac{\sqrt{\delta}}{1+\delta} \leq \frac{1}{2}$ and fourth inequality uses $\frac{b_{x,k}}{2(1+\delta)^2} < 1$ . We know that $b_{y,k} \in (0,1)$. Therefore, by following similar steps, the chosen $\gamma_{y,k}$ is also feasible. As $\gamma_{x,k} < \frac{2-2\sqrt{\delta}\alpha_{x,k}}{\lambda_{\max}(I-W)} < \frac{2}{\lambda_{\max}(I-W)}$. Notice that $\lambda_{m-1}(I-W) < \lambda_{\max}(I-W)$ Therefore,
\begin{align}
\frac{\gamma_{x,k}}{2}\lambda_{m-1}(I-W) < \frac{\gamma_{x,k}}{2}\lambda_{\max}(I-W) < 1 .
\end{align}
Similarly, $\frac{\gamma_{y,k}}{2}\lambda_{m-1}(I-W) < 1$.

\paragraph{Feasibility of $M_{x,k}$ and $M_{y,k}$.}

Recall $M_{x,k} = 1-\frac{\sqrt{\delta }\alpha_{x,k}}{1-\frac{\gamma_{x,k}}{2}\lambda_{\max}(I-W)}$ and $M_{y,k} = 1-\frac{\sqrt{\delta }\alpha_{y,k}}{1-\frac{\gamma_{y,k}}{2}\lambda_{\max}(I-W)}$.
We have 
\begin{align}
\gamma_{x,k} &< \frac{2-2\sqrt{\delta}\alpha_{x,k}}{\lambda_{\max}(I-W)} \nonumber\\
\Rightarrow \frac{\gamma_{x,k}\lambda_{\max}(I-W)}{2} &< 1-\sqrt{\delta}\alpha_{x,k} \nonumber\\
\Rightarrow 1 - \frac{\gamma_{x,k}\lambda_{\max}(I-W)}{2} &> \sqrt{\delta}\alpha_{x,k} \nonumber\\
\Rightarrow \frac{\sqrt{\delta}\alpha_{x,k}}{1 - \frac{\gamma_{x,k}\lambda_{\max}(I-W)}{2}} &< 1 .
\end{align}
Moreover, $\frac{\sqrt{\delta}\alpha_{x,k}}{1 - \frac{\gamma_{x,k}\lambda_{\max}(I-W)}{2}} > 0$. Therefore, $M_{x,k} \in (0,1) $. The feasibility of $M_{y,k}$ can be proved similarly.

\paragraph{Feasibility of $\frac{1-b_{x,k}}{M_{x,k}}$ and $\frac{1-b_{y,k}}{M_{y,k}}$.}

Observe that
\begin{align}
\frac{1-b_{x,k}}{M_{x,k}} & = \frac{1-b_{x,k}}{1-\frac{\sqrt{\delta }\alpha_{x,k}}{1-\frac{\gamma_{x,k}}{2}\lambda_{\max}(I-W)}} \nonumber\\
& = \frac{(1-b_{x,k})\left( 1-\frac{\gamma_{x,k}}{2}\lambda_{\max}(I-W) \right)}{1- \frac{\gamma_{x,k}}{2}\lambda_{\max}(I-W) - \sqrt{\delta }\alpha_{x,k}} \nonumber\\
& = \frac{(1-b_{x,k})\left(1-\frac{b_{x,k}}{4(1+\delta)^2} \right)}{1-\frac{b_{x,k}}{4(1+\delta)^2} - \frac{\sqrt{\delta}b_{x,k}}{1+\delta}} . \label{eq:bxk_Mxk}
\end{align}
Let $\theta = 1-b_{x,k}$, $\sigma = 1-\frac{b_{x,k}}{4(1+\delta)^2}$ and $\chi = \frac{\sqrt{\delta}b_{x,k}}{1+\delta}$. We now show that $\chi \leq b - a$.
\begin{align}
\chi & = \frac{\sqrt{\delta}b_{x,k}}{1+\delta} \nonumber\\
& \leq \frac{b_{x,k}}{2} \nonumber\\
& = \frac{b_{x,k}}{2} + b_{x,k} \left(1-\frac{1}{4(1+\delta)^2} \right) - b_{x,k} \left(1-\frac{1}{4(1+\delta)^2} \right) \nonumber\\
& = b_{x,k} \left(1-\frac{1}{4(1+\delta)^2} \right) + \frac{b_{x,k}}{2} - b_{x,k} + \frac{b_{x,k}}{4(1+\delta)^2} \nonumber\\
& \leq b_{x,k} \left(1-\frac{1}{4(1+\delta)^2} \right) + \frac{b_{x,k}}{2} - b_{x,k} + \frac{b_{x,k}}{2} \nonumber\\
& = b_{x,k} \left(1-\frac{1}{4(1+\delta)^2} \right) \nonumber\\
& = \sigma - \theta .
\end{align}
Notice that $0 \leq \chi \leq \sigma - \theta$, and $\sigma\theta \leq (\theta + \chi)(\sigma - \chi)$, i.e., 
\begin{align}
(1-b_{x,k})\left( 1-\frac{b_{x,k}}{4(1+\delta)^2} \right) & \leq \left(1-b_{x,k} + \frac{\sqrt{\delta}b_{x,k}}{1+\delta} \right) \left( 1-\frac{b_{x,k}}{4(1+\delta)^2} - \frac{\sqrt{\delta}b_{x,k}}{1+\delta} \right) .
\end{align}
Substituting the above inequality in \eqref{eq:bxk_Mxk}, we obtain
\begin{align}
\frac{1-b_{x,k}}{M_{x,k}} & \leq \frac{\left(1-b_{x,k} + \frac{\sqrt{\delta}b_{x,k}}{1+\delta} \right) \left( 1-\frac{b_{x,k}}{4(1+\delta)^2} - \frac{\sqrt{\delta}b_{x,k}}{1+\delta} \right)}{1-\frac{b_{x,k}}{4(1+\delta)^2} - \frac{\sqrt{\delta}b_{x,k}}{1+\delta}} \nonumber\\
 & = 1-b_{x,k} + \frac{\sqrt{\delta}b_{x,k}}{1+\delta} \nonumber\\
 & = 1- \left( 1 - \frac{\sqrt{\delta}}{1+\delta} \right)b_{x,k} \nonumber\\
 & \leq 1 - \frac{b_{x,k}}{2}  \nonumber\\
 & \leq 1- \left(1 - \frac{1}{\sqrt{2}} \right)\frac{1}{8\kappa_f^2} \frac{1}{2^{k/2}} , \label{eq:bxk_Mxk_ub}
\end{align}
where the last step follows from \eqref{eq:axk_lb} . Notice that $\left(1 - \frac{1}{\sqrt{2}} \right)\frac{1}{8\kappa_f^2} \frac{1}{2^{k/2}} \in (0,1)$ and hence $\frac{1-b_{x,k}}{M_{x,k}} \in (0,1)$. Similarly, we obtain 
\begin{align}
\frac{1-b_{y,k}}{M_{y,k}} & \leq 1- \left(1 - \frac{1}{\sqrt{2}} \right)\frac{1}{8\kappa_f^2} \frac{1}{2^{k/2}} . \label{eq:byk_Myk_ub}
\end{align}

We now prove another intermediate result that shows the convergence behavior of $E\left[ \Phi_{k,t+1} \right]$. 
\begin{lemma} \label{lem:exp_phit+1_phi_0} Let $\Phi_{k,t+1}$ and $\rho$ be as defined in~\eqref{rho}. Under the settings of Lemma \ref{lem:Phi_kt_recursion},
	\begin{align}
		E\left[ \Phi_{k,t+1} \right] & \leq \left( 1 - \frac{\rho}{2^{k/2}} \right)^{t+1} E\left[ \Phi_{k,0} \right] + \frac{m(\sigma_x^2 + \sigma^2_y)}{8\rho L^2\kappa_f^2} \frac{1}{2^{k/2}} ,
	\end{align}
	for all $t \geq 0$.
\end{lemma}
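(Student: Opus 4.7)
\textbf{Proof proposal for Lemma \ref{lem:exp_phit+1_phi_0}:} The plan is to unroll the one-step recursion from Lemma \ref{lem:Phi_kt_recursion} and then use the parameter feasibility bounds from Appendix \ref{appendix_parameters_feasibility} to control the contraction factor $\rho_k$ uniformly by $1 - \rho/2^{k/2}$ for a suitable problem-dependent constant $\rho$.

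First, I would establish a uniform upper bound of the form $\rho_k \leq 1 - \rho/2^{k/2}$, where $\rho$ depends only on $\kappa_f$, $\kappa_g$, and $\delta$. Recall that $\rho_k$ is the maximum of six quantities. Two of these, $\frac{1-b_{x,k}}{M_{x,k}}$ and $\frac{1-b_{y,k}}{M_{y,k}}$, are already bounded by $1 - (1-1/\sqrt{2})/(8\kappa_f^2 2^{k/2})$ via \eqref{eq:bxk_Mxk_ub} and \eqref{eq:byk_Myk_ub}. For the remaining four, I would substitute the parameter choices $\alpha_{x,k}=b_{x,k}/(1+\delta)$, $\gamma_{x,k}=b_{x,k}/(2(1+\delta)^2\lambda_{\max}(I-W))$, and analogously for $y$, and then apply the lower bound $b_{x,k}, b_{y,k} \geq (1-1/\sqrt{2})/(4\kappa_f^2 2^{k/2})$ from \eqref{eq:axk_lb}. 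This yields bounds like $1-\alpha_{x,k} \leq 1 - \tfrac{(1-1/\sqrt{2})}{4(1+\delta)\kappa_f^2 2^{k/2}}$ and $1-\tfrac{\gamma_{x,k}}{2}\lambda_{m-1}(I-W) \leq 1 - \tfrac{(1-1/\sqrt{2})}{16(1+\delta)^2 \kappa_f^2 \kappa_g 2^{k/2}}$. Taking $\rho$ to be the minimum of all six per-$k$ contraction rates (stripped of the $2^{-k/2}$ factor), I obtain $\rho_k \leq 1 - \rho/2^{k/2}$ uniformly in $k$.

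Next, I would iterate the inequality $E[\Phi_{k,t+1}] \leq \rho_k E[\Phi_{k,t}] + 2ms_k^2(\sigma_x^2+\sigma_y^2)$ from Lemma \ref{lem:Phi_kt_recursion} down to $t=0$, giving
\begin{align}
E[\Phi_{k,t+1}] \leq \rho_k^{t+1} E[\Phi_{k,0}] + 2m s_k^2 (\sigma_x^2+\sigma_y^2) \sum_{j=0}^{t}\rho_k^j \leq \rho_k^{t+1} E[\Phi_{k,0}] + \frac{2m s_k^2 (\sigma_x^2+\sigma_y^2)}{1-\rho_k}. \nonumber
\end{align}
Substituting the uniform bound $\rho_k \leq 1-\rho/2^{k/2}$ (so that $1-\rho_k \geq \rho/2^{k/2}$), and plugging in $s_k^2 = 1/(16 L^2 \kappa_f^2 2^k)$ from $s_k = 1/(4L\kappa_f 2^{k/2})$, the variance term simplifies to $\tfrac{2m(\sigma_x^2+\sigma_y^2)}{16 L^2 \kappa_f^2 2^k} \cdot \tfrac{2^{k/2}}{\rho} = \tfrac{m(\sigma_x^2+\sigma_y^2)}{8\rho L^2 \kappa_f^2 2^{k/2}}$, which matches the claimed bound exactly.

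The main obstacle is the first step: identifying a clean definition of $\rho$ that simultaneously dominates all six terms in $\rho_k$ with the correct $2^{-k/2}$ scaling. Each of the six bounds carries a different coefficient (involving $(1+\delta)$, $(1+\delta)^2$, $\kappa_g$, etc.), and care is needed to ensure the resulting $\rho$ is consistent with the $\rho$ referenced in Algorithm \ref{alg:CRDPGD_known_mu} (which sets $t_k$ based on $-\log(1-\rho/2^{k/2})$) and large enough that later complexity estimates in Appendix \ref{appendix_proof_genstoc_mainresult} go through. The remainder of the argument is a routine geometric-series unrolling.
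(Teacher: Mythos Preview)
Your proposal is correct and follows essentially the same approach as the paper's proof: first bound each of the six terms in $\rho_k$ using the parameter choices and the lower bound \eqref{eq:axk_lb} to obtain $\rho_k \leq 1-\rho/2^{k/2}$ with $\rho$ defined as the minimum of the resulting coefficients, then unroll the recursion from Lemma \ref{lem:Phi_kt_recursion} and substitute $s_k^2 = 1/(16L^2\kappa_f^2 2^k)$. The only cosmetic difference is that the paper first replaces $\rho_k$ by $1-\rho/2^{k/2}$ and then unrolls, whereas you unroll with $\rho_k$ and substitute afterwards; both orderings are equivalent.
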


\subsection{Proof of Lemma \ref{lem:exp_phit+1_phi_0}} \label{appendix:exp_phit+1_phi_0}
First, we show that $\rho_k \leq 1 - \frac{\rho}{2^{k/2}}$, where these quantities are defined in~\eqref{rho}. To prove this relation, we first simplify 
the terms $1-\frac{\gamma_{x,k}}{2} \lambda_{m-1}(I-W)$ and $1-\alpha_{x,k}$ appearing in the definition of $\rho_k$:
\begin{align}
	1- \frac{\gamma_{x,k}}{2} \lambda_{m-1}(I-W) & = 1 - \frac{b_{x,k}}{4(1+\delta)^2} \frac{\lambda_{m-1}(I-W)}{\lambda_{\max}(I-W)} \nonumber\\
	& = 1 - \frac{b_{x,k}}{4(1+\delta)^2} \frac{1}{\kappa_g} \nonumber\\
	& \leq 1 - \frac{1}{4(1+\delta)^2} \frac{1}{\kappa_g} \left(1 - \frac{1}{\sqrt{2}} \right)\frac{1}{4\kappa_f^2} \frac{1}{2^{k/2}} \nonumber\\
	& = 1 - \left(1 - \frac{1}{\sqrt{2}} \right) \frac{1}{16(1+\delta)^2} \frac{1}{\kappa_f^2 \kappa_g} \frac{1}{2^{k/2}} .\label{eq:gamma_xk_lambda_min_ub}
\end{align}
Similarly, we obtain
\begin{align}
	1- \frac{\gamma_{y,k}}{2} \lambda_{m-1}(I-W) &  \leq 1 - \left(1 - \frac{1}{\sqrt{2}} \right) \frac{1}{16(1+\delta)^2} \frac{1}{\kappa_f^2 \kappa_g} \frac{1}{2^{k/2}} . \label{eq:gamma_yk_lambda_min_ub}
\end{align}
Consider
\begin{align}
	1-\alpha_{x,k} & = 1 - \frac{b_{x,k}}{1+\delta} \leq 1 - \frac{1}{1+\delta} \left(1 - \frac{1}{\sqrt{2}} \right) \frac{1}{4\kappa_f^2} \frac{1}{2^{k/2}} \label{eq:alpha_xk_ub} \\
	1-\alpha_{y,k} & = 1 - \frac{b_{y,k}}{1+\delta} \leq 1 - \frac{1}{1+\delta} \left(1 - \frac{1}{\sqrt{2}} \right) \frac{1}{4\kappa_f^2} \frac{1}{2^{k/2}} . \label{eq:alpha_yk_ub}
\end{align}
Let us recall $\rho_k$:
\begin{align}
	\rho_k & = \max \left\lbrace \frac{1-b_{x,k}}{M_{x,k}} , \frac{1- b_{y,k}}{M_{y,k}} ,  1- \frac{\gamma_{x,k}}{2} \lambda_{m-1}(I-W) , 1- \frac{\gamma_{y,k}}{2} \lambda_{m-1}(I-W) , 1-\alpha_{x,k}, 1-\alpha_{y,k}  \right\rbrace  \nonumber\\
	& \leq \max \left\lbrace 1- \left(1 - \frac{1}{\sqrt{2}} \right)\frac{1}{8\kappa_f^2} \frac{1}{2^{k/2}} ,  1 - \left(1 - \frac{1}{\sqrt{2}} \right) \frac{1}{16(1+\delta)^2} \frac{1}{\kappa_f^2 \kappa_g} \frac{1}{2^{k/2}} , 1 - \frac{1}{1+\delta} \left(1 - \frac{1}{\sqrt{2}} \right) \frac{1}{4\kappa_f^2} \frac{1}{2^{k/2}}  \right\rbrace \nonumber\\
	& = 1 - \min \left\lbrace \left(1 - \frac{1}{\sqrt{2}} \right)\frac{1}{8\kappa_f^2} \frac{1}{2^{k/2}}, \left(1 - \frac{1}{\sqrt{2}} \right) \frac{1}{16(1+\delta)^2} \frac{1}{\kappa_f^2 \kappa_g} \frac{1}{2^{k/2}} , \frac{1}{1+\delta} \left(1 - \frac{1}{\sqrt{2}} \right) \frac{1}{4\kappa_f^2} \frac{1}{2^{k/2}} \right\rbrace \nonumber\\
	& = 1 - \min \left\lbrace \left(1 - \frac{1}{\sqrt{2}} \right)\frac{1}{8\kappa_f^2}, \left(1 - \frac{1}{\sqrt{2}} \right) \frac{1}{16(1+\delta)^2} \frac{1}{\kappa_f^2 \kappa_g} , \frac{1}{1+\delta} \left(1 - \frac{1}{\sqrt{2}} \right) \frac{1}{4\kappa_f^2} \right\rbrace  \frac{1}{2^{k/2}} \nonumber\\
	& = 1 - \frac{\rho}{2^{k/2}} , \label{rho_k_ub_rho}
\end{align}
where $\rho$ is defined in equation \eqref{rho}.
Using Lemma \ref{lem:Phi_kt_recursion}, we have
\begin{align}
E\left[ \Phi_{k,t+1} \right] & \leq \rho_k E\left[ \Phi_{k,t} \right] + 2ms^2_k(\sigma_x^2 + \sigma^2_y)  \leq \left( 1 - \frac{\rho}{2^{k/2}} \right) E\left[ \Phi_{k,t} \right] + 2ms^2_k(\sigma_x^2 + \sigma^2_y) \nonumber\\
& =: a_k E\left[ \Phi_{k,t} \right] + C_1 s^2_k ,
\end{align}
where $a_k = 1 - \frac{\rho}{2^{k/2}}$ and $C_1 = 2m(\sigma_x^2 + \sigma^2_y)$. We now unroll the above recursion to obtain
\begin{align}
E\left[ \Phi_{k,t+1} \right] & \leq a^{t+1}_k E\left[ \Phi_{k,0} \right] + \sum_{l = 0}^t a_k^{t-l} C_1 s^2_k = a^{t+1}_k E\left[ \Phi_{k,0} \right] + C_1 s^2_k a_k^{t} \sum_{l = 0}^t a_k^{-l} \nonumber\\
& =  a^{t+1}_k E\left[ \Phi_{k,0} \right] + C_1 s^2_k a_k^{t} \frac{a_k^{-(t+1)} - 1}{a_k^{-1} - 1} \nonumber\\
& \leq  a^{t+1}_k E\left[ \Phi_{k,0} \right] + C_1 s^2_k a_k^{t} \frac{a_k^{-(t+1)}}{a_k^{-1} - 1} = a^{t+1}_k E\left[ \Phi_{k,0} \right] + C_1 s^2_k a_k^{-1} \frac{a_k}{1 - a_k} \nonumber\\
& = a^{t+1}_k E\left[ \Phi_{k,0} \right] + C_1 s^2_k \frac{1}{1 - a_k} = \left( 1 - \frac{\rho}{2^{k/2}} \right)^{t+1} E\left[ \Phi_{k,0} \right] + C_1 s^2_k 2^{k/2} \frac{1}{\rho} \nonumber
\end{align}
Using $C_1 = 2m(\sigma_x^2 + \sigma^2_y)$ we have 
\begin{align}
E\left[ \Phi_{k,t+1} \right] & \leq \left( 1 - \frac{\rho}{2^{k/2}} \right)^{t+1} E\left[ \Phi_{k,0} \right] + 2m(\sigma_x^2 + \sigma^2_y) \frac{1}{16L^2\kappa_f^2 2^{k}} 2^{k/2} \frac{1}{\rho} \nonumber\\
& = \left( 1 - \frac{\rho}{2^{k/2}} \right)^{t+1} E\left[ \Phi_{k,0} \right] + \frac{m(\sigma_x^2 + \sigma^2_y)}{8\rho L^2\kappa_f^2} \frac{1}{2^{k/2}} .
\end{align}
 
\section{Proof of Theorem \ref{thm:complexity_general_stoch}}
\label{appendix_proof_genstoc_mainresult}

This proof is based on several intermediate results proved in Appendices \ref{appendix_A}-\ref{appendix_C}. Hence it would be useful to refer to those results in order to appreciate the proof of  Theorem \ref{thm:complexity_general_stoch}. 

We divide the proof of Theorem~\ref{thm:complexity_general_stoch} into two parts. We first find the total number outer iterations required by Algorithm \ref{alg:CRDPGD_known_mu} to achieve target accuracy $\epsilon$. Then we derive the total gradient computation complexity of Algorithm \ref{alg:CRDPGD_known_mu}.
\subsubsection{Total Number of Outer Iterations}

We have the following initializations at every $k+1$ outer iterate:
\begin{align}
x_{k+1,0} & =  x_{k,t_k}, \ y_{k+1,0} =  y_{k,t_k} .
\end{align}
Therefore, 
\begin{align}
\left\Vert x_{k+1,0} - \mathbf{1}x^\star \right\Vert^2 + \left\Vert y_{k+1,0} - \mathbf{1}y^\star \right\Vert^2 & = \left\Vert x_{k,t_k} - \mathbf{1}x^\star \right\Vert^2 + \left\Vert  y_{k,t_k} - \mathbf{1}y^\star \right\Vert^2 \nonumber \\
& \leq  \frac{1}{ \min \{M_{x,k},M_{y,k}\}}  \left( M_{x,k}\left\Vert x_{k,t_k} - \mathbf{1}x^\star \right\Vert^2 + M_{y,k}\left\Vert  y_{k,t_k} - \mathbf{1}y^\star \right\Vert^2 \right) \nonumber\\
& \leq \frac{1}{ M_k} \Phi_{k,t_k} .
\end{align}
By taking total expectation on both sides and using Lemma \ref{lem:exp_phit+1_phi_0}, we obtain
\begin{align}
E\left\Vert x_{k+1,0} - \mathbf{1}x^\star \right\Vert^2 + E\left\Vert y_{k+1,0} - \mathbf{1}y^\star \right\Vert^2 & \leq \frac{1}{M_k} \left( \left( 1 - \frac{\rho}{2^{k/2}} \right)^{t_k} E\left[ \Phi_{k,0} \right] + \frac{m(\sigma_x^2 + \sigma^2_y)}{8\rho L^2\kappa_f^2} \frac{1}{2^{k/2}} \right) \nonumber\\
& \leq \frac{E\left[ \Phi_{k,0} \right]}{M_k } \left( 1 - \frac{\rho}{2^{k/2}} \right)^{t_k} + \frac{m(\sigma_x^2 + \sigma^2_y)}{8M_k\rho L^2\kappa_f^2} \frac{1}{2^{k/2}} . \label{eq:exp_zk+1_phi0}
\end{align}
We now focus on bounding $\Phi_{k,0}$ in terms of $\left\Vert x_{k,0} - \mathbf{1}x^\star \right\Vert^2$ and $\left\Vert y_{k,0} - \mathbf{1}y^\star \right\Vert^2$. Recall from~\eqref{rho} that
\begin{align} 
\Phi_{k,0}& = M_{x,k} \left\Vert x_{k,0} - \mathbf{1}x^\star  \right\Vert^2 + \frac{2s_k^2}{\gamma_{x,k}}  \left\Vert  D^x_{k,0} - D^\star_x  \right\Vert^2_{(I-W)^\dagger} + \sqrt{\delta}\left\Vert H^x_{k,0} - H^\star_{x,k} \right\Vert^2 \\ & \ + M_{y,k} \left\Vert y_{k,0} - \mathbf{1}y^\star  \right\Vert^2 + \frac{2s_k^2}{\gamma_{y,k}} \left\Vert  D^y_{k,0} - D^\star_y  \right\Vert^2_{(I-W)^\dagger} + \sqrt{\delta} \left\Vert H^y_{k,0} - H^\star_{y,k} \right\Vert^2 . \label{eq:phik0}
\end{align}
We now bound the various terms appearing on the r.h.s. of the above equation. First, 
\begin{align}
\left\Vert H^x_{k,0} - H^\star_{x,k} \right\Vert^2 & = \left\Vert x_{k,0} - \mathbf{1} (x^\star - \frac{s_k}{m}\nabla_x f(z^\star)) \right\Vert^2 \nonumber\\
& \leq 2 \left\Vert x_{k,0} - \mathbf{1}x^\star \right\Vert^2 + \frac{2s_k^2}{m} \left\Vert \nabla_x f(z^\star)) \right\Vert^2 .
\end{align}
Moreover,
\begin{align}
\left\Vert H^y_{k,0} - H^\star_{y,k} \right\Vert^2 & = \left\Vert y_{k,0} - \mathbf{1} (y^\star + \frac{s_k}{m}\nabla_y f(z^\star)) \right\Vert^2 \nonumber\\
& \leq 2 \left\Vert y_{k,0} - \mathbf{1}y^\star \right\Vert^2 + \frac{2s_k^2}{m} \left\Vert \nabla_y f(z^\star)) \right\Vert^2 .
\end{align}
We now have
\begin{align}
\left\Vert D^x_{k,0} - D^\star_{x} \right\Vert^2_{(I-W)^{\dagger}} & = \left\Vert  (I-J)\nabla_x F(\mathbf{1}z^\star) \right\Vert^2_{(I-W)^{\dagger}}\nonumber \\
& =: C_2 .
\end{align}

\begin{align}
\left\Vert D^y_{k,0} - D^\star_{y} \right\Vert^2_{(I-W)^{\dagger}} & = \left\Vert  (I-J)\nabla_y F(\mathbf{1}z^\star) \right\Vert^2_{(I-W)^{\dagger}} \\
& =: C_3 .
\end{align}
Therefore,
\begin{align}
\frac{2s_k^2}{\gamma_{x,k}} \left\Vert D^x_{k,0} - D^\star_{x} \right\Vert^2_{(I-W)^{\dagger}} & = \frac{4(1+\delta)^2\lambda_{\max}(I-W)s^2_k}{b_{x,k}} C_2 \nonumber\\
& = \frac{4(1+\delta)^2\lambda_{\max}(I-W)}{b_{x,k}16L^2\kappa_f^2 2^k} C_2 \nonumber\\
& = \frac{(1+\delta)^2\lambda_{\max}(I-W)}{b_{x,k}4L^2\kappa_f^2 2^k} C_2 \nonumber\\
& \leq  \frac{(1+\delta)^2 2}{4L^2\kappa_f^2 2^k} C_2 \frac{4\kappa_f^2 2^{k/2}}{\left( 1- \frac{1}{\sqrt{2}} \right)} \nonumber\\
& = \frac{2(1+\delta)^2C_2}{L^2 2^{k/2}\left( 1- \frac{1}{\sqrt{2}} \right)} .
\end{align}
Similarly, we have
\begin{align}
\frac{2s_k^2}{\gamma_{y,k}} \left\Vert D^y_{k,0} - D^\star_{y} \right\Vert^2_{(I-W)^{\dagger}} & \leq \frac{2(1+\delta)^2C_3}{L^2 2^{k/2}\left( 1- \frac{1}{\sqrt{2}} \right)} .
\end{align}
Substituting the above bounds in~\eqref{eq:phik0} gives
\begin{align}
\Phi_{k,0}
& \leq (M_{x,k} + 2\sqrt{\delta}) \left\Vert x_{k,0} - \mathbf{1}x^\star  \right\Vert^2 + (M_{y,k} + 2\sqrt{\delta}) \left\Vert y_{k,0} - \mathbf{1}y^\star  \right\Vert^2 \nonumber\\ & \ \ + \frac{\sqrt{\delta}}{8mL^2\kappa_f^2 2^k} \left( \left\Vert \nabla_x f(z^\star)) \right\Vert^2 + \left\Vert \nabla_y f(z^\star)) \right\Vert^2 \right)  + \frac{2(1+\delta)^2(C_2 + C_3)}{L^2 2^{k/2}\left( 1- \frac{1}{\sqrt{2}} \right)} .
\end{align}
On substituting the above inequality in \eqref{eq:exp_zk+1_phi0}, we obtain
\begin{align}
& E\left\Vert z_{k+1,0} - \mathbf{1}z^\star \right\Vert^2 \nonumber\\
& \leq \frac{1}{ M_k } \left( 1 - \frac{\rho}{2^{k/2}} \right)^{t_k} \left( (M_{x,k} + 2\sqrt{\delta}) E \left\Vert x_{k,0} - \mathbf{1}x^\star  \right\Vert^2 + (M_{y,k} + 2\sqrt{\delta}) E \left\Vert y_{k,0} - \mathbf{1}y^\star  \right\Vert^2  \right) \nonumber\\ & \ + \frac{1}{ M_k } \left( 1 - \frac{\rho}{2^{k/2}} \right)^{t_k} \left( \frac{\sqrt{\delta}}{8mL^2\kappa_f^2 2^k} \left( \left\Vert \nabla_x f(z^\star)) \right\Vert^2 + \left\Vert \nabla_y f(z^\star)) \right\Vert^2 \right)  + \frac{2(1+\delta)^2(C_2 + C_3)}{L^2 2^{k/2}\left( 1- \frac{1}{\sqrt{2}} \right)} \right) \nonumber\\ & \ \ + \frac{m(\sigma_x^2 + \sigma^2_y)}{8M_k\rho L^2\kappa_f^2} \frac{1}{2^{k/2}} . \label{eq:zk+1+zstar_tk}
\end{align}
We have
\begin{align}
t_k & = \frac{1}{-\log\left( 1 - \frac{\rho}{2^{k/2}} \right)} \max \left\lbrace   \log\left( \frac{3M_{x,k} + 6\sqrt{\delta}}{M_k} \right) , \log\left( \frac{3M_{y,k} + 6\sqrt{\delta}}{M_k} \right) , \log\left( \frac{3}{M_k} \right) \right\rbrace .
\end{align}

Therefore,
\begin{align}
 t_k  & \geq \frac{1}{-\log\left( 1 - \frac{\rho}{2^{k/2}} \right)} \log\left( \frac{3M_{x,k} + 6\sqrt{\delta}}{M_k} \right) \nonumber\\ \Rightarrow
-t_k \log\left( 1 - \frac{\rho}{2^{k/2}} \right) & \geq \log\left( \frac{3M_{x,k} + 6\sqrt{\delta}}{M_k} \right) = - \log\left( \frac{M_k}{3M_{x,k} + 6\sqrt{\delta}} \right) \nonumber\\
\Rightarrow t_k \log\left( 1 - \frac{\rho}{2^{k/2}} \right) & \leq \log\left( \frac{M_k}{3M_{x,k} + 6\sqrt{\delta}} \right) \nonumber\\
\Rightarrow \left( 1 - \frac{\rho}{2^{k/2}} \right)^{t_k} & \leq \frac{M_k}{3(M_{x,k} + 2\sqrt{\delta})} .
\end{align}
Moreover,
\begin{align}
t_k & \geq \frac{1}{-\log\left( 1 - \frac{\rho}{2^{k/2}} \right)} \log\left( \frac{3}{M_k} \right) \nonumber\\
\Rightarrow -t_k \log\left( 1 - \frac{\rho}{2^{k/2}} \right) & \geq \log\left( \frac{3}{M_k} \right) \nonumber\\
\Rightarrow t_k \log\left( 1 - \frac{\rho}{2^{k/2}} \right) & \leq \log\left( \frac{M_k}{3} \right) \nonumber\\
\Rightarrow \left( 1 - \frac{\rho}{2^{k/2}} \right)^{t_k} & \leq \frac{M_k}{3} .
\end{align}

By using above inequalities into \eqref{eq:zk+1+zstar_tk}, we obtain ,
\begin{align}
& E\left\Vert z_{k+1,0} - \mathbf{1}z^\star \right\Vert^2 \nonumber\\
& \leq \frac{1}{3} E \left\Vert x_{k,0} - \mathbf{1}x^\star  \right\Vert^2 + \frac{1}{3} E \left\Vert y_{k,0} - \mathbf{1}y^\star  \right\Vert^2  \nonumber\\ & \ + \frac{1}{3} \left( \frac{\sqrt{\delta}}{8mL^2\kappa_f^2 2^k} \left( \left\Vert \nabla_x f(z^\star)) \right\Vert^2 + \left\Vert \nabla_y f(z^\star)) \right\Vert^2 \right)  + \frac{2(1+\delta)^2(C_2 + C_3)}{L^2 2^{k/2}\left( 1- \frac{1}{\sqrt{2}} \right)} \right) + \frac{m(\sigma_x^2 + \sigma^2_y)}{8M_k\rho L^2\kappa_f^2} \frac{1}{2^{k/2}} \nonumber\\
& = \frac{1}{3} E \left\Vert x_{k,0} - \mathbf{1}x^\star  \right\Vert^2 + \frac{1}{3} E \left\Vert y_{k,0} - \mathbf{1}y^\star  \right\Vert^2 + \frac{A_1}{2^k} + \frac{A_2}{2^{k/2}} + \frac{m(\sigma_x^2 + \sigma^2_y)}{8M_k\rho L^2\kappa_f^2} \frac{1}{2^{k/2}} ,
\end{align}
where $A_1 = \frac{\sqrt{\delta}}{24mL^2\kappa_f^2} \left( \left\Vert \nabla_x f(z^\star)) \right\Vert^2 + \left\Vert \nabla_y f(z^\star)) \right\Vert^2 \right) , A_2 = \frac{2(1+\delta)^2(C_2 + C_3)}{3L^2 \left( 1- \frac{1}{\sqrt{2}} \right)}$.
To proceed further, we derive  lower bounds on $M_{x,k}$ and $M_{y,k}$.

\textbf{Lower bound on $M_{x,k}$.} \quad\quad Using \eqref{eq:axk_ub}, we have 
\begin{align}
b_{x,k} & \leq \frac{1}{4\kappa_x \kappa_f} \\
\Rightarrow \frac{b_{x,k}}{4(1+\delta)^2} & \leq \frac{1}{16(1+\delta)^2 \kappa_x \kappa_f} \nonumber\\
\Rightarrow 1 - \frac{b_{x,k}}{4(1+\delta)^2} & \geq 1 - \frac{1}{16(1+\delta)^2 \kappa_x \kappa_f} \nonumber\\
\Rightarrow \frac{1}{1 - \frac{b_{x,k}}{4(1+\delta)^2}} & \leq \frac{1}{1 - \frac{1}{16(1+\delta)^2 \kappa_x \kappa_f}}
\end{align}
We also have
\begin{align}
\frac{\alpha_{x,k}\sqrt{\delta}}{1 - \frac{\gamma_{x,k}\lambda_{\max}(I-W)}{2}} & \leq \frac{\alpha_{x,k}\sqrt{\delta}}{1 - \frac{1}{16(1+\delta)^2 \kappa_x \kappa_f}} \\
& = \frac{\frac{b_{x,k}}{1+\delta}\sqrt{\delta}}{1 - \frac{1}{16(1+\delta)^2 \kappa_x \kappa_f}} = \frac{\sqrt{\delta}b_{x,k}16(1+\delta)^2\kappa_x \kappa_f}{(1+\delta)(16(1+\delta)^2\kappa_x \kappa_f - 1)} \nonumber\\
& = \frac{16\sqrt{\delta}(1+\delta)\kappa_x \kappa_f}{(16(1+\delta)^2\kappa_x \kappa_f - 1)} \frac{1}{4\kappa_x \kappa_f} = \frac{4\sqrt{\delta}(1+\delta)}{(16(1+\delta)^2\kappa_x \kappa_f - 1)} \nonumber\\
& = \frac{4\sqrt{\delta}(1+\delta)}{15(1+\delta)^2\kappa_x \kappa_f + (1+\delta)^2\kappa_x \kappa_f - 1} \nonumber\\
& \leq \frac{4\sqrt{\delta}(1+\delta)}{15(1+\delta)^2\kappa_x \kappa_f } = \frac{4\sqrt{\delta}}{15(1+\delta)\kappa_x \kappa_f } .
\end{align}
Therefore, $M_{x,k}$ is lower bounded by $1 - \frac{4\sqrt{\delta}}{15(1+\delta)\kappa_x \kappa_f }$ because 
\begin{align}
M_{x,k} & = 1 - \frac{\alpha_{x,k}\sqrt{\delta}}{1 - \frac{\gamma_{x,k}\lambda_{\max}(I-W)}{2}} \geq 1 - \frac{4\sqrt{\delta}}{15(1+\delta)\kappa_x \kappa_f } =: \tilde{M} . \label{eq:Mxk_lb}
\end{align}

Consider 
\begin{align}
\frac{m(\sigma_x^2 + \sigma^2_y)}{8M_k\rho L^2\kappa_f^2} \frac{1}{2^{k/2}} & \leq \frac{m(\sigma_x^2 + \sigma^2_y)}{8\tilde{M}\rho L^2\kappa_f^2} \frac{1}{2^{k/2}} \nonumber\\
& =: \frac{A_3}{2^{k/2}} ,
\end{align}
where $A_3 = \frac{m(\sigma_x^2 + \sigma^2_y)}{8\tilde{M}\rho L^2\kappa_f^2} $. On substituting these bounds in \eqref{eq:zk+1+zstar_tk}, we obtain
\begin{align}
E\left\Vert z_{k+1,0} - \mathbf{1}z^\star \right\Vert^2 & \leq \frac{1}{3} E\left\Vert z_{k,0} - \mathbf{1}z^\star \right\Vert^2 + \frac{A_1}{2^k} + \frac{(A_2 + A_3)}{2^{k/2}} \nonumber\\
& = \frac{1}{3} E\left\Vert z_{k,0} - \mathbf{1}z^\star \right\Vert^2 + e_k , \label{eq:exp_zk+1_zstar_rec}
\end{align}
where $e_k = \frac{A_1}{2^k} + \frac{(A_2 + A_3)}{2^{k/2}}$. Using \eqref{eq:exp_zk+1_zstar_rec} recursively yields
\begin{align}
E\left\Vert z_{k+1,0} - \mathbf{1}z^\star \right\Vert^2 & \leq \frac{1}{3^{k+1}} E\left\Vert z_{0} - \mathbf{1}z^\star \right\Vert^2 + \sum_{l = 0}^k \frac{1}{3^{k-l}} e_l \nonumber\\
& = \frac{1}{3^{k+1}} E\left\Vert z_{0} - \mathbf{1}z^\star \right\Vert^2 +  \frac{1}{3^{k}} \sum_{l = 0}^k  3^l \left( \frac{A_1}{2^l} + \frac{(A_2 + A_3)}{2^{l/2}} \right) \nonumber\\
& = \frac{E\left\Vert z_{0} - \mathbf{1}z^\star \right\Vert^2 }{3^{k+1}} + \frac{A_1}{3^k} \sum_{l = 0}^k \left(\frac{3}{2} \right)^l + \frac{(A_2 + A_3)}{3^k} \sum_{l = 0}^k \left(\frac{3}{\sqrt{2}} \right)^l \nonumber\\
& =   \frac{E\left\Vert z_{0} - \mathbf{1}z^\star \right\Vert^2 }{3^{k+1}} + \frac{A_1}{3^k} \left( \frac{(3/2)^{k+1}-1}{\frac{3}{2} - 1} \right) + \frac{(A_2 + A_3)}{3^k} \left( \frac{(3/\sqrt{2})^{k+1}-1}{\frac{3}{\sqrt{2}} - 1} \right) \nonumber\\
& \leq \frac{E\left\Vert z_{0} - \mathbf{1}z^\star \right\Vert^2 }{3^{k+1}} + \frac{2A_1}{3^k} \frac{3^{k+1}}{2^{k+1}} + \frac{(A_2 + A_3)}{3^k} \frac{\sqrt{2}}{3-\sqrt{2}} \frac{3^{k+1}}{\sqrt{2}^{k+1}} \nonumber\\
& = \frac{E\left\Vert z_{0} - \mathbf{1}z^\star \right\Vert^2 }{3^{k+1}} + \frac{3A_1}{2^k} + \frac{3}{3-\sqrt{2}}  \frac{(A_2 + A_3)}{2^{k/2}} \nonumber\\
& \leq \frac{E\left\Vert z_{0} - \mathbf{1}z^\star \right\Vert^2 }{2^{k}} + \frac{3A_1}{2^k} + \frac{3}{3-\sqrt{2}}  \frac{(A_2 + A_3)}{2^{k/2}} \nonumber\\
& = \frac{\left\Vert z_{0} - \mathbf{1}z^\star \right\Vert^2 + 3A_1 }{2^{k}} + \frac{3}{3-\sqrt{2}}  \frac{(A_2 + A_3)}{2^{k/2}} .
\end{align}
By choosing $k = K(\epsilon) = \max \lbrace \log_2 \left( \frac{2\left\Vert z_{0} - \mathbf{1}z^\star \right\Vert^2 + 6A_1 }{\epsilon} \right), 2\log_2\left( \frac{6(A_2 + A_3)}{(3-\sqrt{2})\epsilon} \right) \rbrace$, we obtain
\begin{align}
E\left\Vert z_{K(\epsilon)+1,0} - \mathbf{1}z^\star \right\Vert^2 & \leq \epsilon .
\end{align}
{\red{Above choice of $K(\epsilon)$ involves constants $A_1, A_2, A_3$. We write each term in $K(\epsilon)$ in terms of parameters $m, L, \kappa_f, \kappa_g, \delta$ and $\sigma$ as follows: 
\begin{align}
\text{We have} \ \ \log_2 \left( \frac{2\left\Vert z_{0} - \mathbf{1}z^\star \right\Vert^2 + 6A_1 }{\epsilon} \right) & = \log_2 \left( \frac{2\left\Vert z_{0} - \mathbf{1}z^\star \right\Vert^2  }{\epsilon} + \frac{\sqrt{\delta}(\left\Vert \nabla_x f(z^\star)) \right\Vert^2 + \left\Vert \nabla_y f(z^\star)) \right\Vert^2)}{4mL^2\kappa_f^2 \epsilon}  \right) \nonumber \\
& = \mathcal{O} \left( \log_2 \left( \frac{\left\Vert z_{0} - \mathbf{1}z^\star \right\Vert^2  }{\epsilon} + \frac{\sqrt{\delta}}{mL^2\kappa_f^2\epsilon}  \right) \right) \nonumber 
\end{align}
\begin{align}
\text{We also have} \ \ 2\log_2\left( \frac{6(A_2 + A_3)}{(3-\sqrt{2})\epsilon} \right) & = 2\log_2\left( \frac{6}{(3-\sqrt{2})\epsilon} \left( \frac{2(1+\delta)^2(C_2+C_3)}{3L^2(1-1/\sqrt{2})} + \frac{m\sigma^2}{8\tilde{M}\rho L^2\kappa^2_f} \right) \right) \nonumber \\
& = \mathcal{O}\left(\log_2 \left( \frac{(1+\delta)^2 + m\sigma^2\kappa_g(1+\delta)^2}{L^2 \epsilon} \right) \right) . \nonumber
\end{align}
The last equality follows because $\nicefrac{1}{\rho} = \mathcal{O}\left((1+\delta)^2\kappa_f^2 \kappa_g \right)$ and $\tilde{M} = \mathcal{O}(1)$.}}
 
\subsubsection{Gradient Computation Complexity}
The gradient computation complexity $T_{\text{grad}}(\epsilon)$ is bounded by the following computation
\begin{align}
T_{\text{grad}}(\epsilon) & = \sum_{k = 0}^{K(\epsilon)-1} t_k \nonumber\\
& = \sum_{k = 0}^{K(\epsilon)-1} \frac{1}{-\log\left( 1 - \frac{\rho}{2^{k/2}} \right)} \max \left\lbrace   \log\left( \frac{3M_{x,k} + 6\sqrt{\delta}}{M_k} \right) , \log\left( \frac{3M_{y,k} + 6\sqrt{\delta}}{M_k} \right) , \log\left( \frac{3}{M_k} \right) \right\rbrace \nonumber\\
& \leq \sum_{k = 0}^{K(\epsilon)-1} \frac{1}{-\log\left( 1 - \frac{\rho}{2^{k/2}} \right)} \max \left\lbrace   \log\left( \frac{3 + 6\sqrt{\delta}}{M_k} \right) , \log\left( \frac{3 + 6\sqrt{\delta}}{M_k} \right) , \log\left( \frac{3}{M_k} \right) \right\rbrace \nonumber\\
& {\red{\leq}}  \log\left( \frac{3 + 6\sqrt{\delta}}{\tilde{M}} \right) \sum_{k = 0}^{K(\epsilon)-1} \frac{1}{-\log\left( 1 - \frac{\rho}{2^{k/2}} \right)} \nonumber\\
& \leq \log\left( \frac{3 + 6\sqrt{\delta}}{\tilde{M}} \right) \sum_{k = 0}^{K(\epsilon)-1} \frac{2^{k/2} 5}{\rho} \nonumber\\
& = 5\log\left( \frac{3 + 6\sqrt{\delta}}{\tilde{M}} \right) \frac{ 2^{K(\epsilon)/2} - 1}{\rho(\sqrt{2} - 1)} \nonumber\\
& \leq 5\log\left( \frac{3 + 6\sqrt{\delta}}{\tilde{M}} \right) \frac{ 2^{K(\epsilon)/2}}{\rho(\sqrt{2} - 1)} \nonumber\\
& = \frac{5}{\rho(\sqrt{2} - 1)} \log\left( \frac{3 + 6\sqrt{\delta}}{\tilde{M}} \right) \sqrt{2}^{\max \lbrace \log_2 \left( \frac{2\left\Vert z_{0} - \mathbf{1}z^\star \right\Vert^2 + 6A_1 }{\epsilon} \right), 2\log_2\left( \frac{6(A_2 + A_3)}{(3-\sqrt{2})\epsilon} \right) \rbrace} \nonumber\\ 
& = \frac{5}{\rho(\sqrt{2} - 1)} \log\left( \frac{3 + 6\sqrt{\delta}}{\tilde{M}} \right) \max \left\lbrace  (\sqrt{2})^{\log_2 \left( \frac{2\left\Vert z_{0} - \mathbf{1}z^\star \right\Vert^2 + 6A_1 }{\epsilon} \right)}, (\sqrt{2})^{2\log_2\left( \frac{6(A_2 + A_3)}{(3-\sqrt{2})\epsilon} \right)} \right\rbrace  \nonumber\\
& = \frac{{\red{5}}}{\rho(\sqrt{2} - 1)} \log\left( \frac{3 + 6\sqrt{\delta}}{\tilde{M}} \right) \max \left\lbrace  \sqrt{\frac{2\left\Vert z_{0} - \mathbf{1}z^\star \right\Vert^2 + 6A_1 }{\epsilon}}, \frac{6(A_2 + A_3)}{(3-\sqrt{2})\epsilon} \right\rbrace  \nonumber\\
& =  \frac{5}{\sqrt{2} - 1} \log\left( \frac{3 + 6\sqrt{\delta}}{\tilde{M}} \right) \nonumber\\ & \ \ \times \left( \min \left\lbrace \left(1 - \frac{1}{\sqrt{2}} \right)\frac{1}{8\kappa_f^2}, \left(1 - \frac{1}{\sqrt{2}} \right) \frac{1}{16(1+\delta)^2} \frac{1}{\kappa_f^2 \kappa_g} , \frac{1}{1+\delta} \left(1 - \frac{1}{\sqrt{2}} \right) \frac{1}{4\kappa_f^2} \right\rbrace \right)^{-1} \times \nonumber\\ & \max \left\lbrace  \sqrt{\frac{2\left\Vert z_{0} - \mathbf{1}z^\star \right\Vert^2}{\epsilon} + \frac{\sqrt{\delta} \left( \left\Vert \nabla_x f(z^\star)) \right\Vert^2 + \left\Vert \nabla_y f(z^\star)) \right\Vert^2 \right)}{4mL^2\kappa_f^2 \epsilon} } , \mathcal{O} \left( \left( \frac{(1+\delta)^2(C_2 + C_3)}{L^2} + \frac{m\sigma^2}{\tilde{M}\rho L^2\kappa_f^2}  \right) \frac{1}{\epsilon} \right) \right\rbrace \nonumber\\
& = {\red{\mathcal{O} \left( \max \left\lbrace  \frac{(1+\delta)^2\left\Vert z_0 - \mathbf{1}z^\star \right\Vert \kappa_f^2 \kappa_g }{\sqrt{\epsilon}} + \frac{(1+\delta)^2\delta^{1/4}\kappa_f\kappa_g}{\sqrt{m}L\sqrt{\epsilon}} , \frac{(1+\delta)^4(\kappa_f^2 \kappa_g +m\sigma^2\kappa_f^2\kappa_g^2 )}{L^2\epsilon}  \right\rbrace  \right)}} .
\end{align}
Notice that above complexity does not contain $1/\tilde{M}$ term because $\tilde{M} \in \left[ \frac{11}{15}, 1 \right] $.
\newpage

\subsubsection{Communication Complexity}
We finish the proof of Theorem~\ref{thm:complexity_general_stoch} by computing the communication complexity as follows.
\begin{align}
T_{\text{comm}}(\epsilon) & = \sum_{k = 0}^{K(\epsilon)-1} (t_k + 1) \nonumber \\
& = T_{\text{grad}}(\epsilon) + K(\epsilon) \nonumber\\
& = \mathcal{O} \Bigg ( \max \left\lbrace  \frac{(1+\delta)^2(\left\Vert z_0 - \mathbf{1}z^\star \right\Vert \kappa_f^2 \kappa_g + \kappa_f\kappa_g)}{\sqrt{\epsilon}} , \frac{(1+\delta)^4\kappa_f^2 \kappa_g}{L^2\epsilon}  + \frac{(1+\delta)^4 \sigma^2\kappa_f^2\kappa_g^2}{L^2 \epsilon} \right\rbrace \nonumber\\ & \ \ + \log_2 \left( \frac{(1+\delta)^2}{L^2\sqrt{\delta} \epsilon} + \frac{m\sigma^2}{\kappa_f^4\kappa_g(1+\delta)^2L^2\epsilon} \right)  \Bigg ) .
\end{align}

\subsection{Algorithm \ref{alg:CRDPGD_known_mu} behavior in deterministic setting} \label{CRDPSG_behavior_in_deterministic_setting}
In this section, we briefly discuss that Algorithm \ref{alg:CRDPGD_known_mu} converges to the saddle point solution with linear rates when $\sigma_x = \sigma_y = 0$, where $\sigma_x$ and $\sigma_y$ are the bounds on the variances of stochastic gradients of GSGO  (see Assumption \ref{assumption_unbiased_grad}). Recall the recursive relation in Lemma \ref{lem:Phi_kt_recursion}:
\begin{align*}
	E\left[ \Phi_{k,t+1} \right] & \leq \rho_k E\left[ \Phi_{k,t} \right] + 2ms^2_k(\sigma_x^2 + \sigma^2_y) .
\end{align*}  
On substituting $k = 0$ and $\sigma_x = \sigma_y = 0$ in above inequality, we obtain
\begin{align}
	E\left[ \Phi_{0,t+1} \right] & \leq \rho_0 E\left[ \Phi_{0,t} \right],
\end{align}
where $\rho_0$ is defined in \eqref{eq:rho_k_def}. By unrolling above recursion in $t$, we get
\begin{align}
	E\left[ \Phi_{0,t+1} \right] & \leq \rho^{t+1}_0  \Phi_{0,0} .
\end{align}
Note that $M_{x,0} \left\Vert x_{0,t} - \mathbf{1}x^\star  \right\Vert^2 + M_{y,0} \left\Vert y_{0,t} - \mathbf{1}y^\star  \right\Vert^2  \leq \Phi_{0,t}$ from \eqref{phi_kt_defn}. Therefore, $
E\left\Vert x_{0,t} - \mathbf{1}x^\star  \right\Vert^2 +  E\left\Vert y_{0,t} - \mathbf{1}y^\star  \right\Vert^2  \leq \frac{E\left[ \Phi_{0,t} \right]}{\min\{M_{x,0},M_{y,0}\}} $. Under the above settings, Algorithm \ref{alg:CRDPGD_known_mu} needs $T_{grad}(\epsilon) = \frac{1}{\log(1/\rho_0)}\log\left(\frac{\Phi_{0,0}}{\epsilon\min\{M_{x,0},M_{y,0}\}}\right)$ gradient computations and communications to achieve $E\left\Vert x_{0,T_{grad}(\epsilon)} - \mathbf{1}x^\star  \right\Vert^2 +  E\left\Vert y_{0,T_{grad}(\epsilon)} - \mathbf{1}y^\star  \right\Vert^2 \leq \epsilon$. We now write $T_{grad}(\epsilon)$ in terms of $\kappa_f, \kappa_g$ and $\delta$. Using \eqref{rho_k_ub_rho}, we have $\rho_0 \leq  1 - \rho$, where $\rho$ is defined in \eqref{rho}. Notice that $\frac{1}{\log(1/\rho_0)} \leq \frac{1}{-\log(1-\rho)} \leq \frac{5}{\rho}$. Therefore,
\begin{align*}
	T_{grad}(\epsilon) & = 5\left( \min \left\lbrace \left(1 - \frac{1}{\sqrt{2}} \right)\frac{1}{8\kappa_f^2}, \left(1 - \frac{1}{\sqrt{2}} \right) \frac{1}{16(1+\delta)^2} \frac{1}{\kappa_f^2 \kappa_g} , \frac{1}{1+\delta} \left(1 - \frac{1}{\sqrt{2}} \right) \frac{1}{4\kappa_f^2} \right\rbrace \right)^{-1} \\ & \ \ \ \ \ \ \ \log\left(\frac{\Phi_{0,0}}{\epsilon\min\{M_{x,0},M_{y,0}\}}\right) \\
	& = \mathcal{O}\left( \max \lbrace 8\kappa_f^2, 16(1+\delta)^2\kappa_f^2 \kappa_g, 4(1+\delta)\kappa_f^2 \rbrace \log\left(\frac{\Phi_{0,0}}{\epsilon}\right)\right).
\end{align*}

\section{Proofs for the Finite Sum Setting}
\label{appendix_finite_sum}

In this section, we prove all results related to convergence analysis of Algorithm \ref{alg:svrg_known_mu} in  Section~\ref{sec:finitesum}.

For theoretical analysis of Algorithm \ref{alg:svrg_known_mu}, we make the following smoothness assumptions on $f_{ij}$ particular to the finite sum setting. 
\begin{assumption} \label{smoothness_x_svrg} Assume that each $f_{ij}(x,y)$ is $L_{xx}$ smooth in $x$; for every fixed $y$, $\left\Vert \nabla_x f_{ij}(x_1,y) - \nabla_x f_{ij}(x_2,y) \right\Vert$$\leq$$L_{xx}\left\Vert x_1 - x_2 \right\Vert$, $\forall x_1,x_2$  $\in \mathbb{R}^{d_x}$
\end{assumption}

\begin{assumption} \label{smoothness_y_svrg} Assume that each $-f_{ij}(x,y)$ is $L_{yy}$ smooth in $y$ i.e., for every fixed $x$, $\left\Vert -\nabla_y f_{ij}(x,y_1) + \nabla_y f_{ij}(x,y_2) \right\Vert$$\leq$$L_{yy}\left\Vert y_1 - y_2 \right\Vert$, $\forall  y_1, y_2$  $\in \mathbb{R}^{d_y}$.
\end{assumption}

\begin{assumption} \label{lipschitz_xy_svrg} Assume that each $\nabla_x f_{ij}(x,y)$ is $L_{xy}$ Lipschitz in $y$ i.e., for every fixed $x$, $\left\Vert \nabla_x f_{ij}(x,y_1) - \nabla_x f_{ij}(x,y_2) \right\Vert$$\leq$$L_{xy}\left\Vert y_1 - y_2 \right\Vert,$  $\forall  y_1, y_2$  $\in \mathbb{R}^{d_y}$.
	
\end{assumption}
\begin{assumption} \label{lipschitz_yx_svrg} Assume that each $\nabla_y f_{ij}(x,y)$ is $L_{yx}$ Lipschitz in $x$ i.e., for every fixed $y$, $\left\Vert \nabla_y f_{ij}(x_1,y) - \nabla_y f_{ij}(x_2,y) \right\Vert$$\leq$$L_{yx}\left\Vert x_1 - x_2 \right\Vert$,  $\forall  x_1, x_2$  $\in \mathbb{R}^{d_x}$.
	%
\end{assumption}

We begin with few intermediate results which will help us in getting the final convergence result. 

\begin{lemma} \label{lem:exp_xkt_ykt_svrg} Let $\{x_{t} \}_t, \{y_{t}\}_t$ be the sequences generated by Algorithm \ref{alg:svrg_known_mu} with $\mathcal{G}^x_{t}$ and $ \mathcal{G}^y_{t}$ obtained from SVRGO. Then, under Assumptions \ref{s_convexity_assumption}-\ref{s_concavity_assumption} and Assumptions \ref{smoothness_x_svrg}-\ref{lipschitz_yx_svrg}, the following holds for all $t \geq 1$:
\begin{align}
& E \left\Vert x_{t} - \mathbf{1}x^\star - s\mathcal{G}^x_{t} + s\nabla_x F(\mathbf{1}x^\star,\mathbf{1}y^\star)  \right\Vert^2 + E \left\Vert y_{t} - \mathbf{1}y^\star + s\mathcal{G}^y_{t} - s\nabla_y F(\mathbf{1}x^\star,\mathbf{1}y^\star)  \right\Vert^2 \nonumber \\
& \leq  \left( 1-\mu_x s + \frac{4s^2L^2_{yx}}{np_{\min}} \right)\left\Vert x_{t} - \mathbf{1}x^\star \right\Vert^2 + \left(1-s\mu_y + \frac{4s^2L^2_{xy}}{np_{\min}} \right) \left\Vert y_{t} - \mathbf{1}y^\star  \right\Vert^2 \nonumber \\ & \ - \left( 2s - \frac{8s^2L_{xx}}{np_{\min}} \right) \sum_{i = 1}^m  V_{f_i,y^i_{t}}(x^{\star},x^i_{t}) - \left( 2s - \frac{8s^2L_{yy}}{np_{\min}} \right) \sum_{i = 1}^m V_{-f_i,x^i_{t}}(y^\star,y^i_{t}) \nonumber \\ & \ + \frac{4s^2(L^2_{xx} + L^2_{yx})}{np_{\min}} \left\Vert \tilde{x}_{t} - \mathbf{1}x^\star \right\Vert^2 + \frac{4s^2(L^2_{yy} + L^2_{xy})}{np_{\min}} \left\Vert \tilde{y}_{t} - \mathbf{1}y^\star \right\Vert^2 , \label{eq:exp_xkt_ykt_vfi_svrg}
\end{align}
where $p_{\min} := \min_{i,j} \{p_{ij} \}$.
\end{lemma}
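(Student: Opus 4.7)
The plan is to follow the template of Lemma \ref{lem:exp_ykt_ystar_xkt_xstar} from the GSGO analysis, but substitute a reduced-variance bound for the SVRG oracle in place of the constant $\sigma_x^2,\sigma_y^2$ budget. First, using unbiasedness $E[\mathcal{G}^{i,x}\mid z_t,\tilde z_t]=\nabla_x f_i(z_t^i)$, I would expand
\[
E\|x_t-\mathbf{1}x^\star-s\mathcal{G}^x_t+s\nabla_x F(\mathbf{1}z^\star)\|^2 = \|x_t-\mathbf{1}x^\star\|^2 + 2s\langle x_t-\mathbf{1}x^\star,\nabla_x F(\mathbf{1}z^\star)-\nabla_x F(z_t)\rangle + s^2 E\|\mathcal{G}^x_t-\nabla_x F(\mathbf{1}z^\star)\|^2.
\]
For the inner-product term I apply $\mu_x$-strong convexity of $f_i(\cdot,y_t^i)$ at $x^\star$ together with convexity at $x_t^i$, reproducing verbatim \eqref{eq:grad_zkt_inn_Vfi}--\eqref{eq:inner_prod_grad_zkt_zstar}, to obtain $F(x_t,\mathbf{1}y^\star)-F(\mathbf{1}z^\star)+F(\mathbf{1}x^\star,y_t)-F(z_t)-\tfrac{\mu_x}{2}\|x_t-\mathbf{1}x^\star\|^2-\sum_i V_{f_i,y_t^i}(x^\star,x_t^i)$; the scalar $F$-differences will cancel against their counterparts from the symmetric $y$-block at the end.

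The core step is the variance bound for the SVRG oracle. Writing $\phi_{il}=\nabla_x f_{il}(z_t^i)-\nabla_x f_{il}(z^\star)$ and $\tilde\phi_{il}=\nabla_x f_{il}(\tilde z_t^i)-\nabla_x f_{il}(z^\star)$, I decompose
\[
\mathcal{G}^{i,x}-\nabla_x f_i(z^\star) = \tfrac{1}{np_{il}}\phi_{il} \;-\; \Bigl(\tfrac{1}{np_{il}}\tilde\phi_{il} - E\bigl[\tfrac{1}{np_{il}}\tilde\phi_{il}\bigr]\Bigr),
\]
where the second bracket has zero conditional mean. Applying $\|a-b\|^2\le 2\|a\|^2+2\|b\|^2$ followed by the variance bound $\mathrm{Var}(Y)\le E\|Y\|^2$ gives $E\|\mathcal{G}^{i,x}-\nabla_x f_i(z^\star)\|^2 \le 2E\|(np_{il})^{-1}\phi_{il}\|^2 + 2E\|(np_{il})^{-1}\tilde\phi_{il}\|^2$. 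The probability-weighted sum $\sum_l p_{il}(np_{il})^{-2}\le (np_{\min})^{-1}\cdot n^{-1}\sum_l 1$ extracts the $(np_{\min})^{-1}$ prefactor. Splitting $\phi_{il}$ via $[\nabla_x f_{il}(x_t^i,y_t^i)-\nabla_x f_{il}(x^\star,y_t^i)] + [\nabla_x f_{il}(x^\star,y_t^i)-\nabla_x f_{il}(x^\star,y^\star)]$, then invoking Proposition~\ref{prop:smoothness} on the first bracket and Assumption~\ref{lipschitz_xy_svrg} on the second, gives $\tfrac{1}{n}\sum_l\|\phi_{il}\|^2\le 4L_{xx}V_{f_i,y_t^i}(x^\star,x_t^i)+2L_{xy}^2\|y_t^i-y^\star\|^2$, where I use $\tfrac{1}{n}\sum_l V_{f_{il},y}=V_{f_i,y}$. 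An identical split for $\tilde\phi_{il}$ around $(x^\star,\tilde y_t^i)$ produces $2L_{xx}^2\|\tilde x_t^i-x^\star\|^2+2L_{xy}^2\|\tilde y_t^i-y^\star\|^2$. Summing over $i$ yields the clean bound with coefficients $\tfrac{8s^2 L_{xx}}{np_{\min}}$, $\tfrac{4s^2 L_{xy}^2}{np_{\min}}$, $\tfrac{4s^2 L_{xx}^2}{np_{\min}}$, $\tfrac{4s^2 L_{xy}^2}{np_{\min}}$ on the Bregman, $\|y_t-\mathbf{1}y^\star\|^2$, $\|\tilde x_t-\mathbf{1}x^\star\|^2$, $\|\tilde y_t-\mathbf{1}y^\star\|^2$ pieces respectively.

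Repeating the fully symmetric calculation for the $y$-block (strong concavity of $-f_i(x,\cdot)$ and Assumptions~\ref{smoothness_y_svrg},~\ref{lipschitz_yx_svrg}) and adding the two inequalities produces \eqref{eq:exp_xkt_ykt_vfi_svrg}: the four $F(\cdot,\cdot)$ terms cancel exactly, the $-2s\sum V$ from the inner-product step combines with the $\tfrac{8s^2 L_{xx}}{np_{\min}}\sum V$ from the variance step to give $-(2s-\tfrac{8s^2 L_{xx}}{np_{\min}})$, and the reference-point coefficients combine as $(L_{xx}^2+L_{yx}^2)$ and $(L_{yy}^2+L_{xy}^2)$ because $\tilde x_t$ enters $\mathcal{G}^{i,x}$ through $L_{xx}$ and $\mathcal{G}^{i,y}$ through $L_{yx}$. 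The main obstacle, and the reason the particular decomposition matters, is that a naive mean-plus-variance split $E\|\mathcal{G}^{i,x}-\nabla_x f_i(z^\star)\|^2=\|\nabla_x f_i(z_t^i)-\nabla_x f_i(z^\star)\|^2+\mathrm{Var}(\mathcal{G}^{i,x})$ would produce an extra $O(s^2 L_{xx})\sum V$ piece with no $(np_{\min})^{-1}$ dampening, breaking the $(np_{\min})^{-1}$ scaling that ultimately drives the linear rate in Theorem~\ref{thm:svrg_complexity}; only by folding the non-stochastic gradient difference into the combined estimator-minus-anchor expression do the constants line up correctly.
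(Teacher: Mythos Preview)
Your proposal is correct and follows essentially the same route as the paper's proof: the same expansion via unbiasedness of $\mathcal{G}^{i,x}$, the same inner-product bound producing the $F$-differences and $-2s\sum_i V_{f_i}$, the same decomposition $\mathcal{G}^{i,x}-\nabla_x f_i(z^\star)=\tfrac{1}{np_{il}}\phi_{il}-\bigl(\tfrac{1}{np_{il}}\tilde\phi_{il}-E[\tfrac{1}{np_{il}}\tilde\phi_{il}]\bigr)$ followed by $\|a-b\|^2\le 2\|a\|^2+2\|b\|^2$ and $\mathrm{Var}(Y)\le E\|Y\|^2$, and the same smoothness/Lipschitz splits for $\phi_{il}$ and $\tilde\phi_{il}$. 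The only organizational difference is that the paper first leaves the raw terms $\sum_{i,j}\|\nabla_x f_{ij}(z_t^i)-\nabla_x f_{ij}(z^\star)\|^2$ and $\sum_{i,j}\|\nabla_y f_{ij}(z_t^i)-\nabla_y f_{ij}(z^\star)\|^2$ in place, adds the $x$- and $y$-block inequalities, and \emph{then} applies the Bregman/Lipschitz bounds jointly (Section~\ref{sec:finish_proof_xtk_ykt_svrg}), whereas you apply the bounds inside each block before adding; the resulting constants are identical.
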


\subsection{Proof of Lemma \ref{lem:exp_xkt_ykt_svrg}} \label{proof_lem_exp_xkt_ykt_svrg}
We begin the proof by bounding the primal ($x$) and dual ($y$) updates on the l.h.s. of~\eqref{eq:exp_xkt_ykt_vfi_svrg} separately. In particular, we show that
\begin{align}
& E \left\Vert x_{t} - \mathbf{1}x^\star - s\mathcal{G}^x_{t} + s\nabla_x F(\mathbf{1}x^\star,\mathbf{1}y^\star)  \right\Vert^2 \nonumber\\
& \leq \left( 1-\mu_x s \right)\left\Vert x_{t} - \mathbf{1}x^\star \right\Vert^2 \nonumber\\ & \ + \frac{2s^2}{n^2p_{\min}} \sum_{i = 1}^m \sum_{j = 1}^n  \left\Vert \nabla_x f_{ij}(z^i_{t}) - \nabla_x f_{ij}(z^\star) \right\Vert^2 + \frac{2s^2}{n^2p_{\min}} \sum_{i = 1}^m  \sum_{j = 1}^n  \left\Vert \nabla_x f_{ij}(\tilde{z}^i_{t}) - \nabla_x f_{ij}(z^\star) \right\Vert^2   \nonumber\\ & \ - 2s \sum_{i = 1}^m  V_{f_i,y^i_{t}}(x^{\star},x^i_{t}) + 2s \left( F(\mathbf{1}x^\star,y_{t}) - F(z_{t}) + F(x_{t},\mathbf{1}y^\star) - F(\mathbf{1}z^\star) \right)  \label{eq:exp_xkt_xstar_svrg_final}
\end{align}
and 
\begin{align}
& E \left\Vert y_{t} - \mathbf{1}y^\star + s\mathcal{G}^y_{t} - s\nabla_y F(\mathbf{1}x^\star,\mathbf{1}y^\star)  \right\Vert^2 \nonumber\\
& \leq (1-s\mu_y)\left\Vert y_{t} - \mathbf{1}y^\star  \right\Vert^2  +2s \left( -F(x_{t},\mathbf{1}y^\star) + F(\mathbf{1}z^\star) - F(\mathbf{1}x^\star,y_{t}) + F(z_{t}) \right) - 2s\sum_{i = 1}^m V_{-f_i,x^i_{t}}(y^\star,y^i_{t}) \nonumber\\ & \ + \frac{2s^2}{n^2p_{\min}} \sum_{i = 1}^m \sum_{j = 1}^n  \left\Vert \nabla_y f_{ij}(z^i_{t}) - \nabla_y f_{ij}(z^\star) \right\Vert^2 + \frac{2s^2}{n^2p_{\min}} \sum_{i = 1}^m  \sum_{j = 1}^n  \left\Vert \nabla_y f_{ij}(\tilde{z}^i_{t}) - \nabla_y f_{ij}(z^\star) \right\Vert^2 . \label{eq:exp_ykt_ystar_svrg_final}
\end{align}

Observe that~\eqref{eq:exp_xkt_xstar_svrg_final} and~\eqref{eq:exp_ykt_ystar_svrg_final} are similar, and we only prove~\eqref{eq:exp_xkt_xstar_svrg_final} in Section~\ref{sec:exp_xkt} below. Adding \eqref{eq:exp_xkt_xstar_svrg_final} and \eqref{eq:exp_ykt_ystar_svrg_final}, we obtain
\begin{align}
& E \left\Vert x_{t} - \mathbf{1}x^\star - s\mathcal{G}^x_{t} + s\nabla_x F(\mathbf{1}x^\star,\mathbf{1}y^\star)  \right\Vert^2 + E \left\Vert y_{t} - \mathbf{1}y^\star + s\mathcal{G}^y_{t} - s\nabla_y F(\mathbf{1}x^\star,\mathbf{1}y^\star)  \right\Vert^2 \nonumber\\
& \leq \left( 1-\mu_x s \right)\left\Vert x_{t} - \mathbf{1}x^\star \right\Vert^2 + (1-s\mu_y)\left\Vert y_{t} - \mathbf{1}y^\star  \right\Vert^2  \\ & \ - 2s \sum_{i = 1}^m  V_{f_i,y^i_{t}}(x^{\star},x^i_{t}) - 2s\sum_{i = 1}^m V_{-f_i,x^i_{t}}(y^\star,y^i_{t}) \nonumber\\ & \ + \frac{2s^2}{n^2p_{\min}} \sum_{i = 1}^m \sum_{j = 1}^n \left( \left\Vert \nabla_x f_{ij}(z^i_{t}) - \nabla_x f_{ij}(z^\star) \right\Vert^2 + \left\Vert \nabla_y f_{ij}(z^i_{t}) - \nabla_y f_{ij}(z^\star) \right\Vert^2 \right) \nonumber\\ & \ + \frac{2s^2}{n^2p_{\min}} \sum_{i = 1}^m  \sum_{j = 1}^n \left( \left\Vert \nabla_x f_{ij}(\tilde{z}^i_{t}) - \nabla_x f_{ij}(z^\star) \right\Vert^2 + \left\Vert \nabla_y f_{ij}(\tilde{z}^i_{t}) - \nabla_y f_{ij}(z^\star) \right\Vert^2 \right) . \label{eq:exp_xkt_ykt_grad_diff_svrg}
\end{align}
To finish the proof of Lemma~\ref{lem:exp_xkt_ykt_svrg}, we bound the last two terms of~\eqref{eq:exp_xkt_ykt_grad_diff_svrg} as shown in Section~\ref{sec:finish_proof_xtk_ykt_svrg}.
\subsubsection{Proof of~(\ref{eq:exp_xkt_xstar_svrg_final})}\label{sec:exp_xkt}
First, consider the primal update term

\begin{align}
& E \left\Vert x_{t} - \mathbf{1}x^\star - s\mathcal{G}^x_{t} + s\nabla_x F(\mathbf{1}x^\star,\mathbf{1}y^\star)  \right\Vert^2 \nonumber\\
& = \sum_{i = 1}^m E \left\Vert x^i_{t} - x^\star - s\mathcal{G}^{i,x}_{t} + s\nabla_x f_i(x^\star,y^\star)  \right\Vert^2 \nonumber\\
& = \sum_{i = 1}^m \left\Vert x^i_{t} - x^\star  \right\Vert^2 + s^2 \sum_{i = 1}^mE \left\Vert \mathcal{G}^{i,x}_{t} - \nabla_x f_i(x^\star,y^\star) \right\Vert^2 \nonumber\\ & \ \ - 2s \sum_{i = 1}^m E \left\langle x^i_{t} - x^\star, \mathcal{G}^{i,x}_{t} - \nabla_x f_i(x^\star,y^\star) \right\rangle\nonumber \\
& = \sum_{i = 1}^m \left\Vert x^i_{t} - x^\star  \right\Vert^2 + s^2 \sum_{i = 1}^mE \left\Vert \frac{1}{np_{il}} \left( \nabla_x f_{il}(z^i_{t}) - \nabla_x f_{il}(\tilde{z}^i_{t}) \right) + \nabla_x f_i(\tilde{z}^i_{t}) - \nabla_x f_i(x^\star,y^\star) \right\Vert^2 \nonumber\\ & \ \ - 2s \sum_{i = 1}^m E \left\langle x^i_{t} - x^\star, \frac{1}{np_{il}} \left( \nabla_x f_{il}(z^i_{t}) - \nabla_x f_{il}(\tilde{z}^i_{t}) \right) + \nabla_x f_i(\tilde{z}^i_{t}) - \nabla_x f_i(x^\star,y^\star) \right\rangle . \label{eq:exp_xkt_xstar_svrg}
\end{align}
Observe that
\begin{align}
& E \left[  \frac{1}{np_{il}} \left( \nabla_x f_{il}(z^i_{t}) - \nabla_x f_{il}(\tilde{z}^i_{t}) \right) + \nabla_x f_i(\tilde{z}^i_{t}) - \nabla_x f_i(x^\star,y^\star) \right] \nonumber\\
& = \sum_{l = 1}^n \frac{ \nabla_x f_{il}(z^i_{t}) - \nabla_x f_{il}(\tilde{z}^i_{t}) }{np_{il}} \times p_{il} + \nabla_x f_i(\tilde{z}^i_{t}) - \nabla_x f_i(x^\star,y^\star) \nonumber\\
& = \frac{1}{n} \sum_{l = 1}^n \nabla_x f_{il}(z^i_{t}) - \frac{1}{n} \sum_{l = 1}^n \nabla_x f_{il}(\tilde{z}^i_{t}) + \nabla_x f_i(\tilde{z}^i_{t}) - \nabla_x f_i(z^\star) \nonumber\\
& = \nabla_x f_{i}(z^i_{t}) - \nabla_x f_{i}(\tilde{z}^i_{t}) + \nabla_x f_i(\tilde{z}^i_{t}) - \nabla_x f_i(z^\star) \nonumber\\
& = \nabla_x f_{i}(z^i_{t}) - \nabla_x f_i(z^\star) ,
\end{align}
where the first equality and second last equality follows respectively from step $(1)$ of SVRGO and definition of $f_i(x,y)$. Substituting the above in the last term of~\eqref{eq:exp_xkt_xstar_svrg}, we see that 
\begin{align}
& E \left\Vert x_{t} - \mathbf{1}x^\star - s\mathcal{G}^x_{t} + s\nabla_x F(\mathbf{1}x^\star,\mathbf{1}y^\star)  \right\Vert^2 \nonumber\\
& \leq \sum_{i = 1}^m \left\Vert x^i_{t} - x^\star  \right\Vert^2 + s^2 \sum_{i = 1}^mE \left\Vert \frac{1}{np_{il}} \left( \nabla_x f_{il}(z^i_{t}) - \nabla_x f_{il}(\tilde{z}^i_{t}) \right) + \nabla_x f_i(\tilde{z}^i_{t}) - \nabla_x f_i(x^\star,y^\star) \right\Vert^2 \nonumber\\ & \ \ - 2s \sum_{i = 1}^m  \left\langle x^i_{t} - x^\star, \nabla_x f_{i}(z^i_{t}) - \nabla_x f_i(z^\star) \right\rangle . \label{eq:exp_xkt_xstar_svrg_mid}
\end{align}
Substituting \eqref{eq:grad_zkt_inn_Vfi} (i.e., Bregman distance) and \eqref{eq:grad_zstar_inn_mu_x} (i.e., strong convexity of $f$) 
in \eqref{eq:exp_xkt_xstar_svrg_mid}, we obtain
\begin{align}
& E \left\Vert x_{t} - \mathbf{1}x^\star - s\mathcal{G}^x_{t} + s\nabla_x F(\mathbf{1}x^\star,\mathbf{1}y^\star)  \right\Vert^2 \nonumber\\
& \leq \sum_{i = 1}^m \left\Vert x^i_{t} - x^\star  \right\Vert^2 + s^2 \sum_{i = 1}^mE \left\Vert \frac{1}{np_{il}} \left( \nabla_x f_{il}(z^i_{t}) - \nabla_x f_{il}(\tilde{z}^i_{t}) \right) + \nabla_x f_i(\tilde{z}^i_{t}) - \nabla_x f_i(x^\star,y^\star) \right\Vert^2 \nonumber \\ & \ \ - 2s \sum_{i = 1}^m  \left( -f_i(x^\star,y^i_{t}) + f_i(z^i_{t}) + V_{f_i,y^i_{t}}(x^\star,x^i_{t}) \right) + 2s \sum_{i = 1}^m \left( f_i(x^i_{t},y^\star) - f_i(z^\star) - \frac{\mu_x}{2} \left\Vert x^i_{t} - x^\star \right\Vert^2 \right) \nonumber\\
& = \sum_{i = 1}^m \left\Vert x^i_{t} - x^\star  \right\Vert^2 + s^2 \sum_{i = 1}^mE \left\Vert \frac{1}{np_{il}} \left( \nabla_x f_{il}(z^i_{t}) - \nabla_x f_{il}(\tilde{z}^i_{t}) \right) + \nabla_x f_i(\tilde{z}^i_{t}) - \nabla_x f_i(x^\star,y^\star) \right\Vert^2 \nonumber \\ & \ \ + 2s( F(\mathbf{1}x^\star,y_{t}) - F(z_{t})) - 2s \sum_{i = 1}^m  V_{f_i,y^i_{t}}(x^{\star},x^i_{t}) + 2s \left( F(x_{t},\mathbf{1}y^\star) - F(\mathbf{1}z^\star) \right) - s\mu_x \left\Vert x_{t} - \mathbf{1}x^\star \right\Vert^2 \nonumber\\
& = \left( 1-\mu_x s \right)\left\Vert x_{t} - \mathbf{1}x^\star \right\Vert^2 + s^2 \sum_{i = 1}^mE \left\Vert \frac{1}{np_{il}} \left( \nabla_x f_{il}(z^i_{t}) - \nabla_x f_{il}(\tilde{z}^i_{t}) \right) + \nabla_x f_i(\tilde{z}^i_{t}) - \nabla_x f_i(x^\star,y^\star) \right\Vert^2 \nonumber \\ & \ - 2s \sum_{i = 1}^m  V_{f_i,y^i_{t}}(x^{\star},x^i_{t}) + 2s \left( F(\mathbf{1}x^\star,y_{t}) - F(z_{t}) + F(x_{t},\mathbf{1}y^\star) - F(\mathbf{1}z^\star) \right) . \label{eq:exp_xkt_xstar_svrg_F}
\end{align}
Now we bound the second term on the r.h.s. of~\eqref{eq:exp_xkt_xstar_svrg_F} in terms of $\left\Vert x_{t} - \mathbf{1}x^\star \right\Vert^2$ and $\left\Vert y_{t} - \mathbf{1}y^\star \right\Vert^2$ as follows:
\begin{align}
& s^2 \sum_{i = 1}^mE \left\Vert \frac{1}{np_{il}} \left( \nabla_x f_{il}(z^i_{t}) - \nabla_x f_{il}(\tilde{z}^i_{t}) \right) + \nabla_x f_i(\tilde{z}^i_{t}) - \nabla_x f_i(x^\star,y^\star) \right\Vert^2 \nonumber\\
& = s^2 \sum_{i = 1}^m \sum_{j = 1}^n p_{ij} \left\Vert \frac{1}{np_{ij}} \left( \nabla_x f_{ij}(z^i_{t}) - \nabla_x f_{ij}(\tilde{z}^i_{t}) \right) + \nabla_x f_i(\tilde{z}^i_{t}) - \nabla_x f_i(x^\star,y^\star) \right\Vert^2 \nonumber\\
& = s^2 \sum_{i = 1}^m \sum_{j = 1}^n p_{ij} \left\Vert \frac{\nabla_x f_{ij}(z^i_{t}) - \nabla_x f_{ij}(z^\star)}{np_{ij}} + \frac{\nabla_x f_{ij}(z^\star) - \nabla_x f_{ij}(\tilde{z}^i_{t})}{np_{ij}} + \nabla_x f_i(\tilde{z}^i_{t}) - \nabla_x f_i(z^\star) \right\Vert^2 \nonumber\\
& \leq 2s^2 \sum_{i = 1}^m \sum_{j = 1}^n \frac{p_{ij}}{n^2p^2_{ij}} \left\Vert \nabla_x f_{ij}(z^i_{t}) - \nabla_x f_{ij}(z^\star) \right\Vert^2 \nonumber\\ & \ + 2s^2 \sum_{i = 1}^m \sum_{j = 1}^n p_{ij} \left\Vert \frac{\nabla_x f_{ij}(z^\star) - \nabla_x f_{ij}(\tilde{z}^i_{t})}{np_{ij}} + \nabla_x f_i(\tilde{z}^i_{t}) - \nabla_x f_i(z^\star) \right\Vert^2 \nonumber\\
& = \frac{2s^2}{n^2} \sum_{i = 1}^m \sum_{j = 1}^n \frac{1}{p_{ij}} \left\Vert \nabla_x f_{ij}(z^i_{t}) - \nabla_x f_{ij}(z^\star) \right\Vert^2 \nonumber\\ & \ + 2s^2 \sum_{i = 1}^m \sum_{j = 1}^n p_{ij} \left\Vert \frac{  \nabla_x f_{ij}(\tilde{z}^i_{t}) - \nabla_x f_{ij}(z^\star)}{np_{ij}} -( \nabla_x f_i(\tilde{z}^i_{t}) - \nabla_x f_i(z^\star)) \right\Vert^2 \nonumber\\
& \leq \frac{2s^2}{n^2p_{\min}} \sum_{i = 1}^m \sum_{j = 1}^n  \left\Vert \nabla_x f_{ij}(z^i_{t}) - \nabla_x f_{ij}(z^\star) \right\Vert^2 \nonumber\\ & \ + 2s^2 \sum_{i = 1}^m E \left\Vert \frac{  \nabla_x f_{ij}(\tilde{z}^i_{t}) - \nabla_x f_{ij}(z^\star)}{np_{ij}} -( \nabla_x f_i(\tilde{z}^i_{t}) - \nabla_x f_i(z^\star)) \right\Vert^2 , \label{eq:exp_grad_diff_svrg_pmin}
\end{align}
where $p_{\min} = \min_{i,j} \{p_{ij} \}$. Let $u_i = \left\lbrace \frac{ \nabla_x f_{il}(\tilde{z}^i_{t}) - \nabla_x f_{il}(z^\star)}{np_{il}} : l \in \{1,2,\ldots , n \} \right\rbrace $ be a random variable with probability distribution $\mathcal{P}_i = \{p_{il} : l \in \{1,2,\ldots, n \} \}$.
\begin{align}
E\left[ u_i \right] & = E \left[ \frac{ \nabla_x f_{il}(\tilde{z}^i_{t}) - \nabla_x f_{il}(z^\star)}{np_{il}} \right] \nonumber\\
& = \sum_{l = 1}^n \frac{\nabla_x f_{il}(\tilde{z}^i_{t}) - \nabla_x f_{il}(z^\star)}{np_{il}} p_{il} \nonumber\\
& = \frac{1}{n} \sum_{l = 1}^n \nabla_x f_{il}(\tilde{z}^i_{t}) - \frac{1}{n} \sum_{l = 1}^n \nabla_x f_{il}(z^\star) \nonumber\\
& = \nabla_x f_{i}(\tilde{z}^i_{t}) - \nabla_x f_{i}(z^\star) .
\end{align}
We know that $E\left\Vert u_i - Eu_i \right\Vert^2 \leq E\left\Vert u_i \right\Vert^2$. Therefore,
\begin{align}
& E\left\Vert \frac{ \nabla_x f_{ij}(\tilde{z}^i_{t}) - \nabla_x f_{ij}(z^\star)}{np_{ij}} - \left( \nabla_x f_{i}(\tilde{z}^i_{t}) - \nabla_x f_{i}(z^\star) \right) \right\Vert^2 \nonumber\\
& \leq E\left\Vert \frac{ \nabla_x f_{ij}(\tilde{z}^i_{t}) - \nabla_x f_{ij}(z^\star)}{np_{ij}} \right\Vert^2 \nonumber\\
& = \frac{1}{n^2} \sum_{j = 1}^n \left\Vert \frac{ \nabla_x f_{ij}(\tilde{z}^i_{t}) - \nabla_x f_{ij}(z^\star)}{p_{ij}} \right\Vert^2 p_{ij} \nonumber\\
& = \frac{1}{n^2} \sum_{j = 1}^n \frac{1}{p_{ij}} \left\Vert \nabla_x f_{ij}(\tilde{z}^i_{t}) - \nabla_x f_{ij}(z^\star) \right\Vert^2 \nonumber\\
& \leq \frac{1}{n^2p_{\min}} \sum_{j = 1}^n  \left\Vert \nabla_x f_{ij}(\tilde{z}^i_{t}) - \nabla_x f_{ij}(z^\star) \right\Vert^2 .
\end{align}
By substituting the above inequality in \eqref{eq:exp_grad_diff_svrg_pmin}, we obtain
\begin{align}
& s^2 \sum_{i = 1}^mE \left\Vert \frac{1}{np_{il}} \left( \nabla_x f_{il}(z^i_{t}) - \nabla_x f_{il}(\tilde{z}^i_{t}) \right) + \nabla_x f_i(\tilde{z}^i_{t}) - \nabla_x f_i(z^\star) \right\Vert^2 \nonumber\\
& \leq \frac{2s^2}{n^2p_{\min}} \sum_{i = 1}^m \sum_{j = 1}^n  \left\Vert \nabla_x f_{ij}(z^i_{t}) - \nabla_x f_{ij}(z^\star) \right\Vert^2 + \frac{2s^2}{n^2p_{\min}} \sum_{i = 1}^m  \sum_{j = 1}^n  \left\Vert \nabla_x f_{ij}(\tilde{z}^i_{t}) - \nabla_x f_{ij}(z^\star) \right\Vert^2 .
\end{align}
Substituting this inequality in \eqref{eq:exp_xkt_xstar_svrg_F} we  obtain~\eqref{eq:exp_xkt_xstar_svrg_final}.

\subsubsection{Finishing the Proof of Lemma~\ref{lem:exp_xkt_ykt_svrg}}\label{sec:finish_proof_xtk_ykt_svrg}
We now compute upper bounds on the last two terms present in~\eqref{eq:exp_xkt_ykt_grad_diff_svrg} using smoothness assumptions. First, observe that
\begin{align}
& \left\Vert \nabla_x f_{ij}(z^i_{t}) - \nabla_x f_{ij}(z^\star) \right\Vert^2 + \left\Vert \nabla_y f_{ij}(z^i_{t}) - \nabla_y f_{ij}(z^\star) \right\Vert^2 \nonumber\\
& = \left\Vert \nabla_x f_{ij}(z^i_{t}) - \nabla_x f_{ij}(x^\star,y^i_{t}) + \nabla_x f_{ij}(x^\star,y^i_{t}) -  \nabla_x f_{ij}(z^\star) \right\Vert^2 \nonumber \\ & \ \ + \left\Vert \nabla_y f_{ij}(z^i_{t}) - \nabla_y f_{ij}(x^i_{t},y^\star) + \nabla_y f_{ij}(x^i_{t},y^\star) - \nabla_y f_{ij}(z^\star) \right\Vert^2 \nonumber\\
& \leq 2 \left\Vert -\nabla_x f_{ij}(z^i_{t}) + \nabla_x f_{ij}(x^\star,y^i_{t}) \right\Vert^2 + 2\left\Vert \nabla_x f_{ij}(x^\star,y^i_{t}) -  \nabla_x f_{ij}(z^\star) \right\Vert^2 \nonumber\\ & \ \ + 2\left\Vert \nabla_y f_{ij}(z^i_{t}) - \nabla_y f_{ij}(x^i_{t},y^\star) \right\Vert^2 + 2\left\Vert \nabla_y f_{ij}(x^i_{t},y^\star) - \nabla_y f_{ij}(z^\star) \right\Vert^2 \nonumber\\
& \leq 4L_{xx} V_{f_{ij},y^i_{t}}(x^\star,x^i_{t}) + 2L^2_{xy} \left\Vert y^i_{t} - y^\star  \right\Vert^2 + 4L_{yy}V_{-f_{ij},x^i_{t}}(y^\star,y^i_{t})  + 2L^2_{yx} \left\Vert x^i_{t} - x^\star  \right\Vert^2 ,
\end{align}
where the last inequality follows from Proposition \ref{prop:smoothness}, Proposition \ref{prop:smoothnessy} and Assumptions \ref{lipschitz_xy_svrg}-\ref{lipschitz_yx_svrg}.  Adding up the above inequality for $j = 1$ to $n$ and using \eqref{bregman_dist_Vy}-\eqref{bregman_dist_Vx}, we obtain
\begin{align}
& \sum_{j = 1}^n \left( \left\Vert \nabla_x f_{ij}(z^i_{t}) - \nabla_x f_{ij}(z^\star) \right\Vert^2 + \left\Vert \nabla_y f_{ij}(z^i_{t}) - \nabla_y f_{ij}(z^\star) \right\Vert^2 \right) \nonumber\\
& \leq 4L_{xx} \sum_{j = 1}^n V_{f_{ij},y^i_{t}}(x^\star,x^i_{t}) + 4L_{yy}  \sum_{j = 1}^n V_{-f_{ij},x^i_{t}}(y^\star,y^i_{t}) + 2nL^2_{xy} \left\Vert y^i_{t} - y^\star  \right\Vert^2 + 2nL^2_{yx} \left\Vert x^i_{t} - x^\star  \right\Vert^2 \nonumber\\
& = 4L_{xx} \sum_{j = 1}^n \left( f_{ij}(x^\star,y^i_{t}) - f_{ij}(x^i_{t},y^i_{t}) - \left\langle \nabla_x f_{ij}(x^i_{t},y^i_{t}), x^\star - x^i_{t} \right\rangle \right) \nonumber\\ & \ + 4L_{yy}  \sum_{j = 1}^n \left( -f_{ij}(x^i_{t},y^\star) + f_{ij}(x^i_{t},y^i_{t}) - \left\langle -\nabla_y f_{ij}(x^i_{t},y^i_{t}), y^\star - y^i_{t} \right\rangle \right) \nonumber \\ & \  + 2nL^2_{xy} \left\Vert y^i_{t} - y^\star  \right\Vert^2 + 2nL^2_{yx} \left\Vert x^i_{t} - x^\star  \right\Vert^2 \nonumber\\
& = 4L_{xx} \left( nf_{i}(x^\star,y^i_{t}) - nf_{i}(x^i_{t},y^i_{t}) - \left\langle n\nabla_x f_{i}(x^i_{t},y^i_{t}), x^\star - x^i_{t} \right\rangle \right) \nonumber\\ & \ + 4L_{yy}  \left( -nf_{i}(x^i_{t},y^\star) + nf_{i}(x^i_{t},y^i_{t}) - \left\langle -n\nabla_y f_{i}(x^i_{t},y^i_{t}), y^\star - y^i_{t} \right\rangle \right) + 2nL^2_{xy} \left\Vert y^i_{t} - y^\star  \right\Vert^2 + 2nL^2_{yx} \left\Vert x^i_{t} - x^\star  \right\Vert^2 \nonumber\\
& = 4nL_{xx} V_{f_{i},y^i_{t}}(x^\star,x^i_{t}) + 4nL_{yy} V_{-f_{i},x^i_{t}}(y^\star,y^i_{t}) + 2nL^2_{xy} \left\Vert y^i_{t} - y^\star  \right\Vert^2 + 2nL^2_{yx} \left\Vert x^i_{t} - x^\star  \right\Vert^2 ,
\end{align}
where the second last step follows from the structure of $f_i(x,y) = \frac{1}{n} \sum_{j = 1}^n f_{ij}(x,y)$. Therefore,
\begin{align}
& \frac{2s^2}{n^2p_{\min}} \sum_{i = 1}^m \sum_{j = 1}^n \left( \left\Vert \nabla_x f_{ij}(z^i_{t}) - \nabla_x f_{ij}(z^\star) \right\Vert^2 + \left\Vert \nabla_y f_{ij}(z^i_{t}) - \nabla_y f_{ij}(z^\star) \right\Vert^2 \right) \nonumber\\
& \leq \frac{8s^2L_{xx}}{np_{\min}} \sum_{i = 1}^m V_{f_{i},y^i_{t}}(x^\star,x^i_{t}) + \frac{8s^2L_{yy}}{np_{\min}} \sum_{i = 1}^m V_{-f_{i},x^i_{t}}(y^\star,y^i_{t}) + \frac{4s^2L^2_{xy}}{np_{\min}} \left\Vert y_{t} - \mathbf{1}y^\star  \right\Vert^2  + \frac{4s^2L^2_{yx}}{np_{\min}} \left\Vert x_{t} - \mathbf{1}x^\star  \right\Vert^2 . \label{eq:grad_diff_zkt_svrg_compact}
\end{align}
Similarly, we bound the last term of \eqref{eq:exp_xkt_ykt_grad_diff_svrg} as 
\begin{align}
& \frac{2s^2}{n^2p_{\min}} \sum_{i = 1}^m  \sum_{j = 1}^n \left( \left\Vert \nabla_x f_{ij}(\tilde{z}^i_{t}) - \nabla_x f_{ij}(z^\star) \right\Vert^2 + \left\Vert \nabla_y f_{ij}(\tilde{z}^i_{t}) - \nabla_y f_{ij}(z^\star) \right\Vert^2 \right) \nonumber\\
& \leq \frac{4s^2(L^2_{xx} + L^2_{yx})}{np_{\min}} \left\Vert \tilde{x}_{t} - \mathbf{1}x^\star \right\Vert^2 + \frac{4s^2(L^2_{yy} + L^2_{xy})}{np_{\min}} \left\Vert \tilde{y}_{t} - \mathbf{1}y^\star \right\Vert^2 . \label{eq:grad_diff_tilde_zkt_svrg_compact}
\end{align}

On substituting \eqref{eq:grad_diff_zkt_svrg_compact} and \eqref{eq:grad_diff_tilde_zkt_svrg_compact} in
\eqref{eq:exp_xkt_ykt_grad_diff_svrg}, we obtain
\begin{align}
& E \left\Vert x_{t} - \mathbf{1}x^\star - s\mathcal{G}^x_{t} + s\nabla_x F(\mathbf{1}x^\star,\mathbf{1}y^\star)  \right\Vert^2 + E \left\Vert y_{t} - \mathbf{1}y^\star + s\mathcal{G}^y_{t} - s\nabla_y F(\mathbf{1}x^\star,\mathbf{1}y^\star)  \right\Vert^2 \nonumber\\
& \leq  \left( 1-\mu_x s + \frac{4s^2L^2_{yx}}{np_{\min}} \right)\left\Vert x_{t} - \mathbf{1}x^\star \right\Vert^2 + \left(1-s\mu_y + \frac{4s^2L^2_{xy}}{np_{\min}} \right) \left\Vert y_{t} - \mathbf{1}y^\star  \right\Vert^2  \nonumber\\ & \ - \left( 2s - \frac{8s^2L_{xx}}{np_{\min}} \right) \sum_{i = 1}^m  V_{f_i,y^i_{t}}(x^{\star},x^i_{t}) - \left( 2s - \frac{8s^2L_{yy}}{np_{\min}} \right) \sum_{i = 1}^m V_{-f_i,x^i_{t}}(y^\star,y^i_{t}) \nonumber\\ & \ + \frac{4s^2(L^2_{xx} + L^2_{yx})}{np_{\min}} \left\Vert \tilde{x}_{t} - \mathbf{1}x^\star \right\Vert^2 + \frac{4s^2(L^2_{yy} + L^2_{xy})}{np_{\min}} \left\Vert \tilde{y}_{t} - \mathbf{1}y^\star \right\Vert^2 , \tag{\ref{eq:exp_xkt_ykt_vfi_svrg}}
\end{align}
proving Lemma~\ref{lem:exp_xkt_ykt_svrg}.

We now have the following corollary.

\begin{corollary} \label{cor:exp_xkt_ykt_svrg} Let $s = \frac{\mu np_{\min}}{24L^2}$. Then under the settings of Lemma \ref{lem:exp_xkt_ykt_svrg},
\begin{align}
& E \left\Vert x_{t} - \mathbf{1}x^\star - s\mathcal{G}^x_{t} + s\nabla_x F(\mathbf{1}x^\star,\mathbf{1}y^\star)  \right\Vert^2 + E \left\Vert y_{t} - \mathbf{1}y^\star + s\mathcal{G}^y_{t} - s\nabla_y F(\mathbf{1}x^\star,\mathbf{1}y^\star)  \right\Vert^2 \\
& \leq  \left( 1-\mu_x s + \frac{4s^2L^2_{yx}}{np_{\min}} \right)\left\Vert x_{t} - \mathbf{1}x^\star \right\Vert^2 + \left(1-s\mu_y + \frac{4s^2L^2_{xy}}{np_{\min}} \right) \left\Vert y_{t} - \mathbf{1}y^\star  \right\Vert^2  \\ & \ + \frac{4s^2(L^2_{xx} + L^2_{yx})}{np_{\min}} \left\Vert \tilde{x}_{t} - \mathbf{1}x^\star \right\Vert^2 + \frac{4s^2(L^2_{yy} + L^2_{xy})}{np_{\min}} \left\Vert \tilde{y}_{t} - \mathbf{1}y^\star \right\Vert^2 .
\end{align}
\end{corollary}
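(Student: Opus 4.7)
The plan is to derive Corollary \ref{cor:exp_xkt_ykt_svrg} as an immediate consequence of Lemma \ref{lem:exp_xkt_ykt_svrg}. Comparing the two statements, the right-hand side of the corollary is obtained from \eqref{eq:exp_xkt_ykt_vfi_svrg} by simply discarding the two Bregman-distance terms
\[
-\Bigl(2s - \tfrac{8s^2 L_{xx}}{n p_{\min}}\Bigr)\sum_{i=1}^m V_{f_i, y^i_t}(x^\star, x^i_t)
\quad\text{and}\quad
-\Bigl(2s - \tfrac{8s^2 L_{yy}}{n p_{\min}}\Bigr)\sum_{i=1}^m V_{-f_i, x^i_t}(y^\star, y^i_t).
\]
So the entire content of the corollary is a feasibility check on the prescribed step size $s = \tfrac{\mu n p_{\min}}{24 L^2}$: I must show that both coefficients above are nonnegative, after which the corresponding terms (being the negative of nonnegative quantities) can be dropped without invalidating the upper bound.

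First I would recall that the Bregman distances $V_{f_i,y^i_t}(x^\star,x^i_t)$ and $V_{-f_i,x^i_t}(y^\star,y^i_t)$ are nonnegative: this follows from convexity of each $f_i(\cdot,y)$ (Assumption \ref{s_convexity_assumption}) and concavity of each $f_i(x,\cdot)$ (Assumption \ref{s_concavity_assumption}), which in turn imply nonnegativity of the Bregman distances defined in \eqref{bregman_dist_Vy}-\eqref{bregman_dist_Vx}. Next, I would factor
\[
2s - \tfrac{8s^2 L_{xx}}{n p_{\min}} = 2s\Bigl(1 - \tfrac{4 s L_{xx}}{n p_{\min}}\Bigr)
\]
and substitute $s = \tfrac{\mu n p_{\min}}{24 L^2}$ to obtain
\[
\tfrac{4 s L_{xx}}{n p_{\min}} = \tfrac{\mu L_{xx}}{6 L^2} \leq \tfrac{1}{6} < 1,
\]
where the inequality uses $L = \max\{L_{xx}, L_{yy}, L_{xy}, L_{yx}\}$ and $\mu = \min\{\mu_x,\mu_y\} \leq L$. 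Hence $2s - \tfrac{8s^2 L_{xx}}{n p_{\min}} \geq 0$. The identical computation with $L_{yy}$ in place of $L_{xx}$ shows $2s - \tfrac{8 s^2 L_{yy}}{n p_{\min}} \geq 0$.

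Combining these two sign checks with the nonnegativity of the Bregman distances, both subtracted terms on the right-hand side of \eqref{eq:exp_xkt_ykt_vfi_svrg} are nonpositive, so discarding them only enlarges the upper bound. This yields exactly the inequality in Corollary \ref{cor:exp_xkt_ykt_svrg}. There is no real obstacle in this proof; the only subtle point is remembering that the choice $s = \tfrac{\mu n p_{\min}}{24 L^2}$ is tuned precisely so that the Bregman coefficients stay nonnegative (with room to spare), which is what permits the simplification used throughout the remainder of the finite-sum convergence analysis.
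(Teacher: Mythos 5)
Your proposal is correct and follows essentially the same route as the paper: the paper likewise verifies that $s=\tfrac{\mu n p_{\min}}{24L^2}$ gives $\tfrac{4sL_{xx}}{np_{\min}}\le 1$ and $\tfrac{4sL_{yy}}{np_{\min}}\le 1$ (via $s\le\tfrac{np_{\min}}{4L_{xx}}$ and $s\le\tfrac{np_{\min}}{4L_{yy}}$), invokes nonnegativity of the Bregman distances $V_{f_i,y^i_t}$ and $V_{-f_i,x^i_t}$, and drops the resulting nonpositive terms from the bound of Lemma \ref{lem:exp_xkt_ykt_svrg}. No gaps.
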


\subsection{Proof of Corollary \ref{cor:exp_xkt_ykt_svrg}}

 From the statement of the corollary, we have  $s = \frac{\mu np_{\min}}{24L^2} \leq \frac{np_{\min}}{24L \kappa} < \frac{np_{\min}}{4L} \leq \frac{np_{\min}}{4L_{xx}}$. This implies that 
\begin{align}
\frac{4sL_{xx}}{np_{\min}}  \leq 1 \text{ i.e., }
\frac{8s^2L_{xx}}{np_{\min}}  \leq 2s .
\end{align}
Notice that $V_{f_i,y^i_{t}}(x^\star,x^i_{t}) \geq 0$. Therefore, $\left( 2s - \frac{8s^2L_{xx}}{np_{\min}} \right) \sum_{i = 1}^m  V_{f_i,y^i_{t}}(x^{\star},x^i_{t}) \geq 0$. We also have $s \leq \frac{np_{\min}}{4L_{yy}}$ because $L = \max \{L_{xx}, L_{yy}, L_{xy}, L_{yx} \}$. Therefore, $\frac{8s^2L_{yy}}{np_{\min}}  \leq 2s$. Due to the concavity of $f_i(x,y)$ in $y$, $V_{-f_i,x^i_{t}}(y^\star,y^i_{t})$ is nonnegative. Therefore, $\left( 2s - \frac{8s^2L_{yy}}{np_{\min}} \right) \sum_{i = 1}^m V_{-f_i,x^i_{t}}(y^\star,y^i_{t}) \geq 0$. By substituting these lower bounds in \eqref{eq:exp_xkt_ykt_vfi_svrg}, we get the desired result.

\textbf{Parameters setup}

Let $p_{\min} = \min_{i,j} \{p_{ij} \}$. We define the following quantities which are instrumental in simplifying the bounds and in Algorithm \ref{alg:svrg_known_mu} implementation.
\begin{align}
& \tilde{c}_x  := \frac{8s^2(L^2_{xx} + L^2_{yx})}{np_{\min}p}, \ \tilde{c}_y  := \frac{8s^2(L^2_{yy} + L^2_{xy})}{np_{\min}p} , \\
& b_x  := s\mu_x - \frac{4s^2L^2_{yx}}{np_{\min}} - \tilde{c}_x p , b_y := s\mu_y - \frac{4s^2L^2_{xy}}{np_{\min}} - \tilde{c}_y p , \label{eq:bx_by_svrg} \\
& \alpha_x := \frac{b_x}{(1+\delta)} , \ \alpha_y := \frac{b_y}{(1+\delta)} , \label{eq:alpha_x_alpha_y_svrg} \\
& \gamma_x := \min \left\lbrace \frac{b_x}{4\sqrt{\delta}(1+\delta)\lambda_{\max}(I-W)} , \frac{1}{4(1+\delta)\lambda_{\max}(I-W)} \right\rbrace , \label{eq:gamma_x_svrg} \\
& \gamma_y := \min \left\lbrace \frac{b_y}{4\sqrt{\delta}(1+\delta)\lambda_{\max}(I-W)} , \frac{1}{4(1+\delta)\lambda_{\max}(I-W)} \right\rbrace ,  \label{eq:gammma_y_svrg} \\
\end{align}
\begin{align}
& \Phi_{t} := M_{x} \left\Vert x_{t} - \mathbf{1}x^\star  \right\Vert^2 + \frac{2s^2}{\gamma_{x}}  \left\Vert  D^x_{t} - D^\star_x  \right\Vert^2_{(I-W)^\dagger} + \sqrt{\delta}\left\Vert H^x_{t} - H^\star_{x} \right\Vert^2 \\ & \ + M_{y} \left\Vert y_{t} - \mathbf{1}y^\star  \right\Vert^2 + \frac{2s^2}{\gamma_{y}} \left\Vert  D^y_{t} - D^\star_y  \right\Vert^2_{(I-W)^\dagger} + \sqrt{\delta} \left\Vert H^y_{t} - H^\star_{y} \right\Vert^2 , \\
& \rho  = \max \left\lbrace \frac{1-b_x}{M_x}, \frac{1-b_y}{M_y} , 1- \frac{\gamma_x}{2}\lambda_{m-1}(I-W) , 1- \frac{\gamma_y}{2}\lambda_{m-1}(I-W) , 1-\alpha_{x} , 1-\alpha_{y} , 1-\frac{p}{2} \right\rbrace  \label{rho_svrg} \\
& \tilde{\Phi}_{t} = \Phi_{t} + \tilde{c}_x \left\Vert \tilde{x}_{t} - \mathbf{1}x^\star \right\Vert^2 + \tilde{c}_y \left\Vert \tilde{y}_{t} - \mathbf{1}y^\star \right\Vert^2 . \label{tilde_Phi_t}
\end{align}
{\red{It is worth mentioning that $\gamma_{x}$ and $\gamma_{y}$} are well defined for $\delta = 0$.}

\begin{lemma} \label{lem:parameters_feas_svrg} \textbf{Parameters Feasibility} The parameters defined in \eqref{eq:bx_by_svrg}, \eqref{eq:alpha_x_alpha_y_svrg}, \eqref{eq:gamma_x_svrg} and \eqref{eq:gammma_y_svrg} satisfy the followings:
\begin{align}
& b_x \in (0,1), \  b_y \in (0,1) , \\
& \alpha_{x} < \min \left\lbrace \frac{b_{x}}{\sqrt{\delta}} , \frac{1}{1+\delta} \right\rbrace , \  \alpha_{y} < \min \left\lbrace \frac{b_{y}}{\sqrt{\delta}} , \frac{1}{1+\delta} \right\rbrace \\
& \gamma_{x} \in \left( 0, \min \left\lbrace \frac{2-2\sqrt{\delta}\alpha_{x}}{\lambda_{\max}(I-W)}, \frac{\alpha_{x} - (1+\delta)\alpha_{x}^2}{\sqrt{\delta}\lambda_{\max}(I-W)} \right\rbrace  \right) , \\
& \gamma_{y} \in \left( 0, \min \left\lbrace \frac{2-2\sqrt{\delta}\alpha_{y}}{\lambda_{\max}(I-W)}, \frac{\alpha_{y} - (1+\delta)\alpha_{y}^2}{\sqrt{\delta}\lambda_{\max}(I-W)} \right\rbrace  \right) ,\\
& \frac{\gamma_{x}}{2}\lambda_{m-1}(I-W) \ \in \ (0,1) , \ \frac{\gamma_{y}}{2}\lambda_{m-1}(I-W) \ \in \  (0,1) , \\
& M_{x}  \in (0,1), \ M_{y} \in (0,1) , \\
& \frac{1-b_{x}}{M_{x}} \in (0,1), \ \ \frac{1-b_{y}}{M_{y}} \in (0,1) .
\end{align}
Moreover,
\begin{align}
M_x & \geq 1 - \frac{8b_x\sqrt{\delta  }}{7(1+\delta)} \geq  1 - \frac{4b_x}{7} , \\
M_y & \geq 1 - \frac{8b_y\sqrt{\delta  }}{7(1+\delta)} \geq  1 - \frac{4b_y}{7} , \\
\frac{1-b_x}{M_x} & < 1 - \frac{3b_x}{7} , \  \frac{1-b_y}{M_y} < 1 - \frac{3b_y}{7} , \\
1 - \frac{\gamma_x}{2} \lambda_{m-1}(I-W) & = \begin{cases} 1 - \frac{b_x}{8\sqrt{\delta}(1+\delta)\kappa_g} \ ; \ \text{if} \ b_x \leq \sqrt{\delta} \\ 1 - \frac{1}{8(1+\delta)\kappa_g} \ ; \ \text{if} \ b_x > \sqrt{\delta} \end{cases} .
\end{align}

\end{lemma}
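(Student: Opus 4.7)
\textbf{Proof plan for Lemma \ref{lem:parameters_feas_svrg}.} The overall strategy is to first pin down a tight numerical range for $b_x$ (and symmetrically $b_y$) using the chosen step size $s=\mu n p_{\min}/(24L^{2})$, and then cascade this control through the definitions of $\alpha_{x},\gamma_{x},M_{x}$ in the order in which they depend on one another. First I would substitute $\tilde c_x p=\frac{8s^{2}(L_{xx}^{2}+L_{yx}^{2})}{np_{\min}}$ into the definition of $b_{x}$ to get $b_{x}=s\mu_{x}-\frac{4s^{2}(2L_{xx}^{2}+3L_{yx}^{2})}{np_{\min}}$, then use $\mu_x\ge\mu$, $L_{xx},L_{yx}\le L$ together with the explicit value of $s$ to show $b_{x}\ge \frac{\mu^{2}np_{\min}}{144L^{2}}>0$. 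The upper bound $b_{x}<1$ is almost immediate from $b_{x}\le s\mu_{x}\le \frac{\mu_{x}}{24L}\le \frac{1}{24}$, since $np_{\min}\le\sum_j p_{ij}=1$. This gives $b_{x}\in(0,1/24)$ and by symmetry the same for $b_y$.

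Next I would verify $\alpha_{x}\in(0,\min\{b_{x}/\sqrt{\delta},1/(1+\delta)\})$, which follows from the inequality $\sqrt{\delta}/(1+\delta)\le 1/2$ for all $\delta\in[0,1]$ and the bound $b_x<1$. For the $\gamma_{x}$ interval, the definition takes a minimum of two quantities, so it suffices to check each of them is strictly smaller than both bounds required in \eqref{gamma_y_assumption}. The comparison with $\frac{2-2\sqrt{\delta}\alpha_{x}}{\lambda_{\max}(I-W)}$ uses again $\sqrt{\delta}/(1+\delta)\le 1/2$, giving $2-2\sqrt{\delta}\alpha_{x}\ge 1$. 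The comparison with $\frac{\alpha_{x}-(1+\delta)\alpha_{x}^{2}}{\sqrt{\delta}\lambda_{\max}(I-W)}=\frac{b_{x}(1-b_{x})}{\sqrt{\delta}(1+\delta)\lambda_{\max}(I-W)}$ uses $1-b_{x}\ge 23/24$ and the constant $1/4<23/24$. The bound $\gamma_{x}\lambda_{m-1}(I-W)/2\in(0,1)$ drops out of $\gamma_{x}\le 1/(4(1+\delta)\lambda_{\max}(I-W))$ and $\lambda_{m-1}(I-W)\le\lambda_{\max}(I-W)$.

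For the explicit bounds on $M_{x}$, I would use $\gamma_{x}\lambda_{\max}(I-W)/2\le 1/(8(1+\delta))\le 1/8$, so that $1-\gamma_{x}\lambda_{\max}(I-W)/2\ge 7/8$. Combined with $\sqrt{\delta}\alpha_{x}=\sqrt{\delta}b_{x}/(1+\delta)$ this yields $M_{x}\ge 1-\frac{8b_{x}\sqrt{\delta}}{7(1+\delta)}$, and the second inequality $M_x\ge 1-4b_x/7$ again uses $\sqrt{\delta}/(1+\delta)\le 1/2$. Then $M_{x}>0$ since $b_x\le 1/24$, and $M_x<1$ is immediate. The bound $\frac{1-b_{x}}{M_{x}}<1-\frac{3b_{x}}{7}$ is the one place where a short algebraic identity is needed: substituting the lower bound $M_{x}\ge 1-4b_{x}/7$ and cross-multiplying, the claim reduces to $(1-b_{x})\le(1-3b_{x}/7)(1-4b_{x}/7)$, which expands to $0\le 12b_{x}^{2}/49$ and holds trivially.

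Finally, for the case split in $1-\gamma_{x}\lambda_{m-1}(I-W)/2$, I would simply unfold the $\min$ in \eqref{eq:gamma_x_svrg}: the ratio $b_{x}/\sqrt{\delta}$ compared to $1$ determines which branch realizes the minimum; if $b_{x}\le\sqrt{\delta}$ then $\gamma_{x}=b_{x}/(4\sqrt{\delta}(1+\delta)\lambda_{\max}(I-W))$ and the displayed value $1-\frac{b_{x}}{8\sqrt{\delta}(1+\delta)\kappa_{g}}$ follows by definition of $\kappa_{g}$; otherwise $\gamma_{x}=1/(4(1+\delta)\lambda_{\max}(I-W))$ gives the second branch. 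The symmetric statements for $b_{y},\alpha_{y},\gamma_{y},M_{y}$ are proved identically by swapping the roles of the primal and dual smoothness/strong-convexity constants. The main obstacle I anticipate is not analytical but bookkeeping: the terms in $b_x$ involve the coupling constants $L_{xy},L_{yx}$ via $\tilde c_x$, so one must be careful not to lose a factor when bounding $\tilde c_x p$ uniformly by the worst-case $L$, and one must keep the chain of inequalities tight enough so that the final $3/7$ constant in $\frac{1-b_x}{M_x}$ bound survives.
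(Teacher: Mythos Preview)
Your plan is correct and follows the same overall cascade as the paper: control $b_x\in(0,1/24)$ from the explicit step size, then propagate through $\alpha_x,\gamma_x,M_x$. Your execution is actually cleaner than the paper's in two places. First, the paper splits the verification of the $\gamma_x$ interval and the lower bound on $M_x$ into Case~I ($b_x\le\sqrt{\delta}$) and Case~II ($b_x>\sqrt{\delta}$), computing $M_x$ exactly in each branch and then bounding; you instead observe the uniform inequality $\gamma_x\le \frac{1}{4(1+\delta)\lambda_{\max}(I-W)}$, which immediately gives $1-\tfrac{\gamma_x}{2}\lambda_{\max}(I-W)\ge 7/8$ and hence $M_x\ge 1-\tfrac{8\sqrt{\delta}b_x}{7(1+\delta)}$ in one stroke. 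Second, for $\frac{1-b_x}{M_x}<1-\tfrac{3b_x}{7}$ the paper works with the tighter $\delta$-dependent lower bound on $M_x$ and does a somewhat involved rational manipulation, whereas your route via $M_x\ge 1-4b_x/7$ and the identity $(1-\tfrac{3b_x}{7})(1-\tfrac{4b_x}{7})=1-b_x+\tfrac{12b_x^2}{49}$ is shorter and equally sharp. The case split is genuinely needed only for the final displayed equality for $1-\tfrac{\gamma_x}{2}\lambda_{m-1}(I-W)$, where you correctly reintroduce it.

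One small wording issue: when you write ``it suffices to check each of them is strictly smaller than both bounds,'' that is not literally what you (can) verify --- for instance $\frac{b_x}{4\sqrt{\delta}(1+\delta)\lambda_{\max}}$ need not be below $\frac{2-2\sqrt{\delta}\alpha_x}{\lambda_{\max}}$ when $\delta$ is tiny. What you actually use, and what is enough, is $\gamma_x\le \frac{1}{4(1+\delta)\lambda_{\max}}<\frac{2-2\sqrt{\delta}\alpha_x}{\lambda_{\max}}$ for the first bound and $\gamma_x\le \frac{b_x}{4\sqrt{\delta}(1+\delta)\lambda_{\max}}<\frac{b_x(1-b_x)}{\sqrt{\delta}(1+\delta)\lambda_{\max}}$ for the second; just tighten the sentence accordingly.
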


\subsection{Proof of Lemma \ref{lem:parameters_feas_svrg}} \label{proof_lem_parameters_feas_svrg}

In this section, we show that the chosen parameters $\alpha_x \alpha_y, M_{x}, M_y , \gamma_x$ and $\gamma_y$ satisfy the following conditions given in Lemma~\ref{lem:parameters_feas_svrg}:
\begin{align}
& \alpha_{x} < \min \left\lbrace \frac{b_{x}}{\sqrt{\delta}} , \frac{1}{1+\delta} \right\rbrace , \  \alpha_{y} < \min \left\lbrace \frac{b_{y}}{\sqrt{\delta}} , \frac{1}{1+\delta} \right\rbrace \nonumber\\
& \gamma_{x} \in \left( 0, \min \left\lbrace \frac{2-2\sqrt{\delta}\alpha_{x}}{\lambda_{\max}(I-W)}, \frac{\alpha_{x} - (1+\delta)\alpha_{x}^2}{\sqrt{\delta}\lambda_{\max}(I-W)} \right\rbrace  \right) , 
\gamma_{y} \in \left( 0, \min \left\lbrace \frac{2-2\sqrt{\delta}\alpha_{y}}{\lambda_{\max}(I-W)}, \frac{\alpha_{y} - (1+\delta)\alpha_{y}^2}{\sqrt{\delta}\lambda_{\max}(I-W)} \right\rbrace  \right) ,\nonumber\\
& \frac{\gamma_{x}}{2}\lambda_{m-1}(I-W) \ \in \ (0,1) , \ \frac{\gamma_{y}}{2}\lambda_{m-1}(I-W) \ \in \  (0,1) ,  M_{x} \in (0,1), \ M_{y} \in (0,1) , \nonumber\\
& \frac{1-b_{x}}{M_{x}} \in (0,1), \ \ \frac{1-b_{y}}{M_{y}} \in (0,1) .
\end{align}
We first show that $b_x \in (0,1)$ and $b_y \in (0,1)$. From definition, 
\begin{align}
b_x & = s\mu_x - \frac{4s^2L^2_{yx}}{np_{\min}} - \tilde{c}_x p < s\mu_x = \frac{\mu n p_{\min} \mu_x}{24L^2} \leq \frac{n p_{\min} \mu^2_x}{24L_{xx}^2} = \frac{n p_{\min} }{24\kappa_x^2} \leq  \frac{1 }{24} < 1. \label{eq:bx_svrg_ub}
\end{align}
Similarly,
\begin{align}
b_y & = s\mu_y - \frac{4s^2L^2_{xy}}{np_{\min}} - \tilde{c}_y p < s\mu_y = \frac{\mu n p_{\min} \mu_y}{24L^2} \leq \frac{n p_{\min} \mu^2_y}{24L_{yy}^2} = \frac{n p_{\min} }{24\kappa_y^2} \leq  \frac{1 }{24} < 1. \label{eq:by_svrg_ub}
\end{align}
We now focus on the lower bound on $b_x$ and $b_y$.
\begin{align}
b_x & = s\mu_x - \frac{4s^2L^2_{yx}}{np_{\min}} - \tilde{c}_x p \nonumber\\
& = s\mu_x - \frac{4s^2L^2_{yx}}{np_{\min}} - \frac{8s^2(L^2_{xx} + L^2_{yx})}{np_{\min}} \nonumber\\
& =  s\mu_x - \frac{12s^2L^2_{yx}}{np_{\min}} - \frac{8s^2L^2_{xx} }{np_{\min}} \nonumber\\
& \geq s\mu - \frac{12s^2L^2}{np_{\min}} - \frac{8s^2L^2}{np_{\min}} \nonumber\\
& = s\mu - \frac{20s^2L^2}{np_{\min}} \\
& = \frac{\mu^2 n p_{\min}}{24L^2} - \frac{\mu^2 n^2 p^2_{\min}}{576L^4} \frac{20L^2}{np_{\min}} \nonumber\\
& = \frac{ n p_{\min}}{24\kappa_f^2} - \frac{20 n p_{\min}}{576\kappa_f^2} \nonumber\\
& = \frac{np_{\min}}{144\kappa_f^2} \label{eq:bx_lb} \nonumber\\
& > 0 .
\end{align}
In a similar fashion, we get $b_y < 1$ and 
\begin{align}
b_y \geq \frac{np_{\min}}{144\kappa_f^2}>0 . \label{eq:by_lb}
\end{align}

\textbf{Feasibility of $\mathbf{\alpha_{x}}$ and $\alpha_{y}$.}

We have, $0 < b_{x} < 1$. Therefore, $\alpha_{x} < \frac{1}{1+\delta}$. Moreover, $\frac{\sqrt{\delta}}{1+\delta} \leq 1/2$. Therefore, $\alpha_{x} \leq \frac{b_{x}}{2\sqrt{\delta}} < b_{x}/\sqrt{\delta}$. Hence, $\alpha_{x} < \min \left\lbrace \frac{b_{x}}{\sqrt{\delta}} , \frac{1}{1+\delta} \right\rbrace$ . Similarly, $\alpha_{y} < \min \left\lbrace \frac{b_{y}}{\sqrt{\delta}} , \frac{1}{1+\delta} \right\rbrace$ because $b_{y} \in (0,1)$.

\textbf{Feasibility of $\gamma_{x,k}$ and $\gamma_{y,k}$.} We consider two cases to verify the feasibility of $\gamma_x$ and $\gamma_y$.

\textbf{Case I:} $b_x \leq \sqrt{\delta}$.

This gives $\gamma_x = \frac{b_x}{4\sqrt{\delta}(1+\delta)\lambda_{\max}(I-W)}$. Consider
\begin{align}
\frac{\alpha_{x} - (1+\delta)\alpha_{x}^2}{\sqrt{\delta}\lambda_{\max}(I-W)} & = \frac{b_{x} - b_{x}^2}{\sqrt{\delta}(1+\delta)\lambda_{\max}(I-W)} .
\end{align}
 Using \eqref{eq:bx_svrg_ub}, we have $b_{x}  \leq \frac{1}{24\kappa^2_x } < 0.75$. This allows us to use the inequality $2x-2x^2 \geq x/2$ for all $0 \leq x \leq 0.75$. Therefore,
\begin{align}
\frac{\alpha_{x} - (1+\delta)\alpha_{x}^2}{\sqrt{\delta}\lambda_{\max}(I-W)} & > \frac{b_{x}}{4\sqrt{\delta}(1+\delta) \lambda_{\max}(I-W)} \nonumber\\
& = \gamma_{x,k} .
\end{align}
We also have 
\begin{align}
\frac{2-2\sqrt{\delta}\alpha_{x}}{\lambda_{\max}(I-W)} & = \left( 2 - \frac{2\sqrt{\delta}b_{x}}{1+\delta} \right) \frac{1}{\lambda_{\max}(I-W)} \geq \left( 2 - \frac{2\sqrt{\delta}}{1+\delta} \right) \frac{1}{\lambda_{\max}(I-W)} \nonumber\\
& \geq \frac{1}{\lambda_{\max}(I-W)} > \frac{1}{4(1+\delta) \lambda_{\max}(I-W)} \nonumber\\
& > \frac{b_x}{4\sqrt{\delta}(1+\delta) \lambda_{\max}(I-W)} \nonumber\\
& = \gamma_{x,k} ,
\end{align}
where the second last inequality uses $b_x \leq \sqrt{\delta}$. We know that $b_{y} \in (0,1)$. Therefore, by following similar steps, the chosen $\gamma_{y}$ is also feasible. 

\textbf{Case II:} $ b_x > \sqrt{\delta}$

This give $\gamma_x = \frac{1}{4(1+\delta)\lambda_{\max}(I-W)}$. 

\begin{align}
\frac{\alpha_{x} - (1+\delta)\alpha_{x}^2}{\sqrt{\delta}\lambda_{\max}(I-W)} & = \frac{b_{x} - b_{x}^2}{\sqrt{\delta}(1+\delta)\lambda_{\max}(I-W)} \nonumber\\
& \geq \frac{b_{x}}{4\sqrt{\delta}(1+\delta) \lambda_{\max}(I-W)} \nonumber\\
& > \frac{1}{4(1+\delta) \lambda_{\max}(I-W)} \nonumber\\
& = \gamma_x .
\end{align}
Consider
\begin{align}
\frac{2-2\sqrt{\delta}\alpha_{x}}{\lambda_{\max}(I-W)} & = \left( 2 - \frac{2\sqrt{\delta}b_{x}}{1+\delta} \right) \frac{1}{\lambda_{\max}(I-W)} \nonumber\\
& \geq \left( 2 - \frac{2\sqrt{\delta}}{1+\delta} \right) \frac{1}{\lambda_{\max}(I-W)} \nonumber\\
& \geq \frac{1}{\lambda_{\max}(I-W)} \nonumber\\
& > \frac{1}{4(1+\delta) \lambda_{\max}(I-W)} \nonumber\\
& = \gamma_{x} .
\end{align}
Therefore, $\gamma_x < \min \left\lbrace \frac{\alpha_{x} - (1+\delta)\alpha_{x}^2}{\sqrt{\delta}\lambda_{\max}(I-W)} , \frac{2-2\sqrt{\delta}\alpha_{x}}{\lambda_{\max}(I-W)} \right\rbrace $.

As $\gamma_{x} < \frac{2-2\sqrt{\delta}\alpha_{x}}{\lambda_{\max}(I-W)} < \frac{2}{\lambda_{\max}(I-W)}$. Notice that $\lambda_{m-1}(I-W) < \lambda_{\max}(I-W)$ Therefore,
\begin{align}
\frac{\gamma_{x}}{2}\lambda_{m-1}(I-W) < \frac{\gamma_{x}}{2}\lambda_{\max}(I-W) < 1 .
\end{align}
Similarly, $\frac{\gamma_{y}}{2}\lambda_{m-1}(I-W) < 1$.

\textbf{Feasibility of $M_{x}$ and $M_{y}$.}

Recall $M_{x} = 1-\frac{\sqrt{\delta }\alpha_{x}}{1-\frac{\gamma_{x}}{2}\lambda_{\max}(I-W)}$ and $M_{y} = 1-\frac{\sqrt{\delta }\alpha_{y}}{1-\frac{\gamma_{y}}{2}\lambda_{\max}(I-W)}$.
We have 
\begin{align}
\gamma_{x} < \frac{2-2\sqrt{\delta}\alpha_{x}}{\lambda_{\max}(I-W)} \nonumber\\
\frac{\gamma_{x}\lambda_{\max}(I-W)}{2} < 1-\sqrt{\delta}\alpha_{x} \nonumber\\
1 - \frac{\gamma_{x}\lambda_{\max}(I-W)}{2} > \sqrt{\delta}\alpha_{x} \nonumber\\
\frac{\sqrt{\delta}\alpha_{x}}{1 - \frac{\gamma_{x}\lambda_{\max}(I-W)}{2}} < 1 .
\end{align}
Moreover, $\frac{\sqrt{\delta}\alpha_{x}}{1 - \frac{\gamma_{x}\lambda_{\max}(I-W)}{2}} > 0$. Therefore, $M_{x} \in (0,1) $. Similar steps follow to prove the feasibility of $M_{y}$.

\textbf{Feasibility of $\frac{1-b_{x}}{M_{x}}$ and $\frac{1-b_{y}}{M_{y}}$.}

We derive upper bounds on $\frac{1-b_{x}}{M_{x}}$ and $\frac{1-b_{y}}{M_{y}} $ to verify the feasibility.
We divide the derivation into two cases.

\textbf{Case I:} $b_x \leq \sqrt{\delta}$ 

This implies that
\begin{align}
\gamma_x = \frac{b_x}{4\sqrt{\delta}(1+\delta)\lambda_{\max}(I-W)} \\
\frac{\gamma_x}{2} \lambda_{\max}(I-W) = \frac{b_x}{8\sqrt{\delta}(1+\delta)} .
\end{align}
Recall $M_x$:
\begin{align}
M_x & = 1 - \frac{\sqrt{\delta }\alpha_{x}}{1-\frac{\gamma_{x}}{2}\lambda_{\max}(I-W)} \nonumber\\
& = 1 - \frac{\frac{\sqrt{\delta }b_x}{1+\delta}}{1-\frac{b_x}{8\sqrt{\delta}(1+\delta)}} \nonumber\\
& = 1 - \frac{\sqrt{\delta}b_x \times 8 \sqrt{\delta}(1+\delta)}{(1+\delta)\left(8\sqrt{\delta}(1+\delta) - b_x \right)} \nonumber\\
& = 1 - \frac{8\delta b_x}{\left(8\sqrt{\delta}(1+\delta) - b_x \right)} \nonumber\\
& = 1 - \frac{8\delta }{\frac{8\sqrt{\delta}(1+\delta)}{b_x} - 1} . \label{eq:Mx_svrg_lb_mid}
\end{align}
We know that $\frac{\sqrt{\delta}}{b_x} \geq 1$. Therefore, $\frac{\sqrt{\delta}(1+\delta)}{b_x} > 1$ which in turn implies that 
\begin{align}
\frac{8\sqrt{\delta}(1+\delta)}{b_x} - 1 & > \frac{8\sqrt{\delta}(1+\delta)}{b_x} -  \frac{\sqrt{\delta}(1+\delta)}{b_x} \nonumber\\
& = \frac{7\sqrt{\delta}(1+\delta)}{b_x} \nonumber\\
\frac{1}{\frac{8\sqrt{\delta}(1+\delta)}{b_x} - 1} & < \frac{b_x}{7\sqrt{\delta}(1+\delta)} .
\end{align}
By using above relation in \eqref{eq:Mx_svrg_lb_mid}, we obtain
\begin{align}
M_x & \geq 1 - \frac{8\delta b_x }{7\sqrt{\delta}(1+\delta)} = 1 - \frac{8b_x\sqrt{\delta  }}{7(1+\delta)} \label{eq:Mx_lb_delta_svrg} \\
& \geq 1 - \frac{8b_x}{7} \frac{1}{2}  = 1 - \frac{4b_x}{7} , \label{eq:Mx_lb_svrg} 
\end{align}
where the second last inequality uses $\frac{\sqrt{\delta}}{1+\delta} \leq \frac{1}{2}$.
\begin{align}
\frac{1-b_x}{M_x} & = 1 + \frac{1-b_x}{M_x} - 1  \leq 1 + \frac{1-b_x}{1 - \frac{8b_x\sqrt{\delta  }}{7(1+\delta)}} - 1 \nonumber\\
& =  1 + \frac{1-b_x - 1+ \frac{8b_x\sqrt{\delta  }}{7(1+\delta)} }{1 - \frac{8b_x\sqrt{\delta  }}{7(1+\delta)}}  = 1 - \frac{b_x - \frac{8b_x\sqrt{\delta  }}{7(1+\delta)} }{1 - \frac{8b_x\sqrt{\delta  }}{7(1+\delta)}} \nonumber\\
& = 1 - \frac{7b_x(1+\delta) - 8b_x\sqrt{\delta} }{7(1+\delta) - 8b_x\sqrt{\delta} } = 1 - \frac{7(1+\delta) - 8\sqrt{\delta} }{\frac{7(1+\delta)}{b_x} - 8\sqrt{\delta} } \nonumber\\
& \leq 1 - \frac{7(1+\delta) - \frac{8(1+\delta)}{2} }{\frac{7(1+\delta)}{b_x} - 8\sqrt{\delta} } = 1 - \frac{3(1+\delta)}{\frac{7(1+\delta)}{b_x} - 8\sqrt{\delta} } \nonumber\\
& < 1 - \frac{3(1+\delta)}{\frac{7(1+\delta)}{b_x}} \nonumber\\
& = 1 - \frac{3b_x}{7} \label{eq:one_bx_Mx_ub_svrg} \nonumber\\
& < 1 .
\end{align}
Similarly, we obtain 
\begin{align}
& M_y \geq 1 - \frac{8b_y\sqrt{\delta  }}{7(1+\delta)} \geq  1 - \frac{4b_y}{7} \ \text{and, } \label{eq:My_lb_svrg} \\
& \frac{1-b_y}{M_y} < 1 - \frac{3b_y}{7} . \label{eq:one_by_My_ub_svrg}
\end{align}

\textbf{Case II:}  $b_x > \sqrt{\delta}$ .
\begin{align}
\gamma_x = \frac{1}{4(1+\delta)\lambda_{\max}(I-W)} .
\end{align}
We have
\begin{align}
M_x & = 1 - \frac{\sqrt{\delta }\alpha_{x}}{1-\frac{\gamma_{x}}{2}\lambda_{\max}(I-W)} \nonumber\\
& = 1 - \frac{\sqrt{\delta }\alpha_{x}}{1-\frac{1}{8(1+\delta)}} \nonumber\\
& =  1 - \frac{\frac{\sqrt{\delta }b_{x}}{1+\delta}}{1-\frac{1}{8(1+\delta)}} \nonumber\\
& = 1 - \frac{\sqrt{\delta }b_x \times 8(1+\delta) }{(1+\delta)(8(1+\delta) - 1)} \nonumber\\
& = 1 - \frac{8\sqrt{\delta}b_x}{8(1+\delta) - 1} .
\end{align}
As $8(1+\delta) - 1 > 8(1+\delta) - 1  -\delta = 7(1+\delta)$. Therefore,
\begin{align}
M_x & \geq 1 - \frac{8\sqrt{\delta}b_x}{7(1+\delta)} .
\end{align}
Notice that above lower bound matches with lower bound in \eqref{eq:Mx_lb_delta_svrg}. Therefore, by following steps similar to Case I, we obtain
\begin{align}
\frac{1-b_x}{M_x} & < 1 - \frac{3b_x}{7} , \ \text{and} \\
\frac{1-b_y}{M_y} & < 1 - \frac{3b_y}{7} .
\end{align}

\textbf{Feasibility of $1 - \frac{\gamma_{x}}{2}\lambda_{m-1}(I-W)$ and $1 - \frac{\gamma_{y}}{2}\lambda_{m-1}(I-W)$.}

If $b_x \leq \sqrt{\delta}$, then $\gamma_x = \frac{b_x}{4\sqrt{\delta}(1+\delta)\lambda_{\max}(I-W)}$. 
\begin{align}
1 - \frac{\gamma_x}{2} \lambda_{m-1}(I-W) & = 1 -  \frac{b_x}{8\sqrt{\delta}(1+\delta)\lambda_{\max}(I-W)} \lambda_{m-1}(I-W) \nonumber\\
& = 1 - \frac{b_x}{8\sqrt{\delta}(1+\delta)\kappa_g}.
\end{align}

If $b_x > \sqrt{\delta}$, then {\red{ $\gamma_x = \frac{1}{4(1+\delta)\lambda_{\max}(I-W)}$. 
\begin{align}
1 - \frac{\gamma_x}{2} \lambda_{m-1}(I-W) & = 1 -  \frac{1}{8(1+\delta)\lambda_{\max}(I-W)} \lambda_{m-1}(I-W) \\
& = 1 - \frac{1}{8(1+\delta)\kappa_g}.
\end{align}
}}
We now have a result on recursive relationship on $E\left[ \tilde{\Phi}_{t}\right]$. 
\begin{lemma} \label{lem:exp_phi_tilde_rec} Let $\{x_{t} \}_t, \{y_{t}\}_t$ be the sequences generated by Algorithm \ref{alg:svrg_known_mu}. Suppose Assumptions \ref{s_convexity_assumption}-\ref{weight_matrix_assumption} and Assumptions \ref{smoothness_x_svrg}-\ref{lipschitz_yx_svrg} hold. Then for every $t \geq 1$:
\begin{align}
E\left[ \tilde{\Phi}_{t} \right] & \leq \rho^{t}\tilde{\Phi}_{0},
\end{align}
where $\rho$ is defined in equation \eqref{rho_svrg}.
\end{lemma}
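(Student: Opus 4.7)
\textbf{Proof Proposal for Lemma \ref{lem:exp_phi_tilde_rec}.}

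The plan is to derive a one-step contraction $E[\tilde{\Phi}_{t+1}] \leq \rho E[\tilde{\Phi}_t]$ and then iterate. The argument combines three ingredients: the per-step recursion of Lemmas \ref{lem:recursion_x}/\ref{lem:recursion_y} (applied with the fixed step size $s$), the SVRG gradient bound of Corollary \ref{cor:exp_xkt_ykt_svrg}, and the Bernoulli reference-point update of SVRGO. Since Lemma \ref{lem:parameters_feas_svrg} certifies that $(\alpha_x, \alpha_y, \gamma_x, \gamma_y)$ satisfy the hypotheses of Lemmas \ref{lem:recursion_x}/\ref{lem:recursion_y}, these recursions apply verbatim.

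First, I would add the $x$-side bound from Lemma \ref{lem:recursion_x} and the $y$-side bound from Lemma \ref{lem:recursion_y}, taking total expectation so that the left-hand side becomes
$$E\bigl[ M_x \|x_{t+1}-\mathbf{1}x^\star\|^2 + \tfrac{2s^2}{\gamma_x}\|D^x_{t+1}-D^\star_x\|^2_{(I-W)^\dagger} + \sqrt{\delta}\|H^x_{t+1}-H^\star_x\|^2 + (\text{$y$-analogue}) \bigr].$$
The right-hand side contains $\|x_t-\mathbf{1}x^\star - s\mathcal{G}^x_t + s\nabla_x F(\mathbf{1}z^\star)\|^2 + \|y_t-\mathbf{1}y^\star + s\mathcal{G}^y_t - s\nabla_y F(\mathbf{1}z^\star)\|^2$, which I would then bound via Corollary \ref{cor:exp_xkt_ykt_svrg}. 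This produces a mix of terms in $\|x_t-\mathbf{1}x^\star\|^2$, $\|y_t-\mathbf{1}y^\star\|^2$, $\|\tilde{x}_t-\mathbf{1}x^\star\|^2$, $\|\tilde{y}_t-\mathbf{1}y^\star\|^2$ together with the $D$- and $H$-error contributions carried over from step~1.

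Next I would handle the reference-point terms in $\tilde{\Phi}$. From Step (3) of SVRGO with $\omega \sim \mathrm{Bernoulli}(p)$,
$$E\|\tilde{x}_{t+1}-\mathbf{1}x^\star\|^2 = p\|x_t-\mathbf{1}x^\star\|^2 + (1-p)\|\tilde{x}_t-\mathbf{1}x^\star\|^2,$$
and similarly for $\tilde{y}$. Adding $\tilde{c}_x E\|\tilde{x}_{t+1}-\mathbf{1}x^\star\|^2 + \tilde{c}_y E\|\tilde{y}_{t+1}-\mathbf{1}y^\star\|^2$ to the inequality from step~2 completes the left-hand side to $E[\tilde{\Phi}_{t+1}]$. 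Now I would inspect the coefficients on the right. Using $b_x = s\mu_x - 4s^2L_{yx}^2/(np_{\min}) - \tilde{c}_x p$ from \eqref{eq:bx_by_svrg}, the coefficient of $\|x_t-\mathbf{1}x^\star\|^2$ collapses to $1-b_x$, which I rewrite as $\tfrac{1-b_x}{M_x}\cdot M_x$. Using $\tilde{c}_x = 8s^2(L_{xx}^2+L_{yx}^2)/(np_{\min}p)$, the coefficient of $\|\tilde{x}_t-\mathbf{1}x^\star\|^2$ becomes
$$\tfrac{4s^2(L_{xx}^2+L_{yx}^2)}{np_{\min}} + (1-p)\tilde{c}_x \;=\; \tfrac{\tilde{c}_x p}{2} + (1-p)\tilde{c}_x \;=\; (1-\tfrac{p}{2})\tilde{c}_x.$$
The $D$- and $H$-coefficients already appear in the form $(1-\tfrac{\gamma_x}{2}\lambda_{m-1}(I-W))\cdot \tfrac{2s^2}{\gamma_x}$ and $(1-\alpha_x)\cdot\sqrt{\delta}$, respectively, and analogously on the $y$-side. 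Taking the maximum of all seven contraction factors yields exactly $\rho$ from \eqref{rho_svrg}, so $E[\tilde{\Phi}_{t+1}] \leq \rho\, E[\tilde{\Phi}_t]$; a straightforward induction on $t$ (using the tower property) then gives $E[\tilde{\Phi}_t] \leq \rho^t\tilde{\Phi}_0$.

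The main obstacle I anticipate is the bookkeeping in step~2--3: ensuring that the additive contributions from Corollary \ref{cor:exp_xkt_ykt_svrg} (the $4s^2L_{yx}^2/(np_{\min})$ on the $x_t$ term and the $4s^2(L_{xx}^2+L_{yx}^2)/(np_{\min})$ on the $\tilde{x}_t$ term) combine with $\tilde{c}_x p$ and $(1-p)\tilde{c}_x$ precisely as dictated by the definitions \eqref{eq:bx_by_svrg} of $b_x$ and $\tilde{c}_x$, so that the $\|x_t\|^2$ factor collapses to $1-b_x$ (matching the $M_x$-rescaling in $\Phi_t$) and the $\|\tilde{x}_t\|^2$ factor collapses to $(1-p/2)\tilde{c}_x$. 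This is why $\tilde{c}_x$ was defined with a $1/p$ scaling --- without this, the $\tilde{x}_t$-coefficient would not compress to $(1-p/2)\tilde{c}_x$, and the seventh contraction factor $1-p/2$ in $\rho$ would not appear.
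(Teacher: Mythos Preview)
Your proposal is correct and follows essentially the same approach as the paper: add the $x$/$y$ recursions of Lemmas \ref{lem:recursion_x}--\ref{lem:recursion_y}, insert Corollary \ref{cor:exp_xkt_ykt_svrg}, use the Bernoulli reference-point update, and then verify that the definitions of $b_x,b_y,\tilde c_x,\tilde c_y$ make the coefficients collapse to $\tfrac{1-b_x}{M_x}M_x$, $(1-\tfrac{p}{2})\tilde c_x$, etc., before taking the maximum to obtain $\rho$. Your coefficient computations and the final induction match the paper's argument.
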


\subsection{Proof of Lemma \ref{lem:exp_phi_tilde_rec} }

Iterates $\{x_t,y_t \}$ of Algorithm  \ref{alg:svrg_known_mu} are obtained by calling Algorithm \ref{alg:generic_procedure_sgda} at iterate $t-1$. Therefore, Lemma \ref{lem:recursion_y} and Lemma \ref{lem:recursion_x} also holds for Algorithm \ref{alg:svrg_known_mu}. Adding inequalities~\eqref{eq:one_step_progress_y} and~\eqref{eq:one_step_progress_x}  (Lemma \ref{lem:recursion_y} and Lemma \ref{lem:recursion_x}), we have
\begin{align}
& M_x E\left\Vert x_{t+1} - \mathbf{1}x^\star  \right\Vert^2 + \frac{2s^2}{\gamma_x} E \left\Vert  D^x_{t+1} - D^\star_x  \right\Vert^2_{(I-W)^\dagger} + \sqrt{\delta}E\left\Vert H^x_{t+1} - H^\star_x \right\Vert^2 + \nonumber\\ & \ + M_y E\left\Vert y_{t+1} - \mathbf{1}y^\star  \right\Vert^2 + \frac{2s^2}{\gamma_y} E \left\Vert  D^y_{t+1} - D^\star_y  \right\Vert^2_{(I-W)^\dagger} + \sqrt{\delta}E\left\Vert H^y_{t+1} - H^\star_y \right\Vert^2 \nonumber\\
& \leq \left\Vert x_{t} - \mathbf{1}x^\star - s\mathcal{G}^x_{t} + s\nabla_x F(\mathbf{1}z^\star)  \right\Vert^2 + \frac{2s^2}{\gamma_x}\left( 1- \frac{\gamma_y}{2}\lambda_{m-1}(I-W) \right)\left\Vert D^x_{t} - D^\star_x \right\Vert^2_{(I-W)^\dagger} \nonumber\\ & \ \ + \sqrt{\delta}(1-\alpha_{x})\left\Vert H^x_{t} - H^\star_x  \right\Vert^2 \nonumber\\ & + \ \left\Vert y_{t} - \mathbf{1}y^\star + s\mathcal{G}^y_{t} - s\nabla_y F(\mathbf{1}z^\star)  \right\Vert^2 + \frac{2s^2}{\gamma_y}\left( 1- \frac{\gamma_y}{2}\lambda_{m-1}(I-W) \right)\left\Vert D^y_{t} - D^\star_y \right\Vert^2_{(I-W)^\dagger} \nonumber\\ & \ \ + \sqrt{\delta}(1-\alpha_{y})\left\Vert H^y_{t} - H^\star_y  \right\Vert^2  .
\end{align}
By the definition of $\Phi_{t}$, the above inequality can be rewritten as
\begin{align}
& E\left[ \Phi_{t+1} \right] \nonumber\\
& \leq \left\Vert x_{t} - \mathbf{1}x^\star - s\mathcal{G}^x_{t} + s\nabla_x F(\mathbf{1}z^\star)  \right\Vert^2 + \frac{2s^2}{\gamma_x}\left( 1- \frac{\gamma_y}{2}\lambda_{m-1}(I-W) \right)\left\Vert D^x_{t} - D^\star_x \right\Vert^2_{(I-W)^\dagger} \nonumber\\ & \ \ + \sqrt{\delta}(1-\alpha_{x})\left\Vert H^x_{t} - H^\star_x  \right\Vert^2 \nonumber\\ & + \ \left\Vert y_{t} - \mathbf{1}y^\star + s\mathcal{G}^y_{t} - s\nabla_y F(\mathbf{1}z^\star)  \right\Vert^2 + \frac{2s^2}{\gamma_y}\left( 1- \frac{\gamma_y}{2}\lambda_{m-1}(I-W) \right)\left\Vert D^y_{t} - D^\star_y \right\Vert^2_{(I-W)^\dagger} \nonumber\\ & \ \ + \sqrt{\delta}(1-\alpha_{y})\left\Vert H^y_{t} - H^\star_y  \right\Vert^2 .
\end{align}
Taking conditional expectation on stochastic gradient at $t$-th step on both sides of above inequality and applying Tower property, we obtain
\begin{align}
& E\left[ \Phi_{t+1} \right] \nonumber\\
& \leq E\left\Vert x_{t} - \mathbf{1}x^\star - s\mathcal{G}^x_{t} + s\nabla_x F(\mathbf{1}z^\star)  \right\Vert^2 + \frac{2s^2}{\gamma_x}\left( 1- \frac{\gamma_x}{2}\lambda_{m-1}(I-W) \right)\left\Vert D^x_{t} - D^\star_x \right\Vert^2_{(I-W)^\dagger} \nonumber\\ & \ \ + \sqrt{\delta}(1-\alpha_{x})\left\Vert H^x_{t} - H^\star_x  \right\Vert^2\nonumber \\ & + \ E\left\Vert y_{t} - \mathbf{1}y^\star + s\mathcal{G}^y_{t} - s\nabla_y F(\mathbf{1}z^\star)  \right\Vert^2 + \frac{2s^2}{\gamma_y}\left( 1- \frac{\gamma_y}{2}\lambda_{m-1}(I-W) \right)\left\Vert D^y_{t} - D^\star_y \right\Vert^2_{(I-W)^\dagger} \nonumber \\ & \ \ + \sqrt{\delta}(1-\alpha_{y})\left\Vert H^y_{t} - H^\star_y  \right\Vert^2 \nonumber\\
& \leq \left( 1-\mu_x s + \frac{4s^2L^2_{yx}}{np_{\min}} \right)\left\Vert x_{t} - \mathbf{1}x^\star \right\Vert^2 + \left(1-s\mu_y + \frac{4s^2L^2_{xy}}{np_{\min}} \right) \left\Vert y_{t} - \mathbf{1}y^\star  \right\Vert^2  \nonumber\\ & \ + \frac{4s^2(L^2_{xx} + L^2_{yx})}{np_{\min}} \left\Vert \tilde{x}_{t} - \mathbf{1}x^\star \right\Vert^2 + \frac{4s^2(L^2_{yy} + L^2_{xy})}{np_{\min}} \left\Vert \tilde{y}_{t} - \mathbf{1}y^\star \right\Vert^2 \nonumber\\ & + \frac{2s^2}{\gamma_x}\left( 1- \frac{\gamma_x}{2}\lambda_{m-1}(I-W) \right)\left\Vert D^x_{t} - D^\star_x \right\Vert^2_{(I-W)^\dagger} + \frac{2s^2}{\gamma_y}\left( 1- \frac{\gamma_y}{2}\lambda_{m-1}(I-W) \right)\left\Vert D^y_{t} - D^\star_y \right\Vert^2_{(I-W)^\dagger} \nonumber\\ & \ + \sqrt{\delta}(1-\alpha_{x})\left\Vert H^x_{t} - H^\star_x  \right\Vert^2 + \sqrt{\delta}(1-\alpha_{y})\left\Vert H^y_{t} - H^\star_y  \right\Vert^2 . \label{Ephi_t+1}
\end{align}
The last step holds due to Corollary \ref{cor:exp_xkt_ykt_svrg}. From SVRG oracle, we have
\begin{align}
\left\Vert \tilde{x}^i_{t+1} - x^\star \right\Vert^2  & = \begin{cases} \left\Vert x^i_{t} - x^\star \right\Vert^2 \ \text{with probability} \ p \nonumber\\ \left\Vert \tilde{x}^i_{t} - x^\star \right\Vert^2 \ \text{with probability} \ 1-p \end{cases} , \text{ and}\nonumber\\
\left\Vert \tilde{y}^i_{t+1} - y^\star \right\Vert^2  & = \begin{cases} \left\Vert y^i_{t} - y^\star \right\Vert^2 \ \text{with probability} \ p \nonumber\\ \left\Vert \tilde{y}^i_{t} - y^\star \right\Vert^2 \ \text{with probability} \ 1-p \end{cases} .
\end{align}
Therefore,
\begin{align}
& E \left\Vert \tilde{x}_{t+1} - \mathbf{1}x^\star \right\Vert^2 + E\left\Vert \tilde{y}_{t+1} - \mathbf{1}y^\star \right\Vert^2 \nonumber\\
& = \sum_{i = 1}^m E\left\Vert \tilde{x}^i_{t+1} - x^\star \right\Vert^2 + \sum_{i = 1}^m E\left\Vert \tilde{y}^i_{t+1} - y^\star \right\Vert^2 \nonumber\\
& = \sum_{i = 1}^m \left( p\left\Vert x^i_{t} - x^\star \right\Vert^2 + (1-p)\left\Vert \tilde{x}^i_{t} - x^\star \right\Vert^2  \right) + \sum_{i = 1}^m \left( p\left\Vert y^i_{t} - y^\star \right\Vert^2 + (1-p)\left\Vert \tilde{y}^i_{t} - y^\star \right\Vert^2  \right) \nonumber\\
& = p\left\Vert x_{t} - \mathbf{1}x^\star \right\Vert^2 + (1-p)\left\Vert \tilde{x}_{t} - \mathbf{1}x^\star \right\Vert^2 +  p\left\Vert y_{t} - \mathbf{1}y^\star \right\Vert^2 + (1-p)\left\Vert \tilde{y}_{t} - \mathbf{1}y^\star \right\Vert^2 .
\end{align}
Using above equality and \eqref{Ephi_t+1}, we obtain
\begin{align}
& E\left[ \Phi_{t+1} \right] + \tilde{c}_x E \left\Vert \tilde{x}_{t+1} - \mathbf{1}x^\star \right\Vert^2 + \tilde{c}_y E \left\Vert \tilde{y}_{t+1} - \mathbf{1}y^\star \right\Vert^2 \nonumber\\
& \leq \left( 1-\mu_x s + \frac{4s^2L^2_{yx}}{np_{\min}} + \tilde{c}_x p \right)\left\Vert x_{t} - \mathbf{1}x^\star \right\Vert^2 + \left(1-s\mu_y + \frac{4s^2L^2_{xy}}{np_{\min}} + \tilde{c}_yp \right) \left\Vert y_{t} - \mathbf{1}y^\star  \right\Vert^2  \nonumber\\ & \ + \left( \tilde{c}_x(1-p) + \frac{4s^2(L^2_{xx} + L^2_{yx})}{np_{\min}} \right) \left\Vert \tilde{x}_{t} - \mathbf{1}x^\star \right\Vert^2 + \left( \tilde{c}_y(1-p) + \frac{4s^2(L^2_{yy} + L^2_{xy})}{np_{\min}} \right) \left\Vert \tilde{y}_{t} - \mathbf{1}y^\star \right\Vert^2 \nonumber\\ & + \frac{2s^2}{\gamma_x}\left( 1- \frac{\gamma_x}{2}\lambda_{m-1}(I-W) \right)\left\Vert D^x_{t} - D^\star_y \right\Vert^2_{(I-W)^\dagger} + \frac{2s^2}{\gamma_y}\left( 1- \frac{\gamma_y}{2}\lambda_{m-1}(I-W) \right) \left\Vert D^y_{t} - D^\star_y \right\Vert^2_{(I-W)^\dagger} \nonumber\\ & \ + \sqrt{\delta}(1-\alpha_{x})\left\Vert H^x_{t} - H^\star_x  \right\Vert^2 + \sqrt{\delta}(1-\alpha_{y})\left\Vert H^y_{t} - H^\star_y  \right\Vert^2 . \label{eq:exp_phikt_ctilde_svrg}
\end{align}
We have $\tilde{c}_x = \frac{8s^2(L^2_{xx} + L^2_{yx})}{np_{\min}p}$. The coefficient of $\left\Vert \tilde{x}_{t} - \mathbf{1}x^\star \right\Vert^2$ in~\eqref{eq:exp_phikt_ctilde_svrg} is
\begin{align}
\tilde{c}_x(1-p) + \frac{4s^2(L^2_{xx} + L^2_{yx})}{np_{\min}} & = \tilde{c}_x \left( 1-p + \frac{4s^2(L^2_{xx} + L^2_{yx})}{np_{\min}\tilde{c}_x}  \right) \nonumber\\
& = \tilde{c}_x \left( 1-p + \frac{4s^2(L^2_{xx} + L^2_{yx})}{np_{\min}} \frac{np_{\min}p}{8s^2(L^2_{xx} + L^2_{yx})} \right) \nonumber\\
& = \tilde{c}_x \left( 1-p + \frac{p}{2} \right) \nonumber\\
& = \tilde{c}_x \left(1-\frac{p}{2} \right) ,
\end{align}
and the coefficient of $\left\Vert \tilde{y}_{t} - \mathbf{1}y^\star \right\Vert^2$ in~\eqref{eq:exp_phikt_ctilde_svrg} is
\begin{align}
\tilde{c}_y(1-p) + \frac{4s^2(L^2_{yy} + L^2_{xy})}{np_{\min}} & = \tilde{c}_y \left( 1-p + \frac{4s^2(L^2_{yy} + L^2_{xy})}{np_{\min}\tilde{c}_y}  \right) \nonumber\\
& = \tilde{c}_y \left( 1-p + \frac{4s^2(L^2_{yy} + L^2_{xy})}{np_{\min}} \frac{np_{\min}p}{8s^2(L^2_{yy} + L^2_{xy})} \right) \nonumber\\
& = \tilde{c}_y \left( 1-p + \frac{p}{2} \right)\nonumber \\
& = \tilde{c}_y \left(1-\frac{p}{2} \right) .
\end{align}
Substituting the above simplified coefficients into \eqref{eq:exp_phikt_ctilde_svrg}, we see that
\begin{align}
& E\left[ \Phi_{t+1} \right] + \tilde{c}_x E \left\Vert \tilde{x}_{t+1} - \mathbf{1}x^\star \right\Vert^2 + \tilde{c}_y E \left\Vert \tilde{y}_{t+1} - \mathbf{1}y^\star \right\Vert^2 \nonumber\\
& \leq \left( 1-\mu_x s + \frac{4s^2L^2_{yx}}{np_{\min}} + \tilde{c}_x p \right)\left\Vert x_{t} - \mathbf{1}x^\star \right\Vert^2 + \left(1-s\mu_y + \frac{4s^2L^2_{xy}}{np_{\min}} + \tilde{c}_yp \right) \left\Vert y_{t} - \mathbf{1}y^\star  \right\Vert^2  \nonumber\\ & \ + \tilde{c}_x \left(1-\frac{p}{2} \right) \left\Vert \tilde{x}_{t} - \mathbf{1}x^\star \right\Vert^2 + \tilde{c}_y \left(1-\frac{p}{2} \right) \left\Vert \tilde{y}_{t} - \mathbf{1}y^\star \right\Vert^2 \nonumber\\ & + \frac{2s^2}{\gamma_x}\left( 1- \frac{\gamma_x}{2}\lambda_{m-1}(I-W) \right)\left\Vert D^x_{t} - D^\star_x \right\Vert^2_{(I-W)^\dagger} + \frac{2s^2}{\gamma_y}\left( 1- \frac{\gamma_y}{2}\lambda_{m-1}(I-W) \right) \left\Vert D^y_{t} - D^\star_y \right\Vert^2_{(I-W)^\dagger} \nonumber\\ & \ + \sqrt{\delta}(1-\alpha_{x})\left\Vert H^x_{t} - H^\star_x  \right\Vert^2 + \sqrt{\delta}(1-\alpha_{y})\left\Vert H^y_{t} - H^\star_y  \right\Vert^2 .
\end{align}
By taking total expectation on both sides and using the definition of $b_x$ and $b_y$, we obtain
\begin{align}
& E\left[ \Phi_{t+1} \right] + \tilde{c}_x E \left\Vert \tilde{x}_{t+1} - \mathbf{1}x^\star \right\Vert^2 + \tilde{c}_y E \left\Vert \tilde{y}_{t+1} - \mathbf{1}y^\star \right\Vert^2 \nonumber\\
& \leq \left( 1-b_x \right)E\left\Vert x_{t} - \mathbf{1}x^\star \right\Vert^2 + \left(1- b_y \right) E\left\Vert y_{t} - \mathbf{1}y^\star  \right\Vert^2 \nonumber\\ & \  + \tilde{c}_x \left(1-\frac{p}{2} \right)E\left\Vert \tilde{x}_{t} - \mathbf{1}x^\star \right\Vert^2 + \tilde{c}_y \left(1-\frac{p}{2} \right) E\left\Vert \tilde{y}_{t} - \mathbf{1}y^\star \right\Vert^2 \nonumber\\ & + \frac{2s^2}{\gamma_x}\left( 1- \frac{\gamma_x}{2}\lambda_{m-1}(I-W) \right)E\left\Vert D^x_{t} - D^\star_x \right\Vert^2_{(I-W)^\dagger} + \frac{2s^2}{\gamma_y}\left( 1- \frac{\gamma_y}{2}\lambda_{m-1}(I-W) \right) E\left\Vert D^y_{t} - D^\star_y \right\Vert^2_{(I-W)^\dagger} \nonumber\\ & \ + \sqrt{\delta}(1-\alpha_{x})E\left\Vert H^x_{t} - H^\star_x  \right\Vert^2 + \sqrt{\delta}(1-\alpha_{y})E\left\Vert H^y_{t} - H^\star_y  \right\Vert^2 \nonumber\\
& = \left( \frac{1-b_x}{M_x} \right)M_xE\left\Vert x_{t} - \mathbf{1}x^\star \right\Vert^2 + \left(\frac{1- b_y}{M_y} \right) M_yE\left\Vert y_{t} - \mathbf{1}y^\star  \right\Vert^2 \nonumber\\ & \  + \tilde{c}_x \left(1-\frac{p}{2} \right)E\left\Vert \tilde{x}_{t} - \mathbf{1}x^\star \right\Vert^2 + \tilde{c}_y \left(1-\frac{p}{2} \right) E\left\Vert \tilde{y}_{t} - \mathbf{1}y^\star \right\Vert^2 \nonumber\\ & + \frac{2s^2}{\gamma_x}\left( 1- \frac{\gamma_x}{2}\lambda_{m-1}(I-W) \right)E\left\Vert D^x_{t} - D^\star_x \right\Vert^2_{(I-W)^\dagger} + \frac{2s^2}{\gamma_y}\left( 1- \frac{\gamma_y}{2}\lambda_{m-1}(I-W) \right) E\left\Vert D^y_{t} - D^\star_y \right\Vert^2_{(I-W)^\dagger} \nonumber\\ & \ + \sqrt{\delta}(1-\alpha_{x})E\left\Vert H^x_{t} - H^\star_x  \right\Vert^2 + \sqrt{\delta}(1-\alpha_{y})E\left\Vert H^y_{t} - H^\star_y  \right\Vert^2 \\
& \leq \max \left\lbrace \frac{1-b_x}{M_x}, \frac{1-b_y}{M_y} , 1- \frac{\gamma_x}{2}\lambda_{m-1}(I-W) , 1- \frac{\gamma_y}{2}\lambda_{m-1}(I-W) , 1-\alpha_{x} , 1-\alpha_{y} , 1-\frac{p}{2} \right\rbrace \nonumber \\ & \ \ \ \times \left( E\left[ \Phi_{t} \right] + \tilde{c}_x E \left\Vert \tilde{x}_{t} - \mathbf{1}x^\star \right\Vert^2 + \tilde{c}_y E \left\Vert \tilde{y}_{t} - \mathbf{1}y^\star \right\Vert^2 \right) \nonumber\\
& = \rho \left( E\left[ \Phi_{t} \right] + \tilde{c}_x E \left\Vert \tilde{x}_{t} - \mathbf{1}x^\star \right\Vert^2 + \tilde{c}_y E \left\Vert \tilde{y}_{t} - \mathbf{1}y^\star \right\Vert^2 \right) , \label{eq:exp_phikt_ztilde_rho_svrg}
\end{align}
where
\begin{align}
\rho & = \max \left\lbrace \frac{1-b_x}{M_x}, \frac{1-b_y}{M_y} , 1- \frac{\gamma_x}{2}\lambda_{m-1}(I-W) , 1- \frac{\gamma_y}{2}\lambda_{m-1}(I-W) , 1-\alpha_{x} , 1-\alpha_{y} , 1-\frac{p}{2} \right\rbrace .
\end{align}
By the definition of $\tilde{\Phi}_{t} = \Phi_{t} + \tilde{c}_x \left\Vert \tilde{x}_{t} - \mathbf{1}x^\star \right\Vert^2 + \tilde{c}_y \left\Vert \tilde{y}_{t} - \mathbf{1}y^\star \right\Vert^2$, \eqref{eq:exp_phikt_ztilde_rho_svrg} reduces to
\begin{align}
E\left[ \tilde{\Phi}_{t+1} \right] & \leq \rho E\left[ \tilde{\Phi}_{t} \right] .
\end{align}
Therefore, $E\left[ \tilde{\Phi}_{t+1} \right] \leq \rho^{t+1} \tilde{\Phi}_{0}$.

\section{Proof of Theorem \ref{thm:svrg_complexity}}
\label{appendix_svrg_main_result}

This proof is based on several intermediate results proved in Appendices \ref{appendix_A}-\ref{appendix_C}. Hence it would be useful to refer to those results in order to appreciate the proof of  Theorem \ref{thm:svrg_complexity}. 
 
Observe that 
\begin{align}
E \left\Vert x_{t+1} - \mathbf{1}x^\star \right\Vert^2 + E\left\Vert y_{t+1} - \mathbf{1}y^\star \right\Vert^2 
& \leq  \frac{1}{ \min \{M_{x},M_{y}\}}  \left( M_{x}E\left\Vert x_{t+1} - \mathbf{1}x^\star \right\Vert^2 + M_{y} E\left\Vert  y_{t+1} - \mathbf{1}y^\star \right\Vert^2 \right) \nonumber\\
& \leq \frac{1}{ \min \{M_{x},M_{y}\}} E\left[ \tilde{\Phi}_{t+1} \right] \\
& \leq \frac{1}{ M} \rho^{t+1} \tilde{\Phi}_{0} ,
\end{align}
where last inequality follows from Lemma \ref{lem:exp_phi_tilde_rec} and $M := \min \{ M_x, M_y \}$. Hence, 
\begin{align}
E\left\Vert x_{T(\epsilon)} - \mathbf{1}x^\star  \right\Vert^2 + E\left\Vert y_{T(\epsilon)} - \mathbf{1}y^\star  \right\Vert^2 & \leq \epsilon ,
\end{align}
for $T(\epsilon) = \frac{1}{-\log \rho} \log \left( \frac{\tilde{\Phi}_0}{M\epsilon} \right)$.

\subsubsection{Gradient Computation Complexity}

Recall 
\begin{align}
\rho & = \max \left\lbrace \frac{1-b_x}{M_x}, \frac{1-b_y}{M_y} , 1- \frac{\gamma_x}{2}\lambda_{m-1}(I-W) , 1- \frac{\gamma_y}{2}\lambda_{m-1}(I-W) , 1-\alpha_{x} , 1-\alpha_{y} , 1-\frac{p}{2} \right\rbrace .
\end{align}
Using Lemma \ref{lem:parameters_feas_svrg}, $\rho$ can be upper bounded as
\begin{align}
\rho  & \leq  \max \Bigg \{ 1 - \frac{3b_x}{7}, 1 - \frac{3b_y}{7} , 1 - \frac{b_x}{8\sqrt{\delta}(1+\delta)\kappa_g} , {\red{1 - \frac{1}{8(1+\delta)\kappa_g}}} ,1 - \frac{b_y}{8\sqrt{\delta}(1+\delta)\kappa_g} ,   \\ & \qquad \qquad {\red{1 - \frac{1}{8(1+\delta)\kappa_g}}} , 1-\frac{b_x}{1+\delta} , 1-\frac{b_y}{1+\delta},  1-\frac{p}{2} \Bigg \} .
\end{align}
Using \eqref{eq:bx_lb} and \eqref{eq:by_lb}, we have
\begin{align}
1 - \frac{3b_x}{7} \leq 1- \frac{3}{7} \frac{np_{\min}}{144\kappa_f^2} = 1 - \frac{np_{\min}}{336\kappa_f^2} , \
1 - \frac{3b_y}{7} \leq 1- \frac{3}{7} \frac{np_{\min}}{144\kappa_f^2} = 1 - \frac{np_{\min}}{336\kappa_f^2} \nonumber\\
1-\frac{b_x}{1+\delta} \leq 1-\frac{np_{\min}}{144(1+\delta)\kappa_f^2} , \ 1-\frac{b_y}{1+\delta} \leq 1-\frac{np_{\min}}{144(1+\delta)\kappa_f^2} \\
1 - \frac{b_x}{8\sqrt{\delta}(1+\delta)\kappa_g} \leq 1 - \frac{np_{\min}}{1152\sqrt{\delta}(1+\delta)\kappa_g \kappa_f^2} , \ 1 - \frac{b_y}{8\sqrt{\delta}(1+\delta)\kappa_g} \leq 1 - \frac{np_{\min}}{1152\sqrt{\delta}(1+\delta)\kappa_g \kappa_f^2} .
\end{align}
Therefore,
\begin{align}
& \rho  \leq \max \left\lbrace 1 - \frac{np_{\min}}{336\kappa_f^2} , 1 - \frac{np_{\min}}{1152\sqrt{\delta}(1+\delta)\kappa_g \kappa_f^2} , 1 - \frac{1}{8(1+\delta)\kappa_g} , 1-\frac{np_{\min}}{144(1+\delta)\kappa_f^2} , 1-\frac{p}{2}   \right\rbrace  \\
& = 1 - \min \left\lbrace \frac{np_{\min}}{336\kappa_f^2} , \frac{np_{\min}}{1152\sqrt{\delta}(1+\delta)\kappa_g \kappa_f^2} , \frac{1}{8(1+\delta)\kappa_g}, \frac{np_{\min}}{144(1+\delta)\kappa_f^2} , \frac{p}{2}   \right\rbrace \\
& =: 1 - \tilde{C} .
\end{align}
By taking log on both sides, we obtain
\begin{align}
 \log \rho & \leq \log(1 - \tilde{C}) \nonumber\\
-\log \rho & \geq -\log(1 - \tilde{C}) \nonumber\\
 \frac{1}{-\log \rho} & \leq \frac{1}{-\log(1 - \tilde{C})} \nonumber\\
 &  \leq \frac{5}{\tilde{C}} \nonumber \\
 & = 5 \left( \min \left\lbrace \frac{np_{\min}}{336\kappa_f^2} , \frac{np_{\min}}{1152\sqrt{\delta}(1+\delta)\kappa_g \kappa_f^2} ,\frac{1}{8(1+\delta)\kappa_g}, \frac{np_{\min}}{144(1+\delta)\kappa_f^2} , \frac{p}{2}   \right\rbrace \right)^{-1}  \nonumber\\
 & = 5\max \left\lbrace \frac{336\kappa_f^2}{np_{\min}} , \frac{1152\sqrt{\delta}(1+\delta)\kappa_g \kappa_f^2}{np_{\min}} ,8(1+\delta)\kappa_g, \frac{144(1+\delta)\kappa_f^2}{np_{\min}} , \frac{2}{p}   \right\rbrace ,
\end{align}
where the fourth inequality uses the fact that $(1/-\log(1-x)) \leq 5/x$ for all $0 < x < 1 $. Using Lemma \ref{lem:parameters_feas_svrg}, we have $M_x \geq 1 - \frac{4b_x}{7}$. Therefore, $M_x > 1 - \frac{4}{7} = \frac{3}{7}$ because $0 < b_x < 1$. Moreover, $M_y > \frac{3}{7}$ as $0 < b_y < 1$. Therefore, $\log \left( \frac{\tilde{\Phi}_0}{M\epsilon} \right) \leq \log \left( \frac{7\tilde{\Phi}_0}{3\epsilon} \right)$ .
Hence,
\begin{align} 
T(\epsilon) & = \mathcal{O} \left(\max \Bigg \lbrace \frac{\kappa_f^2}{np_{\min}} , \frac{\sqrt{\delta}(1+\delta)\kappa_g \kappa_f^2}{np_{\min}} ,(1+\delta)\kappa_g, \frac{(1+\delta)\kappa_f^2}{np_{\min}} , \frac{2}{p} \Bigg \rbrace {\red{\log \left( \frac{\tilde{\Phi}_0}{\epsilon} \right)}} \right).
\end{align}

\subsection{Proof of Corollary \ref{cor:grad_comp_svrg}}

Each node $i$ needs atmost $T(\epsilon)\left(2B + pN_\ell \right)$ gradient computations to achieve $\epsilon$-accurate saddle point solution. We have $n > \sqrt{N_\ell} $ and $p = B/\sqrt{N_\ell}$. We consider two cases. In the first case, suppose that $T(\epsilon) = \frac{2}{p} \log \left( \frac{\tilde{\Phi}_0}{\epsilon} \right)$ then for every node $i$, average number of gradients computed are  $\frac{2}{p}(2B+pN_\ell)\log \left( \frac{\tilde{\Phi}_0}{\epsilon} \right) = \left(\frac{4B}{p} + 2N_\ell \right)\log \left( \frac{\tilde{\Phi}_0}{\epsilon} \right)= \left(\frac{4B\sqrt{N_\ell}}{B} + 2N_\ell\right)\log \left( \frac{\tilde{\Phi}_0}{\epsilon} \right) = \left(4\sqrt{N_\ell} + 2N_\ell\right)\log \left( \frac{\tilde{\Phi}_0}{\epsilon} \right)  = \mathcal{O}\left((\sqrt{N_\ell} + N_\ell)\log \left( \frac{\tilde{\Phi}_0}{\epsilon} \right) \right)$. Secondly, suppose that
	\begin{align}
	T(\epsilon) & = \mathcal{O} \left(\max \Bigg \lbrace \frac{\sqrt{\delta}(1+\delta)\kappa_g \kappa_f^2}{np_{\min}} ,(1+\delta)\kappa_g, \frac{(1+\delta)\kappa_f^2}{np_{\min}} \Bigg \rbrace \log \left( \frac{\tilde{\Phi}_0}{\epsilon} \right) \right)
	\end{align}	
	Then total number of gradients computed per node are 
		\begin{align}
			(2B+pN_\ell) \mathcal{O} \left( \max \Bigg \lbrace \frac{\sqrt{\delta}(1+\delta)\kappa_g \kappa_f^2}{np_{\min}} ,(1+\delta)\kappa_g, \frac{(1+\delta)\kappa_f^2}{np_{\min}} \Bigg \rbrace \log \left( \frac{\tilde{\Phi}_0}{\epsilon} \right) \right) \nonumber
			\end{align}
	By substituting $p_{\min} = \sqrt{N_\ell}/2n^2$, we have 
	\begin{align}
	& (2B+pN_\ell) \mathcal{O} \left( \max \Bigg \lbrace \frac{\sqrt{\delta}(1+\delta)\kappa_g \kappa_f^2}{np_{\min}} ,(1+\delta)\kappa_g, \frac{(1+\delta)\kappa_f^2}{np_{\min}}  \Bigg \rbrace \log \left( \frac{\tilde{\Phi}_0}{\epsilon} \right) \right) \nonumber \\
	& = (2B+pN_\ell) \mathcal{O} \left( \max \Bigg \lbrace \frac{2n\sqrt{\delta}(1+\delta)\kappa_g \kappa_f^2}{\sqrt{N_\ell}} ,(1+\delta)\kappa_g, \frac{2n(1+\delta)\kappa_f^2}{\sqrt{N_\ell}}  \Bigg \rbrace \log \left( \frac{\tilde{\Phi}_0}{\epsilon} \right) \right) \nonumber \\
	& = \left( 2B+ \frac{B}{\sqrt{N_\ell}}N_\ell \right) \mathcal{O} \left( \max \Bigg \lbrace \frac{2n\sqrt{\delta}(1+\delta)\kappa_g \kappa_f^2}{\sqrt{N_\ell}} ,(1+\delta)\kappa_g, \frac{2n(1+\delta)\kappa_f^2}{\sqrt{N_\ell}}  \Bigg \rbrace \log \left( \frac{\tilde{\Phi}_0}{\epsilon} \right) \right) \nonumber \\
	& = \left( \frac{2N_\ell}{n} + \frac{N_\ell^{3/2}}{n} \right) \mathcal{O} \left( \max \Bigg \lbrace \frac{2n\sqrt{\delta}(1+\delta)\kappa_g \kappa_f^2}{\sqrt{N_\ell}} ,(1+\delta)\kappa_g, \frac{2n(1+\delta)\kappa_f^2}{\sqrt{N_\ell}}  \Bigg \rbrace \log \left( \frac{\tilde{\Phi}_0}{\epsilon} \right) \right) \nonumber  \\
	& =   \mathcal{O} \Bigg ( \max \Bigg \lbrace \frac{2n\sqrt{\delta}(1+\delta)\kappa_g \kappa_f^2}{\sqrt{N_\ell}}\left( \frac{2N_\ell}{n} + \frac{N_\ell^{3/2}}{n} \right) ,(1+\delta)\kappa_g\left( \frac{2N_\ell}{n} + \frac{N_\ell^{3/2}}{n} \right), \frac{2n(1+\delta)\kappa_f^2}{\sqrt{N_\ell}}\left( \frac{2N_\ell}{n} + \frac{N_\ell^{3/2}}{n} \right)  \Bigg \rbrace \nonumber \\ & \ \ \ \ \ \ \times \log \left( \frac{\tilde{\Phi}_0}{\epsilon} \right) \Bigg ) \nonumber \\
	& = \mathcal{O} \left(  \max \Bigg \lbrace 2\sqrt{\delta}(1+\delta)\kappa_g \kappa_f^2 \left( 2\sqrt{N_\ell} + N_\ell \right) ,(1+\delta)\kappa_g\left( \frac{2N_\ell}{n} + \frac{N_\ell^{3/2}}{n} \right), 2(1+\delta)\kappa_f^2 \left( 2\sqrt{N_\ell} + N_\ell \right)  \Bigg \rbrace \log \left( \frac{\tilde{\Phi}_0}{\epsilon} \right) \right) \nonumber \\
	& \leq \mathcal{O} \left( \max \Bigg \lbrace 2\sqrt{\delta}(1+\delta)\kappa_g \kappa_f^2 \left( 2\sqrt{N_\ell} + N_\ell \right) ,(1+\delta)\kappa_g\left( \frac{2N_\ell}{\sqrt{N_\ell}} + \frac{N_\ell^{3/2}}{\sqrt{N_\ell}} \right), 2(1+\delta)\kappa_f^2 \left( 2\sqrt{N_\ell} + N_\ell \right)  \Bigg \rbrace \log \left( \frac{\tilde{\Phi}_0}{\epsilon} \right) \right) \nonumber \\
	& = \mathcal{O} \left( \max \Bigg \lbrace 2\sqrt{\delta}(1+\delta)\kappa_g \kappa_f^2 \left( 2\sqrt{N_\ell} + N_\ell \right) ,(1+\delta)\kappa_g \left( 2\sqrt{N_\ell} + N_\ell \right), 2(1+\delta)\kappa_f^2 \left( 2\sqrt{N_\ell} + N_\ell \right)  \Bigg \rbrace \log \left( \frac{\tilde{\Phi}_0}{\epsilon} \right) \right) \nonumber \\
	& = \mathcal{O} \left( 2\left( 2\sqrt{N_\ell} + N_\ell \right)  \max \Bigg \lbrace \sqrt{\delta}(1+\delta)\kappa_g \kappa_f^2 ,\frac{(1+\delta)\kappa_g}{2}, (1+\delta)\kappa_f^2   \Bigg \rbrace \log \left( \frac{\tilde{\Phi}_0}{\epsilon} \right) \right) \nonumber \\
	& = \mathcal{O} \left( \left( \sqrt{N_\ell} + N_\ell \right)  \max \Bigg \lbrace \sqrt{\delta}(1+\delta)\kappa_g \kappa_f^2 ,\frac{(1+\delta)\kappa_g}{2}, (1+\delta)\kappa_f^2   \Bigg \rbrace \log \left( \frac{\tilde{\Phi}_0}{\epsilon} \right) \right). \nonumber
	\end{align}

\section{Discussion on the analysis techniques}

In this section, we discuss and compare the analysis techniques of our work with those in  existing works. In \cite{li2021decentralized} a convex composite minimization problem is studied and inexact PDHG method is applied to its saddle point formulation. In this work, we study a different problem \eqref{eq:main_opt_consenus_constraint} where a smooth function depends jointly on primal and dual variables. We prove that it is equivalent to study unconstrained saddle point problem \eqref{minmax_lagrange_problem} to get the solution of \eqref{eq:main_opt_consenus_constraint}. However, \cite{li2021decentralized} uses a well known equivalence between a convex minimization problem and its Lagrangian formulation \cite{lan2020communication}. We define additional quantities $D_y^\star, H_y^\star$ and Bregman distance functions $V_{f_i,y}(x_1,x_2), V_{-f_i,x}(y_1,y_2)$ in Appendix \ref{appendix_A} to obtain appropriate bounds. 

\textbf{Algorithm \ref{alg:CRDPGD_known_mu} analysis:} In contrast to \cite{li2021decentralized}, we get complicated upper bounds depending on primal and dual iterates in Lemma \ref{lem:exp_ykt_ystar_xkt_xstar} which yields different set of parameters. We prove the feasibility of these parameters and derive useful lower and upper bounds in Appendix \ref{appendix_parameters_feasibility}. \cite{li2021decentralized} uses diminishing step size and an induction approach to prove the convergence to exact solution. However, we use a different method summarized below to derive the convergence rate of Algorithm \eqref{alg:CRDPGD_known_mu}. We first derive a relation which connects iterate $t$ information with the restart iterates (see Lemma \ref{lem:exp_phit+1_phi_0}). Then by appropriately choosing the restart iterates $x_{k,0}, y_{k,0}, D^x_{k,0}, D^y_{k,0}, H^x_{k,0}$ and $H^y_{k,0}$ and inner iterates $t_k$, we get the complexity result in Appendix \ref{appendix_proof_genstoc_mainresult}. It is worth noting that proof techniques of Lemma \ref{lem:exp_phit+1_phi_0} and Theorem \ref{thm:complexity_general_stoch} are different from those in \cite{li2021decentralized} due to different algorithm structure involving a restart scheme, complicated bounds, and different sets of parameters.

\textbf{Algorithm \ref{alg:svrg_known_mu} analysis:} Using smoothness, strong convexity strong concavity assumptions, and definitions of $V_{f_i,y}(x_1,x_2)$ and $V_{-f_i,x}(y_1,y_2)$, we upper bound $E \left\Vert x_{t} - \mathbf{1}x^\star - s\mathcal{G}^x_{t} + s\nabla_x F(\mathbf{1}x^\star,\mathbf{1}y^\star)  \right\Vert^2 + E \left\Vert y_{t} - \mathbf{1}y^\star + s\mathcal{G}^y_{t} - s\nabla_y F(\mathbf{1}x^\star,\mathbf{1}y^\star)  \right\Vert^2$ in terms of $x_t, y_t, \tilde{x}_t, \tilde{y}_t, V_{f_i,y}(x_1,x_2)$ and $V_{-f_i,x}(y_1,y_2)$ in Lemma \ref{lem:exp_xkt_ykt_svrg}. Note that the upper bound in Lemma \ref{lem:exp_xkt_ykt_svrg} is complicated and different from that of \cite{li2021decentralized} because we have additional terms contributed by dual variable $y$ with different coefficients and terms containing square norms dependent on the reference points. This intermediate result generates different bounds and sets of parameters in the subsequent analysis. We carefully set the step size and choose algorithm parameters with proven feasibility in Lemma \ref{lem:parameters_feas_svrg}. We rigorously compute lower and upper bounds on chosen parameters in terms of $\kappa_f, \kappa_g$ and $\delta$ in Lemma \ref{lem:parameters_feas_svrg} and Appendix \ref{appendix_svrg_main_result}. In our work, these derivations are more involved in comparison to \cite{li2021decentralized}.

Analysis methods of \cite{xianetal2021fasterdecentnoncvxsp} and \cite{liu2020decentralized} are based on averaging quantities; for example average of iterates and gradients. The analysis in \cite{xianetal2021fasterdecentnoncvxsp} and \cite{liu2020decentralized} requires separate bounds for consensus error and gradient estimation errors and depends in addition on the smoothness of saddle point problem. In contrast to \cite{xianetal2021fasterdecentnoncvxsp} and \cite{liu2020decentralized}, our analysis does not demand any separate bound on consensus error and gradient estimation error and handles non-smooth functions as well. Unlike our compression based communication scheme, the analysis in \cite{beznosikovetal2020distsaddle} bounds errors using an accelerated gossip scheme and approximate solution obtained by solving an inner saddle point problem at every node.



\section{Numerical Experiments}
\label{appendix_experiments}

We evaluate the effectiveness of proposed algorithms on robust logistic regression problem 
\begin{align}
\min_{ x \in \mathcal{X}} \max_{y \in \mathcal{Y}} \Psi(x,y) = \frac{1}{N} \sum_{i = 1}^N \log\left( 1+ exp\left( -b_ix^\top(a_i + y)\right) \right) + \frac{\lambda}{2} \left\Vert x \right\Vert^2_2 -\frac{\beta}{2} \left\Vert y \right\Vert^2_2 \label{robust_logistic_regression}
\end{align}
over a binary classification data set $\mathcal{D} = \{(a_i, b_i) \}_{i = 1}^N$.  We consider constraint sets $\mathcal{X}$ and $\mathcal{Y}$ as $\ell_2$ ball of radius $100$ and $1$ respectively. We compute smoothness parameters $L_{xx}, L_{yy}, L_{xy}$ and $L_{yx}$ using Hessian information of the objective function \red{(see Appendix \ref{lipschitz_constant_estimation})} and set strong convexity and strong concavity parameters to $\lambda$ and $\beta$ respectively. Unless stated otherwise, we set $\lambda = \beta = 10$, number of nodes to $m = 20$ and number of batches to $n = 20$ in all our experiments. \red{The initial points $x_0, y_0$ are generated randomly and $D_x, D_y$ are set to $0$. We set up the step size of proposed methods and baseline methods using the theoretically values provided in the respective papers. } We implement all the experiments in Python Programming Language.
\vspace{-0.04in}
\subsection{Experimental Setup:}
\vspace{-0.01in}
\textbf{Datasets:} We rely on four binary classification datasets namely, a4a, phishing and ijcnn1 from \url{https://www.csie.ntu.edu.tw/~cjlin/libsvmtools/datasets/} and sido data from \url{http://www.causality.inf.ethz.ch/data/SIDO.html}. The characteristics of these datasets are reported in Table \ref{table:data_sets}. We distribute the samples across 20 nodes and create 20 mini batches of local samples for all datasets. 
\begin{table}[h]
\caption{Data Sets used for experiments. $N$ and $d$ denote respectively the number of samples and number of features.} \label{table:data_sets}
\begin{center}
\begin{tabular}{lll}
\toprule
\textbf{Data set}  & $N$ & $d$  \\
\midrule 
a4a    & 4781 & 122\\ 
\midrule 
phishing & 11,055 & 68 \\  
\midrule 
ijcnn1 & 49,990 & 22 \\
\midrule 
sido & 2536 & 4932 \\
\bottomrule
\end{tabular}
\end{center}
\end{table}

\textbf{Network Setting:} We conduct the experiments for 2D torus topology and ring topology. We generate weight matrix $W$ with $W_{ij} = 1/5$ and $W_{ij} = 1/3$ for all $(i,j) \in  \mathcal{E} \cup \{(i,i)\}$ for 2D torus topology and ring topology respectively.

\textbf{Compression Operator:} We consider an unbiased $b$-bits quantization operator $Q_{\infty}(x)$ \cite{liu2020linear} throughout our empirical study.
\begin{align}
Q_{\infty}(x) & = \left( \left\Vert x \right\Vert_\infty 2^{-(b-1)} sign(x) \right)\cdot  \Bigg\lfloor \frac{2^{b-1}\left\vert x \right\vert}{\left\Vert x \right\Vert_{\infty}} + u  \Bigg\rfloor,
\end{align}
where $\cdot$ represents Hadamard product, $\left\vert x \right\vert$ denotes elementwise absolute value and $u$ is a random vector uniformly distributed in $\left[ 0,1 \right]^d$.
\red{Theorem 3 in \cite{liu2020linear} shows that $Q_{\infty}(x)$  satisfies assumption \ref{compression_operator} with $\delta = \sup_x \frac{\left\Vert sign(x)2^{-(b-1)} \right\Vert^2\left\Vert x \right\Vert^2_\infty}{4\left\Vert x \right\Vert_2^2}$. We know that $\left\Vert x \right\Vert_\infty \leq \left\Vert x \right\Vert_2$ for all $x \in \mathbb{R}^d$. Using this inequality, we can upper bound $\delta$ as follows:
	\begin{align}
		\delta & \leq \sup_x \frac{\left\Vert sign(x)2^{-(b-1)} \right\Vert^2\left\Vert x \right\Vert^2_2}{4\left\Vert x \right\Vert_2^2} = \sup_x \frac{\left\Vert sign(x)2^{-(b-1)} \right\Vert^2}{4} \leq \frac{d}{4(2^{b-1})^2} . \label{delta}
	\end{align} 
} \red{The above bound is independent of $d$ for $b  = 1+\log_2 \sqrt{d}$. We use six different bits value from the set $\{ 1+ \log_2 \sqrt{d}, 2,  4, 8, 16, 32 \} $ to evaluate the behavior of Algorithm \ref{alg:CRDPGD_known_mu} and Algorithm \ref{alg:svrg_known_mu} with number of bits.  } 

\subsection{Baseline methods}
\vspace{-0.01in}
We compare the performance of proposed algorithms C-RDPSG and C-DPSVRG with three non-compression based baseline algorithms: (1) Distributed Min-Max Data similarity algorithm \cite{beznosikovetal2020distsaddle} (2) Decentralized Parallel Optimistic Stochastic Gradient (DPOSG) algorithm \cite{liu2020decentralized}  and, (3) Decentralized Minimax Hybrid Stochastic Gradient Descent (DM-HSGD) algorithm \cite{xianetal2021fasterdecentnoncvxsp}. 

\textbf{Distributed Min-Max data similarity:} This algorithm is based on accelerated gossip scheme employed on model updates and gradient vectors  \cite{beznosikovetal2020distsaddle}. The number of iterates in accelerated gossip scheme and the step size are computed according to the theoretical details provided in   \cite{beznosikovetal2020distsaddle}. This algorithm requires approximate solution of an  inner saddle point problem at every iterate. We run extragradient method \cite{korpelevich1976extragradient} to solve the inner saddle point problem with a desired precision accuracy provided in \cite{beznosikovetal2020distsaddle}. Throughout this section, we use the shorthand notation for Distributed Min-Max data similarity algorithm as Min-Max similarity. \newline
\textbf{Decentralized Parallel Optimistic Stochastic Gradient (DPOSG):} DPOSG \cite{liu2020decentralized} is a two step algorithm with local model averaging designed for solving unconstrained saddle-point problems in a decentralized fashion. We include the projection steps to both update sequences of DPOSG as we are solving constrained problem \eqref{robust_logistic_regression}. The step size and the number of local model averaging steps are tuned according to Theorem 1 in \cite{liu2020decentralized}. 

\textbf{Decentralized Minimax Hybrid Stochastic Gradient Descent (DM-HSGD):} DM-HSGD \cite{xianetal2021fasterdecentnoncvxsp} is a gradient tracking based algorithm designed for solving saddle point problems with a constraint set on dual variable. We incorporate projection step to the model update of primal variable. We use grid search to find the best step sizes for primal and dual variable updates. Other parameters like initial large batch size and parameters involved in gradient tracking update sequence are chosen according to the experimental setting in \cite{xianetal2021fasterdecentnoncvxsp}.
\vspace{-0.04in}
\subsection{Benchmark Quantities}
\vspace{-0.01in}
We run the centralized and uncompressed version of C-DPSVRG for $50,000$ iterations to find saddle point solution $z^\star = (x^\star, y^\star)$ of problem \eqref{robust_logistic_regression}. The performance of all the methods is measured using $\sum_{i = 1}^m \left\Vert z^i_t - z^\star \right\Vert^2$.  

\textbf{Number of gradient computations and communications:} We calculate the total number of gradient computations according to the number of samples used in the gradient computation at a given iterate $t$.  The number of communications per iterate are computed as the number of times a node exchanges information with its neighbors. 

\textbf{Number of bits transmitted:} We set number of bits $b = 4$ in compression operator $Q_\infty(x)$ for C-RDPSG and C-DPSVRG. Similar to \cite{koloskova2019decentralized}, we assume that on an average $5$ bits (1 bit for sign and 4 bits for quantization level) are transmitted at every iterate for C-RDPSG and C-DPSVRG. We assume that on an average 32 bits are transmitted per communication for DPOSG, DM-HSGD and Min-Max similarity algorithm. 
\vspace{-0.04in}
\subsection{Observations} 
\vspace{-0.01in}
\textbf{Comparison to baselines:} C-RDPSG converges faster in the beginning and slows down after reaching approximately $10^{-8}$ accuracy as depicted in Figure \ref{fig:comparison_with_existing_methods_2dtaurus} and Figure \ref{fig:comparison_with_existing_methods_ring}. C-DPSVRG converges faster than other baseline methods. DPOSG and DM-HSGD converge only to a neighbourhood of the saddle point solution and start oscillating after some time. The restart scheme's inclusion in C-RDPSG helps mitigate the flat and oscillatory behaviour at the later iterations, unlike DPOSG and DM-HSGD. C-RDPSG is faster than C-DPSVRG, DPOSG, DM-HSGD and Min-Max similarity in terms of gradient computations, communications and bits transmitted to achieve saddle points of moderate accuracy. The performance of C-RDPSG is competitive with DPOSG and DM-HSGD in the long run as demonstrated in Figure \ref{fig:comparison_with_existing_methods_2dtaurus} and Figure \ref{fig:comparison_with_existing_methods_ring}. 

\textbf{Compression effect:} Plots in Figure \ref{fig:comparison_with_existing_methods_2dtaurus} depict that C-DPSVRG is 1000 times faster than Min-Max similarity, DPOSG algorithm and 10 times faster than DM-HSGD in terms of transmitted bits. We observe that C-RDPSG is also 1000 times faster than Min-max similarity and DPOSG for obtaining saddle point solutions of moderate accuracy, in terms of transmitted bits.

\textbf{Communication efficiency:} The one-time communication at every iterate in C-DPSVRG speeds up communication and makes C-DPSVRG to be 100 times faster than Min-Max similarity and DPOSG methods as shown in Figure \ref{fig:comparison_with_existing_methods_2dtaurus}. C-RDPSG is 10 times faster than DM-HSGD  and 100 times faster than DPOSG and Min-Max similarity in terms of communications at the initial stages of the algorithm. 


\textbf{Different choices of reference probabilities:}
The full batch gradient computations in C-DPSVRG depends on the reference probability parameter $p$. Motivated from \cite{kovalev2020don}, we run C-DPSVRG with five different reference probabilities as $1/n, 1/(\kappa n^3)^{1/4}, 1/(\kappa n)^{1/2}, 1/(\kappa^3 n)^{1/4}$ and $1/\kappa$. From Figure \ref{fig:dsvrg_behavior_ref_prob}, we observe that setting $p = 1/n$ requires the least number of gradient computations as it corresponds to the less frequent computation of full batch gradients. 

\textbf{Impact of topology:} From Figure \ref{fig:comparison_with_existing_methods_ring}, we observe that the convergence behaviour of all methods is similar to 2D torus topology. We also note that ring topology requires large number of communications compared to 2D torus due to its sparse connectivity.

\textbf{Compression error:}
We plot compression error $\left\Vert Q(\nu^x) - \nu^x \right\Vert^2 + \left\Vert Q(\nu^y) - \nu^y \right\Vert^2$ against number of transmitted bits for C-RDPSG and C-DPSVRG as shown in Figure \ref{fig:compression_error_num_bits}. We observe that C-DPSVRG with $O(\log d)$ bits achieves compression error $10^{-25}$ in less than 20,000 transmitted bits. It shows a clear advantage of using $O(\log d)$ bits in C-DPSVRG while maintaining low compression error. In C-RDPSG, larger the number of bits used in the quantization operator, smaller the compression error. There are sharp jumps in the decay of compression error during a restart of C-RDPSG as depicted in Figure \ref{fig:compression_error_num_bits}. 

\textbf{Number of bits transmitted:}
\red{As demonstrated in Figure \ref{fig:comparison_bits_dsvrg}, C-DPSVRG transmits less number of bits when $b = 1+\log_2 \sqrt{d}$ to achieve highly accurate solution. We can observe that the convergence behavior of C-DPSVRG is affected by number of bits less than $1+\log_2 \sqrt{d}$. For example, the convergence of C-DPSVRG becomes slow  for sido data with $b = 2, 4 < 1+\log_2 \sqrt{d} \approx 7 $ as shown in Figure \ref{fig:comparison_bits_dsvrg}. It shows that $Q_\infty(x)$ provides better performance for $b  =\mathcal{O}(\log_2 d)$ especially for high-dimensional data points. The behavior of C-RDPSG is less affected by varying the number of bits as shown in Figure \ref{fig:comparison_bits_dsgd}. In the long term, C-RDPSG behavior is almost identical for all chosen values of number of bits except $b = 2$. }

\textbf{Impact of number of nodes:} 
As the number of nodes increases, C-RDPSG requires fewer gradient computations in the initial phase. However, smaller number of nodes gives faster convergence at the later iterations for 2D torus and ring topology, as depicted in Figure \ref{fig:dsgd_behavior_nodes} and Figure \ref{fig:dsgd_behavior_nodes_ring}. C-DPSVRG also requires small number of gradient computations in 2D torus with larger number of nodes because it assigns smaller batch sizes to every node. As depicted in Figure \ref{fig:dsvrg_behavior_nodes}, C-DPSVRG performance does not change much in terms of the number of communications and bits transmitted for 2D torus. The sparsity level of ring topology is higher than that of 2D torus and increases with number of nodes. It leads C-DPSVRG to achieve fast convergence eventually in terms of gradient computations with a smaller number of nodes, as demonstrated in Figure \ref{fig:dsvrg_behavior_nodes_ring}. In contrast to the performance of C-DPSVRG in terms of  communications in 2D torus (Figure \ref{fig:dsvrg_behavior_nodes}), C-DPSVRG requires more communications for large number of nodes in a ring topology, as shown in Figure \ref{fig:dsvrg_behavior_nodes_ring}. DM-HSGD and C-RDPSG are competitive in terms of gradient computations and communications with larger number of nodes as demonstrated in Figures \ref{fig:different_nodes} and \ref{fig:different_nodes_ring}. However it is to be noted that DM-HSGD exhibits an oscillatory behavior in the saddle point solutions, after reaching a moderate solution accuracy.  

\vspace{-0.05in}
\subsection{Estimating Lipschitz parameters}
\label{lipschitz_constant_estimation}
\vspace{-0.01in}
In this section, we estimate Lipschitz parameters $L_{xx}, L_{yy}, L_{xy}, L_{yy}$ of robust logistic regression problem \eqref{robust_logistic_regression}. Assume that each node $i$ has $N_i$ number of local samples such that $\sum_{i = 1}^m N_i = N$. Recall objective function $\Psi(x,y)$ in equation \eqref{robust_logistic_regression}:
\begin{align*}
	\Psi(x,y) &= \frac{1}{N} \sum_{i = 1}^N \log\left( 1+ exp\left( -b_ix^\top(a_i + y)\right) \right) + \frac{\lambda}{2} \left\Vert x \right\Vert^2_2 -\frac{\beta}{2} \left\Vert y \right\Vert^2_2 \\
	& = \frac{1}{N} \sum_{i = 1}^m \sum_{l = 1}^{N_i}  \log\left( 1+ exp\left( -b_{il}x^\top(a_{il} + y)\right) \right) + \frac{\lambda}{2} \left\Vert x \right\Vert^2_2 -\frac{\beta}{2} \left\Vert y \right\Vert^2_2 \\
	& = \sum_{i = 1}^m \left( \frac{1}{N}  \sum_{l = 1}^{N_i}  \log\left( 1+ exp\left( -b_{il}x^\top(a_{il} + y)\right) \right) + \frac{\lambda}{2m} \left\Vert x \right\Vert^2_2 -\frac{\beta}{2m} \left\Vert y \right\Vert^2_2 \right) \\
	& = \sum_{i = 1}^m f_i(x,y),
\end{align*}
where $f_i(x,y) = \frac{1}{N}  \sum_{l = 1}^{N_i}  \log\left( 1+ exp\left( -b_{il}x^\top(a_{il} + y)\right) \right) + \frac{\lambda}{2m} \left\Vert x \right\Vert^2_2 -\frac{\beta}{2m} \left\Vert y \right\Vert^2_2 $.
Gradients of $f_i(x,y)$ with respect to $x$ and $y$ are given by 
\begin{align*}
	\nabla_x f_i(x,y) & = \frac{1}{N} \sum_{l = 1}^{N_i} \frac{-b_{il}(a_{il} + y)}{1+ exp\left( b_{il}x^\top(a_{il} + y\right)} + \frac{\lambda}{m} x \\
	\nabla_y f_i(x,y) & = \frac{1}{N} \sum_{l = 1}^{N_i} \frac{-b_{il}x}{1+ exp\left( b_{il}x^\top(a_{il} + y\right)} - \frac{\beta}{m} y .
\end{align*}
We create $n$ batches $\{N_{i1}, \ldots, N_{in} \}$ of local samples $N_i$ and write $f_i(x,y)$ in the form of $\frac{1}{n} f_{ij}(x,y)$. 
\begin{align*}
	f_i(x,y) &= \frac{1}{N}  \sum_{l = 1}^{N_i}  \log\left( 1+ exp\left( -b_{il}x^\top(a_{il} + y)\right) \right) + \frac{\lambda}{2m} \left\Vert x \right\Vert^2_2 -\frac{\beta}{2m} \left\Vert y \right\Vert^2_2 \\
	& = \frac{1}{N} \sum_{j = 1}^n \sum_{l = 1}^{N_{ij}}  \log\left( 1+ exp\left( -b^j_{il}x^\top(a^j_{il} + y)\right) \right) + \frac{\lambda}{2m} \left\Vert x \right\Vert^2_2 -\frac{\beta}{2m} \left\Vert y \right\Vert^2_2  \\
	& =  \sum_{j = 1}^n \left( \frac{1}{N} \sum_{l = 1}^{N_{ij}}  \log\left( 1+ exp\left( -b^j_{il}x^\top(a^j_{il} + y)\right) \right) + \frac{\lambda}{2mn} \left\Vert x \right\Vert^2_2 -\frac{\beta}{2mn} \left\Vert y \right\Vert^2_2 \right) \\
	& = \frac{1}{n} \sum_{j = 1}^n \left( \frac{n}{N} \sum_{l = 1}^{N_{ij}}  \log\left( 1+ exp\left( -b^j_{il}x^\top(a^j_{il} + y)\right) \right) + \frac{\lambda}{2m} \left\Vert x \right\Vert^2_2 -\frac{\beta}{2m} \left\Vert y \right\Vert^2_2 \right) \\
	& = \frac{1}{n} \sum_{j = 1}^n f_{ij}(x,y) ,
\end{align*}
where $f_{ij}(x,y) = \frac{n}{N} \sum_{l = 1}^{N_{ij}}  \log\left( 1+ exp\left( -b^j_{il}x^\top(a^j_{il} + y)\right) \right) + \frac{\lambda}{2m} \left\Vert x \right\Vert^2_2 -\frac{\beta}{2m} \left\Vert y \right\Vert^2_2$. We are now ready to find required Lipschitz parameters. \newline
\textbf{Computing} $L^{ij}_{xx}$:
\begin{align*}
	\nabla_{xx}^2 f_{ij}(x,y) & = \frac{n}{N} \sum_{l = 1}^{N_{ij}} \frac{(a^j_{il} + y)(a^j_{il} + y)^\top exp\left( b^j_{il}x^\top(a^j_{il} + y\right)}{(1+ exp( b^j_{il}x^\top(a^j_{il} + y))^2} + \frac{\lambda}{m}I \\
	\implies \ \left\Vert \nabla_{xx}^2 f_{ij}(x,y) \right\Vert_2 & \leq \frac{n}{4N} \sum_{l = 1}^{N_{ij}} (2 \Vert a^j_{il} \Vert_2^2 + 2R_y^2) + \frac{\lambda}{m} \\
	& = \frac{n}{2N} \sum_{l = 1}^{N_{ij}} \Vert a^j_{il} \Vert_2^2 + \frac{nN_{ij}R^2_y}{2N} + \frac{\lambda}{m} =: L^{ij}_{xx} .
\end{align*}
\textbf{Computing} $L^{ij}_{yy}$:
\begin{align*}
	\nabla_{yy}^2 f_{ij}(x,y) & = \frac{n}{N} \sum_{l = 1}^{N_{ij}} \frac{exp\left( b^j_{il}x^\top(a^j_{il} + y\right)(b^j_{il})^2 xx^\top}{\left(1+exp\left( b^j_{il}x^\top(a^j_{il} + y\right) \right)^2} - \frac{\beta}{m}I \\
	\implies \ \left\Vert \nabla_{xx}^2 f_{ij}(x,y) \right\Vert_2 & \leq \frac{n}{N} \sum_{l = 1}^{N_{ij}} \frac{\left\Vert xx^\top \right\Vert_2}{4} + \frac{\beta}{m} \\
	& \leq \frac{nN_{ij}R_x^2}{4N} + \frac{\beta}{m} =: L^{ij}_{yy} .
\end{align*}
\textbf{Computing} $L^{ij}_{xy}$:
\begin{align*}
	\nabla_y (\nabla_x f_{ij}(x,y)) & = \frac{n}{N} \sum_{l = 1}^{N_{ij}} \left( \frac{-b^j_{il}I}{1 + exp\left( b^j_{il}x^\top(a^j_{il} + y)\right)} + (b^j_{il})^2(a^j_{il} + y)x^\top \frac{exp\left( b^j_{il}x^\top(a^j_{il} + y\right)}{(1+ exp( b^j_{il}x^\top(a^j_{il} + y))^2} \right) \\
	\text{Hence we have }& \\  
	\left\Vert \nabla_{xy}^2 f_{ij}(x,y) \right\Vert_2  & \leq \frac{n}{N} \sum_{l = 1}^{N_{ij}} \left( 1 + \frac{1}{4} \left\Vert (a^j_{il} + y)x^\top \right\Vert_2 \right) \\
	& \leq \frac{n}{N} \sum_{l = 1}^{N_{ij}} \left( 1 + \frac{R_x}{4} \Vert (a^j_{il} + y) \Vert_2 \right) \\
	& \leq \frac{n}{N} \sum_{l = 1}^{N_{ij}} \left( 1 + \frac{R_x}{4}(\Vert a^j_{il} \Vert_2 + R_y) \right) \\
	& = \frac{n}{N} \left( \left(1+ \frac{R_xR_y}{4} \right)N_{ij} + \frac{R_x}{4}  \sum_{l = 1}^{N_{ij}} \Vert a^j_{il} \Vert_2 \right) =: L^{ij}_{xy} . 
\end{align*}
We set $L_{xx} = \max_{i,j} \{ L^{ij}_{xx} \}, \ L_{yy} = \max_{i,j} \{L^{ij}_{yy}\}$ and $L_{xy} = L_{yx} = \max_{i,j} \{L^{ij}_{xy} \}$. The strong convexity and strong concavity parameters are respectively set to $\lambda$ and $\beta$.
\begin{figure}[!htbp]
\begin{minipage}{.22\textwidth}
  \centering
  \includegraphics[width=1\linewidth]{plots/a4a_grad_comp_comparison_with_existing_alg}  
\end{minipage}
\begin{minipage}{.22\textwidth}
  \centering
  \includegraphics[width=1\linewidth]{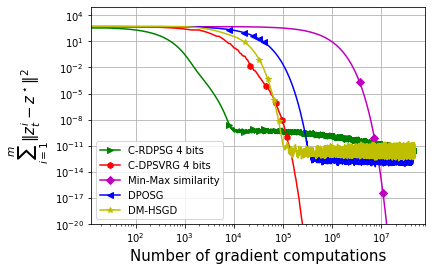}  
\end{minipage}
\begin{minipage}{.22\textwidth}
  \centering
  \includegraphics[width=1\linewidth]{plots/ijcnn1_grad_comp_comparison_with_existing_alg}  
\end{minipage}
\begin{minipage}{.22\textwidth}
  \centering
  \includegraphics[width=1\linewidth]{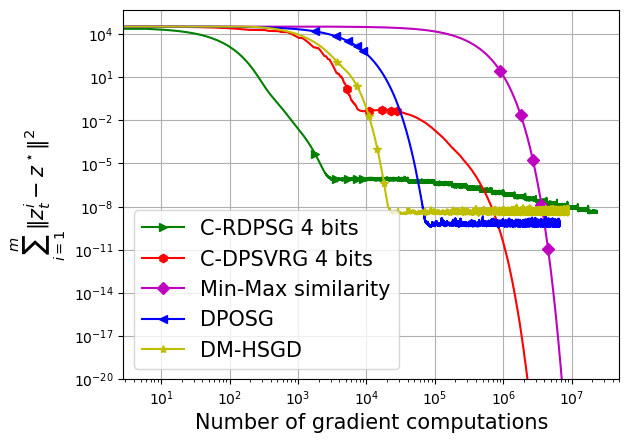}  
\end{minipage}
\begin{minipage}{.24\textwidth}
  \centering
  \includegraphics[width=1\linewidth]{plots/a4a_communications_comparison_with_existing_alg}  
\end{minipage}
\begin{minipage}{.24\textwidth}
  \centering
  \includegraphics[width=1\linewidth]{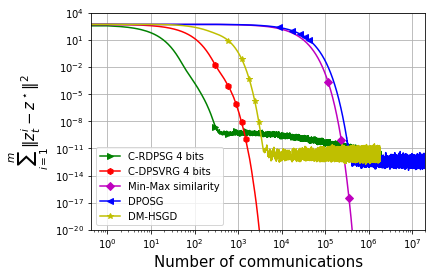}  
\end{minipage}
\begin{minipage}{.24\textwidth}
  \centering
  \includegraphics[width=1\linewidth]{plots/ijcnn1_communications_comparison_with_existing_alg}  
\end{minipage}
\begin{minipage}{.24\textwidth}
  \centering
  \includegraphics[width=1\linewidth]{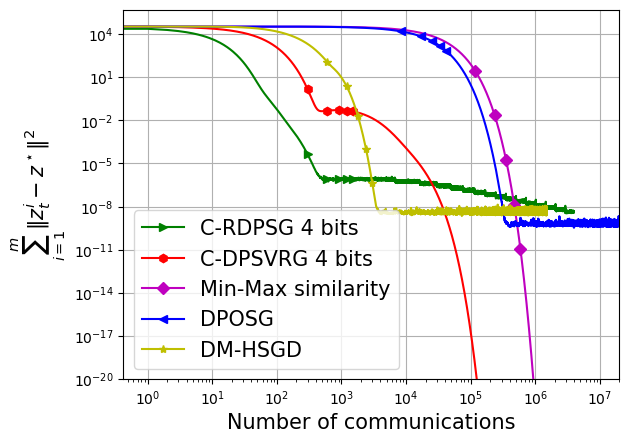}  
\end{minipage}
\begin{minipage}{.24\textwidth}
  \centering
  \includegraphics[width=1\linewidth]{plots/a4a_num_bits_comparison_with_existing_alg}  
\end{minipage}
\begin{minipage}{.24\textwidth}
  \centering
  \includegraphics[width=1\linewidth]{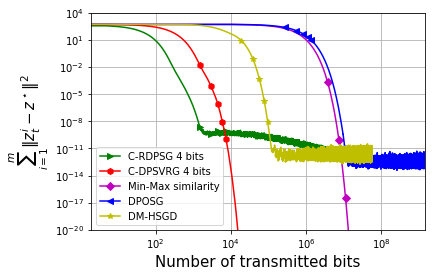}  
\end{minipage}
\begin{minipage}{.24\textwidth}
  \centering
  \includegraphics[width=1\linewidth]{plots/ijcnn1_num_bits_comparison_with_existing_alg}  
\end{minipage}
\begin{minipage}{.24\textwidth}
  \centering
  \includegraphics[width=1\linewidth]{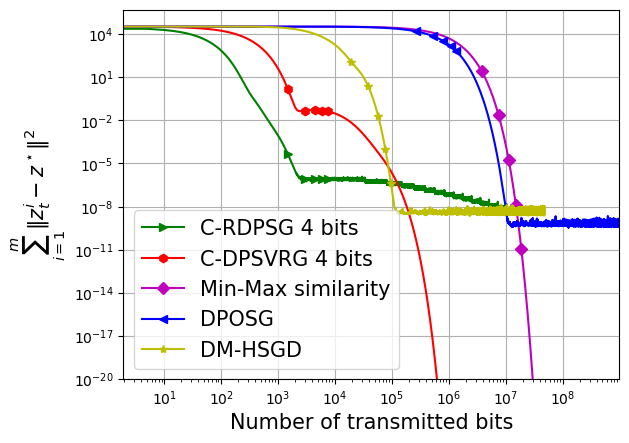}  
\end{minipage}
\caption{Convergence behavior of iterates to saddle point vs. Gradient computations (Row 1), Communications (Row 2), Number of bits transmitted (Row 3) for different algorithms in 2D torus topology. a4a, phishing, ijcnn, sido datasets are in Columns 1,2,3,4 respectively.} 
\label{fig:comparison_with_existing_methods_2dtaurus}
\end{figure}

\begin{figure}[H]
\begin{minipage}{.24\textwidth}
  \centering
  \includegraphics[width=1\linewidth]{plots/ring_a4a_grad_comp_comparison_with_existing_alg}  
\end{minipage}
\begin{minipage}{.24\textwidth}
  \centering
  \includegraphics[width=1\linewidth]{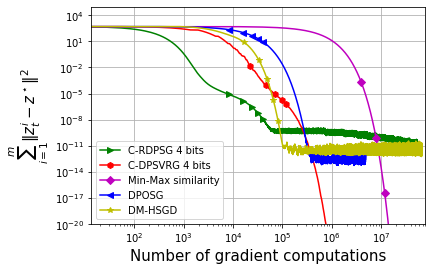}  
\end{minipage}
\begin{minipage}{.24\textwidth}
  \centering
  \includegraphics[width=1\linewidth]{plots/ring_ijcnn1_grad_comp_comparison_with_existing_alg}  
\end{minipage}
\begin{minipage}{.24\textwidth}
  \centering
  \includegraphics[width=1\linewidth]{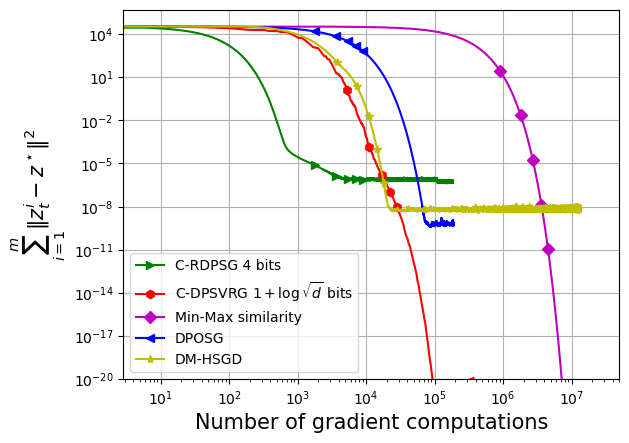}  
\end{minipage}
\begin{minipage}{.24\textwidth}
  \centering
  \includegraphics[width=1\linewidth]{plots/ring_a4a_communications_comparison_with_existing_alg}  
\end{minipage}
\begin{minipage}{.24\textwidth}
  \centering
  \includegraphics[width=1\linewidth]{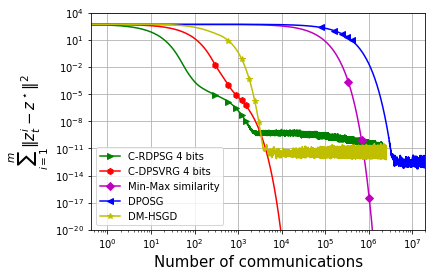}  
\end{minipage}
\begin{minipage}{.24\textwidth}
  \centering
  \includegraphics[width=1\linewidth]{plots/ring_ijcnn1_communications_comparison_with_existing_alg}  
\end{minipage}
\begin{minipage}{.24\textwidth}
  \centering
  \includegraphics[width=1\linewidth]{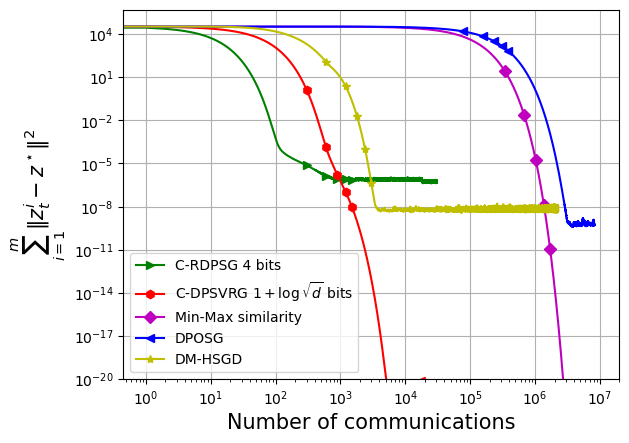}  
\end{minipage}
\begin{minipage}{.24\textwidth}
  \centering
  \includegraphics[width=1\linewidth]{plots/ring_a4a_num_bits_comparison_with_existing_alg}  
\end{minipage}
\begin{minipage}{.24\textwidth}
  \centering
  \includegraphics[width=1\linewidth]{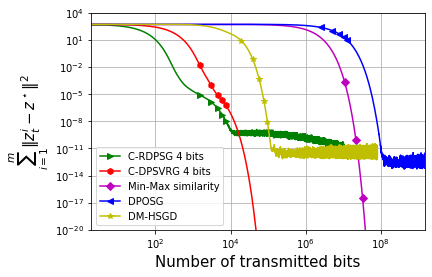}  
\end{minipage}
\begin{minipage}{.24\textwidth}
  \centering
  \includegraphics[width=1\linewidth]{plots/ring_ijcnn1_num_bits_comparison_with_existing_alg}  
\end{minipage}
\begin{minipage}{.24\textwidth}
  \centering
  \includegraphics[width=1\linewidth]{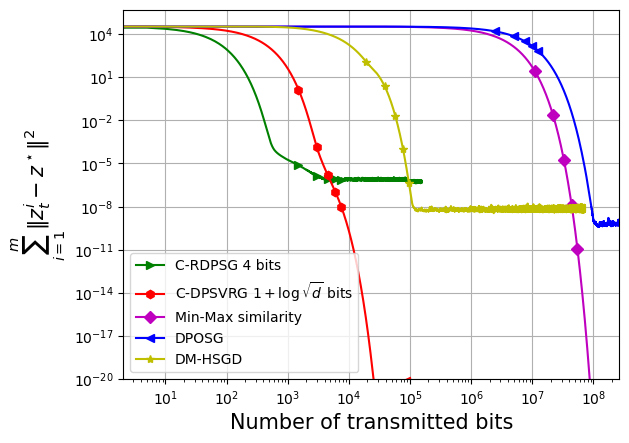}  
\end{minipage}
\caption{Convergence behavior of iterates to saddle point vs. Gradient computations (Row 1), Communications (Row 2), Number of bits transmitted (Row 3) for different algorithms in ring topology. a4a, phishing, ijcnn, sido datasets are in Columns 1,2,3,4 respectively.} 
\label{fig:comparison_with_existing_methods_ring}
\end{figure}
\begin{figure}[!h]
\begin{minipage}{.24\textwidth}
  \centering
  \includegraphics[width=1\linewidth]{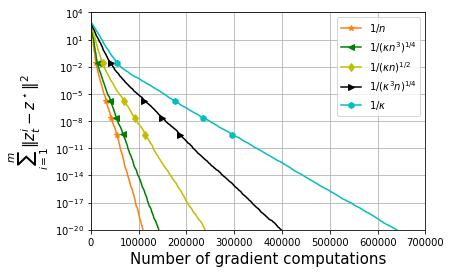}  
\end{minipage}
\begin{minipage}{.24\textwidth}
  \centering
  \includegraphics[width=1\linewidth]{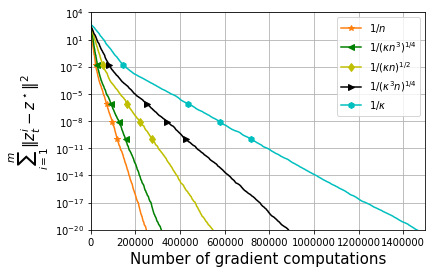}  
\end{minipage}
\begin{minipage}{.24\textwidth}
  \centering
  \includegraphics[width=1\linewidth]{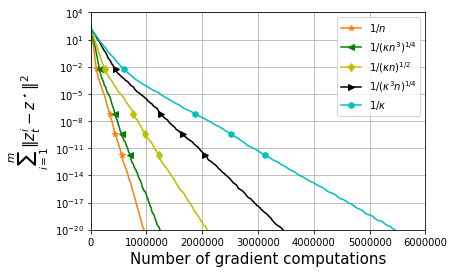}  
\end{minipage}
\begin{minipage}{.24\textwidth}
  \centering
  \includegraphics[width=1\linewidth]{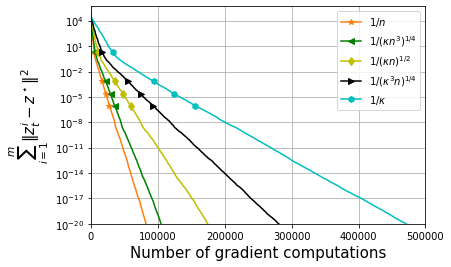}  
\end{minipage}
\caption{Performance of C-DPSVRG with different reference probabilities in 2D torus. a4a, phishing, ijcnn, sido datasets are in Columns 1,2,3,4 respectively.}
\label{fig:dsvrg_behavior_ref_prob}
\end{figure}
\begin{figure}[!h]
\begin{minipage}{.24\textwidth}
  \centering
  \includegraphics[width=1\linewidth]{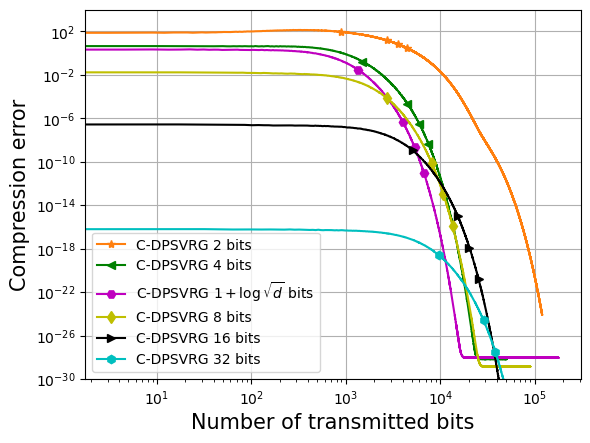}  
\end{minipage}
\begin{minipage}{.24\textwidth}
  \centering
  \includegraphics[width=1\linewidth]{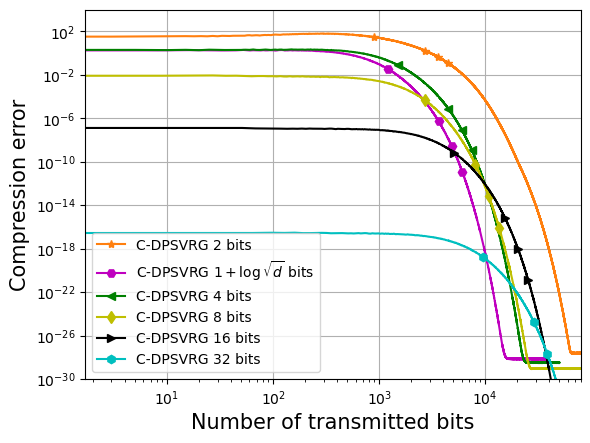}  
\end{minipage}
\begin{minipage}{.24\textwidth}
  \centering
  \includegraphics[width=1\linewidth]{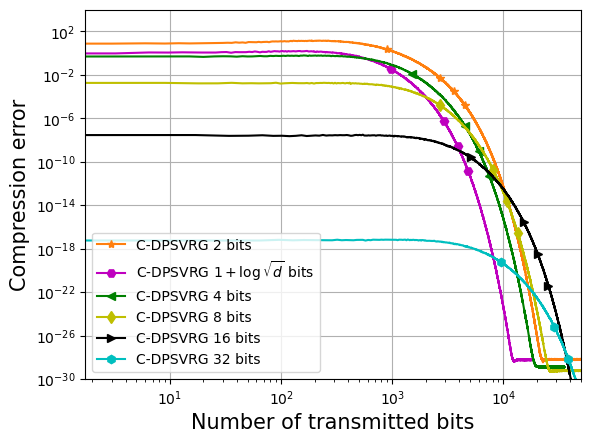}  
\end{minipage}
\begin{minipage}{.24\textwidth}
  \centering
  \includegraphics[width=1\linewidth]{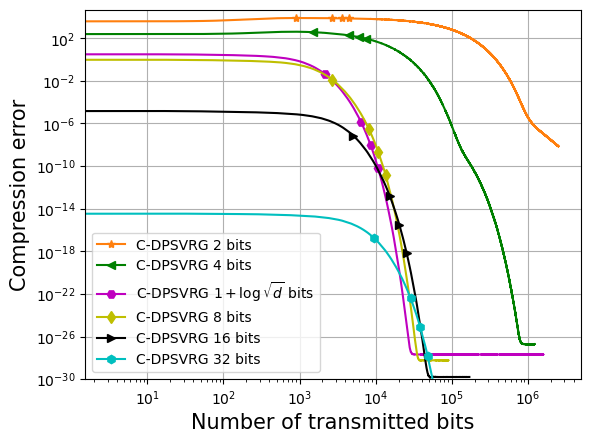}  
\end{minipage}
\begin{minipage}{.24\textwidth}
  \centering
  \includegraphics[width=1\linewidth]{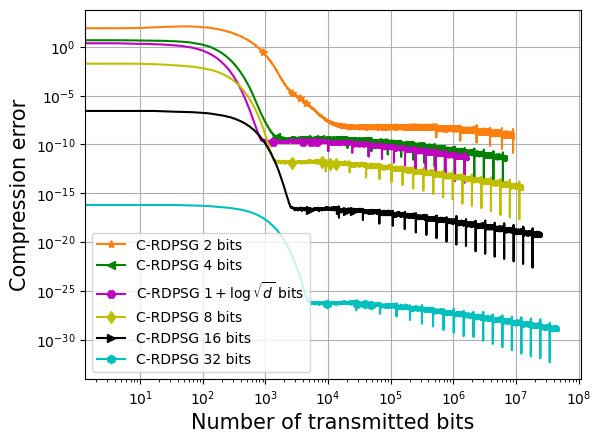}  
\end{minipage}
\begin{minipage}{.24\textwidth}
  \centering
  \includegraphics[width=1\linewidth]{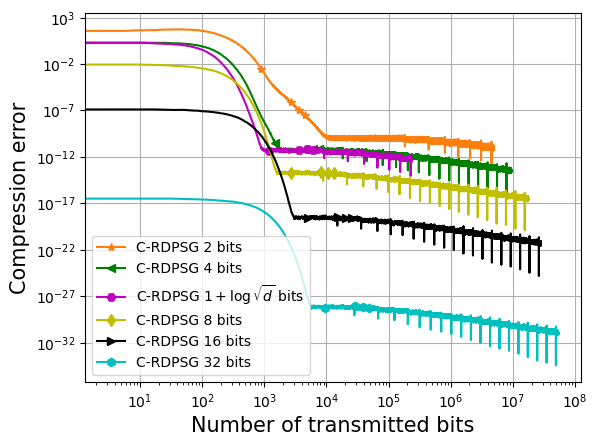}  
\end{minipage}
\begin{minipage}{.24\textwidth}
  \centering
  \includegraphics[width=1\linewidth]{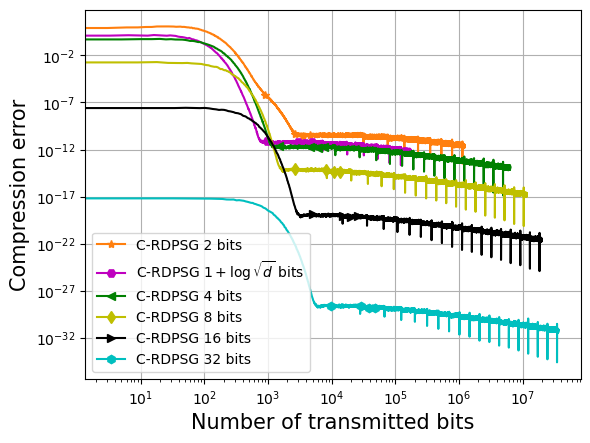}  
\end{minipage}
\begin{minipage}{.24\textwidth}
  \centering
  \includegraphics[width=1\linewidth]{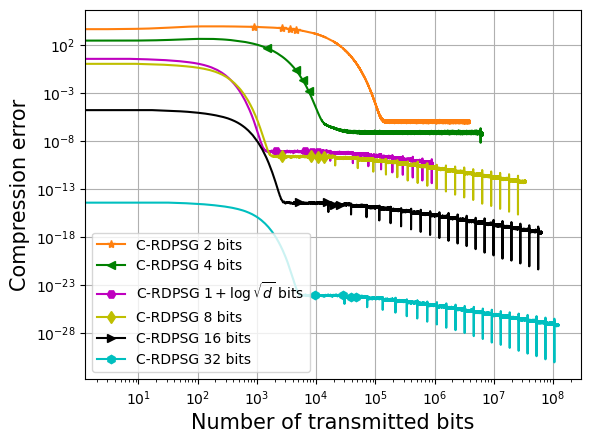}  
\end{minipage}
\caption{Compression error with different number of bits. Row 1: C-DPSVRG, Row 2: C-RDPSG. a4a, phishing, ijcnn, sido datasets are in Columns 1,2,3,4 respectively.}
\label{fig:compression_error_num_bits}
\end{figure}

\begin{figure}[!hbp]
	\begin{minipage}{.24\textwidth}
		\centering
		\includegraphics[width=1\linewidth]{plots/a4a_grad_comp_dsvrg_with_different_bits}  
	\end{minipage}
	\begin{minipage}{.24\textwidth}
		\centering
		\includegraphics[width=1\linewidth]{plots/phishing_grad_comp_dsvrg_with_different_bits}  
	\end{minipage}
	\begin{minipage}{.24\textwidth}
		\centering
		\includegraphics[width=1\linewidth]{plots/ijcnn1_grad_comp_dsvrg_with_different_bits}  
	\end{minipage}
	\begin{minipage}{.24\textwidth}
		\centering
		\includegraphics[width=1\linewidth]{plots/sido_grad_comp_dsvrg_with_different_bits}  
	\end{minipage}
	\begin{minipage}{.24\textwidth}
		\centering
		\includegraphics[width=1\linewidth]{plots/a4a_number_trans_bits_dsvrg_with_different_bits}  
	\end{minipage}
	\begin{minipage}{.24\textwidth}
		\centering
		\includegraphics[width=1\linewidth]{plots/phishing_number_trans_bits_dsvrg_with_different_bits}  
	\end{minipage}
	\begin{minipage}{.24\textwidth}
		\centering
		\includegraphics[width=1\linewidth]{plots/ijcnn1_number_trans_bits_dsvrg_with_different_bits}  
	\end{minipage}
	\begin{minipage}{.24\textwidth}
		\centering
		\includegraphics[width=1\linewidth]{plots/sido_number_trans_bits_dsvrg_with_different_bits}  
	\end{minipage}
	
	\caption{Convergence behavior of iterates to saddle point vs. Gradient computations (Row 1), Number of bits transmitted (Row 2) for C-DPSVRG behavior with \textbf{different number of bits} in 2D torus topology. a4a, phishing, ijcnn, sido datasets are in Columns 1,2,3,4 respectively. \red{Number of bits $1+\log \sqrt{d}$ for a4a, phishing, ijcnn1 and sido datasets  are $4.465, 4.043, 3.22$ and $7.13$ respectively.}} 
	\label{fig:comparison_bits_dsvrg}
\end{figure}

\begin{figure}[!htbp]
	\begin{minipage}{.24\textwidth}
		\centering
		\includegraphics[width=1\linewidth]{plots/a4a_grad_comp_dsgd_with_different_bits}  
	\end{minipage}
	\begin{minipage}{.24\textwidth}
		\centering
		\includegraphics[width=1\linewidth]{plots/phishing_grad_comp_dsgd_with_different_bits}  
	\end{minipage}
	\begin{minipage}{.24\textwidth}
		\centering
		\includegraphics[width=1\linewidth]{plots/ijcnn1_grad_comp_dsgd_with_different_bits}  
	\end{minipage}
	\begin{minipage}{.24\textwidth}
		\centering
		\includegraphics[width=1\linewidth]{plots/sido_grad_comp_dsgd_with_different_bits}  
	\end{minipage}
	\begin{minipage}{.24\textwidth}
		\centering
		\includegraphics[width=1\linewidth]{plots/a4a_number_trans_bits_dsgd_with_different_bits}  
	\end{minipage}
	\begin{minipage}{.24\textwidth}
		\centering
		\includegraphics[width=1\linewidth]{plots/phishing_number_trans_bits_dsgd_with_different_bits}  
	\end{minipage}
	\begin{minipage}{.24\textwidth}
		\centering
		\includegraphics[width=1\linewidth]{plots/ijcnn1_number_trans_bits_dsgd_with_different_bits}  
	\end{minipage}
	\begin{minipage}{.24\textwidth}
		\centering
		\includegraphics[width=1\linewidth]{plots/sido_number_trans_bits_dsgd_with_different_bits}  
	\end{minipage}
	\caption{Convergence of iterates to saddle point vs. Gradient computations (Row 1), Number of bits transmitted (Row 2) for C-RDPSG behavior with \textbf{different number of bits} in 2D torus topology. a4a, phishing, ijcnn, sido datasets are in Columns 1,2,3,4 respectively. \red{Number of bits $1+\log \sqrt{d}$ for a4a, phishing, ijcnn1 and sido datasets  are $4.465, 4.043, 3.22$ and $7.13$ respectively.}}
	\label{fig:comparison_bits_dsgd}
\end{figure}

\begin{figure}[!h]
\begin{minipage}{.33\textwidth}
  \centering
  \includegraphics[width=1\linewidth]{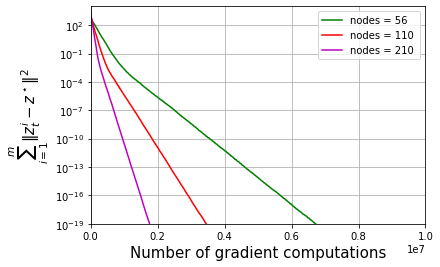}  
\end{minipage}
\begin{minipage}{.33\textwidth}
  \centering
  \includegraphics[width=1\linewidth]{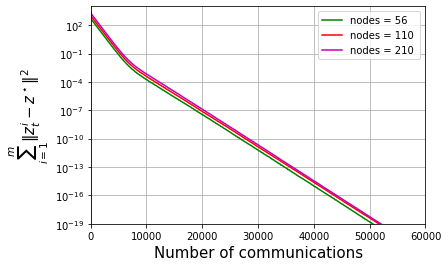}  
\end{minipage}
\begin{minipage}{.33\textwidth}
  \centering
  \includegraphics[width=1\linewidth]{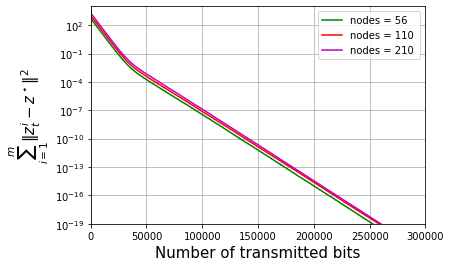}  
\end{minipage}
\caption{Performance of C-DPSVRG with different number of nodes on 2D torus topology with ijcnn data.}
\label{fig:dsvrg_behavior_nodes}
\end{figure}
\begin{figure}[!h]
\begin{minipage}{.33\textwidth}
  \centering
  \includegraphics[width=1\linewidth]{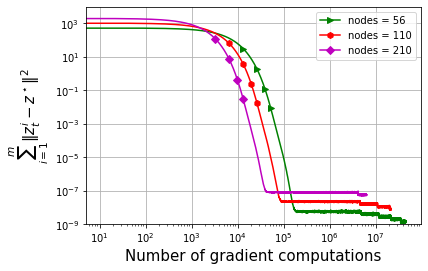}  
\end{minipage}
\begin{minipage}{.33\textwidth}
  \centering
  \includegraphics[width=1\linewidth]{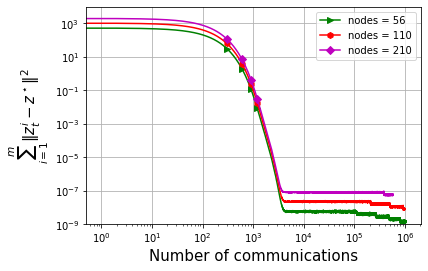}  
\end{minipage}
\begin{minipage}{.33\textwidth}
  \centering
  \includegraphics[width=1\linewidth]{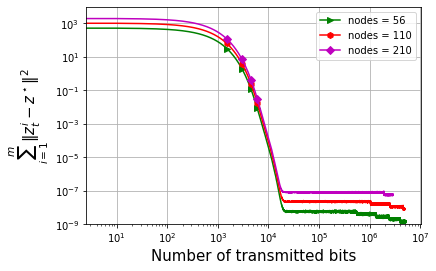}  
\end{minipage}
\caption{Performance of C-RDPSG with different number of nodes on 2D torus topology with ijcnn data.}
\label{fig:dsgd_behavior_nodes}
\end{figure}

\begin{figure}[!h]
\begin{minipage}{.33\textwidth}
  \centering
  \includegraphics[width=1\linewidth]{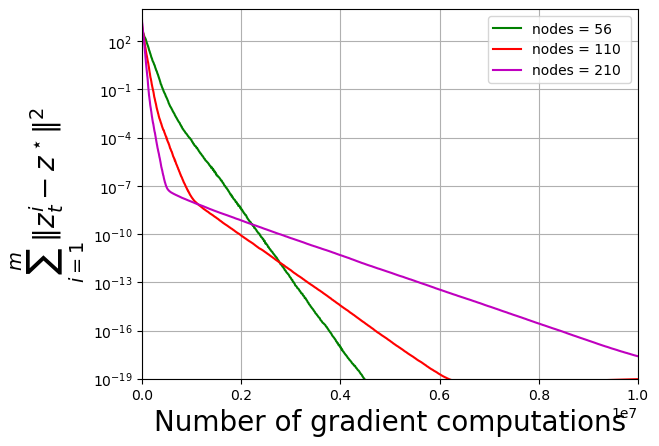}  
  \label{fig:ijcnn1_grad_dsvrg_nodes}
\end{minipage}
\begin{minipage}{.33\textwidth}
  \centering
  \includegraphics[width=1\linewidth]{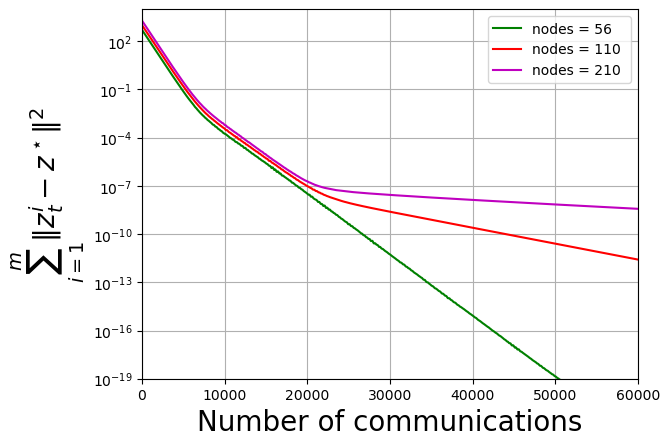}  
  \label{fig:ijcnn1_comm_dsvrg_nodes}
\end{minipage}
\begin{minipage}{.33\textwidth}
  \centering
  \includegraphics[width=1\linewidth]{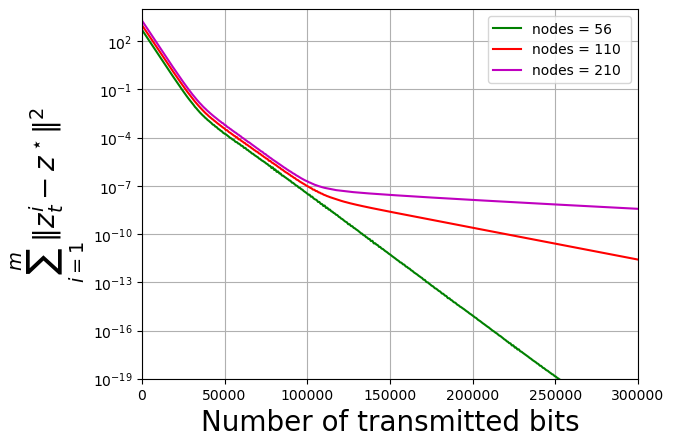}  
  \label{fig:ijcnn1_bits_dsvrg_nodes}
\end{minipage}
\caption{Performance of C-DPSVRG with different number of nodes on ring topology with ijcnn data.}
\label{fig:dsvrg_behavior_nodes_ring}
\end{figure}

\begin{figure}[!h]
\begin{minipage}{.33\textwidth}
  \centering
  \includegraphics[width=1\linewidth]{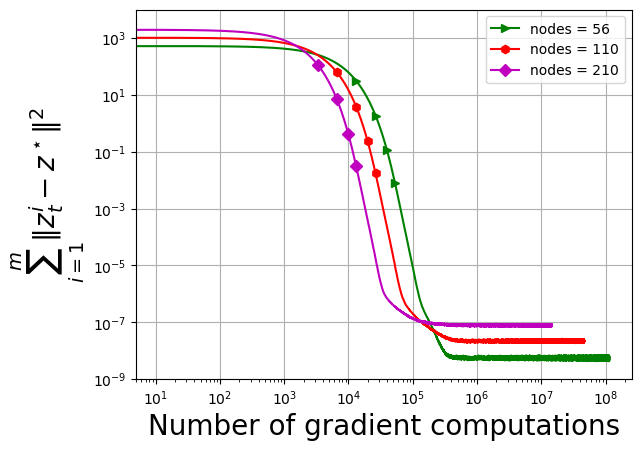}  
\end{minipage}
\begin{minipage}{.33\textwidth}
  \centering
  \includegraphics[width=1\linewidth]{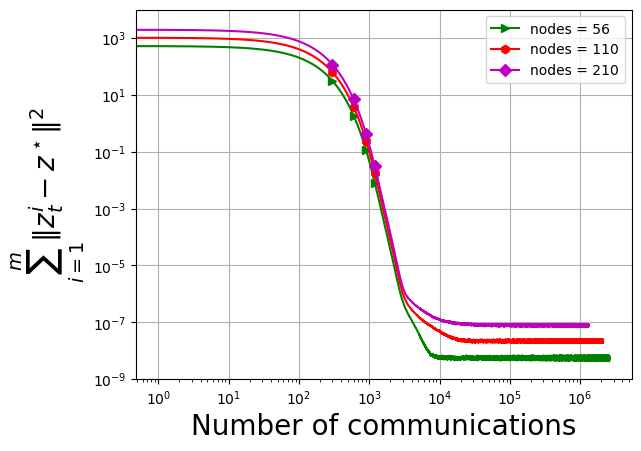}  
\end{minipage}
\begin{minipage}{.33\textwidth}
  \centering
  \includegraphics[width=1\linewidth]{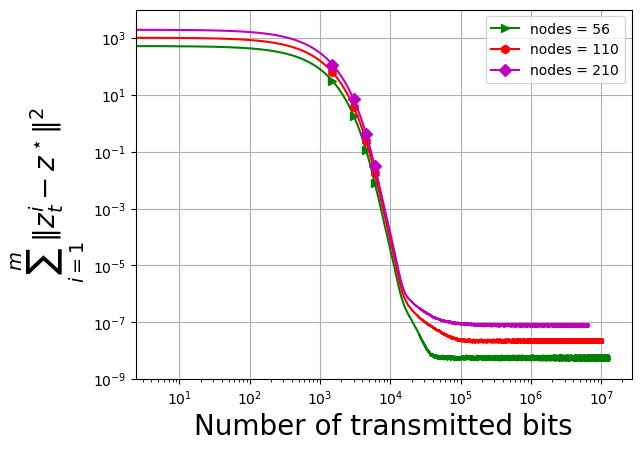}  
\end{minipage}
\caption{Performance of C-RDPSG with different number of nodes on ring topology with ijcnn data.}
\label{fig:dsgd_behavior_nodes_ring}
\end{figure}

\begin{figure}[!h]
	\begin{minipage}{.33\textwidth}
		\centering
		\includegraphics[width=1\linewidth]{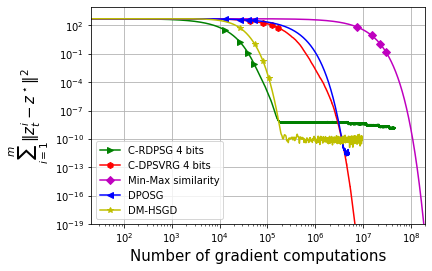}  
	\end{minipage}
	\begin{minipage}{.33\textwidth}
		\centering
		\includegraphics[width=1\linewidth]{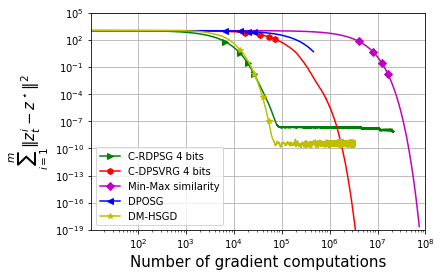}  
	\end{minipage}
	\begin{minipage}{.33\textwidth}
		\centering
		\includegraphics[width=1\linewidth]{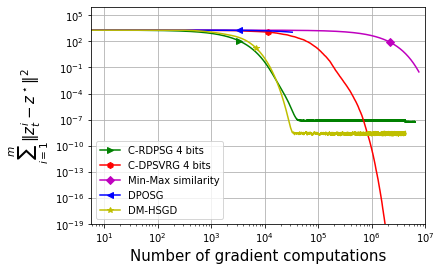}  
	\end{minipage}
	\begin{minipage}{.33\textwidth}
		\centering
		\includegraphics[width=1\linewidth]{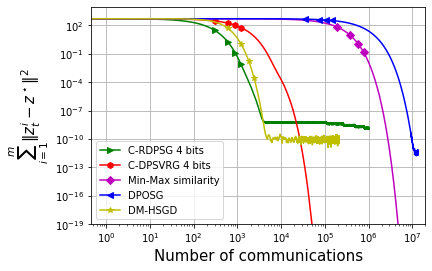}  
	\end{minipage}
	\begin{minipage}{.33\textwidth}
		\centering
		\includegraphics[width=1\linewidth]{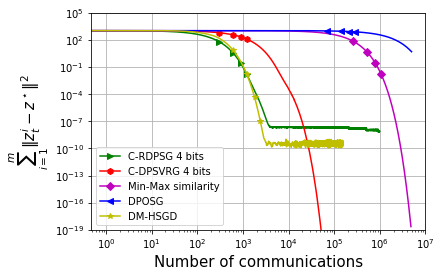}  
	\end{minipage}
	\begin{minipage}{.33\textwidth}
		\centering
		\includegraphics[width=1\linewidth]{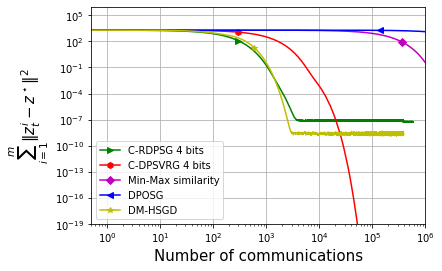}  
	\end{minipage}
	\begin{minipage}{.33\textwidth}
		\centering
		\includegraphics[width=1\linewidth]{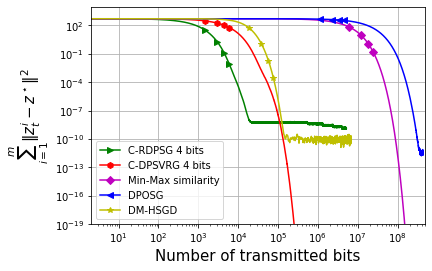}  
	\end{minipage}
	\begin{minipage}{.33\textwidth}
		\centering
		\includegraphics[width=1\linewidth]{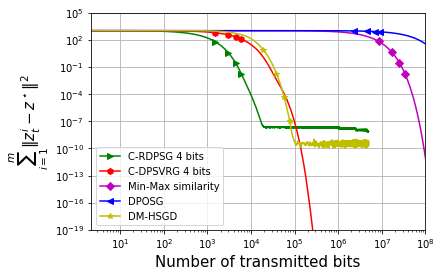}  
	\end{minipage}
	\begin{minipage}{.33\textwidth}
		\centering
		\includegraphics[width=1\linewidth]{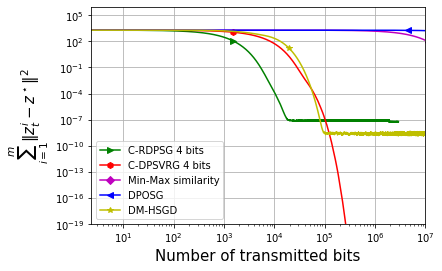}  
	\end{minipage}
	\caption{Comparison with baselines with different number of nodes on a 2D torus topology with ijcnn data. Column 1: 56 nodes, Column 2: 110 nodes, Column 3: 210 nodes.}
	\label{fig:different_nodes}
\end{figure}

\begin{figure}[!h]
\begin{minipage}{.33\textwidth}
  \centering
  \includegraphics[width=1\linewidth]{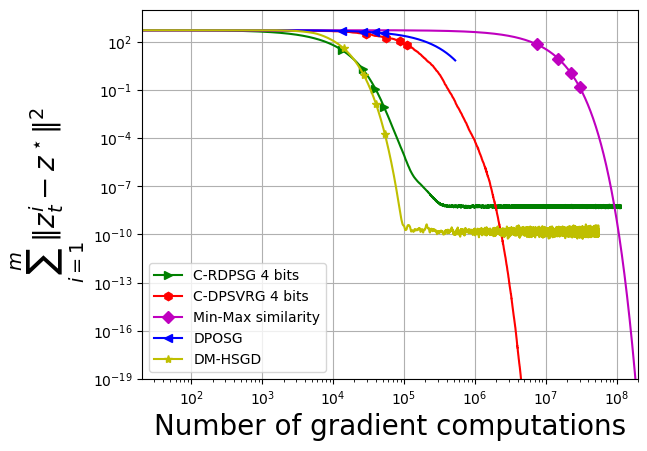}  
\end{minipage}
\begin{minipage}{.33\textwidth}
  \centering
  \includegraphics[width=1\linewidth]{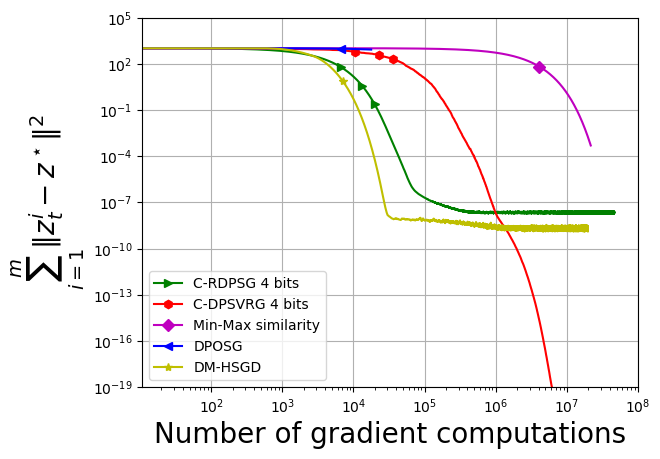}  
\end{minipage}
\begin{minipage}{.33\textwidth}
  \centering
  \includegraphics[width=1\linewidth]{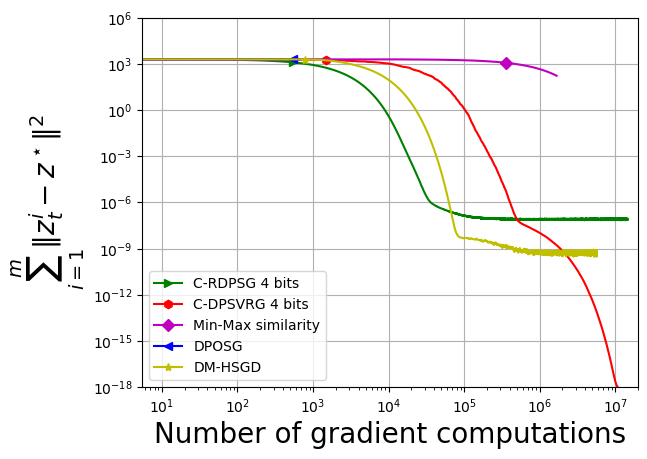}  
\end{minipage}
\begin{minipage}{.33\textwidth}
  \centering
  \includegraphics[width=1\linewidth]{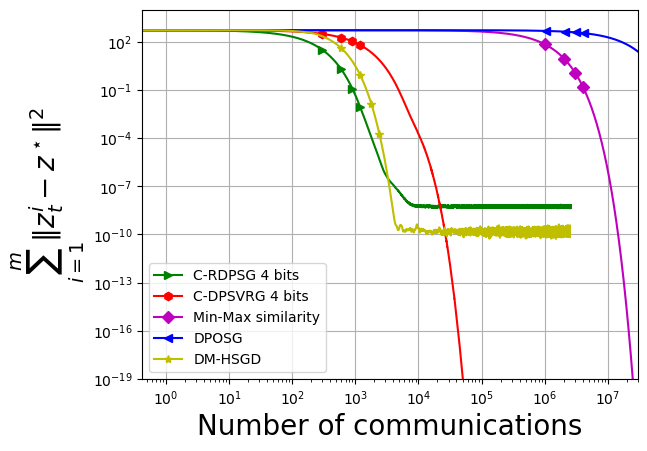}  
\end{minipage}
\begin{minipage}{.33\textwidth}
  \centering
  \includegraphics[width=1\linewidth]{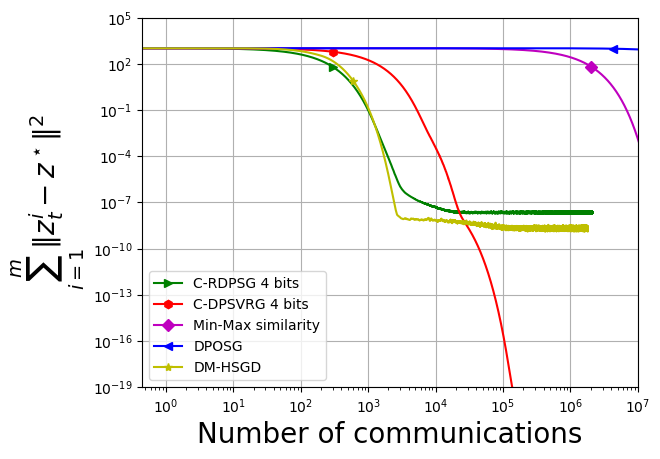}  
\end{minipage}
\begin{minipage}{.33\textwidth}
  \centering
  \includegraphics[width=1\linewidth]{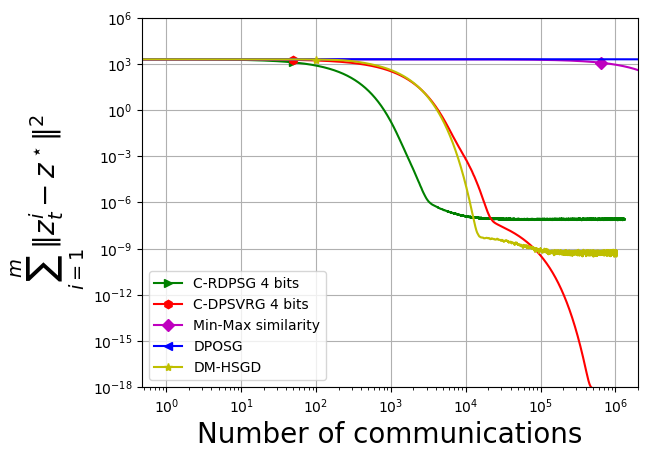}  
\end{minipage}
\begin{minipage}{.33\textwidth}
  \centering
  \includegraphics[width=1\linewidth]{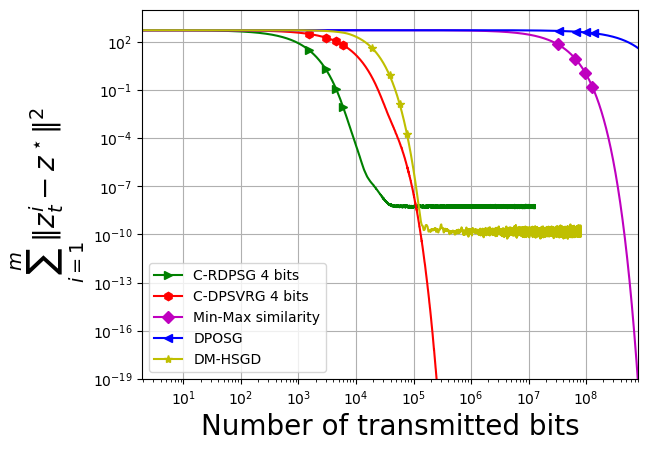}  
\end{minipage}
\begin{minipage}{.33\textwidth}
  \centering
  \includegraphics[width=1\linewidth]{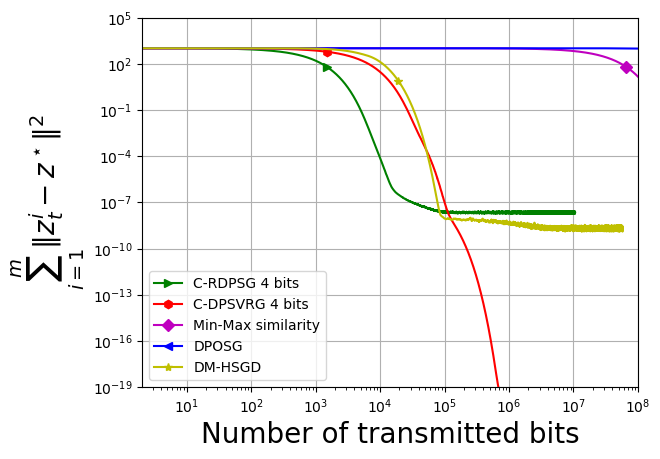}  
\end{minipage}
\begin{minipage}{.33\textwidth}
  \centering
  \includegraphics[width=1\linewidth]{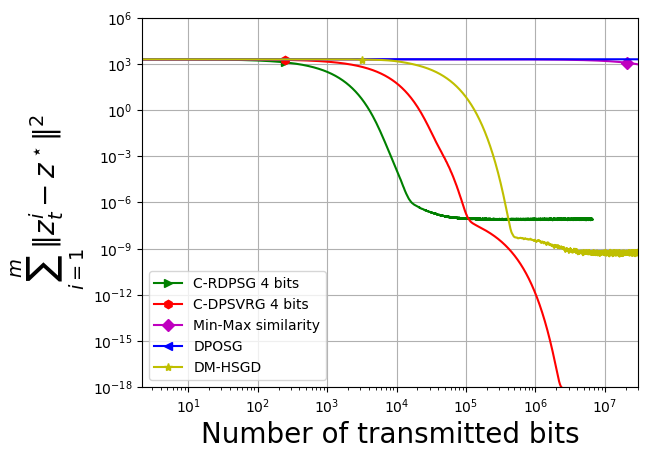}  
\end{minipage}
\caption{Comparison with baselines with different number of nodes on a ring topology with ijcnn data. Column 1: 56 nodes, Column 2: 110 nodes, Column 3: 210 nodes.}
\label{fig:different_nodes_ring}
\end{figure}

\end{document}